\newcommand{\cha}{\mathds{1}}
\def\H{{\mathcal H}}
\def\R{{\mathbb R}}
\newcommand{\mH}{\mathcal{H}}
\newcommand{\ignore}[1]{}
\theoremstyle{plain}
	\newtheorem{theorem}{Theorem}
	\newtheorem{lemma}{Lemma}[section]
	\newtheorem{claim}[lemma]{Claim}
	\newtheorem{corollary}{Corollary}[section]
	\newtheorem{proposition}{Proposition}[section]
	\theoremstyle{definition}
	\newtheorem{definition}{Definition}[section]
	\newtheorem{example}{Example}[section]
	\newtheorem{assumption}{Assumption}
	\newtheorem{remark}{Remark}[section]
 \theoremstyle{plain}
\newtheorem*{theoremaux}{\theoremauxref}
\gdef\theoremauxref{1}
\newenvironment{proof-sketch}{%
  \proof}{\endproof}
\DeclareMathAlphabet{\mathbfsf}{\encodingdefault}{\sfdefault}{bx}{n}
\DeclareMathOperator*{\argmin}{arg\,min}
\let\Pr\relax
\DeclareMathOperator{\Pr}{\mathbb{P}}
\newcommand{\ceil}[1]{\lceil #1 \rceil}
\newcommand{\floor}[1]{\lfloor #1 \rfloor}
\newcommand{\wt}[1]{\smash{\widetilde{#1}}}
\renewcommand{\O}{O}
\newcommand{\tO}{\wt{\O}}
\newcommand{\trace}{\mathrm{tr}}
\newcommand\simiid{\mathrel{\overset{\makebox[0pt]{\mbox{\normalfont\tiny\sffamily i.i.d}}}{\sim}}}
\newcommand{\poly}{\mathrm{poly}}
\newcommand{\Del}{\Delta}
\renewcommand{\leq}{~\le~}
\renewcommand{\geq}{~\ge~}
\let\oldtfrac\tfrac
\renewcommand{\tfrac}[2]{\smash{\oldtfrac{#1}{#2}}}
\let\nablaold\nabla
\renewcommand{\nabla}{\nablaold\mkern-2.5mu}
\newcommand{\mT}{\mathcal{T}}
\newcommand{\mR}{\mathcal{R}}
\newcommand{\mP}{\mathcal{P}}
\newcommand{\mO}{\mathcal{O}}
\newcommand{\mF}{\mathcal{F}}
\newcommand{\mN}{\mathcal{N}}
\newcommand{\mS}{\mathcal{S}}
\newcommand{\mI}{\mathcal{I}}
\newcommand{\mK}{\mathcal{K}}
\newcommand{\mL}{\mathcal{L}}
\newcommand{\mG}{\mathcal{G}}
\newcommand{\mE}{\mathcal{E}}
\newcommand{\mB}{\mathcal{B}}
\newcommand{\Ex}{\mathbb{E}}
\newcommand{\ga}{\Gamma}
\newcommand{\mus}{\mu^*}
\newcommand{\sA}{A_*}
\newcommand{\whA}{\widehat{A}}
\newcommand{\Si}{\Sigma}
\newcommand{\sumt}{\sum_{t=1}^T}
\newcommand{\dmus}{\|\mu_t-\mu_t^*\|}
\newcommand{\prt}{\mN(\mu_t,I,S_t)}
\begin{document}
\title{Learning from Censored and Dependent Data: \\
The case of Linear Dynamics}
\author{Orestis Plevrakis\\Princeton University\\\small{orestisp@princeton.edu}}
  
 \maketitle
\begin{abstract}%
  \par Observations from dynamical systems often exhibit irregularities, such as \emph{censoring}, where values are recorded only if they fall within a certain range. Censoring is ubiquitous in practice, due to saturating sensors, limit-of-detection effects, image frame effects, and combined with temporal dependencies within the data, makes the task of system identification particularly challenging. 
  \par In light of recent developments on learning linear dynamical systems (LDSs), and on censored statistics with \emph{independent} data, we revisit the decades-old problem of learning an LDS, from censored observations (\cite{lee1985common, zeger1986regression}). Here, the learner observes the state $x_t\in \R^d$ if and only if $x_t$ belongs to some set $\mS_t\subseteq \R^d$. We develop the first computationally and statistically efficient algorithm for learning the system, assuming only oracle access to the sets $\mS_t$. Our algorithm, \emph{Stochastic Online Newton with Switching Gradients}, is a novel second-order method that builds on the Online Newton Step (ONS) of \cite{hazan2007logarithmic}. Our Switching-Gradient scheme does not always use (stochastic) gradients of the function we want to optimize, which we call \emph{censor-aware} function. Instead, in each iteration, it performs a simple test to decide whether to use the censor-aware, or another \emph{censor-oblivious} function, for getting a stochastic gradient.
  \par  In our analysis, we consider a "generic" Online Newton method, which uses arbitrary vectors instead of gradients, and we prove an error-bound for it. This can be used to appropriately design these vectors, leading to our Switching-Gradient scheme. This framework significantly deviates from the recent long line of works on censored statistics (e.g, \cite{daskalakis2018efficient,kontonis2019efficient,daskalakis2019computationally}), which apply Stochastic Gradient Descent (SGD), and their analysis reduces to establishing conditions for off-the-shelf SGD-bounds. Our approach enables to relax these conditions, and gives rise to phenomena that might appear counterintuitive, given the previous works. Specifically, our method makes progress even when the current "survival probability" is exponentially small. We believe that our analysis framework will have applications in more settings where the data are subject to censoring.
\end{abstract}

\section{Introduction}\label{sec:intro}
\par System identification is the problem of learning the evolution equations of a dynamical system from data. Mathematically, we have a sequence of system states $(x_t)_t$, and observations $(y_t)_t$ evolving as 
\begin{align*}
    x_{t+1}=f(x_t,u_t,w_t), \ \ y_t=g(x_t,v_t),
\end{align*}
where $u_t$'s are inputs to the system, $w_t$ is process noise, and $v_t$ is sensor noise. At each step $t$, the learner observes the input $u_t$, and the resulting output 
$y_{t+1}$. The goal is to learn the functions $f$ and $g$, from an observed trajectory.\footnote{Some works consider access to multiple trajectories.} In this paper, we consider system identification with \emph{censored} observations, where the learner observes $y_t$ if and only if $y_t$ belongs to some set $\mS_t$, which we call the \emph{observable} set.
\begin{figure}
     \centering
     \begin{subfigure}[b]{0.4\textwidth}
         \centering
         \includegraphics[width=\textwidth]{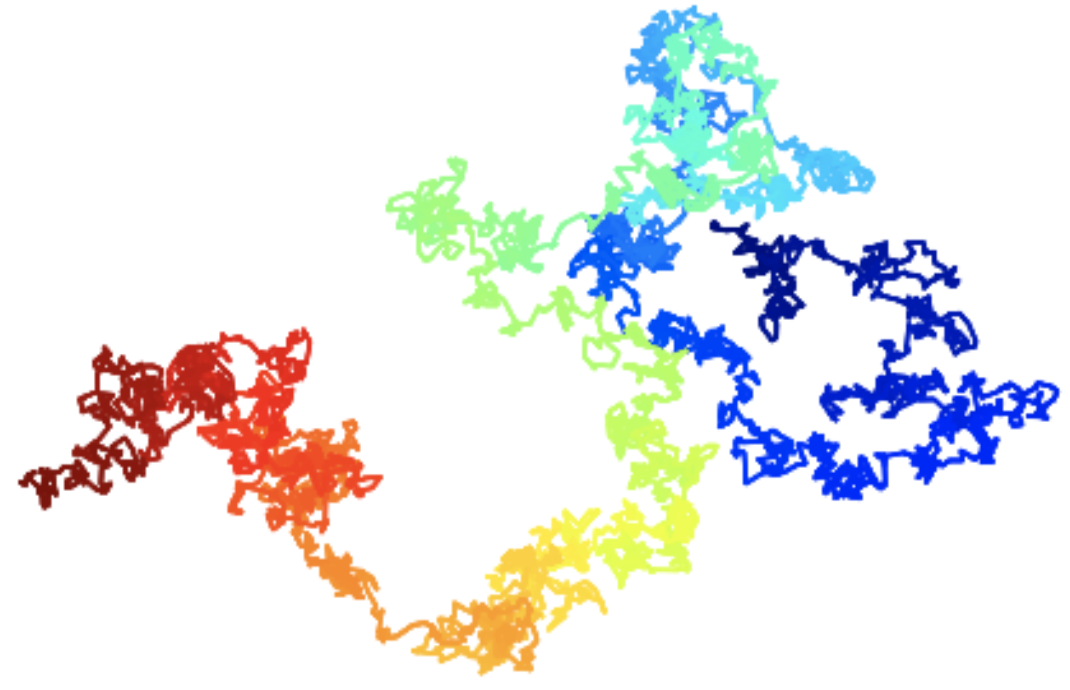}
         \caption{$\mS_t=\R^2$}
         \label{fig:eg-full}
     \end{subfigure}
     \hspace{1cm}
     \begin{subfigure}[b]{0.4\textwidth}
         \centering
         \includegraphics[width=\textwidth]{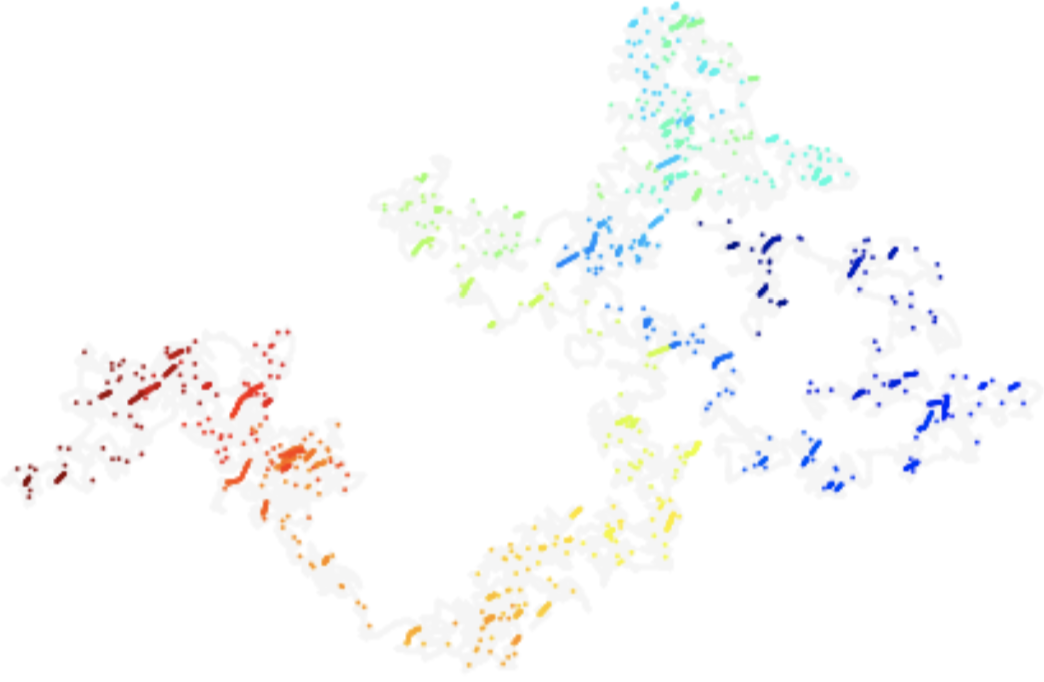}
         \caption{$\mS_t=\{(x_1,x_2): x_1\ge x_{t,2}+0.4,x_2\ge x_{t,1}+0.2\}$ }
         \label{fig:eg-cens}
     \end{subfigure}
        \caption{Linear system in $2$-$d$, $A_*=\rho \cdot I$, $\rho=1-\epsilon$, $\epsilon\approx 0$. Figure \ref{fig:eg-full} is an uncensored trajectory (color is a function of time). Figure \ref{fig:eg-cens} is the same trajectory, but the grey areas are censored.}
        \label{fig:lds-eg}
\end{figure}
\par Dynamical data with missing observations are ubiquitous in practice \citep{honaker2010missing}. This is often due to censoring, which frequently manifests in the fields of signal processing and control theory \citep{yang2009set}, time series analysis \citep{lee1985common}, business \citep{hausman1977social}, economics \citep{johannsen2018time}, in medicine, and in physical sciences. For example, consider learning the dynamics of some target dynamical system, using observations from cameras. Here, censoring naturally arises due to occlusions blocking visibility, or from the system exiting the camera frame. Despite the numerous applications, the problem is not statistically and computationally understood even for linear, fully-observable dynamics: 
\begin{align*}
    x_{t+1}=A_*x_t+B_*u_t+w_t,\ \ y_t=x_t,
\end{align*}
where $x_t \in \mathbb{R}^d$, $u_t\in \mathbb{R}^m$, and $w_t\simiid N(0,I)$.  Actually, even for one-dimensional linear systems $(d=1)$, with no inputs $(u_t=0)$, and observable sets $\mS_t$ being half-lines, no known efficient algorithm learns (the scalar) $\sA$, not even asymptotically.  Specifically, the existing methods fall in two categories: 1) iterative methods trying to maximize the non-concave log-likelihood, as in \cite{lee1985common} and \cite{zeger1986regression}, and 2) EM-based imputation methods (e.g., \cite{park2007censored}). Both of them are not guaranteed to recover the underlying system. On top of that, for large dimension $d$, the likelihood-based approach is inefficient to implement because it requires computing high-dimensional integrals over $\mS_t$'s. 
\par In this work, we study the multidimensional linear case with no inputs: $ x_{t+1}=A_*x_t+w_t$ (e.g., Figure \ref{fig:eg-full}). We allow \emph{arbitrary} observable sets $\mS_t$ (e.g., Figure \ref{fig:eg-cens}), with only requirement being that we have oracle access to each $\mS_t$, namely, given a point $x$, the oracle efficiently computes $\cha\{x\in \mS_t\}$. For this model, we obtain the first computationally and statistically efficient algorithm for learning $\sA$, under the following assumptions (stated informally here, and in detail in Section \ref{sec:LDS}):
\begin{description}
\item[Assumption 1:] The system is \emph{stable}, i.e., the spectral radius $\rho(\sA)$ is less than one. Stability is a classical assumption in linear dynamical systems (LDSs). If $\rho(\sA)>1$, then the state explodes exponentially, with high probability. 
\item[Assumption 2:] The number of times we observe the state is at least a constant fraction (say $1\%$) of the trajectory-length.
\item[Assumption 3:] For most of the timesteps $t$, given that we observed $x_t$, the probability of observing $x_{t+1}$ is at least a constant (say $1\%$). 
\end{description}
To motivate the last two assumptions, we note that for the simpler problems of censored Gaussian estimation and censored linear regression (with independent data), the only known computationally and statistically efficient algorithms assume that the probability of observing a sample is at least a constant (see \cite{daskalakis2018efficient}, \cite{daskalakis2019computationally}). Assumptions 2 and 3 are the adaption of this, to fit our dynamical setting. Under these assumptions, our estimation error bound matches (up to logarithmic factors) the best known bound for learning uncensored LDSs (\cite{simchowitz2018learning}), with respect to the dimension $d$, the trajectory-length $T$, and the spectrum of $\sA$. It is also the first, to the best of our knowledge, estimator that can accommodate arbitrary observable sets, since previous works considered intervals, half-lines, and products of these (e.g., \cite{zeger1986regression}, \cite{yang2009set}). Key for tackling this decades-old problem are recent advances in 1) (uncensored) linear system identification theory, and 2) censored/truncated statistics (CTS) for independent data.
\paragraph{Learning Linear Dynamics.}The difficulty in learning an LDS, compared to standard linear regression, is that the observations are dependent. A classical approach that avoids this issue is based on the system's mixing time, which is roughly $\tau_{\text{mix}}=\frac{1}{1-\rho(\sA)}$ steps. Thus, the learner can use one $x_t$ every $\tau_{\text{mix}}$ steps, and reduce the analysis to standard linear regression (e.g., \cite{yu1994rates}). However, the resulting bounds get worse with larger $\tau_{\text{mix}}$, and as pointed in \cite{simchowitz2018learning}, this behavior is qualitatively incorrect. Intuitively, larger $\sA$ gives larger states, which implies larger signal-to-noise ratio, i.e., easier estimation. The authors provided bounds that express this intuition, and that was the first sharp analysis for stable systems.\footnote{The authors also considered $\rho(\sA)=1$, known as the \emph{marginally stable} regime.} This progress initiated a large line of work on learning LDSs (see Section \ref{sec:related}).  

\paragraph{Censored/Truncated Statistics (CTS).} Consider the problem of learning a Gaussian distribution, having access only to samples from some set $\mS$. These are called \emph{truncated} samples. Censoring is when we also know the number of unobserved samples, as in our case, where this number can be inferred from the lengths of time-intervals during which we do not observe anything. Truncation and censoring go back to at least \cite{galton1898examination} and \cite{pearson1902systematic}, and there has been a large volume of research devoted to them (see \cite{cohen1991truncated}). Nevertheless, the first provably computationally and statistically efficient algorithm, for learning a truncated Gaussian, was only recently discovered by \cite{daskalakis2018efficient}. The authors developed a general algorithmic framework, based on Stochastic Gradient Descent (SGD), which bypasses the computation of high-dimensional integrals over $\mS$. The result and the generality of the approach created a lot of excitement, and a large number of subsequent works applied the SGD framework to other problems in CTS (for independent data), e.g., linear regression by \cite{daskalakis2019computationally}. 

\subsection*{Our Contributions}
We build on the above advances, by introducing new algorithmic and technical ideas, which we now overview.
\paragraph{Our algorithm.} 
Our first observation is that non-convexity of the negative log-likelihood can be bypassed by focusing on "paired observations", i.e., $(x_t,x_{t+1})$ such that $x_t\in \mS_t$ and $x_t\in \mS_{t+1}$. By ignoring the other terms in the objective, we get a convex function. The second observation is that for LDSs, if an SGD-based algorithm relies on the Markovian property $\mathbb{E}[x_{t+1}| x_t,x_{t-1},\dots,x_1]=\sA x_t$ to produce \emph{unbiased} gradient estimates, then the algorithm should process the data in \emph{temporal order}. The reason is that if we "see" $x_t$ and $x_{t+s}$ (for some $s\ge 2$), then the expectation of $x_{t+1}$ is not $\sA x_t$, i.e., $\mathbb{E}[x_{t+1}| x_t,x_{t+s}]\neq \sA x_t$. In other words, we need an $\emph{online}$ algorithm. Unfortunately, for censored linear regression, the SGD framework of \cite{daskalakis2019computationally} processes the data in \emph{random} order, and also does multiple passes over them. This is not a technicality; making that algorithm online will lead to slow statistical rates, as we explain later. For this reason, we design a new stochastic \emph{second-order} method, building on the Online Newton Step method (ONS) of \cite{hazan2007logarithmic}. The crucial difference with ONS, and all recent works on CTS, is that we do not always use (stochastic) gradients of the function we want to optimize, which we call \emph{censor-aware} function. Instead, in each iteration, we perform a simple test, based on which we decide whether to use the censor-aware function, or another \emph{censor-oblivious} function, for getting a stochastic gradient. We call our method \emph{Stochastic Online Newton with Switching Gradients} (SON-SG, given in Algorithm \ref{alg:newton}), and we show that it can be combined with a least-squares-based warmup procedure, to learn censored LDSs (Algorithm \ref{alg:overall}). We also show that SON-SG can be applied to an even broader setting, i.e., general linear-response time-series (Section \ref{sec:time-series}).
\paragraph{Algorithm-design framework.} We came up with this switching-gradient scheme, by considering a "generic" ONS method, where instead of gradients we have an arbitrary vector sequence $g_1,g_2,\dots$. We proved a general estimation error bound for this method, which serves as a guideline for designing the $g_i$'s. A similar framework for designing $g_i$'s has been proposed in the context of non-convex optimization (for first-order methods), by \cite{arora2015simple}. This approach gives a lot of freedom for algorithm design, compared to previous works on CTS which apply off-the-shelf SGD-bounds, and we believe it will have more applications in censored and truncated statistics.
\paragraph{Technical contributions and insights.} In all recent CTS papers (see Section \ref{sec:related} for an extensive list), a crucial step for proving parameter recovery by SGD is establishing \emph{anti-concentration} of truncated Gaussians $\mN(\mu,\Sigma,\mS)$, where $\mN(\mu,\Sigma,\mS)$ is the Gaussian $\mN(\mu,\Sigma)$, given that the sample is in $\mS$.  \cite{daskalakis2018efficient} reduced this task to showing that the "survival probability" is large, i.e., $\Pr_{x\sim \mN(\mu,\Sigma)}[x\in \mS]\ge \Omega(1)$. Hence, this and all follow-up works have focused on lower bounding survival probabilities. In our case, this methodology does not apply, due to temporal dependencies. However, our approach (generic ONS bound) enables to significantly relax the high survival probability condition, and gives rise to phenomena that might appear counterintuitive, given the intuition built in the recent literature. Specifically, in high dimensions ($d \to \infty$) our analysis deals with cases where the survival probability is exponentially small $(e^{-\Omega(d)})$, while at the same time anti-concentration tends to infinity.  Our other technical contribution is a lower-bound on the covariance matrix of the \emph{observed} states. \cite{simchowitz2018learning} proved such a bound when all states are observed. Here, due to censoring, we observe only a subset of the whole trajectory. For independent data, \cite{daskalakis2018efficient} address this issue using union-bound over all possible subsets.\footnote{This is implicit in their analysis. They use a result of \cite{diakonikolas2019robust}, which applies union bound over subsets.} Unfortunately, for LDSs union-bound gives vacuous guarantees, because for a fixed subset, the concentration degrades with larger mixing times. We resolve this difficulty, by generalizing the "small-ball" technique of \cite{simchowitz2018learning}, and we lower-bound the covariance matrices of \emph{all subsets} of size $\Omega(T)$ \emph{simultaneously}, where $T$ is the trajectory-length. 
\subsection{Further Related Work}\label{sec:related}
\paragraph{Censored and Truncated Statistics.}
As we mentioned, there has been a long line of recent works on several settings within CTS: Gaussian parameter estimation \citep{daskalakis2018efficient,kontonis2019efficient}, linear, logistic and probit regression \citep{daskalakis2019computationally,ilyas2020theoretical}, compressed sensing \citep{daskalakis2020truncated}, sparse graphical models \citep{bhattacharyya2020efficient}, estimation of boolean product distributions \citep{fotakis2020efficient}, mixtures of Gaussians \citep{nagarajan2020analysis}. All these works consider independent data and apply the SGD framework\footnote{Exception is \cite{nagarajan2020analysis} who consider the EM-algorithm.} introduced in \cite{daskalakis2018efficient}.
\paragraph{Linear System Identification.} 
Even though linear system identification is a decades-old field \citep{ljung1999system}, a sharp non-asymptotic theory was only recently developed \citep{simchowitz2018learning,sarkar2019near,tsiamis2019finite,oymak2019non,simchowitz2019learning}.
\paragraph{Online Convex Optimization.}
To design our algorithm, we build on ideas from online convex optimization (OCO). In the recent years, OCO has been extensively used for learning and controlling LDSs (e.g., \cite{agarwal2019online,hazan2020nonstochastic, simchowitz2020improper, ghai2020no, simchowitz2020making}). For a general overview of OCO see \cite{hazan2019introduction}.

\section{ Notation}
\par For every vector $x$, we use $\|x\|$ to denote the $\ell_2$ norm $\|x\|_2$. Also, we use $\langle A,B\rangle=\trace(A^\top B)$ to denote the matrix inner product. For a matrix $A$ and a $\Sigma \succ 0$, we define the norm $\|A\|_{\Sigma}= \sqrt{\langle A^\top 
A,\Sigma \rangle}$ \footnote{Usually, $\|A\|_{\Sigma}$ is used to denote $\sqrt{\langle AA^T,\Sigma \rangle}$. However, this definition is more appropriate for our setting.}. The covariance matrix between two random vectors
$x, y$ is $\text{Cov}[x,y]$. For a sequence $(x_t)_{t=1}^T$, we denote by $x_{\le \tau}$,  $x_{<\tau}$ and $x_{-\tau}$ the subsequences $(x_t)_{t\le \tau},(x_t)_{t< \tau}, (x_t)_{t\neq \tau}$ respectively. For a Gaussian distribution $\mN(\mu,\Sigma)$ in $\R^d$, and a measurable $\mS\subseteq \R^d$, we define the survival probability $\mN(\mu,\Sigma;\mS)=\Pr_{x\sim \mN(\mu,\Sigma)}[x\in \mS]$. We also define the truncated Gaussian $\mN(\mu,\Sigma,\mS)$ to be $\mN(\mu,\Sigma)$ conditioned on taking values in $\mS$. Finally, whenever we say that a set $\mS$ is "revealed" to the learner, we mean that she has access to a membership oracle $M_{\mS}$, i.e., an efficient procedure that computes the $\cha\{x\in \mS\}$, for any point $x$.  
\section{Censored Linear Dynamics: Model, and Main Theorem}\label{sec:LDS}
We study the system $x_{t+1}=\sA x_t+w_t$, where $x_t \in \R^d$, $\sA\in \R^{d\times d}$ and  $w_t\simiid \mN(0,I)$. Starting from $x_0=0$,\footnote{We assume $x_0=0$ to simplify the exposition.  Our proofs generalize for any $x_0$, by paying a $\log{\|x_0\|}$ factor in the bound.} consider the trajectory $x_1,x_2,\dots,x_{T+1}$. The learner has access to censored observations, i.e., there is a process of observable sets $(\mS_t)_t$, and she observes $x_t$ if and only if $x_t\in \mS_t$. Also, at time $t$, the set $\mS_t$ is revealed to her. Now, $\mS_t$'s may depend on the state-trajectory, but we assume that given $x_t$, the state $x_{t+1}$ and the set $\mS_{t+1}$ are statistically independent. To see why this is a natural assumption, consider the camera-based example (Section \ref{sec:intro}), and think of $x_t \in \R^{3}$ as the position of some object, and $\mS_t$ as the visible part of the space at time $t$. Having observed $x_t$, the camera could adapt its frame (affecting $\mS_{t+1}$), to improve the chances for observing $x_{t+1}$, but without knowing the next ``excitation'' $w_t$. We now formally state our assumptions, sketched in Section \ref{sec:intro}.
\begin{assumption}\label{ass:stability}
  $\sA$ is diagonalizable and stable, i.e., $\sA=UDU^{-1}$, where $D$ is diagonal and $\rho(\sA)=\max_{i} |D_{ii}|< 1$.\footnote{Note that $S$ and $D$ can have complex entries.}
\end{assumption}

Let $\mO$ be the set of observation times: $\mO=\{t \in [T]: x_{t}\in \mS_t\}$. We assume that we observe "enough" data: 
\begin{assumption}\label{ass:observable}
 For a known constant $\beta\in (0,1)$, with probability $1-o(1)$, we have $|\mO|\ge \beta T$, where $o(1)$ denotes a $\delta_T\to 0$, as $T\to \infty$.
\end{assumption}
Let $\mB(a)$ be the set of timesteps $t$, at which 1) we observe $x_t$, and 2) given $x_t$ and $\mS_{t+1}$, the probability of observing $x_{t+1}$ is less than $a$, i.e., $\mB(a)=\left\{t\in \mO :\ \mN\left(\sA x_t,I;\mS_{t+1}\right)<a \right\}$. 
\begin{assumption}\label{ass:survival-prob}
For a known constant $\alpha \in (0,1)$, and some bound $L>0$, with probability $1-o(1)$, we have $|\mB(\alpha)|\le L$. Also, $\Ex \big[|\mB(\alpha)|\big]\le L$.
\end{assumption}
Our bounds will depend on $L$, and will match the uncensored case if $L\le \tO(d)$.\footnote{$\tO(\cdot)$ hides logarithmic factors.} As we mentioned in the introduction, Assumptions \ref{ass:observable} and \ref{ass:survival-prob} are the adaptation of the $\Omega(1)$-survival-probability assumption in \cite{daskalakis2018efficient}. We further motivate Assumption \ref{ass:survival-prob} with a natural one-dimensional example.
\begin{example}
Let $x_{t+1}=a_*x_t+w_t$, where $a_*\in[0,1)$. The observable set is a static half-line: $\mS_t=\mS=\{x\in \R\ :\ x\ge \lambda\}$, $\lambda>0$. We claim that here, Assumption \ref{ass:observable} implies Assumption \ref{ass:survival-prob} with $\alpha=\Omega(1)$ and $L=0$. This is clear for $a_* \approx 1$, since if $x_t \ge \lambda$, then $a_* x_t +w_t \ge x_t\ge \lambda$ with probability almost $1/2$.\footnote{Note that for $a_* \approx 1$, we did not need Assumption \ref{ass:observable}, to show Assumption \ref{ass:survival-prob}. However, we need it for general $a_*$.} For general $a_*\in[0,1)$, the implication is less obvious (see Appendix \ref{appdx:1d}). 
\end{example}
 As in \cite{simchowitz2018learning}, our bounds depend on the controllability Gramian $\ga_T$, defined as $\Gamma_T\coloneqq \sum_{s=0}^{T-1}\sA^s(\sA^s)^\top$. This matrix quantifies how much the noise process excites the system. In the theorem that follows, we use $\widetilde{\Theta}(1)$ to denote polylogarithmic factors in $T$ and in $\text{cond}(U)$, where $\text{cond(U)}$ is the condition number of the eigenvector-matrix $U$.
\begin{theorem}\label{thm:main}
Under Assumptions \ref{ass:stability}, \ref{ass:observable}, \ref{ass:survival-prob}, there exist $C_{\alpha,\beta}, C_{\alpha,\beta}'=\widetilde{\Theta}(1)\cdot \poly\left(\frac{1}{\alpha \beta}\right)$, such that if 
\begin{align*}
 T\ge C_{\alpha,\beta}'\cdot \left(d^2+\frac{d}{1-\rho(\sA)}+dL\right),   
\end{align*}
then with probability at least $99\%$, Algorithm \ref{alg:overall} runs in polynomial time, and outputs an $\whA$ such that 
\begin{align}\label{eq:error-bound-main}
   \left\|\whA-\sA\right\|_{\Gamma_T}\leq C_{\alpha,\beta}\sqrt{\frac{d^2+dL}{T}}.
\end{align}
\end{theorem}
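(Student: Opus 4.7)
The plan is to prove Theorem~\ref{thm:main} by combining four ingredients: a least-squares warmup to reach a good neighborhood of $\sA$, a generic Online Newton Step (ONS) inequality allowing arbitrary update directions, a Switching-Gradient rule for those directions that keeps bias controlled, and a uniform small-ball lower bound that converts empirical error on observed states into a $\Gamma_T$-norm guarantee.

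First I would analyze the warmup by running ordinary least-squares on paired observation times $\{t \in \mO : t+1 \in \mO\}$, ignoring censoring. By Assumptions~\ref{ass:observable} and \ref{ass:survival-prob}, a constant fraction of timesteps yield paired observations, which suffices to produce a crude $A_0$ whose operator-norm distance to $\sA$ is small enough that the subsequent SON-SG iterates lie in a region where the expected censor-aware negative log-likelihood is strongly convex with constant depending on $\alpha$. The burn-in $T \ge C'(d^2 + d/(1-\rho(\sA)) + dL)$ reflects exactly this decomposition: $d/(1-\rho(\sA))$ is the mixing factor for warmup concentration, $d^2$ is the parameter-count cost of the ONS phase, and $dL$ tracks the bad timesteps.

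Next I would establish a generic ONS inequality. For the update $A_{i+1} = A_i - \eta H_i^{-1} g_i$ with $H_i = \lambda I + \sum_{j \le i} x_{t_j} x_{t_j}^{\top}$ and \emph{arbitrary} vectors $g_i$, telescoping the potential $\Phi_i = \tfrac{1}{2\eta}\|A_i - \sA\|_{H_i}^2$ yields
\[
 \sum_{i=1}^N \langle A_i - \sA,\, g_i \rangle \;\le\; \frac{\|\sA - A_0\|_{H_0}^2}{2\eta} \;+\; \frac{d}{2\eta}\log\det(H_N H_0^{-1}) \;+\; \eta \sum_{i=1}^N \|g_i\|_{H_i^{-1}}^2,
\]
recovering \cite{hazan2007logarithmic} without requiring $g_i$ to be a gradient of any fixed convex loss. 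I would then define $g_i$ by a test: if the survival probability $\mN(A_i x_{t_i}, I; \mS_{t_i+1})$ exceeds $\alpha$, take the \emph{censor-aware} stochastic gradient $(A_i x_{t_i} - \Ex_{z \sim \mN(A_i x_{t_i}, I, \mS_{t_i+1})}[z])\, x_{t_i}^{\top}$; otherwise take the \emph{censor-oblivious} gradient $(A_i x_{t_i} - x_{t_i+1})\, x_{t_i}^{\top}$. On good steps, $\Ex[g_i \mid \mF_{i-1}]$ is a subgradient of an $\alpha$-strongly-convex (along $x_{t_i}$) population loss, which gives $\langle A_i - \sA, \Ex[g_i] \rangle \gtrsim \alpha \, \|(A_i - \sA) x_{t_i}\|^2$; this is the anti-concentration statement that survives when the survival probability is only $\alpha$ rather than $\Omega(1)$.

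The main obstacle is the analysis of bad steps, where the censor-oblivious gradient is biased because $\Ex[x_{t_i+1} \mid x_{t_i}, x_{t_i+1} \in \mS_{t_i+1}] \neq \sA x_{t_i}$. Assumption~\ref{ass:survival-prob} bounds their count by $L$, confining their contribution in the generic ONS inequality to $O(dL)$; the stochastic fluctuation of $g_i$ around its conditional mean is controlled by a Freedman-type martingale concentration on $\sum_i \|g_i - \Ex[g_i \mid \mF_{i-1}]\|_{H_i^{-1}}^2$. Unlike SGD-based CTS, we cannot drop bad steps or reshuffle data, because the Markov structure forces online processing; the switching scheme is what allows a well-defined update even when the truncated-Gaussian expectation is numerically unstable. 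Finally, to convert the accumulated empirical bound $\sum_i \|(A_i - \sA) x_{t_i}\|^2 = O(d^2 + dL)$ into \eqref{eq:error-bound-main}, I would prove a generalized small-ball lower bound: simultaneously for every subset $\mI \subseteq [T]$ with $|\mI| \ge \Omega(T)$, $\sum_{t \in \mI} x_t x_t^{\top} \succeq c(\alpha,\beta) \cdot T \cdot \Gamma_T$ up to polylog factors. The naive union bound over $\binom{T}{\Omega(T)}$ subsets is vacuous, so I would adapt the block-martingale small-ball argument of \cite{simchowitz2018learning} at the conditional one-step-ahead level, which upgrades the lower bound to hold uniformly in $\mI$ without explicit set enumeration.
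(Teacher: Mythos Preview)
Your high-level architecture mirrors the paper's---least-squares warmup, a generic ONS inequality with arbitrary update directions, the switching-gradient rule, and a small-ball lower bound uniform over $\Omega(T)$-sized subsets. Two steps, however, would not close as written.

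First, the final conversion. You obtain $\sum_i \|(A_i - \sA) x_{t_i}\|^2 = O(d^2 + dL)$ and then invoke the small-ball bound to reach $\|\whA - \sA\|_{\Gamma_T}$. But this sum ranges over \emph{moving} iterates $A_i$; it controls no single matrix in any norm, and the small-ball inequality $\sum_{t \in \mI} x_t x_t^\top \succcurlyeq cT\Gamma_T$ only lets you trade a $\Sigma$-norm for a $\Gamma_T$-norm on a \emph{fixed} $A$. Neither the last iterate nor the average is bounded by that sum (for the average you would need cross terms $\|(A_i-\sA)x_{t_j}\|^2$ with $i\ne j$). The paper's generic bound (Lemma~\ref{lem:generic:new}) is arranged differently: $\|A_{N+1} - \sA\|_{\Sigma_N}^2$ is kept on the left of the telescoping, and the work goes into showing $E_1 = \sum_i \big(2\eta \langle g_i, A_i - \sA \rangle - \|(A_i - \sA) x_{t_i}\|^2\big)$ has nonnegative expectation; the small-ball bound then converts $\Sigma_N$ to $\Gamma_T$ for the single matrix $A_{N+1}$.

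Second, you identify ``bad steps'' with the censor-oblivious branch and bound their number by $L$ via Assumption~\ref{ass:survival-prob}. But $L$ bounds $|\mB(\alpha)|$, the set of times where $\mN(\sA x_t,I;\mS_{t+1})<\alpha$, whereas the Test probes survival at the \emph{current iterate} $A_i$; these are different events, and the Test can return False on every step without violating Assumption~\ref{ass:survival-prob}. The paper's key point is that the censor-oblivious branch is not a nuisance to be budgeted---it actually makes progress: when Test returns False but $\mN(\sA x_t,I;S_t)\ge\alpha$, Claim~\ref{cl:prob-lb:new} forces $\|\mu_t-\mu_t^*\|>2s(\alpha)$, which yields $\Ex\langle \mu_t - y_t, \mu_t - \mu_t^* \rangle \ge \tfrac12\|\mu_t-\mu_t^*\|^2$ and hence a nonnegative contribution to $E_1$. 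The $L$ budget is reserved for the orthogonal event $t\in\mB(\alpha)$. (As a smaller point, your censor-aware direction $(A_i x_{t_i} - \Ex_z[z])x_{t_i}^\top$ omits the observation $x_{t_i+1}$; the paper uses $(z_t - y_t)x_t^\top$ with $y_t=x_{t+1}$ and $z_t$ sampled from $\mN(A_i x_t,I,S_t)$, whose conditional mean is $\nabla f_t(A_i)$.)
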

\begin{remark}
For uncensored LDSs, and stable-diagonalizable $\sA$, the best known (spectral-norm) bound is $\left\|(\whA-\sA)\ga_T^{1/2}\right\|_2\le \tO\left( \sqrt{d/T}\right)$. So, the Frobenius version is $\|\whA-\sA\|_{\ga_T}\le \tO\left( \sqrt{d^2/T}\right)$, which matches (\ref{eq:error-bound-main}), if $\alpha,\beta=\Omega(1)$, and $L=\tO(d)$. We leave the spectral-norm bound for censored LDSs for future work.
\end{remark}
\begin{remark}
It is possible to drop the assumption that $\sA$ is diagonalizable by paying an exponential dependence in the size of the largest Jordan block. This dependence (for a bound on $\|\cdot\|_{\Gamma_T}$) appears even in the uncensored case (see \cite{ghai2020no} for a discussion on this).
\end{remark}
As we mentioned, our algorithm is SON-SG, preceded by a least-squares warmup procedure. Let $\mP$ be the set of of pairs $(x_t,x_{t+1})$, such that both $t,t+1\in \mO$. We will ignore all "isolated" observations, i.e., the ones not participating in any pair of $\mP$. As we will see, by ignoring them This decision is justified by the following proposition:
\begin{proposition}\label{prop:largeN}
Let $M\coloneqq|\mP|$. With probability $1-o(1)$, we have $M \ge \alpha\beta T /2$.
\end{proposition}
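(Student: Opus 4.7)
The plan is to construct a martingale from the pair-observation indicators, apply Azuma--Hoeffding, and combine the resulting concentration with Assumptions~\ref{ass:observable} and~\ref{ass:survival-prob}.

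First, I set up the filtration $\mF_t := \sigma(x_{\leq t}, \mS_{\leq t+1})$, which is valid because the assumption that $x_{t+1}$ and $\mS_{t+1}$ are conditionally independent given $x_t$ lets us treat $\mS_{t+1}$ as revealed before $x_{t+1}$. In this filtration, the events $\{t \in \mO\}$ and $\{t \in \mB(\alpha)\}$, as well as the conditional survival probability $p_t := \mN(\sA x_t, I;\, \mS_{t+1})$, are all $\mF_t$-measurable. Combining the conditional independence with $x_{t+1}\mid x_t \sim \mN(\sA x_t, I)$ gives $\Pr[t+1 \in \mO \mid \mF_t] = p_t$. Setting $X_t := \cha\{t \in \mO\}\,\cha\{t+1 \in \mO\}$ so that $M \geq \sum_{t=1}^{T-1} X_t$, and $D_t := X_t - \cha\{t \in \mO\}\,p_t$, we obtain a martingale difference sequence for $(\mF_{t+1})_t$ bounded in $[-1,1]$. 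Azuma--Hoeffding then yields $\sum_{t=1}^{T-1} D_t \geq -O(\sqrt{T\log T})$ with probability $1 - o(1)$.

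Next, I lower-bound the conditional mean deterministically: whenever $t \in \mO\setminus \mB(\alpha)$ we have $p_t \geq \alpha$ by definition of $\mB(\alpha)$, so
\[ \sum_{t=1}^{T-1} \cha\{t \in \mO\}\,p_t \geq \alpha\big(|\mO|-|\mB(\alpha)|-1\big), \]
with the $-1$ absorbing the boundary index $t=T$. By Assumptions~\ref{ass:observable} and~\ref{ass:survival-prob}, the event $\{|\mO|\geq \beta T\}\cap\{|\mB(\alpha)|\leq L\}$ has probability $1-o(1)$. Intersecting with the Azuma event gives
\[ M \geq \alpha(\beta T - L - 1) - O(\sqrt{T\log T}). \]
Under the theorem's hypothesis $T \geq C_{\alpha,\beta}'(d^2 + dL)$ with $C_{\alpha,\beta}'$ a large enough polynomial in $1/(\alpha\beta)$, we have $\alpha L \leq \alpha\beta T/4$ and $\sqrt{T\log T} = o(T)$, so the right-hand side exceeds $\alpha\beta T/2$ for $T$ sufficiently large, which is exactly the claim.

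There is no substantial obstacle; the one point requiring care is setting up a filtration in which both $\cha\{t \in \mO\}$ and $p_t$ are $\mF_t$-measurable while $x_{t+1}\mid\mF_t$ remains $\mN(\sA x_t, I)$. The conditional independence assumption $x_{t+1}\perp \mS_{t+1}\mid x_t$ is precisely what enables this, after which the argument reduces to a standard martingale concentration combined with the two high-probability events provided by the assumptions.
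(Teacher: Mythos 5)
Your proof is correct and follows essentially the same route as the paper's: both set up a martingale over timesteps whose increments compare the pair-observation indicator against the conditional survival probability, apply Azuma--Hoeffding, and then invoke Assumptions~\ref{ass:observable} and~\ref{ass:survival-prob} together with $T\gg L$. The paper centers the process $(\cha\{\mO_{t+1}\}-\alpha)\cha\{\mO_t\land\mE_t^\alpha\}$ whereas you center $X_t-\cha\{t\in\mO\}p_t$ and only afterward replace $p_t$ by $\alpha$ off $\mB(\alpha)$, but the bookkeeping is equivalent.
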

 We prove Proposition \ref{prop:largeN} in Appendix \ref{appdx:pairs}, using Assumptions \ref{ass:observable},\ref{ass:survival-prob}, and that $T>>L$. Let $\mP_0$ be the first (in time-order) $\floor{M/2}$ pairs of $\mP$, and $\mP_1=\mP\setminus\mP_0$. Our algorithm first uses $\mP_0$ to create a (rough) confidence ellipsoid $
\mK$ that includes $\sA$, with high probability. Then, it uses $\mK$ as constraint-set for SON-SG, which will operate on $\mP_1$. In the next section, we present and analyze SON-SG in a more general setting, which reveals the key structure that our method exploits. Then, in Section \ref{sec:projection}, we give the full-algorithm and conclude the proof of Theorem \ref{thm:main}.
\section{Truncated Time Series with Linear Responses}\label{sec:time-series}
 For timesteps $1$ to $T$, consider a covariate-response process $(x_t,y_t)_{t}$, where $x_t\in \R^d$, $y_t\in \R^n$. The learner only observes a subset of the data, based on a Bernoulli process $(o_t)_t$, i.e., if $o_t=1$, then she observes $(x_t,y_t)$, otherwise the pair is hidden. In untruncated linear-response time series, we have $y_t=\sA x_t +w_t$, where $w_t\sim N(0,I)$, and $\sA\in \R^{n\times d}$. Here, when the learner gets to see a datapoint, the noise will be biased due to truncation. Formally, consider a process of observable sets $(S_t)_t$, where $S_t\subseteq \R^n$. Let $\mF_t$ be the $\sigma$-algebra generated by $x_{\le t}$, $y_{< t}$, $o_{\le t}$ and $S_{\le t}$ (note that $y_t$ is not in $\mF_t$). We assume that given $\mF_t$ and $o_t=1$, the set $S_t$ is revealed to the learner, and $y_t\sim \mN(\sA x_t,I,S_t)$. This model has three notable special cases:

\begin{enumerate}
    \item \textbf{Untruncated, linear-response time series.} This model was studied by \cite{simchowitz2018learning}, and corresponds to $o_t=1$ and $S_t=\R^n$, for all $t$.
    \item \textbf{Truncated linear regression.} First considered by \cite{tobin1958estimation}, this model was revisited by  \cite{daskalakis2019computationally}, and corresponds to $o_t=1$ for all $t$, and it requires independent data, i.e., given $x_t$, the response $y_t$ is independent of $x_{-t}$.\footnote{To be precise, \cite{daskalakis2019computationally} consider one-dimensional responses and fixed truncation set. However, the extension to multidimensional $y_t$'s and time-varying truncation sets is relatively straightfoward.}
    \item \textbf{LDS with censored observations.} Here, $y_t=x_{t+1}$, $o_t=\cha \left\{x_t\in \mS_t\ \land\ x_{t+1}\in \mS_{t+1}\right\}$, and $S_t=\mS_{t+1}$. Note that we pretend we do not observe the "isolated" observations, which is aligned with what our algorithm does. 
\end{enumerate}

\paragraph{Initial confidence ellipsoid.} SON-SG receives as input a rough initial estimate $A_0\in \R^{n \times d}$, and a $\Sigma_0 \succ 0$ that represents a confidence ellipsoid $\mK=\{A\in \R^{n\times d}:\ \|A-A_0\|_{\Sigma_0}\le 1\}$. For LDSs, we will later show how to use $\mP_0$ to construct a $\mK$ with the following property:
\begin{definition}
A confidence ellipsoid $\mK=\{A\in \R^{n\times d}:\ \|A-A_0\|_{\Sigma_0}\le 1\}$ is $(R,\omega)$-accurate, for some $R,\omega >0$, if (a) $A_*\in \mK$, (b) $ \Sigma_0\succcurlyeq  \omega \cdot I$, and (c) for all $t$, $\big\|\Sigma_0^{-1/2}x_t\big\|$ is $R^2$-subgaussian.\footnote{A random variable $X$ is called $\sigma^2$-subgaussian if $\Pr[|X|\ge \delta\cdot \sigma]\le \exp\left(-\delta^2/C\right)$, where $C=O(1)$.}
\end{definition}
\begin{theorem}\label{thm:time-series}
Fix $R,R_w,R_x,L,\alpha>0$, and let $B(\alpha)\coloneqq\{t\in[T]:\ o_t=1,\ \mN(\sA x_t,I;S_t)<\alpha\}$. Suppose that (a) we are given an $(R,\omega)$-accurate confidence ellipsoid $\mK$, (b) $\Ex[|B(\alpha)|]\le L$, (c) for all $t$, the noise $w_t=y_t-\sA x_t$ has norm $\|w_t\|$ that is $R_w^2$-subgaussian, and (d) $\Ex[\|x_t\|^2]\le R_x^2$. Let $t_1<t_2<\dots<t_N$ be the observation times ($o_{t_i}=1$). If the total number of steps $T\ge \poly(1/\alpha)$, then SON-SG (Algorithm \ref{alg:newton}) outputs an $\whA$ such that 
\begin{align}\label{eq:error-bound}
   \mathbb{E}\left[\left\|\whA-\sA\right\|_\Sigma ^2\right]\leq \frac{D+L D'}{N},
\end{align}
where $\Sigma=\frac{1}{N}\sum_{i=1}^Nx_{t_i}x_{t_i}^T$, and $D=\tO(1)\cdot dD'$,  $D'=\tO(1)\cdot \poly(1/\alpha)\cdot(d+R^2+R_w^2)$. Here, $\tO(1)$ denotes a polylogarithmic factor in the parameters defined in this section.
\end{theorem}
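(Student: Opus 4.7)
The plan is three steps: (i) a \emph{generic} ONS regret bound that accepts arbitrary ``gradient'' vectors $g_i$, (ii) a switching rule designing the $g_i$ so that in expectation these vectors pull $A_i$ toward $\sA$ in the quadratic form $x_{t_i} x_{t_i}^\top$, and (iii) an online-to-batch conversion producing the claimed $\Sigma$-norm bound on $\widehat A$. Concretely, I would set up the ONS recursion $A_{i+1} = \Pi_{\mK}^{H_i}\!\bigl(A_i - H_i^{-1} g_i\bigr)$ with rank-one Hessian updates $H_i = \Sigma_0 + \sum_{j \le i} x_{t_j} x_{t_j}^\top$ and projection in the $H_i$-norm. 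A standard mirror-descent telescoping in this geometry gives, for any $A \in \mK$ (in particular $A = \sA$, which lies in $\mK$ by accuracy),
\begin{equation*}
\sum_{i=1}^N \langle g_i, A_i - A \rangle \le \tfrac{1}{2}\|A - A_1\|_{\Sigma_0}^2 + \tfrac{1}{2}\sum_{i=1}^N g_i^\top H_i^{-1} g_i,
\end{equation*}
where the first term is $\tO(1)$ by $(R,\omega)$-accuracy of $\mK$ and the second is a variance term handled below.

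The crucial step (ii) is the switching rule. At each observation $t_i$, SON-SG uses the membership oracle to form a Monte-Carlo estimate $\hat p_i$ of $\mN(A_i x_{t_i}, I; S_{t_i})$. If $\hat p_i \ge \alpha$, take the \emph{censor-aware} gradient $g_i = (\mu_i - y_{t_i})\, x_{t_i}^\top$ with $\mu_i = \mathbb{E}_{y' \sim \mN(A_i x_{t_i}, I, S_{t_i})}[y']$; this is the stochastic gradient of the truncated negative log-likelihood. Otherwise take the \emph{censor-oblivious} gradient $g_i = (A_i x_{t_i} - y_{t_i})\, x_{t_i}^\top$. For every $t_i \notin B(\alpha)$ one then shows
\begin{equation*}
\mathbb{E}[\langle g_i, A_i - \sA \rangle \mid \mF_{t_i}, o_{t_i}=1] \;\ge\; \kappa(\alpha)\, \|(A_i - \sA) x_{t_i}\|^2,
\end{equation*}
with $\kappa(\alpha) \ge 1/\poly(1/\alpha)$: the censor-aware branch invokes strong convexity in the mean parameter of the truncated-Gaussian NLL on sets of survival $\ge \alpha$, while the censor-oblivious branch is nearly unbiased when $\hat p_i$ is close to $1$. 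The (at most $L$ in expectation) steps with $t_i \in B(\alpha)$ may contribute negative inner products, but each adds at most $\poly(1/\alpha)(d+R^2+R_w^2)$ to the right-hand side, producing the $LD'$ term.

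For step (iii), I control the variance $\sum_i g_i^\top H_i^{-1} g_i$ using subgaussianity of $\|x_{t_i}\|$ (through $\Sigma_0, R$) and $\|w_{t_i}\|$ (through $R_w$), combined with the potential-function telescoping $\sum_i x_{t_i}^\top H_i^{-1} x_{t_i} \le \log\det(H_N \Sigma_0^{-1}) = \tO(d)$; the expected variance is $\tO\bigl(d(d+R^2+R_w^2)\poly(1/\alpha)\bigr)$. A Freedman-type inequality handles the martingale $\sum_i \bigl(\langle g_i, A_i - \sA\rangle - \mathbb{E}[\langle g_i, A_i - \sA\rangle\mid\mF_{t_i}]\bigr)$. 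Combined with step (ii), this yields $\kappa(\alpha)\sum_i \mathbb{E}\|(A_i-\sA)x_{t_i}\|^2 \le D + LD'$; since $H_N \succeq \Sigma_0 + N\Sigma$, the ONS geometry delivers the desired $\mathbb{E}\|\widehat A - \sA\|_\Sigma^2 \le (D+LD')/N$ via a standard online-to-batch conversion for second-order ridge-like methods. The main obstacle I anticipate is step (ii): establishing a uniform $\kappa(\alpha) \ge 1/\poly(1/\alpha)$ \emph{at the current iterate} $A_i \in \mK$ (not at $\sA$)---which is exactly where the ``generic'' ONS framework buys flexibility over prior SGD-based CTS analyses---and verifying that the Monte-Carlo test $\hat p_i \ge \alpha$ correctly triggers the switch uniformly across $i$ with only $O(\log T)$ samples per step.
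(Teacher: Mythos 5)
Your high-level plan---a generic ONS bound with arbitrary vectors $g_i$, a Monte-Carlo switch between a censor-aware and a censor-oblivious gradient, and control of the variance via a $\log\det$ potential---matches the paper's strategy, and you correctly identify the heart of the argument (establishing progress at the \emph{current iterate} $A_i\in\mK$ rather than at $\sA$). However, there are concrete gaps in how you instantiate it.

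First, the telescoping bound you write in step (i), $\sum_i \langle g_i, A_i - A\rangle \le \frac{1}{2}\|A - A_1\|_{\Sigma_0}^2 + \frac{1}{2}\sum_i g_i^\top H_i^{-1}g_i$, is not what the ONS recursion gives. Because $H_i$ is \emph{increasing}, the telescope produces an extra curvature term $\sum_i \|(A_i - A)x_{t_i}\|^2$ on the right-hand side, and also a term $-\|A_{N+1}-A\|_{H_N}^2$ on the left. The paper's generic bound (Lemma~\ref{lem:generic:new}) packages these correctly: $\|\whA - \sA\|_{\Sigma_N}^2 \le 1 - \sum_i\bigl(2\eta\langle g_i, A_i-\sA\rangle - \|(A_i-\sA)x_{t_i}\|^2\bigr) + \eta^2 \sum_i g_i^\top\Sigma_i^{-1}g_i$. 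This arrangement is what makes your ``online-to-batch'' in step (iii) unnecessary: the telescope \emph{already} bounds the last iterate in the $\Sigma_N$-norm. By contrast, your proposed route, first showing $\kappa(\alpha)\sum_i\Ex\|(A_i-\sA)x_{t_i}\|^2 \le D + LD'$ and then converting, does not work as stated---a small average $\|(A_i-\sA)x_{t_i}\|^2$ over the trajectory of iterates does not by itself bound $\|A_{N+1}-\sA\|_\Sigma^2$ for the last iterate, and there is no off-the-shelf online-to-batch lemma that does this. The fix is simply to use the paper's form of the telescope, whose progress condition is exactly the $\Ex\langle g_i, A_i-\sA\rangle \ge \frac{1}{2\eta}\|(A_i-\sA)x_{t_i}\|^2$ that you already want to prove in step (ii).

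Second, two issues in the switching rule. (a) Your threshold $\hat p_i \ge \alpha$ is too weak. The paper uses a threshold $2\gamma$ with $\gamma = (\alpha/2)^{c_\gamma}$ for a large constant $c_\gamma$. The reason is that the ``False'' branch is handled by showing that $\gamma_t \le 4\gamma$ \emph{together with} $\mN(\mu_t^*, I; S_t)\ge\alpha$ forces $\|\mu_t-\mu_t^*\| > 2s(\alpha)$ via Claim~\ref{cl:prob-lb:new}; this contradiction requires the test threshold to be polynomially smaller than $\alpha$. With threshold $\alpha$ there is no contradiction and the argument breaks. (b) Your justification for the censor-oblivious branch (``nearly unbiased when $\hat p_i$ is close to $1$'') is backwards: the branch fires when $\hat p_i$ is \emph{small}, and the gradient $(A_i x_{t_i} - y_{t_i})x_{t_i}^\top$ is not approximately unbiased there. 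The correct argument is geometric: small survival at $\mu_t$ plus $\alpha$-survival at $\mu_t^*$ forces $\|\mu_t-\mu_t^*\|$ large, and then $\|\nu_t^*-\mu_t^*\|\le s(\alpha)$ gives $\langle\mu_t-\nu_t^*, \mu_t-\mu_t^*\rangle \ge \frac{1}{2}\|\mu_t-\mu_t^*\|^2$ by a simple inner-product estimate. Finally, a smaller point: you define the censor-aware gradient as $(\mu_i - y_{t_i})x_{t_i}^\top$ with $\mu_i = \Ex_{y'\sim\mN(A_ix_{t_i},I,S_{t_i})}[y']$, but this expectation is not computable; the paper uses a rejection-sample $z'\sim\mN(A_ix_{t_i},I,S_{t_i})$ so that the gradient is stochastic.
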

\begin{remark}
Note that in the LDS case, $R_w=O(\sqrt{d})$ by standard concentration of Gaussian norm. However, for slowly mixing systems ($\rho(\sA)\to 1$), $R_x$ can grow polynomially with $T$. Our bounds only degrade in the logarithm of $R_x$, which is absorbed in $\tO(\cdot)$, and the same happens with $1/\omega$. 
\end{remark}
Before presenting SON-SG and its analysis, we give some background on existing techniques for CTS.

\subsection{Existing Techniques and their Limitations}
Even though \cite{daskalakis2019computationally} consider truncated linear regression with no temporal dependencies, we will use the same likelihood-based objective. Specifically, let 
\begin{align*}
    \ell_S(\mu;y)\coloneqq -\frac{1}{2}\|y-\mu\|^2 -\log\left(\int_S \exp\left(-\frac{1}{2}\left\|z-\mu\right\|^2\right) dz\right),
\end{align*}
and observe that given $\mF_t$ and that $o_t=1$, we have that for a candidate matrix $A$, the log-likelihood for $y_t$ is $\ell_{S_t}(Ax_t ; y_t)$.
Also, let $f_t(A)$ be the (negative) population log-likelihood:
\begin{align*}
f_t(A)\coloneqq -\mathbb{E}_y\Big{[}\ell_{S_t}(Ax_t;y)\ \Big{|}\ \mF_t,o_t=1\Big{]},
\end{align*}
and observe that $f_t(A)=-\mathbb{E}_{y\sim \mN\left(A_*x_t,I,S_t\right)}\Big{[}\ell_{S_t}(Ax_t;y)\Big{]}$. Given the observed data, our goal will be to minimize $ f(A)\coloneqq \frac{1}{N}\sum_{i=1}^N f_{t_i}(A)$.
To see why $f(A)$ is a ``good'' objective, we take the first and second derivatives of $f_t(A)$:\footnote{$\otimes$ denotes Kronecker product.}
\begin{align*}
    \nabla f_t(A)= \mathbb{E}_{z\sim \mN\left(Ax_t,I,S_t\right)}\left[z\right]x_t^\top- \mathbb{E}_{y\sim \mN\left(A_*x_t,I,S_t\right)}\left[y\right]x_t^\top
\end{align*}
\begin{align}\label{eq:cov}
    \nabla^2 f_t(A)= \text{Cov}_{z\sim \mN\left(Ax_t,I,S_t\right)}\left[z,z\right]\otimes \left(x_tx_t^\top\right),
\end{align}
where $\nabla^2 f_t(A)$ is used to denote $\nabla^2 f_t\left(\text{vec}\left(A^\top\right)\right)$, and $\text{vec}(\cdot)$ is the standard vectorization. Now, note that $\nabla f_t(\sA)=0$, and so $\nabla f(\sA)=0$. Also, since the Kronecker product of positive semidefinite matrices (PSD) is PSD, $f_t(A)$ is convex, and so $f(A)$ is also convex.\footnote{As we mentioned in the introduction, for censored LDSs, the negative log-likelihood is non-convex. However, here we have convexity, because $f(A)$ corresponds to a part of the overall log-likelihood.} Now, if $f$ was \emph{strongly-convex} ($\nabla^2 f(A)\succ 0$), then $\sA$ would be the unique optimal solution, justifying the use of the objective. Even though strong-convexity does not hold, \cite{daskalakis2019computationally} show how to address this for independent data, by restricting $A$ in some set that contains $\sA$, and $\nabla^2 f(A)\succeq \Omega(1)\cdot I$, inside the set. Let's assume (for now) that here, $\nabla^2 f(A)\succeq \Omega(1)\cdot I$ holds. Now, note that a priori, it is not clear how to optimize $f(A)$, since we do not have a closed-form expression. An important conceptual contribution of \cite{daskalakis2019computationally} is the observation that by sampling $z_{t}\sim \mN\left(Ax_t,I,S_t\right)$,\footnote{This is done via rejection sampling, using the membership oracle for $S_t$.} and computing $v_t=(y_t-z_t)x_t^\top$, we have $\mathbb{E}\big{[}v_t\ \big{|}\ \mF_t,o_t=1\big{]}=\nabla f_t(A)$, i.e, we get an unbiased gradient estimate. Based on this observation, they employ a variant of SGD to minimize $f(A)$, but their algorithm is tailored to independent data. The reason is that it processes the data in random order, and also does multiple passes over them. Thus, by the time it computes $v_t$, it is very likely that before that, it had processed $(x_t',y_t')$, for $t'>t$. Because of this, if the data are temporally dependent, $v_t$ can be a \emph{biased} estimate of $\nabla f_t(A)$. 

\subsection{Our Approach}
To avoid the above issue, we need to process the data in temporal order. This is exactly the case for Online Convex Optimization (OCO). In OCO though, the goal is not to recover some parameter, but to minimize \emph{regret}. However, regret bounds can often be transformed to statistical recovery rates via "online-to-batch" conversions (e.g., \cite{cesa2004generalization}). In our setting, this conversion can be done, but is trickier than usual, and we will deal with it later. Now, since we are aiming for a fast $\tO(1/N)$-rate, the first natural attempt is online SGD, which has only logarithmic (in $N$)  regret, provided all $f_{t_i}$'s are strongly-convex (\cite{hazan2019introduction})\footnote{In online-to-batch conversions, a regret-bound $R_N>0$ often translates to $R_N/N$ statistical recovery rate.}. Unfortunately, this in not true for any $f_{t_i}$, due to the rank-one component $x_{t_i}x_{t_i}^T$ in the Hessian $\nabla^2 f_{t_i}(A)$ (\ref{eq:cov}). However, there is still structure we can exploit. Notice that every row of $\nabla f_{t_i}(A)$ has the same direction as $x_{t_i}$, which corresponds exactly to that "problematic" rank-one component in the Hessian. This structure is reminiscent of the \emph{exp-concavity} property (\cite{hazan2007logarithmic}), which essentially is strong-convexity, \emph{in the direction of the gradient}: 
\begin{definition}
A function $f$ is called $\lambda$-exp-concave, if for all $x$, $ \nabla^2 f(x) \succcurlyeq \lambda \cdot \nabla f(x) \nabla f(x)^\top$.\footnote{These functions are called exp-concave, because this property is equivalent to $e^{-\lambda f(x)}$ being concave.}
\end{definition}
For exp-concave functions, the Online Newton Step (ONS) algorithm, introduced in \cite{hazan2007logarithmic}, has regret that depends logarithmically in $N$. Unfortunately, this result does not apply here:
\paragraph{Obstacles for ONS.} First, $f_{t_i}$'s are not necessarily exp-concave (unless $\lambda$ is exponentially small, which is not useful). This is because of the covariance term in \ref{eq:cov}. To the best of our knowledge, the idea used in \cite{daskalakis2019computationally} to restrict $A$ is some appropriate set, does not resolve this issue, due to temporal dependencies. The second obstacle, is that the regret bound of ONS (\cite{hazan2007logarithmic}) will have linear dependence in $R_x$, which as we said can grow as $\poly(T)$.

\subsection{Stochastic Online Newton with Switching Gradients}\label{sec:algo}
We now describe SON-SG. First, we use as projection-set the ellipsoid $\mK$. Second, we use preconditioning as in ONS, but while in ONS the preconditioner has outer-products of the gradients, here we use outer-products of the covariates. This is done to simplify the analysis. 
\paragraph{Switching Gradients.} The crucial difference with ONS is the choice of the $g_i$'s, by the "SwitchGrad" function. To ease notation, suppose we are at time $t$, and $t=t_i$ for some $i$. We define $A(t)= A_{i}$, and $g(t)= g_i$. In ONS, $g(t)$ would simply be $\nabla f_t(A(t))$. Of course, we do not have access to this gradient, but as we said we can get a stochastic gradient by sampling $z_t\sim \mN(A(t)x_t,I,S_t)$, and setting $g(t)=(z_t-y_t)x_t^\top$. Sampling from $\mN(A(t)x_t,I,S_t)$ can be done via rejection sampling, using the membership oracle. However, to be efficient, the mass $\gamma_t\coloneqq \mN(A(t)x_t,I;S_t)$ should be sufficiently large. Unfortunately,  here $\gamma_t$ can be exponentially small. This case though is easily recognizable, by estimating $\gamma_t$ via sampling from the normal $\mN(A(t)x_t,I)$, and counting how many times we hit $S_t$, using the membership oracle. This is done by the "Test" function. If the Test returns "True", then with high probability (w.h.p), $\gamma_t \ge \alpha^{O(1)}$, and so we can efficiently sample a stochastic gradient, and assign it to $g(t)$. If it returns "False", then w.h.p, $\gamma_t\le \alpha^{\Omega(1)}$. They key idea here is that, as we will show, $\gamma_t$ being small is actually an "easy" case, and simply choosing $g(t)=(A(t)x_t-y_t)x_t^\top$ suffices to make progress towards $A_*$. Observe that here $g(t)$ is a stochastic gradient of $\widetilde{f}_t(A)= -\mathbb{E}_{y\sim \mN\left(A_*x_t,I,S_t\right)}\Big{[}\ell_{\R^n}(Ax_t;y)\Big{]}$. In other words, the Test is a "switch" between $\nabla f_t$ and $\nabla \widetilde{f}_t$.


\SetKwInput{KwInput}{Input}              
\SetKwInput{KwOutput}{Output}  
\begin{algorithm}[H] 
 \KwInput{$A_0\in \R^{n\times d}$, PSD matrix $\Sigma_0 \in \R^{d\times d}$, data $(x_{t_i},y_{t_i})_{i=1}^N$. \\
 $\eta=(2/\alpha)^{c_\eta}$  \text{ \ \ \ \ \  \  \ \ \ \  \ \ \ \ \ \ \ \ \ \ \ \ \  \ \ \ \ \ \ \ \ \ \ \ \ \ \ \ \ \  \ \ \ \ \ \ \ \ \ \ \ \ \ \ \ \ \ \ \ \ \ \  \  \ \ \ $\rhd  c_\eta \ge 0$ is a large constant.} }
  $\mK=\{A\in \R^{n\times d}:\ \|A-A_0\|_{\Sigma_0}\le 1\}$\\
  $A_1=A_0$\\
  \For{$i=1$ to $N$}{
    $ g_i=\text{SwitchGrad}(A_ix_{t_i},x_{t_i},y_{t_i},S_{t_i})$\\
    $\Sigma_i=\Si_{i-1}+x_{t_i}x_{t_i}^\top$\\
    $\widetilde{A}_{i+1}=A_i-\eta \cdot g_i\Si_i^{-1}$\\
    $A_{i+1}=\argmin_{A\in \mK}\|A-\widetilde{A}_{i+1}\|_{\Si_i}$
  }
  \Return $\whA=A_{N+1}$
  \caption{Stochastic Online Newton with Switching Gradients}
  \label{alg:newton}
\end{algorithm}
\SetAlgoNoLine
\SetKwInput{KwInput}{Input}              
\SetKwInput{KwOutput}{Output}  
\begin{algorithm}[H] 
  \textbf{Input:} $\mu,x,y, S$\\
  $z=\mu$\\
  \If {\text{Test}$(\mu, S)$}{
    Sample $z'\sim \mN(\mu,I,S)$ via rejection sampling using the membership oracle $M_S$.\\
    $z=z'$
    }
    \Return $g=(z-y)x^T$
  \caption{\text{SwitchGrad}}
  \label{alg:switchgrad}
\end{algorithm}
\SetAlgoNoLine
\SetKwInput{KwInput}{Input}              
\SetKwInput{KwOutput}{Output}  
\begin{algorithm}[H] 
  \textbf{Input:} $\mu, S$. \\
  $\gamma=(\alpha/2)^{c_\gamma}$, $\ k=\frac{4}{\gamma}\log{T}.$\ \  \  \ \ \ \  \ \ \ \ \ \ \ \ \ \ \ \ \  \ \ \ \ \ \ \ \ \ \ \ \ \ \ \ \ \  \ \ \ \ \ \ \ \ \ \ \ \ \ \ \ \ \ \ \  \text{$\rhd  c_\gamma \ge 0$ is a large constant.} 
  \\
  Sample $\xi_1,\dots,\xi_k \simiid N(\mu,I)$\\
  $p=\frac{1}{k}\sum_{j=1}^k \cha\{\xi_j\in S\}$\\
  \Return $(p\geq 2\gamma)$
  \caption{\text{Test}}
  \label{alg:test}
\end{algorithm}

\section{Proof of Theorem \ref{thm:time-series}}\label{subs:prf1}
In this section, we give an overview of the proof of Theorem \ref{thm:time-series}, with an emphasis on the novel technical components. Our goal is to convey the key ideas, and so at some steps we are slightly informal. We provide the formal and detailed proof in Appendix \ref{appdx:prf:thm-time-series}.
\paragraph{The Generic Bound.}We first prove a bound on $\left\|\whA-A_* \right\|_{\Sigma}^2$ (remember that $\Sigma=\Sigma_N$ in Algorithm \ref{alg:newton}), which holds for any sequence $g_1,g_2,\dots,g_N$. This bound will serve as a guideline for choosing the $g_i$'s.
\begin{lemma}\label{lem:generic:new}
Independently of how $g_i$'s are chosen, 
\begin{align}\label{eq:generic:new}
    \left\|\whA-A_*\right\|_{\Sigma}^2\leq 1- \sum_{i=1}^N \left(2\eta \langle g_i, A_i -\sA \rangle - \left\|(A_i-\sA)x_{t_i}\right\|^2 \right) + \eta^2\sum_{i=1}^N\trace\left(g_i \Si_i^{-1}g_i^\top\right).
\end{align}
\end{lemma}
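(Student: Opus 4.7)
\textbf{Proof plan for Lemma \ref{lem:generic:new}.} The bound is a deterministic consequence of the update rule, so my plan is a standard Online-Newton-style potential-function argument, adapted to the preconditioner $\Sigma_i$ (built from covariates, not gradients) and the matrix norm $\|A\|_\Sigma^2 = \langle A^\top A, \Sigma\rangle = \trace(A\Sigma A^\top)$. Writing $\Delta_i \eqdef A_i - \sA$, I will track the potential $\Phi_i \eqdef \|\Delta_i\|_{\Sigma_{i-1}}^2$, and show that it satisfies a one-step recursion whose accumulated error is exactly the right-hand side of \eqref{eq:generic:new}. Note that $\|\whA - \sA\|_\Sigma^2 = \|\Delta_{N+1}\|_{\Sigma_N}^2 = \Phi_{N+1}$, so bounding the potential at $i=N+1$ suffices.

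\textbf{Step 1 (Pythagorean inequality for the projection).} The norm $\|\cdot\|_{\Sigma_i}$ is induced by the inner product $\langle A,B\rangle_{\Sigma_i} = \trace(A\Sigma_i B^\top)$; since $\Sigma_i \succeq \Sigma_0 \succ 0$, this is a genuine Hilbert-space norm on $\R^{n\times d}$, and $\mK$ is convex. Because $A_{i+1}$ is the $\|\cdot\|_{\Sigma_i}$-projection of $\widetilde{A}_{i+1}$ onto $\mK$, and $\sA \in \mK$, the generalized Pythagorean theorem gives $\|\Delta_{i+1}\|_{\Sigma_i}^2 \leq \|\widetilde{A}_{i+1}-\sA\|_{\Sigma_i}^2$.

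\textbf{Step 2 (Expand the gradient step).} Using $\widetilde{A}_{i+1} = A_i - \eta g_i \Sigma_i^{-1}$, I expand
\begin{align*}
\|\widetilde{A}_{i+1}-\sA\|_{\Sigma_i}^2
&= \trace\!\lr{(\Delta_i - \eta g_i\Sigma_i^{-1})\,\Sigma_i\,(\Delta_i - \eta g_i\Sigma_i^{-1})^\top} \\
&= \|\Delta_i\|_{\Sigma_i}^2 - 2\eta\,\trace(g_i\Delta_i^\top) + \eta^2\,\trace(g_i \Sigma_i^{-1} g_i^\top),
\end{align*}
where the cross term simplifies because $\Sigma_i^{-1}\Sigma_i = I$, and $\trace(g_i\Delta_i^\top) = \langle g_i, \Delta_i\rangle$.

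\textbf{Step 3 (Use the rank-one update of $\Sigma_i$).} Since $\Sigma_i = \Sigma_{i-1} + x_{t_i}x_{t_i}^\top$,
\[
\|\Delta_i\|_{\Sigma_i}^2 = \|\Delta_i\|_{\Sigma_{i-1}}^2 + \trace\!\lr{\Delta_i x_{t_i}x_{t_i}^\top \Delta_i^\top} = \Phi_i + \|\Delta_i x_{t_i}\|^2.
\]
Combining Steps 1–3 yields the one-step inequality
\[
\Phi_{i+1} \leq \Phi_i + \|\Delta_i x_{t_i}\|^2 - 2\eta\,\langle g_i, \Delta_i\rangle + \eta^2\,\trace(g_i\Sigma_i^{-1}g_i^\top).
\]

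\textbf{Step 4 (Telescope and use $\sA \in \mK$).} Summing the one-step inequality over $i=1,\dots,N$ telescopes the $\Phi_i$'s, and the initial value is $\Phi_1 = \|A_1 - \sA\|_{\Sigma_0}^2 = \|A_0 - \sA\|_{\Sigma_0}^2 \leq 1$, since $\sA \in \mK$ by assumption and $A_1 = A_0$. Rearranging the result gives precisely \eqref{eq:generic:new}.

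There is no real obstacle here; the only care needed is algebraic, in verifying that the preconditioner $\Sigma_i^{-1}$ multiplies the gradient $g_i$ on the \emph{right} (so that the cross term $\eta\,\trace(g_i\Sigma_i^{-1}\Sigma_i \Delta_i^\top) = \eta\,\langle g_i,\Delta_i\rangle$ collapses cleanly), and in matching the Pythagorean inequality to the matrix-valued setting via the vectorization $A \mapsto \mathrm{vec}(A^\top)$ with weighting matrix $\Sigma_i \otimes I_n$. The fact that $g_i$'s are arbitrary — not necessarily (stochastic) gradients of $f_{t_i}$ — is never used in Steps 1–4, which is why the bound holds unconditionally and can be used as a design template for the Switching-Gradient rule downstream.
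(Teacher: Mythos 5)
Your proof is correct and follows essentially the same route as the paper's: a Pythagorean step for the $\|\cdot\|_{\Sigma_i}$-projection onto $\mK$, an expansion of the gradient step, the rank-one identity $\|\Delta_i\|_{\Sigma_i}^2 - \|\Delta_i\|_{\Sigma_{i-1}}^2 = \|\Delta_i x_{t_i}\|^2$, telescoping, and $\|A_1-\sA\|_{\Sigma_0}^2 \le 1$ from $\sA\in\mK$. The only difference is cosmetic: you frame it as a one-step potential recursion in $\Phi_i = \|\Delta_i\|_{\Sigma_{i-1}}^2$ and then telescope, while the paper first sums the expanded inequality over $i$ and then telescopes the $\|A_i - \sA\|_{\Sigma_i}^2 - \|A_{i+1}-\sA\|_{\Sigma_i}^2$ terms afterward.
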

The proof of the lemma is along the lines of the analysis of ONS in \cite{hazan2007logarithmic}, and we provide it in Appendix \ref{appdx:prf-generic}. Let $E_1$ be the first sum in \ref{eq:generic:new}, and $E_2$ the second. We will show that for our choice of $g_i$'s, $E_1$ is (almost) non-negative, and $E_2$ is not too large (both in expectation). In the main text, we assume that $L=0$, since the extension for general $L$ is straightforward.
\subsection*{Bound on $E_1$.}
Fix a time $t$, condition on $\mF_t$, and suppose that $t=t_i$, for some $i$. To ease notation, we define $A(t)= A_i,\ g(t)=g_i,\ \mu_t^*= \sA x_t,\ \mu_t= A(t)x_t$, and $V_t= 2\eta \langle g(t), A(t) -\sA \rangle - \left\|(A(t)-\sA)x_{t}\right\|^2$. To bound $\Ex[E_1]$, we show (roughly) that $ \mathbb{E} \big[V_t\ \big|\ \mF_t, t=t_i\big]\ge 0$. Observe that this immediately implies $\Ex[E_1]\ge 0$. We consider two cases, based on what the Test function returns.
\paragraph{Large Survival Probability.} Suppose at time $t$, Test returns "True". We call this event $\mT_t$, and here we assume that Test returning "True" implies $\mN(\mu_t,I;S_t)\ge \gamma$ (this is correct w.h.p.).\footnote{The parameter $\gamma$ is defined in Algorithm \ref{alg:test}. Also, in Appendix \ref{appdx:prf:thm-time-series}, where we give the details for handling the low-probability events, we show that $\Ex[E_1]\ge \Delta$, for some $\Delta>0$, but small.} 
Furthermore, $z_t\sim \mN(\mu_t,I,S_t)$, and $g(t)=(z_t-y_t)x_t^T$. Also, since $y_t\sim \mN(\mu_t^*,I,S_t)$, we have
\begin{align}\label{eq:rephrasing:new}
     \mathbb{E} \big[V_t\ \big|\ \mF_t, t=t_i, \mT_t \big]= 2\eta \langle \nu_t - \nu_{t}^*, \mu_t-\mus_t \rangle - \|\mu_t-\mu_t^*\|^2,
\end{align}
where $\nu_t=\mathbb{E}_{z\sim \mN(\mu_t,I,S_t)}[z]$ and $\nu_t^*=\mathbb{E}_{y\sim \mN(\mu_t^*,I,S_t)}[y]$. Now, note that since $t=t_i$ and $L=0$, we have $\mN(\mu_t^*,I;S_t)\ge \alpha\ge\gamma$. The key component of our proof is the following lemma, which we prove here in detail.
\begin{lemma}\label{lem:mvt:new}
Let $\mL_S(\mu;\mus)=-\mathbb{E}_{y\sim \mN(\mu^*,I,S)}\big[\ell_S(\mu;y)\big]$, where  $\mu, \mu^* \in \mathbb{R}^n$ and $S\subseteq \R^n$. Suppose that $\mN(\mu, I; S),\ \mN(\mus, I; S)\ge \gamma$. Then, there exists an absolute constant $c>0$ such that
\begin{align}\label{eq:mvt-ineq:new}
 \big{\langle}\nabla_\mu \mL_{S}(\mu;\mu^*), \mu-\mu^*\big{\rangle} \geq \left(\gamma/2\right)^c \cdot
 \|\mu-\mu^*\|^2. 
\end{align}
\end{lemma}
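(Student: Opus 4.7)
The plan is to compute the gradient and Hessian of $\mathcal{L}_S$, use the fundamental theorem of calculus to rewrite the LHS as an integral of a variance, and then run a case analysis on $\|\mu-\mu^*\|$. First I would check that $\nabla_\mu \mathcal{L}_S(\mu;\mu^*) = \nu_\mu - \nu_{\mu^*}$ and $\nabla_\mu^2 \mathcal{L}_S(\mu;\mu^*) = \mathrm{Cov}_{z\sim \mathcal{N}(\mu, I, S)}[z]$, where $\nu_\mu \coloneqq \mathbb{E}_{z\sim \mathcal{N}(\mu, I, S)}[z]$; both are standard exponential-family identities for $\ell_S$. Since the Hessian is PSD, $\mathcal{L}_S(\cdot;\mu^*)$ is convex and attains its minimum at $\mu^*$, so applying the fundamental theorem of calculus to $t\mapsto \nabla_\mu \mathcal{L}_S(\mu_t;\mu^*)$ along $\mu_t \coloneqq \mu^* + t(\mu-\mu^*)$ yields
\[
X \;\coloneqq\; \langle \nabla_\mu \mathcal{L}_S(\mu;\mu^*),\, \mu-\mu^*\rangle \;=\; \|\mu-\mu^*\|^2 \int_0^1 \mathrm{Var}_{\mathcal{N}(\mu_t, I, S)}\!\big[\langle v, z\rangle\big]\, dt,
\]
with $v\coloneqq (\mu-\mu^*)/\|\mu-\mu^*\|$. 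The lemma therefore reduces to a lower bound of $(\gamma/2)^c$ on this integral.

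The supporting ingredient is the truncated-Gaussian anti-concentration bound $\mathrm{Var}_{\mathcal{N}(\mu_t, I, S)}[\langle v, z\rangle] \geq c_1 \gamma_t^2$, where $\gamma_t \coloneqq \mathcal{N}(\mu_t, I; S)$: if the variance were $\sigma^2$, Chebyshev would place at least half of the truncated mass inside a $v$-slab of width $O(\sigma)$, whose Gaussian measure is itself $O(\sigma)$, forcing $\gamma_t/2 \lesssim \sigma$. The obstacle, which the introduction flags explicitly, is that $\gamma_t$ can be \emph{exponentially} small in $d$ on the interior of the segment even when $\gamma_0, \gamma_1 \geq \gamma$, so this pointwise bound cannot be applied uniformly in $t$.

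I would case-analyze on $\|\mu-\mu^*\|$. \emph{Small regime}, $\|\mu-\mu^*\| \leq \gamma\sqrt{2\pi}/2$: the Gaussian-shift TV bound $|\gamma_{\mu_t} - \gamma_{\mu^*}| \leq \|\mu_t-\mu^*\|/\sqrt{2\pi}$ forces $\gamma_t \geq \gamma/2$ throughout $[0,1]$, and anti-concentration directly yields $X \geq c_1(\gamma/2)^2 \|\mu-\mu^*\|^2$. \emph{Medium regime}, $\gamma\sqrt{2\pi}/2 < \|\mu-\mu^*\| \leq C_0\sqrt{\log(1/\gamma)}$: the same TV bound preserves $\gamma_t \geq \gamma/2$ only on a subinterval $[0, t^*]$ with $t^* = \gamma\sqrt{2\pi}/(2\|\mu-\mu^*\|)$, giving $X \geq c_2 \gamma^3 \|\mu-\mu^*\|$; since $\gamma^{c-3}\sqrt{\log(1/\gamma)}$ is absolutely bounded on $(0,1)$ for any $c\geq 4$, this is at least $(\gamma/2)^c\|\mu-\mu^*\|^2$ for any sufficiently large constant $c$. \emph{Large regime}, $\|\mu-\mu^*\| > C_0\sqrt{\log(1/\gamma)}$: I would use the Gaussian-tail estimate $|\langle v, \nu_\mu - \mu\rangle| \leq C_1\sqrt{\log(1/\gamma)}$ (obtained by bounding $\mathbb{E}[|W|\mathbf{1}_E]/\gamma$ for $W\sim \mathcal{N}(0,1)$ and $\Pr[E]\geq \gamma$ via the tail splitter $\min(2e^{-t^2/2},\gamma)$) and its analogue at $\mu^*$, so that $\langle \nu_\mu - \nu_{\mu^*}, \mu-\mu^*\rangle \geq \|\mu-\mu^*\|^2 - 2C_1\sqrt{\log(1/\gamma)}\,\|\mu-\mu^*\| \geq \|\mu-\mu^*\|^2/2$, which suffices.

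The main obstacle is the medium regime, where pointwise anti-concentration fails along most of $[\mu^*,\mu]$ because the interior survival probability can collapse. The key idea that makes the plan work is to use anti-concentration only on the initial window in which TV-stability of $\mu\mapsto \mathcal{N}(\mu,I;S)$ over distances $O(\gamma)$ still forces $\gamma_t \geq \gamma/2$, and at larger separations to fall back on the macroscopic fact that a truncated Gaussian with survival probability $\geq \gamma$ cannot have its mean drift farther than $O(\sqrt{\log(1/\gamma)})$ from its untruncated center; this two-pronged strategy is what lets the argument close despite the interior $\gamma_t$ being possibly $e^{-\Omega(d)}$.
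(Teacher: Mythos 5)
Your proof is correct and shares the paper's overall skeleton (the fundamental-theorem-of-calculus identity reducing the inner product to an integrated directional variance, plus a case split on $\|\mu-\mu^*\|$; the "large" regime, via a mean-shift bound of order $\sqrt{\log(1/\gamma)}$ on $\nu_\mu-\mu$, matches the paper's Case 1 almost verbatim). The genuine divergence is in how you handle the remaining regime, where the paper's Case 2 ($\|\mu-\mu^*\|\lesssim \sqrt{\log(1/\gamma)}$) uses its Claim~\ref{cl:cov:new}, which asserts a \emph{uniform} lower bound $\mathrm{Cov}_{\mN(\mu(\theta),I,S)}\succcurlyeq (\gamma/2)^{O(1)}I$ for \emph{every} $\theta\in[0,1]$; that claim in turn relies on Claim~\ref{cl:prob-lb:new} (the interior survival probability stays $\ge (\gamma/2)^{\mathrm{poly}(c)}$ throughout Case~2) together with the anti-concentration machinery of Daskalakis et al. You instead split that regime in two, and in the nontrivial "medium" sub-case you integrate only over an initial window $[0,t^*]$ with $t^*\asymp \gamma/\|\mu-\mu^*\|$ where simple TV-stability keeps $\gamma_t\ge \gamma/2$, paired with a Chebyshev anti-concentration bound $\mathrm{Var}\gtrsim \gamma_t^2$. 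This is a valid and more elementary replacement: it avoids the deeper covariance lemma from the truncated-statistics literature entirely, at the cost of a worse polynomial exponent $c$ (you absorb a factor $\gamma^3/\|\mu-\mu^*\|$ instead of a uniform $(\gamma/2)^{O(1)}$). One minor framing point: the "interior survival probability can be $e^{-\Omega(d)}$" phenomenon the paper flags actually lives in the large-distance regime (Case~1), not in Case~2 where Claim~\ref{cl:prob-lb:new} shows the interior survival stays polynomial in $\gamma$; so your extra conservatism in the medium sub-case is self-imposed rather than forced, but it does buy you independence from Daskalakis et al.'s Theorem~5.
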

Lemma \ref{lem:mvt:new} implies that, given large enough $c_\eta$,\footnote{Remember that $\eta=(2/\alpha)^{c_\eta}$.} the expectation in (\ref{eq:rephrasing:new}) is non-negative. Indeed, note that $\nabla_\mu L_{S}(\mu;\mu^*)= \nu -\nu^*, $ where $\nu=\mathbb{E}_{z\sim \mN(\mu,I,S)}[z]$ and $\nu^* = \mathbb{E}_{y\sim \mN(\mu^*,I,S)}[z]$, so 
\begin{align*}
\mathbb{E} \big[V_t\ \big|\ \mF_t, t=t_i, \mT_t \big]\ge(2\eta (\gamma/2)^c-1) \cdot \|\mu_t-\mu_t^*\|^2 &=\left(2\left(2/\alpha\right)^{c_\eta} (\alpha/4)^{c\cdot c_\gamma}-1\right)\cdot \|\mu_t-\mu_t^*\|^2 \\
& =\left(\frac{2^{c_\eta +1-2c\cdot c_\gamma}}{a^{c_\eta-c\cdot c_\gamma}}-1\right) \cdot \|\mu_t-\mu_t^*\|^2\ge 0,
\end{align*}
for any constant $c_\eta \ge 2c\cdot c_\gamma$. We now prove Lemma \ref{lem:mvt:new}.
\\
\begin{proof}
Let $s(\gamma)\coloneqq \sqrt{2\log(1/\gamma)}+1$. We will need two technical claims, which hold for any $\mu,\mu^*\in \R^n$ and $S\subseteq \R^{n}$. 
\begin{claim}\label{cl:means:new}
Let $\nu=\mathbb{E}_{z\sim \mN(\mu,I,S)}[z]$. If $\mN(\mu,I;S)\ge \gamma$, then $\|\nu-\mu\|\le s(\gamma)$.
\end{claim}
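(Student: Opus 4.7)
The plan is to reduce the claim to a one-dimensional truncated-normal calculation. First I translate so that $\mu = 0$ (replacing $S$ by $S - \mu$, which preserves the survival probability and $\|\nu - \mu\|$). The case $\nu = 0$ is trivial, so I set $u \coloneqq \nu/\|\nu\|$ and define $X \coloneqq u^\top z$ for $z \sim \mN(0, I)$, which is standard normal. Then
\begin{align*}
\|\nu\| \;=\; u^\top \nu \;=\; u^\top \mathbb{E}[z \mid z \in S] \;=\; \mathbb{E}[X \mid A],
\end{align*}
where $A \coloneqq \{z \in S\}$ satisfies $\Pr[A] = p \ge \gamma$. In this way the claim becomes an upper bound on $\mathbb{E}[X \mid A]$ for a standard normal $X$ and an event $A$ whose probability is at least $\gamma$.

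The key step is a rearrangement principle: among all events $A$ with $\Pr[A] = p$, the quantity $\mathbb{E}[X\,\mathbf{1}_A]$ is maximized when $A$ is the half-space $\{X \ge \tau_p\}$, where $\tau_p = \Phi^{-1}(1 - p)$ is chosen so that $\Pr[X \ge \tau_p] = p$. This gives $\|\nu\| \le \mathbb{E}[X \mid X \ge \tau_p]$. Since $\tau \mapsto \mathbb{E}[X \mid X \ge \tau]$ is non-decreasing and $p \ge \gamma$ forces $\tau_p \le \tau_\gamma$, I conclude $\|\nu\| \le \mathbb{E}[X \mid X \ge \tau_\gamma]$.

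To finish, I would invoke two standard one-variable estimates: (i) the Gaussian tail bound $1 - \Phi(\tau) \le e^{-\tau^2/2}$ for $\tau \ge 0$, giving $\max(\tau_\gamma, 0) \le \sqrt{2\log(1/\gamma)}$; and (ii) Mill's ratio $\phi(\tau)/(1 - \Phi(\tau)) \le \tau + 1/\tau$ for $\tau > 0$, which combined with monotonicity and the trivial identity $\mathbb{E}[X \mid X \ge 0] = \sqrt{2/\pi} < 1$ yields the mean-excess bound $\mathbb{E}[X \mid X \ge \tau] \le \max(\tau, 0) + 1$ for every $\tau \in \mathbb{R}$. Adding the two bounds gives $\|\nu\| \le \sqrt{2\log(1/\gamma)} + 1 = s(\gamma)$, as claimed.

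I do not anticipate a real obstacle: the whole argument is a routine one-dimensional reduction plus classical Gaussian tail estimates. The only minor care needed is a case split at $\gamma = 1/2$, so that when $\gamma \ge 1/2$ (where $\tau_\gamma \le 0$) the Gaussian tail bound is applied trivially via $\max(\tau_\gamma, 0) = 0 \le \sqrt{2\log(1/\gamma)}$, and the mean-excess bound is applied via the monotonicity argument rather than directly through Mill's ratio.
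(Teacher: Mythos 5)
The paper does not actually prove Claim~\ref{cl:means:new}: it cites it as Lemma~6 of \cite{daskalakis2018efficient} and moves on. Your proposal is a self-contained derivation, and the backbone is sound: the reduction to the scalar $X=u^\top z$ with $u=\nu/\|\nu\|$, the bathtub/rearrangement step (formally, let $g(x)=\Pr[z\in S\,|\,X=x]\in[0,1]$; then $\|\nu\|\,p=\E[Xg(X)]$, and among $g$ with $\E[g(X)]=p$ fixed, $\E[Xg(X)]$ is maximized at $g=\mathbf 1\{X\ge\tau_p\}$), the monotone decrease of $\tau_p$ in $p$, and then the Gaussian tail bound $\tau_\gamma\le\sqrt{2\log(1/\gamma)}$ when $\gamma<1/2$. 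These are exactly the right ingredients and they give the stated bound.

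The one place your sketch is imprecise is step (ii). The inequality $\phi(\tau)/(1-\Phi(\tau))\le \tau+1/\tau$ is only useful for $\tau\ge 1$, since $\tau+1/\tau>\tau+1$ when $0<\tau<1$, and "monotonicity of $\E[X\,|\,X\ge\tau]$'' (which is \emph{increasing}) does not bridge that range either. The clean patch is to use one of the following: (a) the decreasing mean-residual-life property, i.e., $\tau\mapsto \E[X\,|\,X\ge\tau]-\tau$ is nonincreasing because $\frac{d}{d\tau}\bigl(\lambda(\tau)-\tau\bigr)=\lambda(\lambda-\tau)-1=-\mathrm{Var}[X\,|\,X\ge\tau]<0$, so $\E[X\,|\,X\ge\tau]-\tau\le \E[X\,|\,X\ge 0]=\sqrt{2/\pi}<1$ for all $\tau\ge 0$; or (b) the sharper Mill's ratio bound $\lambda(\tau)\le\tfrac12\bigl(\tau+\sqrt{\tau^2+4}\,\bigr)\le \tau+1$ for $\tau\ge 0$. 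Either fix gives $\E[X\,|\,X\ge\tau]\le\max(\tau,0)+1$ for all $\tau$ and closes the argument, yielding $\|\nu\|\le\sqrt{2\log(1/\gamma)}+1=s(\gamma)$ exactly as you intended.
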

\begin{claim}\label{cl:cov:new}
Suppose $\mN(\mu^*,I;S)\ge \gamma$, and for some $\widetilde{\mu}$ we have  $\|\widetilde{\mu}-\mu^*\|\le c\cdot s(\gamma)$. Then, 
\begin{align*}
\text{Cov}_{z\sim \mN(\widetilde{\mu},I,S)}[z,z]\succcurlyeq (\gamma/2)^{\poly(c)}\cdot I.
\end{align*}
\end{claim}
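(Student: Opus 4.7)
The plan is to transfer the survival-probability bound from $\mu^*$ to $\widetilde{\mu}$, and then to convert it into a covariance lower bound via a uniform density bound on one-dimensional projections of $\mN(\widetilde{\mu},I,S)$.

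First I would prove that $q \eqdef \mN(\widetilde{\mu},I;S) \ge (\gamma/2)^{\poly(c)}$. Letting $\delta = \widetilde{\mu}-\mu^*$ and completing the square in the ratio of integrals,
\begin{equation*}
\frac{\mN(\widetilde{\mu},I;S)}{\mN(\mu^*,I;S)} = e^{-\|\delta\|^2/2}\cdot \mathbb{E}_{y\sim \mN(\mu^*,I,S)}\!\left[e^{\langle y-\mu^*,\,\delta\rangle}\right] \ge e^{-\|\delta\|^2/2 + \langle \nu^*-\mu^*,\,\delta\rangle},
\end{equation*}
where the last step is Jensen's inequality and $\nu^* = \mathbb{E}_{y\sim \mN(\mu^*,I,S)}[y]$. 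Combining $\|\nu^*-\mu^*\|\le s(\gamma)$ from Claim \ref{cl:means:new} with the hypothesis $\|\delta\|\le c\cdot s(\gamma)$, Cauchy--Schwarz bounds the exponent by $-O(c^2)\cdot s(\gamma)^2$. Since $s(\gamma)^2 = O(\log(2/\gamma))$, this yields $q \ge \gamma\cdot (\gamma/2)^{\poly(c)} \ge (\gamma/2)^{\poly(c)}$, as desired.

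Second, I would show that the marginal density of every one-dimensional projection of $\mN(\widetilde{\mu},I,S)$ is bounded above by $1/(q\sqrt{2\pi})$. For a unit vector $v$, decompose $y = tv + y_\perp$ with $y_\perp \perp v$. By Fubini, the density of $\langle v,z\rangle$ at $t$ factors as a $1$-d Gaussian piece $e^{-(t-\langle v,\widetilde{\mu}\rangle)^2/2}/\sqrt{2\pi}$ times an $(n-1)$-dimensional Gaussian integral over the slice $S\cap\{\langle v,y\rangle = t\}$ (centered at the orthogonal projection of $\widetilde{\mu}$), divided by $q$. The slice integral is trivially at most $1$, giving $\rho_v(t) \le 1/(q\sqrt{2\pi})$ uniformly in $t$ and independently of the ambient dimension $n$.

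Finally, I would invoke the elementary one-dimensional fact that any real random variable with density bounded above by $M$ has variance at least $\Omega(1/M^2)$: any width-$\epsilon$ interval carries mass at most $M\epsilon$, while by Chebyshev at least $3/4$ of the mass sits within an interval of width $4\sigma$ about the mean, forcing $\sigma \ge \Omega(1/M)$. Applying this to $\langle v,z\rangle$ for every unit $v$ yields
\begin{equation*}
v^\top \text{Cov}_{z\sim \mN(\widetilde{\mu},I,S)}[z,z]\, v \ge \Omega(q^2) \ge (\gamma/2)^{\poly(c)},
\end{equation*}
which is the claim. The delicate part is Step 1: keeping the exponent in $c$ polynomial rather than exponential while bounding the density ratio; Steps 2 and 3 are routine, though Step 2 crucially uses that the $(n-1)$-dimensional slice integral is uniformly bounded by the total mass of a Gaussian, so no dimension dependence enters the final estimate.
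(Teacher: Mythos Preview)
Your proof is correct and proceeds along the same two-stage skeleton as the paper: first transfer the survival probability from $\mu^*$ to $\widetilde{\mu}$, then convert a survival-probability lower bound into a covariance lower bound. Your Step~1 is essentially the paper's Claim~\ref{cl:prob-lb:new} (the paper bounds the density ratio via a high-probability event for $x^\top\mu$ rather than Jensen, but the resulting exponent is the same, $O(c^2)\,s(\gamma)^2$).

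The genuine difference is Step~2. The paper invokes Carbery--Wright type anti-concentration for degree-$2$ polynomials of Gaussians (Theorem~5 in \cite{daskalakis2018efficient}): for $p_v(z)=(v^\top(z-\nu))^2$ one has $\Pr_{z\sim\mN(\widetilde{\mu},I)}[p_v(z)\le\theta]\lesssim\sqrt{\theta}$, and by choosing $\theta\asymp q^2$ this yields $v^\top\mathrm{Cov}[z,z]\,v\ge\Omega(q^3)$. You instead bound the marginal density of $\langle v,z\rangle$ pointwise by $1/(q\sqrt{2\pi})$ via Fubini and the trivial bound on the slice integral, and then use the elementary fact that a bounded density forces variance $\ge\Omega(1/M^2)$, arriving at $\Omega(q^2)$. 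Your route is more elementary---it avoids the Carbery--Wright machinery entirely---and even gives a slightly better exponent in $q$; both collapse to $(\gamma/2)^{\poly(c)}$ after substituting $q\ge(\gamma/2)^{\poly(c)}$, so the final statement is unaffected. The paper's route, on the other hand, plugs directly into the existing toolkit of \cite{daskalakis2018efficient} and would generalize more readily to non-identity covariance or higher-degree observables.
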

Both Claims are mainly from \cite{daskalakis2018efficient}. Claim \ref{cl:means:new} is Lemma 6 in that paper. Claim \ref{cl:cov:new} is not explicitly stated, but can be derived by slightly adapting their proof (see Appendix \ref{appdx:prf:cov}).  We consider two cases:
\\
\\
 \textbf{Case 1:}   $\|\mu-\mu^*\|\ge 4 s(\gamma)$. Then,
\begin{align*}
    \langle \nu -\nu^*, \mu-\mu^* \rangle&= \|\mu-\mus\|^2 + \langle \nu-\mu, \mu-\mu^* \rangle +  \langle \mu^*-\nu^*, \mu-\mu^* \rangle  \notag \\
    & \geq \|\mu-\mus\|^2 -\|\nu-\mu\|\cdot \| \mu-\mu^*\| - \|\nu^*-\mu^*\|\cdot \|\mu-\mu^*\|
\end{align*}
Since $\mN(\mu, I; S),\ \mN(\mus, I; S)\ge \gamma$, Claim \ref{cl:means:new} implies $\|\nu-\mu\|,\|\nu^*-\mu^*\|\le s(\gamma)$.
Being in Case 1,
\begin{align*}
     \langle \nu -\nu^*, \mu-\mu^* \rangle \geq \|\mu-\mus\|\cdot \big(\|\mu-\mus\|- 2s(\gamma)\big)\geq \|\mu-\mu^*\|^2/2.
\end{align*}
\textbf{Case 2:} $\|\mu-\mu^*\|< 4 s(\gamma)$. To ease notation, fix $\mu^*, S$, and let $\mL(\mu)=\mL_{S}(\mu;\mu^*)$. From fundamental theorem of calculus, 
\begin{align*}
    \nabla \mL(\mu) -\nabla \mL(\mu^*)=\int_{0}^1 \nabla^2\mL\left(\mu(\theta)\right)d\theta \cdot (\mu-\mus),
\end{align*}
where $\mu(\theta)\coloneqq \mu^*+\theta(\mu-\mu^*)$. Since $\nabla L(\mu^*)=0$,
\begin{align}\label{eq:int-hes:new}
     \big{\langle}\nabla \mL(\mu), \mu-\mu^*\big{\rangle}= \int_{0}^1\Big{\langle} \nabla^2\mL\left(\mu(\theta)\right)  (\mu-\mus), \mu-\mu^*\Big{\rangle}d\theta.
\end{align}
For $\theta\in (0,1)$, $\|\mu(\theta)-\mu^*\|=\theta\|\mu-\mu^*\|\le O\left(s(\gamma)\right)$. Using $\mN(\mu^*,I,S)\ge \gamma$ and Claim \ref{cl:cov:new},
\begin{align*}
    \nabla^2\mL\left(\mu(\theta)\right)= \text{Cov}_{z\sim \mN\left(\mu(\theta),I,S\right)}[z,z] \succcurlyeq (\gamma/2)^{O(1)}\cdot I.
\end{align*}
Using \ref{eq:int-hes:new}, we finish the proof. 
\end{proof}
\paragraph{Remark.} We want to highlight an important qualitative difference between Lemma \ref{lem:mvt:new} and all recent works in CTS. Suppose $\gamma=\Omega(1)$ and let $v=\frac{\mu-\mu^*}{\|\mu-\mu^*\|}$. From the argument in Case 2, \ref{eq:mvt-ineq:new} can be rephrased as 
\begin{align}\label{eq:varlb}
    \Ex_{\widetilde{\mu}\sim[\mu^*,\mu]}\Big[\big{\langle}\text{Cov}_{z\sim \mN\left(\widetilde{\mu},I,S\right)}[z,z]\cdot v, v \big{\rangle}\Big] \ge \Omega(1),
\end{align}
where $\widetilde{\mu}$ is distributed uniformly on the segment $[\mu,\mu^*]$. In other words, the average variance in the $v$-direction is $\Omega(1)$. This is an anti-concentration bound. As we mentioned, proving anti-concentration bounds for truncated normals $\mN(\wt{\mu},I,S)$ is a core component in all recent works in CTS, where this task is reduced to lower bounding $\mN(\wt{\mu},I;S)$. However, inequality (\ref{eq:varlb}) cannot be proven with this methodology. We illustrate this with an insightful example (Figure \ref{fig:para1}). In the example, we consider $\mu,\mu^*\in \R^d$ and $\|\mu-\mu^*\|=\sqrt{d}$, because the guaranteed bound in the LDS-case for $\|\mu_t-\mu_t^*\|$ will be $\Theta(\sqrt{d})$. Let $\mu^*=0$, $\mu=\sqrt{d}\cdot e_1$ ($e_1$ is the standard-basis vector), $S=\{x\in \R^{d}:  x_1\in (-\infty,0]\cup [\sqrt{d},+\infty)\}$. Observe that the conditions of Lemma \ref{lem:mvt:new} are satisfied with $\gamma=1/2$. However, for most $\wt{\mu}\in[\mu^*,\mu]$, the mass $\mN(\wt{\mu},I;S)$ is exponentially small (in $d$). Consider now $\wt{\mu}$ exactly in the middle of $\mu^*,\mu$. Even though $\mN(\wt{\mu},I;S)=\exp(-\Omega(d))$, the variance in the $e_1$ direction is $\Omega(d)$, due to symmetry. It can actually be shown that precisely this $\wt{\mu}$ and small perturbations of it make the average variance $\Omega (1)$ in (\ref{eq:varlb}). Note that it is necessary for both corners ($\mu$ and $\mu^*$) to have high survival probability, e.g., in Figure \ref{fig:para2} (for the same distance-scales) it can be shown that the variance is $O(1/d)$. 
\begin{figure}[!tbp]
 \centering
 \begin{minipage}[b]{0.45\textwidth}
    \includegraphics[width=\textwidth]{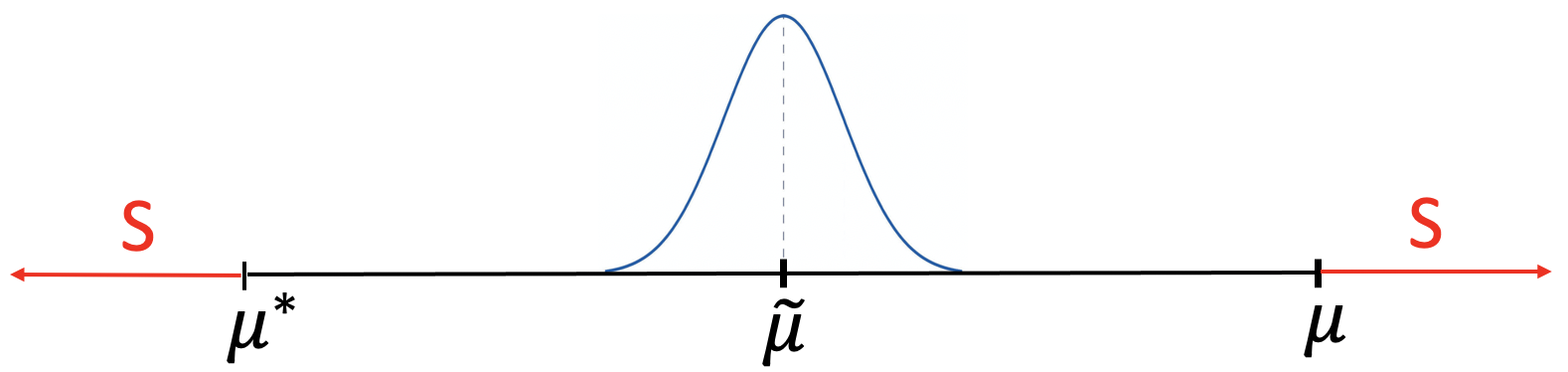}
    \caption{$\Omega(d)$ variance} 
    \label{fig:para1}
 \end{minipage}
 \hfill
 \begin{minipage}[b]{0.42\textwidth}
    \includegraphics[width=\textwidth]{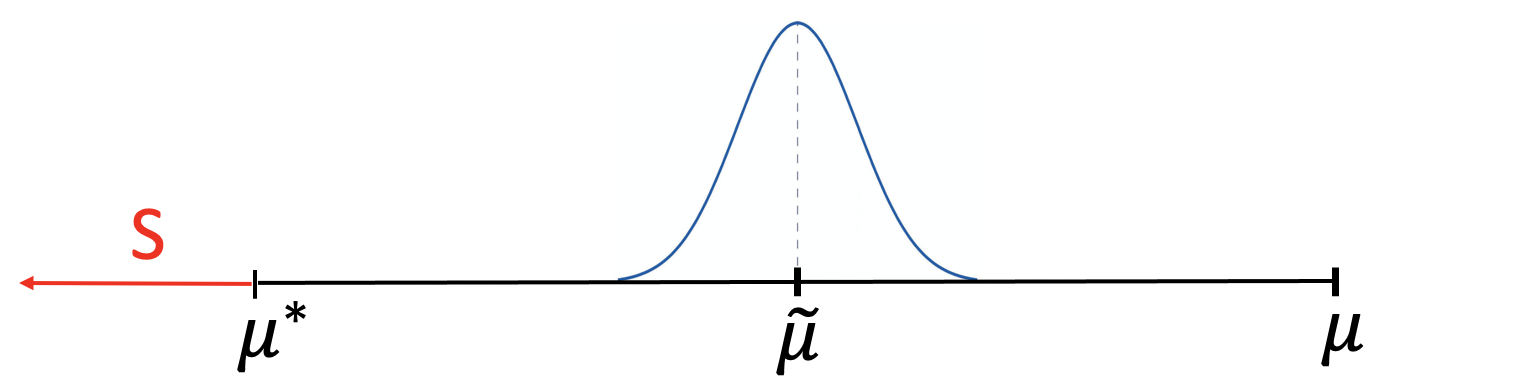}
    \caption{$\O(1/d)$ variance}
    \label{fig:para2}
 \end{minipage}
\end{figure}\label{fig:para}
\paragraph{Small Survival Probability.} Suppose that at time $t$, Test returns "False" ($\neg \mT_t$). Suppose also that this implies $\mN(\mu_t,I;S_t)\le 4\gamma$ (again this is w.h.p). We will show that $ \mathbb{E} \big[V_t\ \big|\ \mF_t, t=t_i, \neg \mT_t \big]\ge 0$. Observe that given $\neg \mT_t$, we have $g(t)=(\mu_t -y_t)x_t^\top$, and so
\begin{align}\label{eq:Vt-not-T}
     \mathbb{E} \big[V_t\ \big|\ \mF_t, t=t_i, \neg \mT_t \big]= 2\eta \langle \mu_t - \nu_{t}^*, \mu_t-\mus_t \rangle - \|\mu_t-\mu_t^*\|^2.
\end{align}
Again, since $t=t_i$ and $L=0$, we have $\mN(\mu_t^*,I;S_t)\ge \alpha$, so by Claim \ref{cl:means:new}, $\|\nu_t^*-\mu_t^*\|\leq s(\alpha)$. We need one more technical claim.
\begin{claim}\label{cl:prob-lb:new}
If $\mN(\mus,I;S)\ge \alpha$, and $\|\mu-\mus\|\le  c\cdot s(\alpha)$, then $\mN(\mu,I;S)\ge (\alpha/2)^{\poly(c)}$.
\end{claim}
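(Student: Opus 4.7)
The plan is to apply Jensen's inequality to the likelihood ratio between $\mN(\mu, I)$ and $\mN(\mu^*, I)$ integrated against the truncated distribution on $S$, and then invoke the mean bound already available from Claim \ref{cl:means:new}. Writing $\phi_\nu(x)$ for the density of $\mN(\nu, I)$, the starting identity is
\begin{equation*}
\mN(\mu, I; S) \eq \mN(\mu^*, I; S) \cdot \mathbb{E}_{x\sim \mN(\mu^*,I,S)}\!\left[\frac{\phi_\mu(x)}{\phi_{\mu^*}(x)}\right].
\end{equation*}
Applying Jensen's inequality to the concave function $\log$ gives
\begin{equation*}
\log\mN(\mu,I;S) \geq \log \mN(\mu^*,I;S) + \mathbb{E}_{x \sim \mN(\mu^*,I,S)}\!\left[\log \frac{\phi_\mu(x)}{\phi_{\mu^*}(x)}\right].
\end{equation*}

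The next step is a direct computation: expanding the log-densities, one finds $\log \phi_\mu(x)/\phi_{\mu^*}(x) = \langle x, \mu - \mu^*\rangle + \tfrac12(\|\mu^*\|^2 - \|\mu\|^2)$, an affine function of $x$. Taking expectation against $\mN(\mu^*,I,S)$ and using the identity $\tfrac12(\|\mu^*\|^2 - \|\mu\|^2) + \langle \mu^*, \mu - \mu^*\rangle = -\tfrac12\|\mu - \mu^*\|^2$ collapses the expression to
\begin{equation*}
\mathbb{E}_{x \sim \mN(\mu^*,I,S)}\!\left[\log \frac{\phi_\mu(x)}{\phi_{\mu^*}(x)}\right] \eq \langle \nu^* - \mu^*, \mu - \mu^*\rangle - \tfrac12 \|\mu - \mu^*\|^2,
\end{equation*}
where $\nu^* = \mathbb{E}_{x \sim \mN(\mu^*,I,S)}[x]$ is the truncated mean. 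By Cauchy--Schwarz, Claim \ref{cl:means:new} (which gives $\|\nu^* - \mu^*\| \leq s(\alpha)$ since $\mN(\mu^*,I;S)\geq \alpha$), and the hypothesis $\|\mu - \mu^*\|\leq c\cdot s(\alpha)$, the quantity above is at least $-(c + c^2/2)\, s(\alpha)^2$.

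Plugging this back into the Jensen inequality and using $\mN(\mu^*, I; S)\geq \alpha$, we obtain $\mN(\mu, I; S) \geq \alpha\cdot \exp\!\big(-(c+c^2/2)\,s(\alpha)^2\big)$. Since $s(\alpha)^2 \leq 4\log(1/\alpha) + 2$, the exponent becomes $-\mathrm{poly}(c)\cdot \log(1/\alpha) - O(c^2)$, producing a lower bound of the required form $(\alpha/2)^{\poly(c)}$. The only subtlety, and therefore the main thing to handle carefully, is that Jensen must be applied with respect to the normalized truncated probability measure $\mN(\mu^*,I,S)$, not the unnormalized Gaussian restricted to $S$; once that is arranged, the whole argument reduces to the one-line mean-displacement estimate from Claim \ref{cl:means:new}, so no new high-dimensional geometry is needed beyond what the excerpt already supplies.
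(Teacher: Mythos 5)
Your proof is correct, but it is structurally different from the one in the paper, and the difference is worth noting. The paper (Appendix \ref{appdx:prf:prob-lb}, Claim \ref{prob-general}) also starts from the likelihood ratio $\phi_\mu/\phi_{\mu^*}$, but instead of taking logs and applying Jensen, it restricts to a high-probability event: since $\Pr_{x\sim\mN(\mu^*,I)}[x\in S]\ge\alpha$ and the ratio $\phi_\mu(x)/\phi_{\mu^*}(x)$ is bounded below by some $\kappa$ outside a set of $\mN(\mu^*,I)$-mass at most $\alpha/2$ (via a one-dimensional Gaussian tail bound on $x^\top(\mu-\mu^*)$), the intersection still has mass $\ge\alpha/2$, giving $\mN(\mu,I;S)\ge(\alpha/2)\kappa$. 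This argument is self-contained and does not invoke Claim \ref{cl:means:new}. Your argument instead exponentiates after Jensen and reduces everything to the truncated-mean displacement $\|\nu^*-\mu^*\|\le s(\alpha)$ from Claim \ref{cl:means:new}; the computation that the expected log-ratio collapses to $\langle\nu^*-\mu^*,\mu-\mu^*\rangle-\tfrac12\|\mu-\mu^*\|^2$ is clean, and the final bound $\alpha\exp(-(c+c^2/2)s(\alpha)^2)$ is readily of the form $(\alpha/2)^{\poly(c)}$ as you argue. The trade-off: the paper's route is a hair sharper and does not presuppose Claim \ref{cl:means:new}, whereas yours is shorter and more modular, leaning entirely on the already-established mean bound; both are perfectly adequate for the $(\alpha/2)^{\poly(c)}$ target.
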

Again, Claim \ref{cl:prob-lb:new} is proven in Appendix \ref{appdx:prf:prob-lb}, by slightly adapting the analysis of \cite{daskalakis2018efficient}. Now, we claim that $\|\mu_t-\mu_t^*\|> 2s(\alpha)$, provided that $c_\gamma=O(1)$ is sufficiently large. Indeed, suppose $\|\mu_t-\mu_t^*\| \le 2 s(\alpha)$. Combining with $\mN(\mu_t^*,I;S_t)\ge \alpha$ and Claim \ref{cl:prob-lb:new}, we get $4(\alpha/2)^{c_\gamma} =4\gamma \ge \mN(\mu_t,I; S_t)\ge (\alpha/2)^{O(1)}$, which is a contradiction for large enough $c_\gamma=O(1)$. Using that $\|\nu_t^*-\mu_t^*\|\le s(\alpha)$, we get
\begin{align*}
    \langle \mu_t - \nu_{t}^*, \mu_t-\mus_t \rangle &= \|\mu_t -\mus_t\|^2 + \langle \mu_t^* - \nu_{t}^*, \mu_t-\mus_t \rangle \ge \|\mu_t -\mus_t\|^2-\|\mu_t^* - \nu_{t}^*\| \cdot \|\mu_t -\mus_t\| \\
    &\ge \|\mu_t -\mus_t\|\cdot \big( \|\mu_t -\mus_t\| -s(\alpha) \big) \ge  \|\mu_t-\mu_t^*\|^2 /2.
\end{align*}
Since $\eta\ge 1$, Equation \ref{eq:Vt-not-T} implies $ \mathbb{E} \big[V_t\ \big|\ \mF_t, t=t_i, \neg \mT_t \big]\ge 0$. Thus, $\Ex[V_t\ |\ \mF_t,t=t_i]\ge 0$, and $\Ex[E_1]\ge 0$.

\subsection*{Bound on $E_2$}
Using the assumption that $\mK$ is $(R,\omega)$-accurate, we prove that 
\begin{align}\label{bound:e2}
    \Ex\big[E_2\big] \leq \tO(1)\cdot  d\left(d+R^2+R_w^2\right).
\end{align}
Given this bound, we have
\begin{align*}
    \Ex\left[\left\|\whA-A_*\right\|_{\Sigma}^2\right]&\leq  1-\Ex[E_1]+\eta^2 \Ex[E_2]\leq  \tO(1)\cdot \poly(1/\alpha)\cdot d\left(d+R^2+R_w^2\right),
\end{align*}
which finishes the proof of Theorem \ref{thm:time-series} (for $L=0$). Now, we illustrate the main steps for proving (\ref{bound:e2}) (the complete proof is in Appendix \ref{appdx:prf:var}). Let $z_{t_i}$ be the value of the variable $z$ of the SwitchGrad function, at the end of its execution, during iteration $i$. So, $g_i=(z_{t_i}-y_{t_i})x_{t_i}^\top$, and thus $\trace(g_i\Sigma_i^{-1}g_i^\top)=\|z_{t_i}-y_{t_i}\|^2\cdot x_{t_i}^\top\Si_i ^{-1}x_{t_i}$. Summing over $i$,
\begin{align}\label{eq:e2-max}
    E_2=\sum_{i=1}^N \|z_{t_i}-y_{t_i}\|^2\cdot x_{t_i}^\top\Si_i ^{-1}x_{t_i}\le \max_{i}\left\{\|z_{t_i}-y_{t_i}\|^2\right\}\cdot  \sum_{i=1}^N x_{t_i}^\top\Si_i ^{-1}x_{t_i}.
\end{align}
Using the recurrence $\Sigma_i=\Sigma_{i-1} +x_{t_i }x_{t_i}^\top$, we can show that  $\sum_{i=1}^N x_{t_i}^\top\Si_i ^{-1}x_{t_i}\le d\left(\log{|\Sigma_N|^{1/d}}+\log{(1/\omega )}\right)$, where $|\Sigma_N|$ denotes the determinant. The proof of this is given in Appendix \ref{appdx:prf:var} (Claim \ref{cl:det-pot}), and is based on a linear-algebraic technique of \cite{hazan2007logarithmic}. Combining with $E_2$, and applying Cauchy-Schwarz,
\begin{align*}
    \Ex[E_2]=O(d)\cdot \sqrt{\Ex \left[ \max_{i}
  \|z_{t_i}-y_{t_i}\|^4\right]} \cdot \sqrt{\Ex\left[\log^2{|\Sigma_N|^{1/d}}\right]+\tO(1)}.
\end{align*}
We use that $\Ex[\|x_t\|^2]\le R_x^2$, to control $\Ex\left[\log^2{|\Sigma_N|^{1/d}}\right]$. Specifically, in Appendix \ref{appdx:prf:var} (Claim \ref{cl:det:cs}), we show that $\Ex\left[\log^2{|\Sigma_N|^{1/d}}\right]\le  \log^2\left(e+T\cdot R_x^2\right)=\tO(1)$. The last step is to prove that 
\begin{align}\label{eq:max-main-1}
    \Ex \left[ \max_{i}
  \|z_{t_i}-y_{t_i}\|^4\right]\leq \tO(1)\cdot (d^2+R^4+R_w^4).
\end{align}
This is shown in Appendix \ref{appdx:prf:var} (Claim \ref{cl:max}), and here we give a proof-sketch. Fix an $i$, and let $t=t_i$. We decompose $\|z_t-y_t\|$:
\begin{align}\label{decomp:main}
   \|z_t-y_t\|=\|z_t-\mu_t+\mu_t -\mu_t^* +\mu_t^* -y_t\|&\le \|z_t-\mu_t\| +\|\mu_t -\mu_t^*\| +\|\mu_t^* -y_t\| \notag \\
   &= \|z_t-\mu_t\|\cdot \cha\{\mT_t\} +\|\mu_t -\mu_t^*\| +\|\mu_t^* -y_t\| \notag \\
   &= \|z_t-\mu_t\|\cdot \cha\{\mT_t\} +\|\mu_t -\mu_t^*\| +\|w_t\|
\end{align}
where for the second-to-last step we use that on $\neg \mT_t$, we have $z_t=\mu_t$. Now, $\|w_t\|$ is controlled using the assumption that it is $R_w^2$-subgaussian. Now,
\begin{align*}
\|\mu_t -\mu_t^*\|=  \left\|(A(t)-\sA)x_t\right\|&=  \left\|(A(t)-\sA)\Sigma_0^{1/2} \Sigma_0^{-1/2} x_t\right\| \\
&\le \left\|(A(t)-\sA)\Sigma_0^{1/2}  \right\|_2 \cdot  \left\| \Sigma_0^{-1/2} x_t \right\| \\
& \le  \left\|A(t)-\sA  \right\|_{\Sigma_0} \cdot  \left\| \Sigma_0^{-1/2} x_t \right\|\\
& \le  2 \left\| \Sigma_0^{-1/2} x_t \right\|,
\end{align*}
where for the last step we used that both $A(t)$ and $\sA$ belong to $\mK$, which is guaranteed by the projection step in Algorithm \ref{alg:newton}, and the assumption that $\mK$ is $(R,\omega)$-accurate. Now, $\left\| \Sigma_0^{-1/2} x_t \right\|$ is controlled using the assumption that it is $R^2$-subgaussian (again because of the $(R,\omega)$-accuracy of $\mK$). The final step is to control $ \|z_t-\mu_t\|\cdot \cha\{\mT_t\}$.  As we did previously, we assume that Test returning "True" implies $\mN(\mu_t,I;S_t)\ge \gamma$. Now, it can be shown (Appendix \ref{appdx:prob}, Claims \ref{appdx:prob1}, \ref{appdx:prob2}) that $\mN(\mu_t,I;S_t)\ge \gamma$ implies that $\|z_t-\mu_t\|$ is $O\left(
\log\left(\frac{1}{2\gamma}\right) d \log{d}\right)$-subgaussian. Now, the bound (\ref{eq:max-main-1}) follows from standard properties of subgaussian random variables.

\section{The Initial Ellipsoid and Proof of Theorem \ref{thm:main}}\label{sec:projection}
Here, we provide the overall algorithm (Algorithm \ref{alg:overall}), and an outline of the proof of Theorem \ref{thm:main}.
Let $\mI_0=\{t\in [T]: (x_t,x_{t+1})\in \mP_0\}$, and $\mI_1=\{t\in [T]: (x_t,x_{t+1})\in \mP_1\}$. 
\\
\SetKwInput{KwInput}{Input}              
\SetKwInput{KwOutput}{Output}  
\begin{algorithm}[H] 
  $A_0=\argmin_{A\in \R^{d\times d}} \sum_{t\in \mI_0} \|x_{t+1}-Ax_t\|^2$\\
  $\Sigma_0= \frac{1}{s\cdot|\mI_0|}\cdot \sum_{t\in \mI_0}x_tx_t^\top$, \ where $s= c_s \left(\sqrt{\log{(1/\alpha)}}+1\right)$.  \ \ \ \ \ \ \  \text{$\rhd  c_s \ge 0$ is a large constant.} \\
  Get $\whA$ by running SON-SG (Algorithm \ref{alg:newton}) with $A_0,\Sigma_0$, and dataset $\mP_1$.\\
  \Return $\whA$
  \caption{SON-SG for censored linear dynamics}
  \label{alg:overall}
\end{algorithm}
\SetAlgoNoLine
The least-squares method when all $x_t$'s are observed was analyzed in \cite{simchowitz2018learning}. Here, we observe only a subset the $x_t$'s. In Appendix \ref{appdx:prf:init}, we generalize the "small-ball" technique of \cite{simchowitz2018learning}, to prove the following theorem.
\begin{theorem}\label{thm:init:new}
There exist $c_{1},c_{2}, c_{3}=\poly\left(\frac{1}{\alpha\beta}\right)$, such that if $T\ge  \widetilde{\Theta}(c_{1})\cdot \left(d^2+\frac{d}{1-\rho(\sA)}+dL\right)$, then with probability $1-o(1)$, the ellipsoid $\mK=\{A\in \R^{d\times d}: \|A-A_0\|_{\Sigma_0}\le 1\}$ is $(R,\omega)$-accurate with $R=c_{2}\sqrt{d}$, $\omega= 1/c_{2}$, and also $ \frac{1}{|\mI_1|}\sum_{t\in \mI_1}x_tx_t^T \succcurlyeq \frac{1}{c_{3}} \ga_T$.
\end{theorem}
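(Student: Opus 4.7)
}

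The plan is to establish three facts in sequence: (i) a uniform small-ball lower bound on subset covariances, (ii) the $(R,\omega)$-accuracy of $\mK$, and (iii) the lower bound on the $\mI_1$-covariance. We view (i) as the main technical novelty and devote the bulk of the argument to it; (ii) and (iii) follow from (i) via standard least-squares and linear-algebraic manipulations.

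\paragraph{Step 1 (uniform subset small-ball).} The key step is to generalize the block-martingale-small-ball (BMSB) argument of \cite{simchowitz2018learning} so that it holds \emph{simultaneously} over all subsets $\mI\subseteq [T]$ of size $|\mI|\ge \beta' T$, for some $\beta'=\poly(\alpha\beta)$. For an index block of length $k=\Theta(1/(1-\rho(\sA)))$, and any unit vector $v\in \R^d$, one can show (as in the uncensored setting) that the random variables $\{\inp{v}{x_{t}}\}$ enjoy an anti-concentration property of the form $\Pr[|\inp{v}{x_t}|\ge \sqrt{v^\top \Gamma_k v}]\ge \Omega(1)$, conditioned on the past. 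The difficulty is that we do not get to restrict to a single fixed subset and then take a union bound over subsets (there are $2^T$ of them, and each individually concentrates only polynomially in $T$, so a naive union bound is vacuous when $\rho(\sA)\to 1$). Instead, I would prove a \emph{uniform} lower tail bound by first establishing that the contributions $x_t x_t^\top$ within each block form a supermartingale-like process, then applying a Freedman-type inequality covering all subsets via a net on the sphere $\{v:\|v\|=1\}$ combined with a \emph{single} application of concentration at the block level. This separates the "which indices are observed" question from the "how much mass is contributed" question: as long as $|\mI|\ge \beta' T$, each block intersects $\mI$ with sufficient total probability $\ge \Omega(\beta')$, giving $\sum_{t\in \mI} x_t x_t^\top \succcurlyeq \Omega(\beta'/c_3)\,|\mI|\, \Gamma_T$ on a high-probability event that does not depend on which $\mI$ is chosen. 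This is the step I expect to be the main obstacle; the main subtlety is verifying that the BMSB step functions well despite $\mI_0, \mI_1$ being random sets adapted to the same filtration that drives $x_t$.

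\paragraph{Step 2 ($(R,\omega)$-accuracy of $\mK$).} Given Step 1 applied to $\mI_0$, we have $\Sigma_0 \succcurlyeq \Omega(1/(s\,c_3))\,\Gamma_T \succcurlyeq \omega I$ since $\Gamma_T \succcurlyeq I$ (because $w_t\sim \mN(0,I)$). For the subgaussian property in (c), note that $x_t\sim \mN(0,\Gamma_t)$ with $\Gamma_t\preccurlyeq \Gamma_T$ (censoring affects observation, not the underlying process), so $\Sigma_0^{-1/2} x_t$ is Gaussian with covariance $\Sigma_0^{-1/2}\Gamma_t\Sigma_0^{-1/2}\preccurlyeq \O(s\,c_3)\cdot I$, yielding $R=O(\sqrt{s\cdot c_3\cdot d})=O(\sqrt{d})\cdot \poly(1/(\alpha\beta))$, which is $c_2\sqrt{d}$. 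The containment $A_*\in \mK$ reduces to bounding the least-squares residual:
\begin{equation*}
    A_0 - A_* \eq \Bigl(\tsum_{t\in \mI_0} \tilde w_t x_t^\top\Bigr)\Bigl(\tsum_{t\in \mI_0} x_t x_t^\top\Bigr)^{-1},
\end{equation*}
where $\tilde w_t = x_{t+1}-A_*x_t$ is the noise conditioned on $x_{t+1}\in \mS_{t+1}$. By Claim \ref{cl:means:new} (applied to $\mN(A_*x_t, I, \mS_{t+1})$, whose survival probability is at least $\alpha$ for $t\in \mO\setminus \mB(\alpha)$), the conditional mean of $\tilde w_t$ has norm $O(\sqrt{\log(1/\alpha)})$, and its fluctuations are subgaussian. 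Bounding the quadratic form $\|A_0-A_*\|_{\Sigma_0}^2=\trace((A_0-A_*)^\top(A_0-A_*)\Sigma_0)$ and plugging in the Step-1 lower bound on $\tsum_{t\in \mI_0} x_t x_t^\top$, together with a subgaussian-times-Gaussian self-normalized martingale bound for the numerator, gives $\|A_0-A_*\|_{\Sigma_0}^2 \le O\!\left(\frac{d^2+dL}{s\cdot|\mI_0|}\right)\le 1$ as long as $T\gtrsim c_1 (d^2+d/(1-\rho)+dL)$; the scaling factor $s=c_s(\sqrt{\log(1/\alpha)}+1)$ in the definition of $\Sigma_0$ is precisely tuned to absorb the truncation bias in $\tilde w_t$.

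\paragraph{Step 3 (covariance lower bound on $\mI_1$).} This is an immediate consequence of Step 1 applied to $\mI=\mI_1$, since Proposition~\ref{prop:largeN} guarantees $|\mI_1|\ge \alpha\beta T/4$ w.h.p., and Step 1 was designed to hold uniformly over all such subsets. This yields $\frac{1}{|\mI_1|}\tsum_{t\in \mI_1}x_tx_t^\top\succcurlyeq \frac{1}{c_3}\Gamma_T$ as claimed.
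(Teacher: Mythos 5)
Your proposal has the same three-part structure as the paper's proof and correctly identifies the central technical novelty: a covariance lower bound that holds \emph{uniformly over all subsets} of size $\Omega(T)$, so that the adaptivity of $\mI_0,\mI_1$ becomes irrelevant. Steps 2 and 3 also match the paper's route (least-squares with a three-way decomposition of the residual into a martingale part, a truncation-bias part handled by Claim~\ref{cl:means:new} and absorbed by the $s$-normalization, and a bad-timestep part giving the $dL$ contribution; then Step 3 is immediate from Step 1).

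The one place where your description of Step 1 is materially off is the mechanism by which uniformity over subsets is obtained. You write that ``each block intersects $\mI$ with sufficient total probability $\ge \Omega(\beta')$,'' which suggests a probabilistic interaction between the blocks and the choice of $\mI$. That is not how the paper's argument works, and it is not clear how to make that phrasing rigorous without re-introducing an adaptivity issue. The actual argument is cleaner and purely counting-based: for a fixed direction $v$, the block-martingale small-ball condition plus Azuma--Hoeffding (Claim~\ref{cl:bmsb}) show that with probability $1-e^{-\Omega(T/k)}$, the number of indices $t$ with $(v^\top x_t)^2$ below the small-ball threshold is at most a constant fraction of $T$. On that event, \emph{deterministically}, any subset $\mI$ with $|\mI|\ge 5\sqrt\epsilon\, T$ must contain $\Omega(T)$ ``large'' indices, and hence $\frac{1}{|\mI|}\sum_{t\in\mI}(v^\top x_t)^2$ is bounded below (Lemma~\ref{lem:fix-v}). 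Uniformity over $v$ then comes from the net argument over $\mS^{d-1}$ as in \cite{simchowitz2018learning}. This counting step, not a block-intersection probability, is what makes the bound immune to the fact that $\mI_0,\mI_1$ are random and adapted to the trajectory. Relatedly, your plan omits the matching \emph{upper} bound $\frac1T XX^\top \preccurlyeq \frac{d}{\delta}\Gamma_T$, which is needed both to set $\Gamma_{\text{max}}$ in the net argument and to keep the $(R,\omega)$-accuracy parameter $R$ under control; without it the net cardinality and the subgaussian constant cannot be bounded. Also, your stated bound $\|A_0-A_*\|_{\Sigma_0}^2 \le O\bigl(\frac{d^2+dL}{s|\mI_0|}\bigr)$ drops the $O(\log(1/\alpha))$ bias contribution from $\zeta_t$ (the paper's $\Delta_2$ term), which is precisely the term the $s$-normalization of $\Sigma_0$ is designed to cancel; as written the formula would not need any $s$ at all, which contradicts your own (correct) verbal remark at the end of Step~2.
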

 We now prove Theorem \ref{thm:main} using Theorems \ref{thm:time-series} and \ref{thm:init:new}.
\begin{proof}
We use the time-series model with $y_t=x_{t+1}$, $S_t=\mS_{t+1}$, $o_t=\mathbbm{1}(t\in \mI_1)$. Thus, $B(\alpha)\subseteq \mB(\alpha)$, so  $\Ex[|B(\alpha)|]\le L$. Let $\mE_0$ be the event that all guarantees in Theorem \ref{thm:init:new} hold, so $\Pr[\neg \mE_0]\le o(1)$. As we mentioned, $\|w_t\|$ is $O(d)$-subgaussian. Also, in Appendix \ref{cl:Rx}, we show that $\Ex[\|x_t\|^2]$ is polynomial in all parameters. Let $\Sigma=\frac{1}{|\mI_1|}\sum_{t\in \mI_1}x_tx_t^T$.
By conditioning on $\mE_0$ and applying Theorem \ref{thm:time-series}, using Markov's inequality in (\ref{eq:error-bound}), we have that with probability at least $1-\delta$, 
\begin{align}\label{eq:markov-final}
    \left\|\whA-\sA\right\|_{\Sigma}^2&\le \frac{\tO(C)}{\delta N}\cdot\left(d(d+R^2+R_w^2)+L(d+R^2+R_w^2)\right) \le \tO(C)\cdot \frac{d^2+dL}{\delta N}.
\end{align}
where $C=\poly\left(\frac{1}{\alpha\beta}\right)$. Now, on $\mE_0$, $\Sigma\succcurlyeq \ga_T/\poly\left(\frac{1}{\alpha\beta}\right)$. From Proposition \ref{prop:largeN}, $\Pr\left[N<\frac{\alpha\beta T}{4} \right]\le o(1)$. Thus, by choosing small enough $\delta=\Omega(1)$, we are done.
\end{proof}

\section{Conclusion and Future Work}
In this paper, we developed the first computationally and statistically efficient algorithm for learning censored LDSs. We believe that there are several interesting directions for future work:
\begin{itemize}
    \item The LDSs studied here ($x_{t+1}=\sA x_t+w_t$) often appear in the literature under the name "vector autoregressive models of order 1" (VAR(1)). By including more lagged states in the equation, we get the VAR$(p)$ model: $x_{t+1}=A_{1}x_t+A_{2}x_{t-1}+\dots +A_{p}x_{t-p+1}+w_t$, where $p\ge 1$. Can we efficiently learn censored VAR$(p)$ models? This case (in 1-d) was considered by \cite{zeger1986regression} and \cite{park2007censored}. The challenge here is that the extension of our approach would use only tuples of $p$ consecutive observations, which will be very rare for large $p$. Even for $p=1$, can we leverage the isolated observations?
    \item In many systems, we do not observe $x_t$ directly, but only $C x_t$ for a short matrix $C\in \R^{d_0\times d}$, where $d_0 \le d$. These are called partially-observed LDSs, and their censored version was studied by \cite{allik2015tobit}. Understanding the sample complexity here would require overcoming the challenge sketched above, i.e., going beyond long tuples of consecutive observations.
    \item Finally, censoring often occurs in coordinate level, i.e., only a subset of the coordinates of $x_t$ is observed. This can be thought of as a ReLU nonlinearity, and shows up in applications with sensor saturations (\cite{yang2009set}). Can we efficiently learn the system under coordinate-level censoring?
\end{itemize}

\bibliographystyle{plainnat}
\bibliography{ref.bib}
\appendix
\section{Additional Notation}\label{appdx:not}
For a random variable $X$, we denote by $\|X\|_{\psi_2}$ the subgaussian norm of $X$, i.e.,
\begin{align*}
   \|X\|_{\psi_2}=\inf\{\sigma>0:\ \Ex[\exp(X^2/\sigma^2)]\le 2\}. 
\end{align*}
For a random vector $X$, the sugaussian norm is $ \|X\|_{\psi_2}=\sup_{v\in \mS^{d-1}}\|v^\top    X\|_{\psi_2}$. For two reals $a,b$ we denote $a\lor b$ and $a\land b$ the maximum and minimum respectively.
\section{One-dimensional Example}\label{appdx:1d}
Note that $x_{t+1}=\sum_{s=0}^{t-1}a_*^s w_{t-s}$, so $x_{t+1}\sim \mN(0,\sum_{s=0}^{t-1}a_*^{2s})$. We will need the following claim.
\begin{claim}
Under Assumption \ref{ass:observable} with $\beta=\Omega(1)$, we must have $\lambda\le O\left(\frac{1}{\sqrt{1-a_*}}\right)$.
\end{claim}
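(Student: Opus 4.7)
The plan is to argue by contrapositive: assume $\lambda \gg 1/\sqrt{1-a_*}$, and show that the expected fraction of observed timesteps is so small that Markov's inequality rules out $|\mathcal{O}|\ge \beta T$ holding with probability $1-o(1)$.

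First, I would control the marginal variance of each $x_t$. Since $x_{t+1}=\sum_{s=0}^{t-1}a_*^s w_{t-s}$, one has $x_t\sim \mN(0,\sigma_{t-1}^2)$ with
\begin{align*}
\sigma_{t-1}^2 \;=\; \sum_{s=0}^{t-2} a_*^{2s} \;\le\; \frac{1}{1-a_*^2} \;\le\; \frac{1}{1-a_*}.
\end{align*}
Hence every $x_t$ is $(1-a_*)^{-1/2}$-subgaussian, uniformly in $t$.

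Next, suppose for contradiction that $\lambda \ge C/\sqrt{1-a_*}$ for a constant $C=C(\beta)$ to be chosen. Then $\lambda/\sigma_{t-1}\ge \lambda \sqrt{1-a_*}\ge C$ for every $t$, and the standard Gaussian tail bound gives
\begin{align*}
\Pr[x_t\ge \lambda] \;\le\; \exp\!\left(-\frac{\lambda^2}{2\sigma_{t-1}^2}\right) \;\le\; \exp\!\left(-\frac{\lambda^2(1-a_*)}{2}\right) \;\le\; e^{-C^2/2}.
\end{align*}
Summing over $t\in[T]$ and using linearity of expectation,
\begin{align*}
\Ex[|\mathcal{O}|] \;=\; \sum_{t=1}^T \Pr[x_t\in \mS] \;\le\; T\,e^{-C^2/2}.
\end{align*}

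Finally, I apply Markov's inequality (which suffices despite the temporal dependence of the events $\{x_t\ge \lambda\}$, since we only need the one-sided tail of a sum of nonnegative indicators):
\begin{align*}
\Pr\big[|\mathcal{O}|\ge \beta T\big] \;\le\; \frac{\Ex[|\mathcal{O}|]}{\beta T} \;\le\; \frac{e^{-C^2/2}}{\beta}.
\end{align*}
Choosing $C=\Theta\bigl(\sqrt{\log(1/\beta)}\bigr)$ large enough so that the right-hand side is, say, at most $1/2$, we contradict Assumption \ref{ass:observable}, which forces this probability to be $1-o(1)$. Hence the assumption implies $\lambda \le O\bigl(\sqrt{\log(1/\beta)/(1-a_*)}\bigr)=O\bigl(1/\sqrt{1-a_*}\bigr)$ when $\beta=\Omega(1)$.

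There is no real technical obstacle; the only subtle point worth flagging is that the indicators $\cha\{x_t\ge \lambda\}$ are strongly correlated across $t$ (nearby states are highly dependent), so one cannot invoke a Chernoff-type concentration on $|\mathcal{O}|$. Markov's inequality, however, is insensitive to dependence and is exactly what closes the argument.
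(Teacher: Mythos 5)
Your proof is correct and follows essentially the same route as the paper: bound the marginal variance of $x_t$ by $O(1/(1-a_*))$, apply the Gaussian tail to upper-bound $\Pr[x_t\ge \lambda]$, sum by linearity to control $\Ex[|\mathcal{O}|]$, and invoke Markov's inequality to contradict Assumption~\ref{ass:observable}. Your explicit remark that Markov is used precisely because the indicators are correlated is a nice clarification of a point the paper uses implicitly.
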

\begin{proof}
Fix a time $t$. Since $x_{t+1}\sim \mN(0,\sum_{s=0}^{t-1}a_*^{2s})$, we have $\text{Var}[x_{t+1}]=O\left(\frac{1}{1-a_*}\right)$, and so from Gaussian concentration: $\Pr\left[x_{t+1}\ge \kappa/\sqrt{1-a_*}\ \right]\le \exp(-\Omega(\kappa^2))$. By choosing large enough $\kappa=\Theta(\sqrt{\log{(1/\beta)}})$, we get that $\Pr\left[x_{t+1}\ge \kappa/\sqrt{1-a_*}\ \right]\le \beta/4$. Thus, 
\begin{align}\label{eq:1-d}
    \Ex\left[\sum_{t=1}^T \cha\{x_{t+1}\ge \kappa/(1-a_*)\}\right]\le \beta T/4.
\end{align}
Now, suppose that $\lambda< \kappa/\sqrt{1-a_*}$. Then, $\Ex[|\mO|]=\Ex\left[\sum_{t=1}^T \cha\{x_{t+1}\ge \lambda\}\right]\le \beta T/4$, and so from Markov's inequality, $\Pr\left[|\mO|\ge \beta T/2 \right]\le 1/2$, which contradicts Assumption \ref{ass:observable}. Thus, $\lambda< \kappa/\sqrt{1-a_*}$, and since $\kappa=\Theta(\sqrt{\log{(1/\beta)}})$, and $\beta=\Omega(1)$, we are done.
\end{proof}
The claim implies that $\lambda \le O\left(\frac{1}{1-a_*}\right)$. Let $C$ denote the constant on this bound, i.e., $\lambda \le \frac{C}{1-a_*}$. We consider two cases. First, if $x_t\le \frac{C}{1-a_*}$, then $a_* x_t +C \ge x_t$, i.e, if $x_t\in \mS$, then $x_{t+1}\in \mS$ with probability $\exp(-O(C^2))=\Omega(1)$. If $x_t>\frac{C}{1-a_*}$, then $a_*x_t+C\ge \frac{C}{1-a_*}\ge \lambda$. Again, since $C=O(1)$, we are done.
\section{Proof of Proposition \ref{prop:largeN}}\label{appdx:pairs}
We will need some definitions. We define the probabilities $\alpha_t^*\coloneqq\mN(\sA x_t,I;\mS_{t+1})$, and the events $\mE_t^\alpha\coloneqq\{\alpha_t^*\ge \alpha\}$, and $\mO_t\coloneqq\{x_t\in \mS_t\}$. Also, consider the process
\begin{align*}
    B_t\coloneqq\Big(\mathds{1}\{\mO_{t+1}\}-\alpha \Big)\cdot\mathds{1}\{\mO_t\land \mE_t^\alpha\},
\end{align*}
and the filtration $G_t\coloneqq \sigma(w_0,w_1,\dots,w_t,\mS_1,\mS_2,\dots, \mS_{t+2})$. Then,
\begin{align}\label{eq:prop-b1}
    \Ex[B_t\ |\ G_{t-1}]&=\Ex\bigg[\Big(\mathds{1}\{\mO_{t+1}\}-\alpha \Big)\cdot\mathds{1}\{\mO_t\land \mE_t^\alpha\}\ \bigg|\ G_{t-1}\bigg] \notag\\
    &=\Pr\Big[\mO_{t+1}\ \Big|\ G_{t-1}, \mO_t\land \mE_t^\alpha\Big]-\alpha\ge 0,
\end{align}
where in the last step we use that given $G_{t-1}$ and $\mO_t\land \mE_t^\alpha$, we have $x_{t+1}\sim \mN(\sA x_t,I)$, and $\alpha_t^*\ge \alpha$. Now, let $M_t\coloneqq B_t-\Ex[B_t|G_{t-1}]$, and observe that $M_t$ is $G_t$-measurable, and that 
$\Ex[M_t\ |\ G_{t-1}]=0$. Since $-1\le B_t \le 1$, we have $-2\le M_t\le 2$. Thus, from Azuma-Hoeffding inequality, 
\begin{align}\label{eq:prop-b2}
    \Pr\left[\sum_{t=1}^TM_t \le -\frac{\alpha \beta T }{4}\right]\le \exp{\left(-\frac{\alpha^2\beta^2 }{128} T\right)}=o(1).
\end{align}
Moreover,
\begin{align}\label{eq:prop-b3}
   \sum_{t=1}^T M_t& = \sum_{t=1}^T \Big(B_t-\Ex[B_t|G_{t-1}]\Big)\le^{(\ref{eq:prop-b1})}\ \sum_{t=1}^T B_t \notag \\
   &=\sum_{t=1}^T \cha\{\mO_{t+1}\land \mO_t\land \mE_t^\alpha\}-\alpha \sum_{t=1}^T\cha\{\mO_t\land \mE_t^\alpha\} \notag\\
   &\le |\mP|-\alpha \sum_{t=1}^T\cha\{\mO_t\land \mE_t^\alpha\}.
\end{align}
From Assumptions \ref{ass:observable} and \ref{ass:survival-prob}, we have
\begin{align}\label{eq:prop-b4}
   \Pr\left[\sum_{t=1}^T\cha\{\mO_t\land \mE_t^\alpha\}<\beta T-L\right]\leq o(1).
\end{align}
Finally, using the assumed lower bound on $T$, we get $\beta T-L\ge \frac{3\beta}{4}T$. The proposition follows after combining this with 
\ref{eq:prop-b2},\ref{eq:prop-b3} and \ref{eq:prop-b4}. $\blacksquare $
\section{Detailed Proof of Theorem \ref{thm:time-series}}\label{appdx:prf:thm-time-series}

\subsection{The Generic bound}\label{appdx:prf-generic}
We will show the following lemma
\begin{lemma}\label{lem:generic}
Independently of how $g_i$'s are chosen, 
\begin{align}\label{eq:generic}
    \left\|\whA-A_*\right\|_{\Sigma}^2\leq 1- \underbrace{\sum_{i=1}^N \left(2\eta \langle g_i, A_i \widetilde{A}_{i+1}-\sA \rangle - \left\|(A_i-\sA)x_{t_i}\right\|^2 \right)}_{E_1} + \eta^2\underbrace{\sum_{i=1}^N\trace\left(g_i \Si_i^{-1}g_i^\top\right)}_{E_2}.
\end{align}
\end{lemma}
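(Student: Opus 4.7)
The plan is to mimic the standard analysis of Online Newton Step, adapted to our preconditioner $\Sigma_i = \Sigma_{i-1}+x_{t_i}x_{t_i}^\top$ and to the matrix-valued setting, then telescope across $i=1,\dots,N$. The key observation is that our matrix norm $\|A\|_\Sigma^2 = \langle A^\top A,\Sigma\rangle = \trace(A\Sigma A^\top)$ is induced by an inner product (explicitly, $\|A\|_\Sigma^2 = \|A\Sigma^{1/2}\|_F^2$), so the projection $A_{i+1}=\argmin_{A\in\mK}\|A-\widetilde A_{i+1}\|_{\Sigma_i}$ onto the convex set $\mK$ is non-expansive with respect to $\|\cdot\|_{\Sigma_i}$. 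Since $A_*\in\mK$, this gives the Pythagorean-type bound
\begin{equation*}
\|A_{i+1}-A_*\|_{\Sigma_i}^2 \le \|\widetilde A_{i+1}-A_*\|_{\Sigma_i}^2.
\end{equation*}

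Next I would expand the right-hand side using the definition $\widetilde A_{i+1}=A_i-\eta g_i\Sigma_i^{-1}$. Writing $D_i=A_i-A_*$ and using cyclicity of trace,
\begin{equation*}
\|D_i-\eta g_i\Sigma_i^{-1}\|_{\Sigma_i}^2 = \|D_i\|_{\Sigma_i}^2 - 2\eta\,\trace(D_i^\top g_i) + \eta^2\,\trace(g_i\Sigma_i^{-1}g_i^\top),
\end{equation*}
since $\trace(\Sigma_i^{-1}g_i^\top D_i\Sigma_i)=\trace(g_i^\top D_i)=\langle g_i,D_i\rangle$ and the cross term for the quadratic part cancels a $\Sigma_i$ against $\Sigma_i^{-1}$. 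Then I would split the first term using the recursion on $\Sigma_i$:
\begin{equation*}
\|D_i\|_{\Sigma_i}^2 = \|D_i\|_{\Sigma_{i-1}}^2 + \trace\bigl(D_i^\top D_i\, x_{t_i}x_{t_i}^\top\bigr) = \|D_i\|_{\Sigma_{i-1}}^2 + \|D_i x_{t_i}\|^2.
\end{equation*}
Putting everything together yields the per-step inequality
\begin{equation*}
\|A_{i+1}-A_*\|_{\Sigma_i}^2 - \|A_i-A_*\|_{\Sigma_{i-1}}^2 \le -\bigl(2\eta\langle g_i,A_i-\sA\rangle - \|(A_i-\sA)x_{t_i}\|^2\bigr) + \eta^2\,\trace(g_i\Sigma_i^{-1}g_i^\top).
\end{equation*}

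Finally I would telescope from $i=1$ to $N$. The left-hand side collapses to $\|A_{N+1}-A_*\|_{\Sigma_N}^2 - \|A_1-A_*\|_{\Sigma_0}^2 = \|\whA-A_*\|_\Sigma^2 - \|A_0-A_*\|_{\Sigma_0}^2$, and the initialization $A_1=A_0$ together with $A_*\in\mK$ gives $\|A_0-A_*\|_{\Sigma_0}^2\le 1$, which produces the claimed bound. There is no real obstacle here; the only things to double-check are the matrix-norm/trace manipulations and that the $\mK$-projection is a non-expansion in the $\Sigma_i$-geometry (which follows because $\mK$ is convex and $\|\cdot\|_{\Sigma_i}$ is a genuine Hilbert norm for $\Sigma_i\succ 0$). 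Note that the statement is completely oracle-free with respect to how the $g_i$ are generated, which is exactly what we want so that the later ``switching'' choice can be analyzed by controlling each of the two resulting error sums.
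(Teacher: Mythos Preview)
Your proposal is correct and follows essentially the same argument as the paper: expand $\|\widetilde A_{i+1}-A_*\|_{\Sigma_i}^2$ via the update rule, apply non-expansiveness of the projection onto the convex set $\mK$ in the $\|\cdot\|_{\Sigma_i}$-norm (using $A_*\in\mK$), split $\|D_i\|_{\Sigma_i}^2=\|D_i\|_{\Sigma_{i-1}}^2+\|D_ix_{t_i}\|^2$ via $\Sigma_i=\Sigma_{i-1}+x_{t_i}x_{t_i}^\top$, telescope, and use $\|A_0-A_*\|_{\Sigma_0}\le 1$. The only cosmetic difference is that the paper sums first and then rearranges the telescoping terms, whereas you split before summing; also note the displayed statement contains a typo ($A_i\widetilde A_{i+1}$ should read $A_i$), and your derivation correctly produces $\langle g_i, A_i-\sA\rangle$.
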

\begin{proof}
For all $i$, 
\begin{align}\label{eq:regr-b1}
   \widetilde{A}_{i+1}-\sA=A_i-\sA -\eta g_i \Si_i^{-1}. 
\end{align}
Multiplying with $\Si_i$,
\begin{align}\label{eq:regr-b2}
    (\widetilde{A}_{i+1}-\sA)\Si_i=(A_i-\sA)\Si_i-\eta g_i. 
\end{align}
Multiplying \ref{eq:regr-b2} with the transpose of \ref{eq:regr-b1},
\begin{align*}
    (\widetilde{A}_{i+1}-\sA)\Si_i(\widetilde{A}_{i+1}-\sA)^\top &=\left((A_i-\sA)\Si_i-\eta g_i\right)\left(A_i-\sA -\eta g_i \Si_i^{-1}\right)^\top  
\end{align*}
Expanding and taking trace, 
\begin{align*}
    \|\widetilde{A}_{i+1}-\sA\|_{\Si_i}^2=\|A_i-\sA\|_{\Si_i}^2 -2\eta \langle g_i, A_i -\sA \rangle +\eta^2 \ \trace \left(g_i\Si_i^{-1} g_i^\top\right)
\end{align*}
Since $\mK$ is $(R,\omega)$-accurate, $\sA\in \mK$, and by convexity of $\mK$, $\|A_{i+1}-\sA\|_{\Si_i}\leq \|\widetilde{A}_{i+1}-\sA\|_{\Si_i}$ (see \cite{hazan2019introduction} Theorem 2.1). Thus,
\begin{align*}
    \|A_{i+1}-\sA\|_{\Si_i}^2\le\|A_i-\sA\|_{\Si_i}^2 -2\eta \langle g_i, A_i -\sA \rangle +\eta^2 \ \trace \left(g_i\Si_i^{-1} g_i^T\right)
\end{align*}
Summing over all $i$,
\begin{align}\label{eq:regret-b3}
   0\le  \sum_{i=1}^N\left(\|A_i-\sA\|_{\Si_i}^2-\|A_{i+1}-\sA\|_{\Si_i}^2\right)-2\eta \sum_{i=1}^N  \langle g_i, A_i -\sA \rangle +\eta^2  \sum_{i=1}^N \trace \left(g_i \Si_i^{-1} g_i^\top\right)
\end{align}
We now focus on the first sum.
\begin{align}\label{eq:regret-b4}
    \sum_{i=1}^N\left(\|A_i-\sA\|_{\Si_i}^2-\|A_{i+1}-\sA\|_{\Si_i}^2\right)=\|A_1-\sA\|_{\Si_1}^2 +\sum_{i=2}^N \Big( \|A_i-\sA\|_{\Si_i}^2 &-\|A_{i}-\sA\|_{\Si_{i-1}}^2 \Big) \notag\\
    &- \|A_{N+1}-\sA\|_{\Si_N}^2.
\end{align}
Also, since $\Sigma_i=\Si_{i-1}+x_{t_i}x_{t_i}^T$,
\begin{align}\label{eq:regret-b5}
 \|A_i-\sA\|_{\Si_i}^2 -\|A_{i}-\sA\|_{\Si_{i-1}}^2 &=\trace\left((A_i-\sA)^\top(A_i-\sA)\Sigma_i\right)-  \trace\left((A_i-\sA)^\top(A_i-\sA)\Sigma_{i-1}\right) \notag\\
 &=\trace\left((A_i-\sA)^\top(A_i-\sA)x_{t_i}x_{t_i}^\top)\right) =\|(A_i-\sA)x_{t_i}\|^2.
\end{align}
Thus,
\begin{align}\label{eq:regret-b6}
    \sum_{i=1}^N\big(\|A_i-\sA\|_{\Si_i}^2-\|A_{i+1}-\sA &\|_{\Si_i}^2\big)=\|A_1-\sA\|_{\Si_1}^2  -\|A_{N+1}-\sA\|_{\Si_N}^2+\sum_{i=2}^N \|(A_i-\sA)x_{t_i}\|^2 \notag \\
    &=^{(\ref{eq:regret-b5})}\|A_1-\sA\|_{\Si_0}^2  -\|A_{N+1}-\sA\|_{\Si_N}^2+\sum_{i=1}^N \|(A_i-\sA)x_{t_i}\|^2. 
\end{align}
Combining \ref{eq:regret-b3} with \ref{eq:regret-b6},
\begin{align*}
     \|A_{N+1}-\sA\|_{\Si_N}^2\le \|A_1-\sA\|_{\Si_0}^2-\sum_{i=1}^N \left( 2\eta   \langle g_i, A_i -\sA \rangle -\|(A_i-\sA)x_{t_i}\|^2\right) +\eta^2  \sum_{i=1}^N \trace \left(g_i \Si_i^{-1} g_i^\top\right).
\end{align*}
Observe that $\|A_1-\sA\|_{\Si_0}^2=\|A_0-\sA\|_{\Si_0}^2\le 1$, since $A_1=A_0$ and $\sA\in \mK$. The facts that $\whA=A_{N+1}$ and $\Sigma=\Sigma_N$ finish the proof.    
\end{proof}
\subsection{Decomposition of $E_1$}\label{e1}
We start with some definitions. Fix a $t$, and let
\begin{align*}
    \mathcal{T}_t \coloneqq \Big{\{} o_t=1\ \text{and at step $t$, Test returns True} \Big{\}}.
\end{align*}
Also, let $O_t=\{o_t=1\}$. We now define some "good" events. Let $G_t\coloneqq \{ t\notin B\}$,
    \begin{align*}
        C_t\coloneqq \left\{ O_t\rightarrow \left\{\text{ at iteration $i(t)$, either Test returns True and $a_t\ge \gamma$, or Test returns False and $a_t\le 4\gamma$} \right\} \right\},
    \end{align*}
   \begin{align*}
  K_t \coloneqq \left\{ \|\Si_0^{-1/2}x_{t}\|\le R\sqrt{\log T}\right\}
    \end{align*}

$G_t\coloneqq \big\{\mN(\sA x_t,I;S_t)\ge \alpha \big {\}}$,
and observe that $|B(\alpha)|=\sum_{t=1}^T\cha\{\neg G_t \land o_t=1\}$. The next event is about the "correctness" of the Test. Let 
\begin{align*}
    C_t\coloneqq \Big{\{} o_t=1\ \text{and at that step, either Test returns True and $\gamma_t\ge \gamma$, or Test returns False and $\gamma_t\le 4\gamma$} \Big{\}}, 
\end{align*}
where $a_t$ is defined in Section \ref{sec:algo}. We aggregate via $\mE_t \coloneqq G_t \land C_t \land K_t \land O_t$. Now, given $o_t=1$, let $V_t\coloneqq  2\eta \langle g(t), A(t) -\sA \rangle - \left\|(A(t)-\sA)x_{t}\right\|^2$, where $A(t)$ and $g(t)$ are defined in the main text. We decompose $E_1$ as 
\begin{align*}
    E_1=\sum_{t=1}^T V_t  \cha\{\mT_t\land \mE_t\}+\sum_{t=1}^T V_t  \cha\{\neg \mT_t\land \mE_t\} + \sum_{t=1}^T V_t  \cha\{\neg \mE_t \land O_t\}.
\end{align*}
Let $Q_1,Q_2,Q_3$ be the above three sums, respectively.

\subsection{Large Survival Probability}
In this step, we prove that $\Ex[V_t\ |\ \mT_t\land \mE_t]\ge 0$, which implies $\Ex[Q_1]\ge 0$. First of all, we condition on $\mF_t$ and $\mT_t\land \mE_t$. Then, letting $\mu_t\coloneqq A(t)x_t$ and $\mu_t^*\coloneqq \sA x_t$, we get $g(t)=(z_t-y_t)x_t^\top$, where $y_t\sim \mN(\mu_t^*,I,S_t)$ and $z_t\sim \mN(\mu_t,I,S_t)$. So,
\begin{align*}
\Ex\Big[V_t\ \Big{|}\ \mF_t,\ \mT_t\land \mE_t  \Big{]} =2\eta \langle \nu_t - \nu_{t}^*, \mu_t-\mus_t \rangle - \|\mu_t-\mu_t^*\|^2,
\end{align*}
where $\nu_t=\mathbb{E}_{z\sim \mN(\mu_t,I,S_t)}[z]$ and $\nu_t^*=\mathbb{E}_{y\sim \mN(\mu_t^*,I,S_t)}[y]$. Also, note that on $\mT_t\land \mE_t$,
\begin{align*}
     \mN(\mu_t,I;S_t),\ \mN(\mu_t^*,I;S_t)\geq \gamma
\end{align*}
Now, using Lemma \ref{lem:mvt:new} of the main text (proven in detail there), we see that for large enough constant $c_\eta$\footnote{Remember that $\eta=(2/\alpha)^{c_{\eta}}$.}, we get $\Ex[V_t\ |\ \mF_t,\ \mT_t\land \mE_t ] \geq 0$. Indeed, note that $\nabla_\mu L_{S}(\mu;\mu^*)= \nu -\nu^*, $ where $\nu=\mathbb{E}_{z\sim \mN(\mu,I,S)}[z]$ and $\nu^* = \mathbb{E}_{y\sim \mN(\mu^*,I,S)}[z]$. Thus,
\begin{align*}
\Ex\Big[V_t\ \Big{|}\ \mF_t,\ \mT_t\land \mE_t  \Big{]} \ge(2\eta (\gamma/2)^c-1) \cdot \|\mu_t-\mu_t^*\|^2 &=\left(2\left(2/\alpha\right)^{c_\eta} (\alpha/4)^{c\cdot c_\gamma}-1\right)\cdot \|\mu_t-\mu_t^*\|^2 \\
& =\left(\frac{2^{c_\eta +1-2c\cdot c_\gamma}}{a^{c_\eta-c\cdot c_\gamma}}-1\right) \cdot \|\mu_t-\mu_t^*\|^2\ge 0,
\end{align*}
for any constant $c_\eta \ge 2c\cdot c_\gamma$.
\subsection{Small Survival Probability}
Here, we prove that $\Ex[V_t\ |\ \neg \mT_t\land \mE_t]\ge 0$, which implies $\Ex[Q_2]\ge 0$. Given $\mF_t $ and $\neg \mT_t\land \mE_t$, we have $g(t)=(\mu_t-y_t)x_t^T$, where $y_t\sim \mN(\mu_t^*,I,S_t)$. Thus,
\begin{align}\label{eq:Vt-not-T}
\Ex\Big[V_t\ \Big{|}\ \mF_t,\ \neg \mT_t\land \mE_t  \Big{]} =2\eta \langle \mu_t - \nu_{t}^*, \mu_t-\mus_t \rangle - \|\mu_t-\mu_t^*\|^2.
\end{align}
Now, since $\mE_t$ implies $C_t$, we have that given $\neg \mT_t\land \mE_t$, the probability $\mN(\mu_t,I;S_t)\le 4\gamma$. Moreover, $\mE_t$ implies $G_t$, i.e., $\mN(\mu_t^*,I;S_t)\ge \alpha$, which from Claim \ref{cl:means:new} implies $\|\nu_t^*-\mu_t^*\|\leq s(\alpha)$. We claim that $\|\mu_t-\mu_t^*\|> 2s(\alpha)$, provided that $c_\gamma=O(1)$ is sufficiently large. Indeed, suppose $\|\mu_t-\mu_t^*\| \le 2 s(\alpha)$. Combining with $\mN(\mu_t^*,I;S_t)\ge \alpha$ and Claim \ref{cl:prob-lb:new}, we get 
\begin{align*}
   4(\alpha/2)^{c_\gamma} =4\gamma \ge \mN(\mu_t,I, S_t)\geq (\alpha/2)^{O(1)},
\end{align*}
which is a contradiction for large enough $c_\gamma=O(1)$. Using that $\|\nu_t^*-\mu_t^*\|\le s(\alpha)$,
\begin{align*}
    \langle \mu_t - \nu_{t}^*, \mu_t-\mus_t \rangle &= \|\mu_t -\mus_t\|^2 + \langle \mu_t^* - \nu_{t}^*, \mu_t-\mus_t \rangle \ge \|\mu_t -\mus_t\|^2-\|\mu_t^* - \nu_{t}^*\| \cdot \|\mu_t -\mus_t\| \\
    &\ge \|\mu_t -\mus_t\|\cdot \big( \|\mu_t -\mus_t\| -s(\alpha) \big) \ge  \frac{\|\mu_t-\mu_t^*\|^2}{2}.
\end{align*}
Since $\eta\ge 1$, equation \ref{eq:Vt-not-T} implies $\Ex[V_t\ |\ \mF_t, \neg \mT\land \mE_t]\ge 0$.
\subsection*{Handling Bad Events and Gradient Variance}
For $Q_3$, we show that 
\begin{align*}
   \Ex\big[| Q_3| \big]\leq \tO(1)\cdot \poly(1/\alpha)\cdot  \left(d+R^2+R_w^2\right)\cdot(L+1).
\end{align*}
This is proven using that bad events have small probabilities, relative to $V_t$'s. This is a purely technical proof, and we provide it in Appendix \ref{appdx:prf:Q3-bound}.
\\
\\
The last step is to bound $E_2=\sum_{i=1}^N\trace\left(g_i \Si_i^{-1}g_i^T\right).$ This is often called "the gradient variance" in optimization. We show that 
\begin{align*}
    \Ex\big[E_2\big] \leq \tO(1)\cdot  d(d+R^2+R_w^2).
\end{align*}
The proof uses the volume-based potential-function argument used in \cite{hazan2007logarithmic}, which also appears in linear bandit theory (\cite{lattimore2020bandit}). We provide the proof in Appendix \ref{appdx:prf:var}. Overall,
\begin{align*}
    \Ex\left[\left\|\whA-A_*\right\|_{\Sigma}^2\right]&\leq  1-\Ex[Q_1]-\Ex[Q_2]-\Ex[Q_3]+\eta^2 \Ex[E_2] \\
    &\leq \tO(1)\cdot \poly(1/\alpha)\cdot \left(d+R^2+R_w^2\right)\cdot(L+1) + \tO(1)\cdot \poly(1/\alpha)\cdot d(d+R^2+R_w^2)\\
    & \leq D+LD',
\end{align*}
where $D=dD'$, and $D'=\tO(1)\cdot \poly(1/\alpha) \cdot (d+R^2+R_w^2) $.

\section{Bound on $\Ex[|Q_3|]$}\label{appdx:prf:Q3-bound}
\begin{lemma}\label{lem:q3}
$\mathbb{E}[|Q_3|]\le\tO(1)\cdot \poly(1/\alpha)\cdot (d+R^2+R_w^2)\cdot (L+1)$.
\end{lemma}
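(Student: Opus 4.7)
The plan is to bound $|V_t|$ pointwise by a variable $Y_t$ with polynomially controlled moments, and then to sum $\Ex[Y_t \cha\{\text{bad}\}]$ across the three bad sub-events $\neg G_t$, $\neg K_t$, $\neg C_t$ separately. Writing $g(t)=(z_t-y_t)x_t^\top$, applying Cauchy--Schwarz, and using $\|(A(t)-\sA)x_t\|\le 2\|\Si_0^{-1/2}x_t\|$ (exactly as at the end of Section~\ref{subs:prf1}, from $A(t),\sA\in\mK$ and the $(R,\omega)$-accuracy of $\mK$), we obtain
\begin{equation*}
|V_t|\le Y_t\coloneqq 4\eta\,\|z_t-y_t\|\cdot\|\Si_0^{-1/2}x_t\|+4\|\Si_0^{-1/2}x_t\|^2.
\end{equation*}
The decomposition (\ref{decomp:main}) rewrites $\|z_t-y_t\|$ as $\|z_t-\mu_t\|\cha\{\mT_t\}+2\|\Si_0^{-1/2}x_t\|+\|w_t\|$, whose three summands have uniformly bounded subgaussian norms: $\tO(\sqrt{d})$ by Claims~\ref{appdx:prob1}--\ref{appdx:prob2} on the event $\mT_t$ (which, modulo a rare failure handled below, enforces $\mN(\mu_t,I;S_t)\ge \gamma$), $R$ by $(R,\omega)$-accuracy, and $R_w$ by assumption. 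Hence $\Ex[Y_t^q]\le \tO(1)\cdot\poly(1/\alpha)\cdot(d+R^2+R_w^2)^{q}$ uniformly in $t$ for any fixed $q\ge 1$.

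Next, since $\neg\mE_t\wedge O_t\subseteq(\neg G_t\wedge O_t)\cup(\neg K_t\wedge O_t)\cup(\neg C_t\wedge O_t)$,
\begin{equation*}
|Q_3|\le \tsum_t Y_t\big(\cha\{\neg G_t\wedge O_t\}+\cha\{\neg K_t\wedge O_t\}+\cha\{\neg C_t\wedge O_t\}\big).
\end{equation*}
For the $\neg G_t$ piece, $\tsum_t\cha\{\neg G_t\wedge O_t\}=|B(\alpha)|$ with $\Ex[|B(\alpha)|]\le L$ by assumption. Since the previous paragraph gives $\Ex[Y_t\mid \mF_t]\le C\coloneqq \tO(1)\poly(1/\alpha)(d+R^2+R_w^2)$ as a deterministic bound independent of $\mF_t$, the tower property yields $\Ex\big[\tsum_t Y_t\cha\{\neg G_t\wedge O_t\}\big]\le C\cdot\Ex[|B(\alpha)|]\le CL$.

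For the remaining two sums, the rarity of the events closes the gap. Subgaussian tails give $\Pr[\neg K_t]\le T^{-10}$ once the polylog threshold in the definition of $K_t$ is taken large enough, and a Chernoff bound on the $k=(4/\gamma)\log T$ Bernoulli samples drawn inside the Test procedure gives $\Pr[\neg C_t\mid \mF_t, O_t]\le T^{-10}$ (this is precisely why $k$ was calibrated to that size). Cauchy--Schwarz combined with the bound $\Ex[Y_t^2]\le C^2$ then yields $\Ex[Y_t\cha\{\neg K_t\wedge O_t\}]+\Ex[Y_t\cha\{\neg C_t\wedge O_t\}]\le C\,T^{-5}$, so summing over $t\le T$ contributes $O(C/T^{4})$. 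Combining with the $CL$ piece above gives $\Ex[|Q_3|]\le\tO(1)\poly(1/\alpha)(d+R^2+R_w^2)(L+1)$, which is the claim.

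The main obstacle is the moment bound on $\|z_t-\mu_t\|\cha\{\mT_t\}$ in the first paragraph. The Test only guarantees $\mN(\mu_t,I;S_t)\ge \gamma$ with high probability, and on the small-probability complement the rejection sampler inside SwitchGrad could in principle produce pathologically far-from-$\mu_t$ draws whose tails invalidate the moment estimate. That pathology is exactly the event $\neg C_t$, and the Chernoff calibration of the Test's sample size $k$ exists precisely to isolate it into a bucket handled by rare-event estimates rather than by moment bounds. Once that separation is cleanly drawn, the rest of the argument is a routine combination of tower property, Cauchy--Schwarz, and $(R,\omega)$-accuracy of $\mK$.
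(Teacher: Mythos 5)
Your argument hinges on the claim that $\Ex[Y_t\mid\mF_t]\le C$ holds \emph{as a deterministic bound}, which is false. The random quantity $\|\Sigma_0^{-1/2}x_t\|$ is $\mF_t$-measurable and only $R^2$-subgaussian, not uniformly bounded, so $\Ex[Y_t\mid\mF_t]\geq 4\|\Sigma_0^{-1/2}x_t\|^2$ cannot be bounded by a deterministic constant. This is fatal precisely in the place where you invoke it: the $\neg G_t$ bucket. There, you must convert the budget $\sum_t\Pr[\neg G_t\wedge O_t]=\Ex[|B(\alpha)|]\le L$ into a bound that scales linearly in $L$. Tower works if and only if the conditional expectation is uniformly bounded; without it, the natural fallback $\Ex[Y_t\cha\{\neg G_t\wedge O_t\}]\le\sqrt{\Ex[Y_t^2]}\cdot\sqrt{\Pr[\neg G_t\wedge O_t]}$ sums to $O\big(C\sqrt{TL}\big)$, which is far worse than $CL$ when $T\gg L$. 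So the proposal does not establish the lemma. The paper avoids this by \emph{not} aggregating $|V_t|$ into a single variable $Y_t$: it separately bounds the bounded-moment pieces $\|\mu_t-\mu_t^*\|^2$ and $\|w_t\|^2$ (Claims~\ref{appdx:prop:R}, \ref{appdx:prop:Rw}) by a pointwise split on a threshold event like $K_t'=\{\dmus\le R\sqrt{\log(TR)}\}$: on $\neg K_t'$ the quantity is deterministically small so one can multiply by $\Pr[\neg\mE_t\land O_t]$ and recover $L$, and on $K_t'$ Cauchy--Schwarz is used against the tiny probability $\Pr[K_t']\le 1/(T^2R^4)$. This split is what you are missing; folding $\|\Sigma_0^{-1/2}x_t\|$ into $Y_t$ and applying a single tower step cannot work.

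A secondary imprecision: the assertion that $\|z_t-\mu_t\|\cha\{\mT_t\}$ has uniformly $\tO(\sqrt{d})$-subgaussian norm is too quick. Conditionally on $\mF_t$ and $\mT_t$, the subgaussian parameter is $\tO(d\cdot(1\lor\log(1/\gamma_t)))$ with $\gamma_t$ random and possibly $o(1)$; it is only on $C_t\land\mT_t$ that $\gamma_t\ge\gamma$ makes this $\tO(d)\poly(\log(1/\alpha))$. Unconditional moment bounds for $\|z_t-\mu_t\|^2\cha\{\mT_t\}$ do hold (essentially because $\gamma_t\log(1/\gamma_t)$ is bounded and $\Pr[\mT_t\mid\gamma_t]\le\gamma_t k$), but that is a separate computation which you gesture at without carrying out — the paper isolates it as the $Q_5$ term and proves Claim~\ref{cl:Q3:log} to control the $\log(1/\gamma_t)$ growth on $\neg C_t\land\mT_t$. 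Your phrase "the rest of the argument is a routine combination" underestimates this: the $Q_4$/$Q_5$ separation and the two different mechanisms (threshold split vs.\ $\gamma_t\log\gamma_t$ balance) are the heart of the paper's proof, and both are absent or incorrect in your sketch.
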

We now prove the above lemma. We start with some definitions (and reminders). If at time $t$ we have $o_t=1$, then we define the following:
\begin{itemize}
     \item Let $i(t)$ be the iteration $i$, which corresponds to time $t$.
    \item Let $A(t)\coloneqq A_{i(t)}$, and $\mu_t\coloneqq A(t)x_t$. Also, let $\mu_t^*\coloneqq \sA x_t$.
    \item Let $\gamma_t\coloneqq\mN(\mu_t,I;S_t)$.
    \item Let $z_t$ be the $z$ at the end of SwitchGrad function, when called at iteration $i(t)$. 
    \item If additionally, Test at iteration $i(t)$ returns True ($\mT_t$), then let $z_t'$ be the $z'$ of the SwitchGrad function, at iteration $i(t)$. Observe that on $\mT_t$, $z_t=z_t'$. 
\end{itemize}
Now, observe that given $o_t=1$,
\begin{align*}
    V_t =2\eta \langle z_t-y_t  , \mu_t-\mu_t^*\rangle - \|\mu_t-\mu_t^*\|^2.
\end{align*}
All the above quantities are defined on $O_t=\{o_t=1\}$ (except for $z_t'$ which is defined on $\mT_t\subseteq O_t$). We extend all these, by defining them to be zero outside of $O_t$ ($\mT_t$ for $z_t'$). Furthermore, remember that $\mE_t = G_t \land C_t \land K_t \land O_t$, where the events (along with $\mT_t$) are defined in \ref{e1}.
Moreover, in Appendix \ref{appdx:prob} (Claim \ref{cl:chernoff}), we show that $\Pr[\neg C_t]\le 1/T^2$. About $K_t$, since $\mK$ is $(R,\omega)$-accurate, $\Pr[\neg K_t]\le O(1/T)$.
We first decompose $|Q_3|$.
\begin{claim}
$|Q_3|\le \eta Q_4+\eta Q_5$, where
\begin{align}
    Q_4= \sumt (8\dmus^2+2\|w_t\|^2)\cha\{\neg \mE_t \land O_t\},\ \ \ \ \text{and}\ \ \ \ Q_5=  \sumt \|z_t'-\mu_t\|^2\cha\{\neg \mE_t \land \mT_t\}.
\end{align}
\end{claim}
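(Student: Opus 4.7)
\begin{proof-sketch}
The plan is a pointwise estimate: I will bound $|V_t|\cdot \mathds{1}\{\neg \mE_t \land O_t\}$ by the corresponding summands of $\eta Q_4$ and $\eta Q_5$, and then sum over $t$. Since all quantities vanish off $O_t$, I only need to work on the event $O_t$, where $\mu_t,\mu_t^*,z_t,y_t$ are defined and $y_t=\mu_t^*+w_t$.

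First, regardless of the sign of $V_t$, the triangle inequality applied to $V_t = 2\eta \langle z_t-y_t,\mu_t-\mu_t^*\rangle - \|\mu_t-\mu_t^*\|^2$ gives
\begin{align*}
|V_t| \leq 2\eta\,|\langle z_t-y_t,\mu_t-\mu_t^*\rangle| + \|\mu_t-\mu_t^*\|^2.
\end{align*}
Next I decompose $z_t-y_t=(z_t-\mu_t)+(\mu_t-\mu_t^*)-w_t$, apply Cauchy--Schwarz to each piece, and use the elementary inequality $2ab\le a^2+b^2$ to absorb the cross terms:
\begin{align*}
2\eta\,|\langle z_t-y_t,\mu_t-\mu_t^*\rangle| \leq \eta\|z_t-\mu_t\|^2 + \eta\|w_t\|^2 + 4\eta\|\mu_t-\mu_t^*\|^2.
\end{align*}
Combining with the earlier inequality and using $\eta\ge 1$ to fold the stray $\|\mu_t-\mu_t^*\|^2$ term into the coefficient, I obtain the pointwise estimate
\begin{align*}
|V_t| \leq \eta\|z_t-\mu_t\|^2 + \eta\bigl(5\|\mu_t-\mu_t^*\|^2 + \|w_t\|^2\bigr).
\end{align*}

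The final step is the small but important observation that the SwitchGrad routine sets $z=\mu$ whenever Test returns False, so $z_t=\mu_t$ on $\neg\mT_t$ and $z_t=z_t'$ on $\mT_t$. Consequently
\begin{align*}
\|z_t-\mu_t\|^2\cdot \mathds{1}\{\neg\mE_t\land O_t\} = \|z_t'-\mu_t\|^2\cdot \mathds{1}\{\neg\mE_t\land \mT_t\},
\end{align*}
using $\mT_t\subseteq O_t$. Multiplying the pointwise estimate on $|V_t|$ by $\mathds{1}\{\neg\mE_t\land O_t\}$, summing in $t$, and relaxing the coefficients $5\le 8$ and $1\le 2$ to match the statement yields $|Q_3|\le \eta Q_4+\eta Q_5$ as claimed. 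There is no real obstacle here beyond bookkeeping; the only non-routine move is recognizing that the indicator $\mathds{1}\{\mT_t\}$ is automatically picked up by the $\|z_t-\mu_t\|^2$ term, which lets a single pointwise bound be re-indexed into the two different indicator sets appearing in $Q_4$ and $Q_5$.
\end{proof-sketch}
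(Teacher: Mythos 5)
Your proof is correct and takes essentially the same route as the paper: triangle inequality on $V_t$, decomposing $z_t-y_t$ into the three displacements $z_t-\mu_t$, $\mu_t-\mu_t^*$, $-w_t$, Cauchy--Schwarz plus $2ab\le a^2+b^2$, $\eta\ge 1$, and the observation that $z_t=\mu_t$ off $\mT_t$ so that $\|z_t-\mu_t\|^2$ automatically carries the indicator $\cha\{\mT_t\}$. You even land on slightly better constants $(5,1)$ than the paper's $(8,2)$, which of course still satisfies the stated bound.
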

\begin{proof}
Let $w_t\coloneqq y_t-\mu_t^*$ (the noise). We have $|Q_3|\le  \sum_{t=1}^T |V_t|\cha\{\neg \mE_t \land O_t\}$, and
\begin{align*}
    |V_t|&  \le 2\eta \left| \langle z_t'-y_t , \mu_t-\mu_t^*\rangle \right| \cdot \cha\{\mT_t\}+2\eta  \left|\langle \mu_t-y_t , \mu_t-\mu_t^*\rangle \right|  +\dmus^2 \\
    &\le 2\eta \| z_t'-y_t\| \cdot \| \mu_t-\mu_t^*\|\cdot \cha\{\mT_t\}+ 2\eta \| \mu_t-y_t\| \cdot \| \mu_t-\mu_t^*\| +\eta \dmus^2\\
    & \le 2\eta \| z_t'-\mu_t\| \cdot \| \mu_t-\mu_t^*\|\cdot \cha\{\mT_t\}+ 4\eta \| \mu_t-y_t\| \cdot \| \mu_t-\mu_t^*\| +\eta \dmus^2\\
    & \le 2\eta \| z_t'-\mu_t\| \cdot \| \mu_t-\mu_t^*\|\cdot \cha\{\mT_t\}+ 4\eta \| \mu_t-\mu_t^*\|^2+4\eta \| w_t\| \cdot \| \mu_t-\mu_t^*\| +\eta \dmus^2\\
    &\le \eta \| z_t'-\mu_t\|^2\cdot \cha\{\mT_t\}+6\eta \| \mu_t-\mu_t^*\|^2 + 2\eta \| \mu_t-\mu_t^*\|^2+2\eta \| w_t\|^2 \\
    &= \eta \| z_t'-\mu_t\|^2\cdot \cha\{\mT_t\}+8\eta \| \mu_t-\mu_t^*\|^2 +2\eta \| w_t\|^2
\end{align*}
where we have used that $\eta\ge 1$, and that for all $a,b\in \R$, $2ab\le a^2+b^2$. Since $\mT_t$ implies $ O_t$,
\begin{align*}
    |Q_3|\le \eta \sumt  \| z_t'-\mu_t\|^2\cdot \cha\{\neg \mE_t\land \mT_t\}+ \eta \sumt (8 \| \mu_t-\mu_t^*\|^2 +2 \| w_t\|^2)\cdot \cha\{\neg \mE_t\land O_t\}.
\end{align*}
\end{proof}
We will prove the following claims.
\begin{claim}\label{appdx:cl:q4}
$\Ex[Q_4]\le \tO(R^2+R_w^2)\cdot(L+1)$.
\end{claim}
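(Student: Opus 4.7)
The plan is to split $Q_4$ according to which ``good event'' fails, and then bound each piece separately using the assumptions on $B(\alpha)$, $K_t$, and $C_t$. Since $\mE_t = G_t \land C_t \land K_t \land O_t$, the union bound gives
\begin{align*}
\cha\{\neg \mE_t \land O_t\} \leq \cha\{\neg G_t \land O_t\} + \cha\{\neg C_t\} + \cha\{\neg K_t\},
\end{align*}
so it suffices to bound, for each of the three ``failure'' events, the contributions of $\|\mu_t-\mu_t^*\|^2$ and $\|w_t\|^2$ summed over $t$.

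For the $\|\mu_t-\mu_t^*\|^2$ part I will use the projection bound from Section~\ref{subs:prf1}, namely $\|\mu_t - \mu_t^*\| \leq 2\|\Sigma_0^{-1/2}x_t\|$, which follows from $A(t),\sA\in\mK$ and the $(R,\omega)$-accuracy of $\mK$. For $\neg G_t\land O_t$ (which is exactly $\{t\in B(\alpha)\}$) I will further split by $K_t$ versus $\neg K_t$: on $K_t$ we have $\|\Sigma_0^{-1/2}x_t\|^2\leq R^2\log T$, so the sum is $\le 4R^2\log T\cdot|B(\alpha)|$, whose expectation is at most $\tO(R^2)\cdot L$; the $\neg K_t$ slice is handled uniformly (see below). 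For $\neg C_t$, since $\Pr[\neg C_t]\leq 1/T^2$ by Claim~\ref{cl:chernoff}, Cauchy--Schwarz together with the fourth-moment bound $\Ex[\|\Sigma_0^{-1/2}x_t\|^4]=O(R^4)$ from subgaussianity gives $\Ex[\|\Sigma_0^{-1/2}x_t\|^2\cha\{\neg C_t\}]\leq O(R^2)/T$, summing to $O(R^2)$. The $\neg K_t$ piece is handled identically, using $\Pr[\neg K_t]\leq T^{-c}$ for a constant $c$ that can be made large by absorbing a logarithmic factor into the threshold.

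For the $\|w_t\|^2$ part I will exploit the fact that $\cha\{t\in B(\alpha)\}$ is $\mF_t$-measurable (since $B(\alpha)$ depends only on $x_t, S_t, o_t$), so tower-law gives $\Ex[\|w_t\|^2\cha\{\neg G_t\land O_t\}]=\Ex[\cha\{t\in B(\alpha)\}\Ex[\|w_t\|^2\mid\mF_t,o_t=1]]\leq O(R_w^2)\Pr[t\in B(\alpha)]$, whose sum over $t$ is at most $O(R_w^2)\Ex[|B(\alpha)|]\leq O(R_w^2)L$. For $\neg C_t$ and $\neg K_t$, I will again invoke Cauchy--Schwarz with $\Ex[\|w_t\|^4]=O(R_w^4)$, giving per-$t$ contributions that are polynomially small in $T$ and thus sum to $O(R_w^2)$. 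Adding all six pieces yields $\Ex[Q_4]\leq \tO(R^2+R_w^2)(L+1)$, as required.

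The main subtlety I anticipate is the $\neg G_t$ case. A naive bound based only on $\Ex[|B(\alpha)|]\leq L$ combined with Cauchy--Schwarz would produce an undesirable $\sqrt{TL}$ factor because $\|x_t\|^2$ can scale polynomially in $T$ in the slowly-mixing regime. The fix is to intersect with the high-probability event $K_t$, which caps $\|\Sigma_0^{-1/2}x_t\|^2$ at $R^2\log T$ deterministically, and to absorb the remaining $\neg K_t$ tail separately using subgaussian concentration. This decoupling of the ``typical $x_t$ but atypical $B(\alpha)$'' regime from the ``atypical $x_t$'' regime is what lets the final bound avoid any dependence on $T$ or $R_x$ beyond the logarithmic factors hidden in $\tO(\cdot)$.
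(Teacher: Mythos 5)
Your overall decomposition into the three failure events is sound, and the $\|\mu_t - \mu_t^*\|^2$ part tracks the paper closely: you use the projection bound $\dmus \le 2\|\Sigma_0^{-1/2}x_t\|$, split by a high-probability boundedness event, and handle the tail with Cauchy--Schwarz and the fourth moment — this is exactly what Claim~\ref{appdx:prop:R} does.

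The gap is in the tower-law step for the $\|w_t\|^2$ contribution on $\neg G_t \land O_t$. You assert $\Ex[\|w_t\|^2 \mid \mF_t, o_t=1] \le O(R_w^2)$ almost surely, and push $\cha\{t\in B(\alpha)\}$ inside the conditional expectation. But assumption~(c) of Theorem~\ref{thm:time-series} only says $\|w_t\|$ is $R_w^2$-subgaussian \emph{marginally}. Given $\mF_t$ and $o_t=1$, the model specifies $y_t\sim \mN(\sA x_t, I, S_t)$, so $w_t$ is a \emph{truncated} Gaussian $\mN(0,I,S_t-\sA x_t)$. When the survival probability $\mN(\sA x_t,I;S_t)$ is small — which is precisely what happens on $B(\alpha)$, where it is $<\alpha$ and can in fact be exponentially small — the conditional second moment of $\|w_t\|$ is not bounded by $O(R_w^2)$: Claims~\ref{appdx:prob1} and~\ref{appdx:prob2} give it scaling like $\log(1/\gamma)\cdot d\log d$ where $\gamma$ is the conditional survival probability, which is unbounded as $\gamma\to 0$. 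So the inequality
\begin{align*}
\Ex\Big[\cha\{t\in B(\alpha)\}\,\Ex\big[\|w_t\|^2 \mid \mF_t, o_t=1\big]\Big] \;\le\; O(R_w^2)\,\Pr\big[t\in B(\alpha)\big]
\end{align*}
does not follow. Without a pointwise conditional bound, the remaining tool is Cauchy--Schwarz with $\Ex[\|w_t\|^4]=O(R_w^4)$, which after summing over $t$ produces an unwanted $\sqrt{TL}$ factor — you correctly identified this trap for the $\dmus^2$ part but then stepped into it for the $\|w_t\|^2$ part. The fix is what the paper does (and mirrors your own $\dmus^2$ argument): introduce the marginal boundedness event $\{\|w_t\| \le R_w\sqrt{\log(TR_w)}\}$, obtain $\tO(R_w^2)\,\Pr[\neg G_t]$ deterministically on that slice, and dispatch the complementary slice via Cauchy--Schwarz with $\Pr[\|w_t\| > R_w\sqrt{\log(TR_w)}] \le 1/(T^2R_w^4)$, which sums to $O(1)$ over $t$. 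No conditioning on $\mF_t$ is needed at all; marginal subgaussianity suffices.
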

\begin{claim}\label{appdx:cl:q5}
$\Ex[Q_5]\le \tO(d)\cdot(L+1)$.
\end{claim}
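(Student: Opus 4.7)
}

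The plan is to decompose $Q_5$ according to which of the ``bad'' sub-events constituting $\neg \mE_t$ is the culprit, and then to exploit that on $\mT_t \land C_t$ the survival probability $\gamma_t = \mN(\mu_t, I; S_t)$ is at least $\gamma$, so the subgaussian bound for $\|z_t - \mu_t\|$ from Claims~\ref{appdx:prob1},~\ref{appdx:prob2} of Appendix~\ref{appdx:prob} applies. Since $\mT_t \subseteq O_t$, the identity $\mT_t \land C_t = \mT_t \land \{\gamma_t \ge \gamma\}$ follows straight from the definition of $C_t$, and the union bound $\cha\{\neg\mE_t \land \mT_t\} \le \cha\{\neg G_t \land \mT_t \land C_t\} + \cha\{\neg K_t \land \mT_t \land C_t\} + \cha\{\neg C_t \land \mT_t\}$ yields the split $Q_5 \le Q_5^{(G)} + Q_5^{(K)} + Q_5^{(C)}$ with the obvious definitions.

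For $Q_5^{(G)}$ and $Q_5^{(K)}$ I would condition on $\mathcal{F}_t$. Since the Test randomness and the rejection-sampled $z_t'$ are independent given $\mathcal{F}_t$, and since both $\neg G_t \land \{\gamma_t \ge \gamma\}$ and $\neg K_t \land \{\gamma_t \ge \gamma\}$ are $\mathcal{F}_t$-measurable, we get
\begin{align*}
\Ex\big[\|z_t' - \mu_t\|^2 \cha\{\mT_t\}\,\big|\,\mathcal{F}_t\big] \;=\; \Pr[\mT_t\,|\,\mathcal{F}_t] \cdot \Ex_{z \sim \mN(\mu_t,I,S_t)}\big[\|z - \mu_t\|^2\big] \;\le\; \wt{O}(d)
\end{align*}
whenever $\gamma_t \ge \gamma$, by the subgaussian claim. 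Taking expectation and using $\Pr[\neg G_t \land o_t = 1] = \Pr[t \in B(\alpha)]$ and $\Pr[\neg K_t] \le O(1/T)$ (from $(R,\omega)$-accuracy of $\mK$) then gives $\Ex[Q_5^{(G)}] \le \wt{O}(d)\,\Ex[|B(\alpha)|] \le \wt{O}(d) L$ and $\Ex[Q_5^{(K)}] \le \wt{O}(d)$.

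The delicate term is $Q_5^{(C)}$, which I expect to be the main obstacle. On $\neg C_t \land \mT_t$ we only know $\gamma_t < \gamma$, and the second moment of the truncated Gaussian blows up: the trivial bound $\Ex_{z \sim \mN(\mu_t,I,S_t)}[\|z - \mu_t\|^2] \le d/\gamma_t$ is unbounded as $\gamma_t \to 0$. The rescue is that $\Pr[\mT_t \,|\, \mathcal{F}_t] = \Pr[\mathrm{Bin}(k,\gamma_t) \ge 2k\gamma]$ is exponentially small when $\gamma_t < \gamma$, and these two effects must cancel exactly. Using the multiplicative Chernoff bound with $\delta = 2\gamma/\gamma_t - 1 \ge 1$ gives
\begin{align*}
\Pr[\mT_t\,|\,\mathcal{F}_t]\cdot\frac{d}{\gamma_t} \;\le\; \frac{d \, e^{(2\gamma-\gamma_t)k}\,\gamma_t^{2k\gamma-1}}{(2\gamma)^{2k\gamma}},
\end{align*}
whose logarithm is linear in $\log\gamma_t$ with positive slope $2k\gamma - 1$, so the bound is monotone increasing in $\gamma_t$ on $(0,\gamma]$ and is therefore maximized at $\gamma_t = \gamma$, giving the uniform bound $d(e/4)^{k\gamma}/\gamma = d T^{-\Omega(1)}/\gamma$ since $k\gamma = 4\log T$. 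Summing over $t$ then gives $\Ex[Q_5^{(C)}] \le T \cdot d T^{-\Omega(1)}/\gamma = \wt{O}(d)$, provided $c_\gamma$ (or equivalently the constant in $k$) is large enough that the exponent beats $T$. Combining the three bounds yields $\Ex[Q_5] \le \wt{O}(d)(L+1)$, as required.
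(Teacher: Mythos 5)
Your proof is correct, and the decomposition into $Q_5^{(G)}+Q_5^{(K)}+Q_5^{(C)}$ with the observation $\mT_t\land C_t = \mT_t\land\{\gamma_t\ge\gamma\}$ is exactly the right structure. The treatment of $Q_5^{(G)}$ and $Q_5^{(K)}$ is the same as the paper's in spirit: on $\{\gamma_t\ge\gamma\}$ the second moment $\Ex_{z\sim\mN(\mu_t,I,S_t)}[\|z-\mu_t\|^2]$ is $\tO(d)$ by Claims~\ref{appdx:prob1},~\ref{appdx:prob2}, and the remaining probability masses $\Pr[\neg G_t\land o_t=1]$, $\Pr[\neg K_t]$ sum to $L$ and $O(1)$ respectively.

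For $Q_5^{(C)}$, however, you take a genuinely different route from the paper. The paper's argument keeps the \emph{sharp} second-moment bound $\Ex[\|z_t'-\mu_t\|^2\ |\ \mF_t,\mE_{t,5}]\le\tO(d)\cdot(1\lor\log(1/\gamma_t))$ (the subgaussian estimate of Claim~\ref{cl:Q3:1}), and then controls $\Ex[\log(1/\gamma_t)\cdot\cha\{\neg C_t\land\mT_t\}]$ by splitting at $\gamma_t = 1/T^2$: for $\gamma_t\ge 1/T^2$ it uses the crude $\Pr[\neg C_t]\le 1/T^2$ (Claim~\ref{cl:chernoff}), and for $\gamma_t<1/T^2$ it uses the elementary Bernoulli bound $\Pr[\mT_t\,|\,\gamma_t]\le k\gamma_t$ together with monotonicity of $x\mapsto x\log^r(1/x)$ on $(0,1/T^2]$. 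Your argument instead replaces the subgaussian second-moment bound with the crude $d/\gamma_t$, and compensates by using the full multiplicative Chernoff bound $\Pr[\mT_t\,|\,\mF_t]\le e^{(2\gamma-\gamma_t)k}\gamma_t^{2k\gamma}/(2\gamma)^{2k\gamma}$ uniformly over $\gamma_t<\gamma$; the two factors balance, and the product is shown to be monotone (correctly, though the phrase ``linear in $\log\gamma_t$'' is slightly loose since there is also the $e^{-k\gamma_t}$ term --- what matters is the derivative $-k+(2k\gamma-1)/\gamma_t>0$ on $(0,\gamma)$). This is a clean and slightly more self-contained argument for the hard term: it avoids the dichotomy, at the price of needing the exact Chernoff form rather than a tail crude enough for Claim~\ref{cl:chernoff}. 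Both approaches win by the same margin, governed by $k\gamma=4\log T$ giving $(e/4)^{4\log T}=T^{-\Omega(1)}$ per term.

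One small correction of attribution: the exponent that ``beats $T$'' is governed by the constant $4$ in $k=\frac{4}{\gamma}\log T$, not by $c_\gamma$. Increasing $c_\gamma$ only shrinks $\gamma$ (hence grows the prefactor $1/\gamma=\poly(1/\alpha)$), which is then absorbed by the assumption $T\ge\poly(1/\alpha)$; it does not change the $T$-exponent $4\log(e/4)<-1$. The conclusion $\Ex[Q_5^{(C)}]\le\tO(d)$ stands regardless.
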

Observe that these two claims complete the bound on $\Ex[|Q_3|]$. We now prove the claims.

\subsection{Proof of Claim \ref{appdx:cl:q4}}
We will bound $\Ex[Q_4]= \Ex\left[\sumt (8\dmus^2+2\|w_t\|^2)\cha\{\neg \mE_t \land O_t\}\right]$ by proving the following two claims.
\begin{claim}\label{appdx:prop:R}
$\sumt \Ex\left[\dmus^2 \cha\{\neg \mE_t \land O_t\}\right] \le \tO(1)\cdot R^2(L+1)$
\end{claim}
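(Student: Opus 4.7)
The plan is to first leverage the projection step in Algorithm \ref{alg:newton}, which guarantees $A(t), \sA \in \mK$. By the triangle inequality $\|A(t)-\sA\|_{\Sigma_0} \le 2$, so for every $t$,
\begin{align*}
\|\mu_t-\mu_t^*\|^2 \eq \|(A(t)-\sA) x_t\|^2 \leq \|A(t)-\sA\|_{\Sigma_0}^2 \cdot \|\Sigma_0^{-1/2} x_t\|^2 \leq 4\|\Sigma_0^{-1/2} x_t\|^2.
\end{align*}
This reduces the claim to bounding $\sum_t \Ex\big[\|\Sigma_0^{-1/2} x_t\|^2 \cha\{\neg \mE_t \land O_t\}\big]$, allowing us to use only the subgaussianity of $\|\Sigma_0^{-1/2} x_t\|$ guaranteed by the $(R,\omega)$-accuracy of $\mK$, rather than any dynamic property of $A(t)$.

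Next, since $\mE_t = G_t \land C_t \land K_t \land O_t$, I would union-bound the bad indicator as
\begin{align*}
\cha\{\neg \mE_t \land O_t\} \leq \cha\{\neg G_t \land O_t \land K_t\} + \cha\{\neg C_t\} + \cha\{\neg K_t\},
\end{align*}
and bound the three resulting sums separately. The first is the ``main'' $L$-dependent contribution: on $K_t$ we have the deterministic cap $\|\mu_t-\mu_t^*\|^2 \le 4R^2 \log T$, and since $\sum_t \cha\{\neg G_t \land O_t\} = |B(\alpha)|$, this piece is at most $4 R^2 \log T \cdot \Ex[|B(\alpha)|] \le 4 R^2 L \log T = \tO(R^2 L)$. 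For the second, Cauchy--Schwarz combined with the earlier Chernoff-type bound $\Pr[\neg C_t] \le 1/T^2$ and the fourth-moment estimate $\Ex[\|\Sigma_0^{-1/2} x_t\|^4] = O(R^4)$ (from subgaussianity) gives $O(R^2/T)$ per term, hence $O(R^2)$ in total.

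For the third sum, a direct Cauchy--Schwarz gives insufficient decay because $\Pr[\neg K_t]$ is only polynomially small. Instead, I would integrate the subgaussian tail directly: letting $Y_t := \|\Sigma_0^{-1/2} x_t\|/R$, the subgaussian bound $\Pr[Y_t > u] \le \exp(-u^2/C)$ for an absolute constant $C$ yields
\begin{align*}
\Ex\big[Y_t^2 \cha\{Y_t > \sqrt{\log T}\}\big] \leq \log T \cdot \Pr\!\big[Y_t > \sqrt{\log T}\big] + \int_{\log T}^{\infty} \Pr[Y_t^2 > u]\,du \eq O(\log T)\cdot T^{-1/C},
\end{align*}
so that summing over $t$ and scaling by $R^2$ yields $\tO(R^2)$, possibly after replacing the threshold $R\sqrt{\log T}$ in $K_t$ by $R\sqrt{c'\log T}$ for a large enough absolute constant $c'$ (harmless since $K_t$ is an internal proof device). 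Combining the three contributions delivers $\tO(R^2)(L+1)$.

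The main obstacle is this third piece: the naive Cauchy--Schwarz step loses too much to compensate for summing $T$ terms, so one must perform a direct tail integration and calibrate the subgaussian constant against the $\sqrt{\log T}$ cutoff. By contrast, once the $\mK$-projection bound replaces $\|\mu_t-\mu_t^*\|$ by $\|\Sigma_0^{-1/2} x_t\|$, the first two contributions follow mechanically from the a priori estimates $\Ex[|B(\alpha)|] \le L$ and $\Pr[\neg C_t] \le 1/T^2$ already established earlier in the appendix.
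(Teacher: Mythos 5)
Your proof is correct and rests on the same two pillars as the paper's: (i) using the projection step to reduce $\dmus$ to $2\|\Sigma_0^{-1/2}x_t\|$, so that $(R,\omega)$-accuracy gives $R^2$-subgaussianity; and (ii) pulling the $L$-dependence out of $\sum_t \cha\{\neg G_t\land O_t\}=|B(\alpha)|$. The decomposition and handling of the rare events, however, differ. You split $\cha\{\neg\mE_t\land O_t\}\le \cha\{\neg G_t\land O_t\land K_t\}+\cha\{\neg C_t\}+\cha\{\neg K_t\}$ and treat each term separately; the paper instead pre-splits on a fresh, higher-threshold event $K_t'=\{\dmus>R\sqrt{\log(TR)}\}$. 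On $\neg K_t'$ it caps $\dmus^2$ by $\tO(R^2)$ and union-bounds $\Pr[\neg\mE_t\land O_t]\le\Pr[\neg G_t]+\Pr[\neg K_t]+\Pr[\neg C_t]$, and on $K_t'$ it applies Cauchy--Schwarz against $\Pr[K_t']\le 1/(T^2R^4)$. You correctly identified the real subtlety: for your $\neg K_t$ piece, naive Cauchy--Schwarz only sees $\Pr[\neg K_t]=O(1/T)$ and loses a $\sqrt{T}$ factor, so you resort to tail integration and, if necessary, enlarge the constant in the threshold of $K_t$ (which is indeed harmless, since all uses of $K_t$ elsewhere only need $\Pr[\neg K_t]\le O(1/T)$ and a logarithmic cap). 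The paper avoids this integration by making $K_t'$ a separate device whose threshold is tuned so that the tail probability already decays like $1/(T^2R^4)$ and Cauchy--Schwarz closes directly. Both routes are sound; yours requires an extra moment of calibration but avoids introducing an ad hoc event, while the paper's is mechanically shorter at the cost of a somewhat less transparent choice of threshold.
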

\begin{claim}\label{appdx:prop:Rw}
$\sumt \Ex\left[\|w_t\|^2 \cha\{\neg \mE_t \land O_t\} \right]\le \tO(1)\cdot R_w^2(L+1)$
\end{claim}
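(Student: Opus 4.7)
The plan is to union-bound the composite failure event. Since $\mE_t = G_t \land C_t \land K_t \land O_t$, we have $\neg\mE_t \land O_t \subseteq (O_t \land \neg G_t) \cup \neg C_t \cup \neg K_t$, and $O_t \land \neg G_t = \{t \in B(\alpha)\}$. Thus
\[
\|w_t\|^2 \cha\{\neg \mE_t \land O_t\} \le \|w_t\|^2 \cha\{t \in B(\alpha)\} + \|w_t\|^2 \cha\{\neg C_t\} + \|w_t\|^2 \cha\{\neg K_t\},
\]
so it suffices to bound $\sum_t \Ex[\|w_t\|^2 \cha\{A_t\}]$ for each of these three indicator families $A_t$ and add up the bounds.

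For each family I would apply a two-scale truncation of $\|w_t\|^2$. Fix $\tau = c_0 R_w^2 \log T$ with $c_0$ a large absolute constant, and write
\[
\|w_t\|^2 \cha\{A_t\} \le \tau \cdot \cha\{A_t\} + \|w_t\|^2 \cha\{\|w_t\|^2 > \tau\}.
\]
The tail piece is handled uniformly across all three families: since $\|w_t\|$ is $R_w^2$-subgaussian, $\|w_t\|^2$ is sub-exponential, and a layer-cake integration gives $\Ex[\|w_t\|^2 \cha\{\|w_t\|^2 > \tau\}] = \tO(R_w^2)\cdot T^{-c_0/C}$ for a universal $C$. Taking $c_0$ large makes the sum over $t$ of this piece negligible (say $\tO(R_w^2/T)$).

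The bounded piece contributes $\tau \cdot \Ex[\sum_t \cha\{A_t\}]$, and I would plug in three expected counts: $\Ex[|B(\alpha)|] \le L$ from the hypothesis of Theorem~\ref{thm:time-series}; $\sum_t \Pr[\neg C_t] \le T \cdot T^{-2} = T^{-1}$ from the appendix's bound $\Pr[\neg C_t] \le T^{-2}$; and $\sum_t \Pr[\neg K_t] \le O(1)$ from $\Pr[\neg K_t] \le O(1/T)$, which follows from the $(R,\omega)$-accuracy of $\mK$. Collecting, $\tau \cdot (L + T^{-1} + O(1)) = \tO(R_w^2)(L+1)$, matching the claim.

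The main obstacle, which the truncation sidesteps, is the lack of independence between $w_t$ and the bad-event indicators. In the LDS instance $w_t = x_{t+1} - \sA x_t$ while $o_t$ (and hence $B(\alpha)$) depends on $x_{t+1}$, so a pointwise Cauchy--Schwarz on $\Ex[\|w_t\|^2 \cha\{A_t\}]$ would have to confront this coupling and would yield a too-weak $O(R_w^2\sqrt{T})$-type bound for the $\neg K_t$ piece. Replacing $\|w_t\|^2$ by the deterministic constant $\tau$ on the typical event eliminates any joint control of $\|w_t\|$ with $A_t$, so we only need the marginal expected cardinalities supplied by the hypothesis of Theorem~\ref{thm:time-series} and the appendix's prior bounds.
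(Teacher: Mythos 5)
Your proof is correct and matches the paper's approach, which simply says to repeat the argument of Claim~\ref{appdx:prop:R} with $\|w_t\|$ in place of $\dmus$: in both cases the key idea is a two-scale truncation at a $\tO(R_w^2)$ threshold, so the bounded piece costs $\tO(R_w^2)\cdot\sum_t\Pr[\neg\mE_t\land O_t]\le\tO(R_w^2)(L+1)$ using $\sum_t\Pr[\neg G_t]\le L$, $\Pr[\neg K_t]\le O(1/T)$, $\Pr[\neg C_t]\le T^{-2}$, while the tail piece is negligible. The only cosmetic differences are that you decompose the indicator by union bound \emph{before} truncating (the paper bounds $\Pr[\neg\mE_t\land O_t]$ after truncating, which is equivalent), and you dispatch the tail by a layer-cake integration where the paper uses Cauchy--Schwarz against the fourth moment; both give the same $T^{-\Omega(1)}$ contribution.
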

\subsubsection{Proofs of Claims \ref{appdx:prop:R}, \ref{appdx:prop:Rw}}
On $O_t$, $A(t)=A_{i(t)}$ and so $A(t),\sA \in \mK$, in other words $\|A_0-\sA\|_{\Si_0}\le 1$ and $\|A_0-A(t)\|_{\Si_0}\le 1$. From triangle inequality, $\|A(t)-\sA\|_{\Si_0}\le 2$, and so
\begin{align}\label{eq:acc:conseq}
    \dmus &=\|(A(t)-\sA)x_t\|=\|(A(t)-\sA)\Si_0^{1/2}\Si_0^{-1/2} x_t\| \notag\\
    &\le \|(A(t)-\sA)\Si_0^{1/2}\|_2\cdot \|\Si_0^{-1/2} x_t\| \notag \\
    &\le\|(A(t)-\sA)\Si_0^{1/2}\|_{\text{F}} \cdot \|\Si_0^{-1/2} x_t\|\le 2 \| x_t\|_{\Si^{-1}}.
\end{align} 
Thus, since $\mK$ is $(R,\omega)$-accurate, $\dmus$ is $O(R^2)$-subgaussian, which implies
\begin{align}\label{eq:prop:R:1}
    \Pr\left[\dmus>R\sqrt{\log{(TR)}}\right] \le \frac{1}{T^2R^4}\ \ \ \text{and} \ \ \ \Ex\left[\dmus^4\right]\le O(R^4).
\end{align} 
Let $K_t'$ be the event in above probability.
\begin{align*}
   \Ex\left[ \dmus^2 \cha\{\neg \mE_t \land O_t\}\right]&= \Ex \left[\dmus^2 \cha\{\neg \mE_t \land O_t \land (\neg K_t')\}\right]+\Ex\left[\dmus^2 \cha\{\neg \mE_t \land O_t \land  K_t'\}\right]\\
    &\le^{(\ref{eq:prop:R:1})} \tO(R^2)\cdot \Pr[ \mE_t \land O_t] +\Ex[\dmus^2\cha\{ K_t'\}]\\
    &\le  \tO(R^2)\cdot (\Pr[\neg G_t]+\Pr[\neg K_t]+\Pr[\neg C_t])+\sqrt{\Ex[\dmus^4]}\cdot \sqrt{\Pr[K_t']} \\
    &\le^{(\ref{eq:prop:R:1})}\tO(R^2)\cdot (\Pr[\neg G_t]+1/T)+O(R^2)\cdot \frac{1}{TR^2}\\
    &\le \tO(R^2)\cdot (\Pr[\neg G_t]+1/T).
\end{align*}
Summing over $t$ and using that $\sumt \Pr[\neg G_t]\le L$, we finish the proof. For Claim \ref{appdx:prop:Rw}, observe that since $\|w_t\|$ is $R_w^2$ subgaussian, we can just use identical steps as above.  $\blacksquare$

\subsection{Proof of Claim \ref{appdx:cl:q5}}

Let $\mE_{t,5}\coloneqq \neg \mE_t \land \mT_t$, and so $Q_5=\sumt \|z_t'-\mu_t\|^2\cha\{\mE_{t,5}\}$
\begin{claim}\label{cl:Q3:1}
Given $\mF_t$ and $\mE_{t,5}$, we have that
$\|z_t'-\mu_t\|_2$ is $\tO(d)\cdot (1\lor \log{(1/\gamma_t)})$-subgaussian.
\end{claim}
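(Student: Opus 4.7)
The plan is to reduce to a standard Gaussian norm tail bound via the rejection-sampling identity, and then translate the resulting tail decay into the tail-based definition of subgaussianity from the paper body. First, I will observe that, conditional on $\mF_t \cap \mE_{t,5}$, the law of $z_t'$ is exactly $\mN(\mu_t, I, S_t)$. Unpacking $\mE_{t,5} = \mT_t \cap \neg \mE_t$: on $\mT_t$ the clause $C_t$ of $\mE_t$ simplifies to $\{\gamma_t \geq \gamma\}$, so $\neg \mE_t \cap \mT_t = \mT_t \cap B$ for some $\mF_t$-measurable $B$ (namely $B = \neg G_t \cup \neg K_t \cup \{\gamma_t < \gamma\}$). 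Since $B \in \mF_t$, conditioning on $\mF_t \cap \mE_{t,5}$ gives the same conditional law as conditioning on $\mF_t \cap \mT_t$, under which the rejection-sampled $z_t'$ follows $\mN(\mu_t, I, S_t)$, because its fresh samples are independent of the Test's auxiliary draws.

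Next, writing $X := \|z_t' - \mu_t\|$, for any $r \geq \sqrt{d}$ the rejection-sampling identity together with the standard Gaussian norm concentration $\Pr_{z \sim \mN(0, I_d)}[\|z\| \geq \sqrt{d} + u] \leq e^{-u^2/2}$ (for $u \geq 0$) yields
\begin{align*}
\Pr\!\left[X \geq r \,\middle|\, \mF_t, \mE_{t,5}\right] \leq \frac{1}{\gamma_t}\Pr_{z \sim \mN(\mu_t, I)}\!\left[\|z - \mu_t\| \geq r\right] \leq \exp\!\left(-\frac{(r - \sqrt{d})^2}{2} + \log\frac{1}{\gamma_t}\right).
\end{align*}

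I will then pick $\sigma^2 := C_0 \cdot d \cdot (1 \lor \log(1/\gamma_t))$ for a sufficiently large absolute constant $C_0 \geq 16$ and verify the subgaussian tail by splitting at $r_0 := \max(2\sqrt{d},\, 4\sqrt{\log(1/\gamma_t)})$. For $r \leq r_0$ the choice of $\sigma^2$ makes $r_0 \leq \sigma$, so the trivial bound $\Pr[X \geq r] \leq 1$ is consistent with $\exp(-r^2/\sigma^2)$ up to an $O(1)$ multiplicative constant; for $r > r_0$, the inequalities $(r - \sqrt{d})^2 \geq r^2/4$ (from $r \geq 2\sqrt{d}$) and $\log(1/\gamma_t) \leq r^2/16$ (from $r \geq 4\sqrt{\log(1/\gamma_t)}$) combine to give $(r - \sqrt{d})^2/2 - \log(1/\gamma_t) \geq r^2/16$, so the tail bound above becomes $\exp(-r^2/16) \leq \exp(-r^2/\sigma^2)$. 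Setting $\delta = r/\sigma$ puts this in the form of the paper's tail-based subgaussian definition.

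The main obstacle I expect is the measurability bookkeeping in the first step — carefully teasing apart the $\mF_t$-measurable parts of $\mE_t$ from the Test-dependent clause $C_t$, and verifying that the rejection-sampling output $z_t'$ is independent of the Test outcome conditional on $\mT_t$. A secondary subtlety is that "$\sigma^2$-subgaussian" in the claim should be interpreted via the tail-based definition given in the paper body rather than the MGF-based $\psi_2$ norm of Appendix A: the naive $1/\gamma_t$ inflation of the MGF under conditioning is much easier to absorb into an exponential tail (as in the argument above) than into the Orlicz $\psi_2$ norm.
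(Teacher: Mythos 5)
Your proposal is correct, and it takes a genuinely different route from the paper. The paper's one-line proof invokes two technical claims in sequence: Claim \ref{appdx:prob1}, which establishes a coordinate-wise (one-dimensional projection) $\psi_2$-norm bound $\|z_t' - \mu_t\|_{\psi_2}^2 \le O(1 \lor \log(1/\gamma_t))$ via a clever ``max of $\lceil 1/\gamma \rceil$ i.i.d.\ Gaussian samples'' coupling argument, followed by Claim \ref{appdx:prob2}, which converts a vector $\psi_2$-norm bound into a tail bound on the Euclidean norm via a union bound over coordinates, paying an extra $\log(2d)$ factor. Your approach instead treats the norm directly: the rejection-sampling identity gives $\Pr[\|z_t' - \mu_t\| \ge r] \le \gamma_t^{-1}\Pr_{g\sim\mN(0,I)}[\|g\|\ge r]$, and standard Gaussian-norm concentration $\Pr[\|g\| \ge \sqrt{d}+u]\le e^{-u^2/2}$ finishes the job. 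This is more direct and, in fact, yields a slightly sharper constant (no $\log d$ factor, which the paper absorbs into $\tO$). What the paper's coordinate-wise decomposition buys, however, is reuse: Claim \ref{appdx:prob1} is also invoked in the proof of Claim \ref{cl:delta1} to bound $\|v_1^\top w_t\|_{\psi_2}$ for arbitrary directions $v_1$, where a direct norm bound would not suffice. Your reduction of the measurability step — observing that on $\mT_t$ the clause $C_t$ collapses to the $\mF_t$-measurable event $\{\gamma_t\ge\gamma\}$, so that $\mE_{t,5}$ factors as $\mT_t$ intersected with an $\mF_t$-measurable set — is a correct elaboration of the step the paper leaves implicit (with the same mild abuse that $\mF_t$ is implicitly augmented with the algorithm's internal randomness through iteration $i(t)$). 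The small-$r$ regime handling is informal, but the paper's own definition of subgaussianity and its use of it exhibit the same level of informality, so this is not a gap specific to your argument.
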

\begin{proof}
Given $\mF_t$ and $\mE_{t,5}$, we have $z_t'\sim \prt $. In Appendix \ref{appdx:prob} (Claim \ref{appdx:prob1}) we show that this implies $\|z_t'-\mu_t\|_{\psi_2}^2\le O(1\lor \gamma_t)$. Claim \ref{appdx:prob2} in Appendix \ref{appdx:prob} finishes the proof.
\end{proof}
The claim above implies (see \cite{vershynin2018high}) 
\begin{align}\label{eq:z_prime}
    \Ex[\|z_t'-\mu_t\|_2^2 \ |\ \mF_t, \mE_{t,5}]\le \tO(d)\cdot (1\lor \log{(1/\gamma_t)}).
\end{align}
Thus, 
\begin{align}\label{eq:q3:b0}
   \Ex\left[ \|z_t'-\mu_t\|^2\cha\{\mE_{t,5}\}\right]&= \Ex\left[\Ex\left[ \|z_t'-\mu_t\|^2\cha\{\mE_{t,5}\} \ |\ \mF_t\right]\right] \notag \\
    &= \Ex\left[\Ex\left[ \|z_t'-\mu_t\|^2\cha\{\mE_{t,5}\} \ |\ \mF_t\right]\right] \notag \\
   & = \Ex\left[\Ex\left[ \|z_t'-\mu_t\|^2 \ |\ \mF_t, \mE_{t,5}\right] \cdot \Pr[\mE_{t,5}\ |\ \mF_t]\right] \notag \\
   & \le  \tO(d)\cdot \Ex\left[  (1\lor \log{(1/\gamma_t)})\cdot \Pr[\mE_{t,5}\ |\ \mF_t]\right] \notag \\
   &= \tO(d)\cdot \Ex\left[ \Ex\left[ (1\lor \log{(1/\gamma_t)})\cdot \cha\{\mE_{t,5}\} \ |\ \mF_t \right]\right] \notag\\
   &=\tO(d)\cdot \Ex\left[ (1\lor \log{(1/\gamma_t)})\cdot \cha\{\mE_{t,5}\} \right],
\end{align}
where the inequality above is justified by (\ref{eq:z_prime}). So,
\begin{align}\label{eq:Q3:b2}
   \Ex\left[ \|z_t'-\mu_t\|^2\cha\{\mE_{t,5}\}\right]
   &\le \tO(d)\cdot \Big( \Pr\left[ \mE_{t,5} \right]+\Ex\left[  \log{(1/\gamma_t)}\cdot \cha\{\mE_{t,5}\} \right] \Big)
\end{align}
We will first bound 
\begin{align}\label{eq:log-bound-2}
\Ex[  \log{(1/\gamma_t)}\cdot &\cha\{\mE_{t,5}\} ]=\Ex\left[  \log{(1/\gamma_t)}\cdot \cha\{C_t \land \mE_{t,5}\} \right]+\Ex\left[  \log{(1/\gamma_t)}\cdot \cha\{\neg C_t \land \mE_{t,5}\} \right] \notag\\
&=\Ex\left[  \log{(1/\gamma_t)}\cdot \cha\{C_t \land \mT_t\}\cdot  \cha\{\neg(G_t \land K_t)\} \right]+\Ex\left[  \log{(1/\gamma_t)}\cdot \cha\{\neg C_t \land \mT_t\} \right],
\end{align}
where we used the definition of $\mE_{t,5}$. Now, observe that on $C_t\land \mT_t$, we have $\gamma_t\ge \gamma$. Thus,
\begin{align}\label{eq:Q3:b3}
   \Ex\left[  \log{(1/\gamma_t)}\cdot \cha\{C_t \land \mT_t\}\cdot  \cha\{\neg(G_t \land K_t)\} \right]& \le \log{(1/\gamma)}\cdot\Pr[\neg(G_t \land K_t)] \notag \\
   &\le \log{(1/\gamma)}\cdot (\Pr[\neg G_t]+\Pr[\neg K_t]).
\end{align}
We now bound $\Ex\left[  \log{(1/\gamma_t)}\cdot \cha\{\neg C_t \land \mT_t\} \right]$ via the following claim.
\begin{claim}\label{cl:Q3:log}
Fix $r=O(1)$. Then, $\Ex\left[  \left(\log{\frac{1}{\gamma_t}}\right)^r\cdot \cha\{\neg C_t \land \mT_t\} \right]\le \tO\left(1\right)\cdot \frac{\gamma^{-1}}{ T^2}$.
\end{claim}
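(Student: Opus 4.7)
The plan is to condition on $\mF_t$ (which fixes $\gamma_t$), apply a multiplicative Chernoff bound to the fresh randomness inside Test, and then peel over dyadic shells of $\gamma_t$ to absorb the unbounded factor $(\log(1/\gamma_t))^r$. First I observe that $\gamma_t=\mN(\mu_t,I;S_t)$ is $\mF_t$-measurable, while the $k$ samples $\xi_1,\dots,\xi_k\sim\mN(\mu_t,I)$ drawn inside Test are independent of $\mF_t$. Writing $\hat p := \frac{1}{k}\sum_j \cha\{\xi_j\in S_t\}$, on the event $\neg C_t\land \mT_t$ we have $\hat p\ge 2\gamma$ (Test returned True) while necessarily $\gamma_t<\gamma$ (otherwise $C_t$ would hold). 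Hence
\begin{align*}
\Ex\!\left[(\log(1/\gamma_t))^r\,\cha\{\neg C_t\land \mT_t\}\right] \;\le\; \Ex\!\left[(\log(1/\gamma_t))^r\,\cha\{\gamma_t<\gamma\}\,\Pr[\hat p\ge 2\gamma\mid \mF_t]\right].
\end{align*}

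Conditional on $\mF_t$, $k\hat p$ is a sum of $k=(4/\gamma)\log T$ i.i.d.\ Bernoulli$(\gamma_t)$ variables, to which I apply the Chernoff inequality $\Pr[k\hat p\ge m]\le(ek\gamma_t/m)^m$ with $m=2k\gamma=8\log T$. This gives $\Pr[\hat p\ge 2\gamma\mid \mF_t]\le (e\gamma_t/(2\gamma))^{8\log T}$, which is super-polynomially small in $T$ as soon as $\gamma_t\le \gamma/e$. For the narrow regime $\gamma_t\in(\gamma/e,\gamma)$, where the multiplicative form is vacuous (since $e/2>1$), I switch to the sharper KL-form $\exp(-k D(2\gamma\|\gamma_t))$---or, more simply, invoke Claim \ref{cl:chernoff}, which already establishes $\Pr[\neg C_t]\le 1/T^2$ and, on that single shell where $(\log(1/\gamma_t))^r=\tO(1)$, directly contributes $\tO(1)/T^2$ to the total.

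I then peel $\{\gamma_t<\gamma\}=\bigsqcup_{i\ge 0}A_i$ with $A_i=\{\gamma_t\in(2^{-(i+1)}\gamma,2^{-i}\gamma]\}$. On $A_i$, $(\log(1/\gamma_t))^r\le\tO(1)\cdot((i+1)\log 2+\log(1/\gamma))^r$, while the Chernoff bound gives $\Pr[\hat p\ge 2\gamma\mid \mF_t]\le (e\cdot 2^{-i-1})^{8\log T}\le T^{-c(i+1)}$ for some constant $c>0$ and all $i$ past an $O(1)$ threshold. The resulting geometric series
\begin{align*}
\sum_{i\ge 0}\bigl((i+1)+\log(1/\gamma)\bigr)^r\cdot T^{-c(i+1)}
\end{align*}
sums to $\tO(1)\cdot T^{-2}$, with the explicit $\gamma^{-1}$ kept as slack to match the claimed bound $\tO(1)\cdot \gamma^{-1}/T^2$. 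The main obstacle is precisely the tension between the growing $(\log(1/\gamma_t))^r$ factor as $\gamma_t\to 0$ and the Chernoff tail: the multiplicative form is essentially vacuous near $\gamma_t=\gamma$, so the argument must bootstrap that one shell from Claim \ref{cl:chernoff} (or invoke the sharper KL Chernoff). Once that boundary regime is handled, every deeper shell is dominated by super-polynomial decay and the peeling sum converges comfortably, delivering the stated bound.
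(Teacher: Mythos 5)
Your proof is correct and, via multiplicative Chernoff plus dyadic peeling, actually delivers the sharper bound $\tO(1)/T^2$ without the $\gamma^{-1}$ factor, but it takes a genuinely different and heavier route than the paper. The paper splits instead at the $T$-dependent threshold $\gamma_t = 1/T^2$ rather than $\gamma$: on $\{\gamma_t \ge 1/T^2\}$ the factor $(\log(1/\gamma_t))^r \le (\log T^2)^r = \tO(1)$ and Claim~\ref{cl:chernoff} already gives $\Pr[\neg C_t]\le 1/T^2$; on $\{\gamma_t < 1/T^2\}$ the indicator $\cha\{\neg C_t\}$ is automatic (since $\gamma_t<\gamma$), and $\Pr[\mT_t\mid\gamma_t]$ is bounded by the crude one-hit union bound $1-(1-\gamma_t)^k\le k\gamma_t$ (Bernoulli's inequality), after which one only needs that $\gamma_t(\log(1/\gamma_t))^r$ is increasing for small $\gamma_t$ and hence at most $(\log T^2)^r/T^2$; the $\gamma^{-1}$ in the claim is exactly $k = (4/\gamma)\log T$. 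Choosing $1/T^2$ as the threshold is what lets the paper avoid peeling entirely: above it the log factor is automatically $\tO(1)$, below it the elementary union bound already suffices. Your argument exploits that Test returning True needs $\ge 8\log T$ hits rather than one, so Chernoff decays much faster than Bernoulli, but you then have to manage the $O(1)$ boundary shells near $\gamma_t\approx\gamma$ (where the multiplicative form is vacuous) and sum a geometric series. Two small slips worth noting: the decay $(e\gamma_t/(2\gamma))^{8\log T}$ is polynomial in $T$ at $\gamma_t=\gamma/e$ (exponent $8\log 2$), not super-polynomial; and the displayed series should formally start past the $O(1)$ threshold, with the first shells handled by Claim~\ref{cl:chernoff} rather than the Chernoff bound, as you in fact acknowledge. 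Neither affects correctness.
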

We state the claim for general $r$, because later we will need it for $r=2$. Observe that for $r=1$, we get $\Ex\left[  \log{(1/\gamma_t)}\cdot \cha\{\neg C_t \land \mT_t\} \right]\le \tO\left(1\right)\cdot \frac{\gamma^{-1}}{ T^2}$. We prove the claim at the end of the subsection. Combining with (\ref{eq:log-bound-2}) and (\ref{eq:Q3:b3}), we get 
\begin{align*}
    \Ex[ \log{(1/\gamma_t)}\cdot \cha\{\mE_{t,5}\} ]\le \log{(1/\gamma)}\cdot (\Pr[\neg G_t]+\Pr[\neg K_t])+ \tO\left(1\right)\cdot \frac{\gamma^{-1}}{ T^2},
\end{align*}
so from (\ref{eq:Q3:b2}),
\begin{align*}
   \Ex\left[ \|z_t'-\mu_t\|^2\cha\{\mE_{t,5}\}\right]&\le \tO(d)\cdot \left(\Pr[\mE_{t,5}]+ \Pr[\neg G_t]+\Pr[\neg K_t]+\frac{\gamma^{-1}}{ T^2}\right) \\
   &\le  \tO(d)\cdot \left(\Pr[\mE_{t,5}]+ \Pr[\neg G_t]+\Pr[\neg K_t]+\frac{\gamma^{-1}}{ T^2}\right) 
\end{align*}
Observe that $\Pr[\mE_{t,5}]\le \Pr[\neg C_t]+\Pr[\neg K_t]+\Pr[\neg G_t]$. Also, $\sumt Pr[\neg G_t]=\Ex[|B(\alpha)|]\le L$, and $\Pr[\neg K_t]\le O(1/T)$ (since $\mK$ is $(R,\omega)$-accurate), and $\Pr[\neg C_t]\le 1/T^2$ (Appendix \ref{appdx:prob}, Claim \ref{cl:chernoff}). Thus, from (\ref{eq:q3:b0}),
\begin{align*}
    \Ex[Q_5]\le \tO(d)\cdot \sumt  \left(\Pr[\neg G_t]+  1/T+\frac{\gamma^{-1}}{ T^2}\right) \le \tO(d)\cdot(L+1),
\end{align*}
using that $T\ge \poly(1/\alpha)$ and $1/\gamma=\poly(1/\alpha)$.
\\

We now give the proof of Claim \ref{cl:Q3:log}.                                                                             
\begin{proof}
Let $P_t=\{\gamma_t \ge 1/T^2\}$. Then,
\begin{align}\label{eq:log1}
    \Ex\Big[ \left( \log{\frac{1}{\gamma_t}}\right)^r \cdot \cha\{\neg C_t \land \mT_t\} \Big]= \Ex\Big[  \left( \log{\frac{1}{\gamma_t}}\right)^r\cdot & \cha\{\neg C_t \land \mT_t 
    \land P_t \} \Big]\notag \\
    &+\Ex\left[  \left( \log{\frac{1}{\gamma_t}}\right)^r\cdot \cha\left\{\neg C_t \land \mT_t 
    \land (\neg P_t)\right\} \right].
\end{align}
Since $ \mT_t 
    \land (\neg P_t)$ implies $\neg C_t$, we have
\begin{align*}
    \Ex\left[  \left( \log{\frac{1}{\gamma_t}}\right)^r\cdot \cha\{\neg C_t \land \mT_t 
    \land (\neg P_t)\} \right]& =\Ex\left[  \left( \log{\frac{1}{\gamma_t}}\right)^r\cdot \cha\{ \mT_t 
    \land (\neg P_t)\} \right] \\
    & \le \Ex\left[  \left( \log{\frac{1}{\gamma_t}}\right)^r\cdot \cha\{ \mT_t\}
  \  \bigg {|}\ \neg P_t\right] \\
  & = \Ex\left[  \Ex\left[\left( \log{\frac{1}{\gamma_t}}\right)^r\cdot \cha\{ \mT_t\}
  \  \bigg {|}\ \gamma_t, \neg P_t\right]   \  \bigg {|}\ \neg P_t \right] \\
  &=  \Ex\left[ \left( \log{\frac{1}{\gamma_t}}\right)^r \cdot \Pr\left[ \mT_t
  \  \big {|}\ \gamma_t, \neg P_t\right]   \  \bigg {|}\ \neg P_t \right].
\end{align*}
Combining with (\ref{eq:log1}), we get
\begin{align*}
    \Ex\Big[ \left( \log{\frac{1}{\gamma_t}}\right)^r \cdot \cha\left\{\neg C_t \land \mT_t\right\} \Big]\le & \left( \log{T^2}\right)^r\cdot  \Ex\Big[\cha\{\neg C_t \land \mT_t \} \Big] \\ 
    &+\Ex\left[ \left( \log{\frac{1}{\gamma_t}}\right)^r \cdot \Pr\left[ \mT_t
  \  \big {|}\ \gamma_t, \neg P_t\right] \  \bigg {|}\ \neg P_t \right].
\end{align*}
From Appendix \ref{appdx:prob} (Claim \ref{cl:chernoff}), we get 
$\Ex\left[\cha\{\neg C_t \land \mT_t \} \right]\le \Ex\left[\cha\{\neg C_t \land O_t \} \right]\le 1/T^2$. Also, given $\gamma_t$, the probability of $\mT_t$ is at most $1-(1-\gamma_t)^k$ (if no $\xi_j$ hits $S_t$, then the Test definitely returns False), where $k$ is defined in Algorithm \ref{alg:test}. From Bernoulli's inequality, $1-(1-\gamma_t)^k\le \gamma_t k$, and so
\begin{align*}
    \Ex\left[ \left( \log{\frac{1}{\gamma_t}}\right)^r \cdot \Pr\left[ \mT_t
  \  \big {|}\ \gamma_t, \neg P_t\right]  \  \bigg {|}\ \neg P_t \right]& \le  k\cdot \Ex\left[  \left( \log{\frac{1}{\gamma_t}}\right)^r\cdot \gamma_t\ \Big| \ \gamma_t<1/T^2 \right] \\
    &\le k\cdot \frac{\left(\log{\left(T^2\right)}\right)^r} {T^2} \le \tO(1)\cdot \frac{\gamma^{-1}}{T^2},
\end{align*}
where we used that $r=O(1)$ and that $T$ is lower-bounded by a large constant.
\end{proof}

\section{Bound on Gradient Variance}\label{appdx:prf:var}
\begin{lemma}\label{lem:e2}
$\mathbb{E}[E_2]\le\tO(1)\cdot d\cdot (d+R^2+R_w^2)$.
\end{lemma}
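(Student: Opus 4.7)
The plan is to follow the sketch outlined at the end of Section~\ref{e1} in the main text, filling in the deferred claims. Since $g_i = (z_{t_i} - y_{t_i}) x_{t_i}^\top$ is rank-one, we have the identity $\trace(g_i \Sigma_i^{-1} g_i^\top) = \|z_{t_i}-y_{t_i}\|^2 \cdot x_{t_i}^\top \Sigma_i^{-1} x_{t_i}$, which lets us pull out a worst-case noise factor:
\begin{equation*}
E_2 \;\le\; \Big(\max_{1\le i\le N} \|z_{t_i}-y_{t_i}\|^2\Big) \cdot \sum_{i=1}^N x_{t_i}^\top \Sigma_i^{-1} x_{t_i}.
\end{equation*}

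The first factor I will handle with a standard elliptical-potential argument (stated as Claim~\ref{cl:det-pot}), which exploits the recurrence $\Sigma_i = \Sigma_{i-1} + x_{t_i}x_{t_i}^\top$ together with $\Sigma_0 \succeq \omega I$ from the $(R,\omega)$-accuracy of $\mK$, to produce a determinant-potential bound $\sum_{i=1}^N x_{t_i}^\top \Sigma_i^{-1} x_{t_i} \le d\bigl(\log|\Sigma_N|^{1/d} + \log(1/\omega)\bigr)$. Applying Cauchy--Schwarz to $\Ex[E_2]$ then separates the noise from the potential:
\begin{equation*}
\Ex[E_2] \;\le\; \sqrt{\Ex\!\left[\max_i \|z_{t_i}-y_{t_i}\|^4\right]}\;\cdot\; d\cdot \sqrt{\Ex\!\left[\bigl(\log|\Sigma_N|^{1/d} + \log(1/\omega)\bigr)^2\right]}.
\end{equation*}
The determinant factor is easy: by AM--GM, $|\Sigma_N|^{1/d} \le \trace(\Sigma_N)/d \le (\trace(\Sigma_0) + \sum_i \|x_{t_i}\|^2)/d$, and using $\Ex[\|x_t\|^2]\le R_x^2$ with Jensen's inequality on $\log^2(\cdot)$ gives $\Ex[\log^2 |\Sigma_N|^{1/d}] \le \log^2(e + T R_x^2) = \tO(1)$, absorbing $\log R_x$ and $\log(1/\omega)$ into the $\tO(\cdot)$ notation. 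This will appear as Claim~\ref{cl:det:cs}.

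The main work, which I anticipate will be the most delicate step, is bounding $\Ex[\max_i \|z_{t_i}-y_{t_i}\|^4]$ (to be stated as Claim~\ref{cl:max}). Here I use the triangle decomposition (\ref{decomp:main}) from the main text: $\|z_t - y_t\| \le \|z_t - \mu_t\|\cha\{\mT_t\} + \|\mu_t-\mu_t^*\| + \|w_t\|$. Each term is subgaussian with an explicit parameter: (a) $\|w_t\|$ is $R_w^2$-subgaussian by assumption; (b) $\|\mu_t - \mu_t^*\| \le 2\|\Sigma_0^{-1/2} x_t\|$ by the projection step and the constraint $A(t),\sA\in \mK$, and $\|\Sigma_0^{-1/2}x_t\|$ is $R^2$-subgaussian by $(R,\omega)$-accuracy; (c) on $\mT_t$, a Chernoff bound on the Test ensures $\gamma_t \ge \gamma = \poly(\alpha)$ with high probability, and the truncated-Gaussian tail bounds in Appendix~\ref{appdx:prob} (Claims~\ref{appdx:prob1},~\ref{appdx:prob2}) give that $\|z_t - \mu_t\|\cha\{\mT_t\}$ is $\tO(d)$-subgaussian. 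Summing the squared subgaussian parameters, each individual $\|z_{t_i}-y_{t_i}\|$ has subgaussian parameter $\tO(d + R^2 + R_w^2)$, and taking the maximum over $N \le T$ indices costs only a $\log T$ factor in the fourth moment (absorbed in $\tO$), yielding $\Ex[\max_i \|z_{t_i}-y_{t_i}\|^4] \le \tO(1)\cdot(d^2 + R^4 + R_w^4)$.

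Combining the three estimates gives $\Ex[E_2] \le \tO(1)\cdot d \cdot (d+R^2+R_w^2)$, as claimed. The main obstacle is the case analysis for $\|z_t - \mu_t\|\cha\{\mT_t\}$, where the Test can be a false-positive: we must argue that the event ``Test returns True but $\gamma_t$ is actually tiny'' has probability so small (inverse-polynomial in $T$) that its contribution to the fourth moment is negligible, analogously to the bookkeeping that was needed for $Q_3$ in Appendix~\ref{appdx:prf:Q3-bound} (compare Claim~\ref{cl:Q3:log}, which I expect to reuse here to handle low-probability branches of the Test).
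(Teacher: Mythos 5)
Your proposal is correct and matches the paper's proof essentially step for step: the same elliptical-potential bound (Claim~\ref{cl:det-pot}), the same Cauchy--Schwarz split, the same AM--GM/Jensen argument for $\Ex[\log^2|\Sigma_N|^{1/d}]$ (Claim~\ref{cl:det:cs}), and the same triangle decomposition of $\|z_t-y_t\|$ into the truncated-Gaussian, projection, and noise pieces (Claim~\ref{cl:max}). The false-positive branch of the Test is indeed handled in the paper exactly as you anticipate, by isolating $M_2 = \max_t \|z_t'-\mu_t\|\cdot\cha\{\neg C_t \land \mT_t\}$ and bounding $\Ex[M_2^4]$ via the Chernoff bound (Claim~\ref{cl:chernoff}) together with Claim~\ref{cl:Q3:log}.
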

In this section, we prove Lemma \ref{lem:e2}.
\begin{align*}
   E_2&=\sum_{i=1}^N \trace(g_i\Si_i^{-1} g_i^T)\le   \sum_{i=1}^N \|z_{t_i}-y_{t_i}\|^2 \cdot (x_{t_i}^T\Si_i^{-1} x_{t_i})\le  \max_{i}
   \left\{ \|z_{t_i}-y_{t_i}\|^2 \right \} \cdot\sum_{i=1}^N  x_{t_i}^T\Si_i^{-1} x_{t_i}
\end{align*}
\begin{claim}\label{cl:det-pot}
$\sum_{i=1}^N  x_{t_i}^T\Si_i^{-1} x_{t_i} \le d\left(\log{|\Sigma_N|^{1/d}}+\log{(1/\omega )}\right)$, where $|\Sigma_N|$ denotes the determinant.
\end{claim}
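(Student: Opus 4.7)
The plan is to apply the standard elliptical potential / log-determinant argument from online learning. The key identity is the matrix determinant lemma for rank-one updates: since $\Sigma_i = \Sigma_{i-1} + x_{t_i} x_{t_i}^\top$, we have
\begin{equation*}
\det(\Sigma_i) \eq \det(\Sigma_{i-1})\bigl(1 + x_{t_i}^\top \Sigma_{i-1}^{-1} x_{t_i}\bigr).
\end{equation*}
Taking logs and telescoping gives $\log\det(\Sigma_N) - \log\det(\Sigma_0) = \sum_{i=1}^N \log(1 + a_i)$, where $a_i \coloneqq x_{t_i}^\top \Sigma_{i-1}^{-1} x_{t_i} \ge 0$.

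Next I would convert $x_{t_i}^\top \Sigma_i^{-1} x_{t_i}$ (which is what appears in the sum we want to bound) into an expression in $a_i$, using Sherman--Morrison on $\Sigma_i^{-1} = (\Sigma_{i-1} + x_{t_i} x_{t_i}^\top)^{-1}$. A direct calculation yields $x_{t_i}^\top \Sigma_i^{-1} x_{t_i} = \tfrac{a_i}{1+a_i}$. Combining this with the elementary inequality $\tfrac{a}{1+a} \le \log(1+a)$ for $a \ge 0$ (which follows from comparing derivatives at $0$), I get
\begin{equation*}
\sum_{i=1}^N x_{t_i}^\top \Sigma_i^{-1} x_{t_i} \;\le\; \sum_{i=1}^N \log(1+a_i) \eq \log\det(\Sigma_N) - \log\det(\Sigma_0).
\end{equation*}

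Finally, I would use the $(R,\omega)$-accuracy of $\mK$, which gives $\Sigma_0 \succeq \omega \cdot I$ and hence $\det(\Sigma_0) \ge \omega^d$, so $-\log\det(\Sigma_0) \le d\log(1/\omega)$. Writing $\log\det(\Sigma_N) = d \log|\Sigma_N|^{1/d}$, the chain of inequalities gives
\begin{equation*}
\sum_{i=1}^N x_{t_i}^\top \Sigma_i^{-1} x_{t_i} \;\le\; d\log|\Sigma_N|^{1/d} + d\log(1/\omega),
\end{equation*}
which is the desired bound. There is no real obstacle here; the only points that need care are the Sherman--Morrison computation giving $\tfrac{a_i}{1+a_i}$ rather than $a_i$ (a different $\Sigma$-index than in the original ONS paper, but harmless because $\tfrac{a}{1+a}\le \log(1+a)$ is tight enough), and invoking the lower bound $\Sigma_0 \succeq \omega I$ to convert $-\log\det(\Sigma_0)$ into the $\log(1/\omega)$ term.
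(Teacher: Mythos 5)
Your proof is correct, and the high-level strategy (telescope the log-determinant and bound each incremental term) is the same as the paper's, but the toolkit differs. The paper invokes the general PSD inequality $\langle A^{-1}, A-B\rangle \le \log(|A|/|B|)$ (citing Lemma~4.6 of \cite{hazan2019introduction}) with $A=\Sigma_i$, $B=\Sigma_{i-1}$, noting $\langle \Sigma_i^{-1}, \Sigma_i-\Sigma_{i-1}\rangle = x_{t_i}^\top \Sigma_i^{-1} x_{t_i}$; you instead use the rank-one matrix determinant lemma together with Sherman--Morrison to get the closed form $x_{t_i}^\top\Sigma_i^{-1}x_{t_i} = a_i/(1+a_i)$ with $a_i = x_{t_i}^\top\Sigma_{i-1}^{-1}x_{t_i}$, then apply the scalar bound $a/(1+a)\le\log(1+a)$. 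These are really the same inequality in disguise --- for $C=A^{-1/2}BA^{-1/2}$ the paper's lemma reduces to $\sum_j(1-\lambda_j(C))\le -\sum_j\log\lambda_j(C)$, i.e., the scalar fact $1-x\le -\log x$; your $a/(1+a)\le\log(1+a)$ is that fact with $x=1/(1+a)$. The advantage of your route is that it is entirely self-contained for the rank-one case, whereas the paper's is shorter by appealing to a black-box lemma that also covers non-rank-one increments. The final step, using $\Sigma_0\succeq\omega I$ to bound $-\log\det\Sigma_0\le d\log(1/\omega)$, matches the paper exactly.
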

This Claim is (implicitly) proven in \cite{hazan2007logarithmic}. We provide a proof for completeness.
\begin{proof}
We will use the following inequality: if $A,B$ are PSD matrices, then $\langle A^{-1},A-B\rangle \le \log{\frac{|A|}{|B|}}$. For a proof of this see \cite{hazan2019introduction} Lemma 4.6. 
\begin{align*}
   \sum_{i=1}^N  x_{t_i}^\top\Si_i^{-1} x_{t_i}&= \sum_{i=1}^N \langle \Si_i^{-1}, x_{t_i} x_{t_i}^\top \rangle  =  \sum_{i=1}^N \langle \Si_i^{-1}, \Si_i-\Si_{i-1}\rangle \le \sum_{i=1}^N  \log{\frac{|\Si_i|}{|\Si_{i-1}|}}=\log{\frac{|\Si_N|}{|\Si_{0}|}}.
\end{align*}
The Assumption that $\mK$ is $(R,\omega)$-accurate, and so $\Si_0\succcurlyeq \omega \cdot I$ completes the proof.
\end{proof}
By Cauchy-Schwarz,
\begin{align*}
  \Ex [E_2]\le  O(d)\cdot \sqrt{\Ex \left[ \max_{i}
  \|z_{t_i}-y_{t_i}\|^4\right]} \cdot \sqrt{\Ex\left[\log^2{|\Sigma_N|^{1/d}}\right]+\tO(1)}.
\end{align*}
The following claims complete the proof.
\begin{claim}\label{cl:det:cs}
$\Ex\left[\log^2{|\Sigma_N|^{1/d}}\right] \le  \log^2\left(e+T\cdot R_x^2\right)$.
\end{claim}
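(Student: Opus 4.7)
\textbf{Proof plan for Claim \ref{cl:det:cs}.} Let $Y \coloneqq |\Sigma_N|^{1/d}$. The plan is to reduce the $\log^2 Y$ bound to $\log^2$ of the \emph{trace} of $\Sigma_N$ (where we can exploit the hypothesis $\Ex[\|x_t\|^2]\le R_x^2$), while handling the non-monotonicity of $\log^2$ by a case split around $Y=1$.

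First I would invoke the AM--GM inequality on the eigenvalues of the PSD matrix $\Sigma_N$ to get the deterministic upper bound $Y \leq \trace(\Sigma_N)/d$. Expanding $\trace(\Sigma_N)=\trace(\Sigma_0)+\sum_{i=1}^N \|x_{t_i}\|^2\leq \trace(\Sigma_0)+\sum_{t=1}^T \|x_t\|^2$ and taking expectations with the hypothesis $\Ex[\|x_t\|^2]\le R_x^2$ yields
\begin{align*}
\Ex[Y]\leq \frac{\trace(\Sigma_0)+T R_x^2}{d}.
\end{align*}
Next, using that $\mK$ is $(R,\omega)$-accurate, $\Sigma_N \succeq \Sigma_0 \succeq \omega I$, and hence the deterministic lower bound $Y\geq \omega$. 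This gives the two-case bound
\begin{align*}
\log^2 Y \leq \log^2(e+Y)\cdot \cha\{Y\geq 1\} + \log^2(1/\omega)\cdot \cha\{Y<1\},
\end{align*}
valid because on $\{Y\geq 1\}$ we have $0\leq \log Y \leq \log(e+Y)$, and on $\{Y<1\}$ we have $|\log Y|=\log(1/Y)\leq \log(1/\omega)$.

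The key analytic step is then to apply Jensen's inequality to the random variable $Y$ via the function $g(x)\coloneqq \log^2(e+x)$. A direct computation gives $g''(x)=\frac{2(1-\log(e+x))}{(e+x)^2}\leq 0$ for $x\geq 0$, so $g$ is concave on $[0,\infty)$. Combined with the displayed inequality above and Jensen's inequality,
\begin{align*}
\Ex[\log^2 Y] \leq \log^2\lr{e+\Ex[Y]} + \log^2(1/\omega) \leq \log^2\lr{e+\frac{\trace(\Sigma_0)+T R_x^2}{d}} + \log^2(1/\omega).
\end{align*}
Finally, since $g$ is monotone increasing on $[0,\infty)$ and $\trace(\Sigma_0)/d$ is itself $\tO(1)$ (from the explicit formula for $\Sigma_0$ in Algorithm \ref{alg:overall} together with the $R_x$ bound), and since $\log^2(1/\omega)$ is absorbed in the $\tO(\cdot)$ by the Remark after Theorem \ref{thm:time-series}, I obtain $\Ex[\log^2 Y]\leq \log^2(e+T R_x^2)$ as claimed.

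The main subtle point is that $\log^2(\cdot)$ itself is \emph{not} concave globally (it is U-shaped with minimum at $x=1$), so Jensen's inequality cannot be applied to $\log^2 Y$ directly; this is why the case split into $Y\geq 1$ vs.\ $Y<1$ (handled respectively by concavity of $\log^2(e+\cdot)$ and by the deterministic lower bound $Y\geq \omega$) is essential. Everything else is bookkeeping with the $\tO(\cdot)$ that absorbs the $\omega$- and trace-of-$\Sigma_0$-dependent factors.
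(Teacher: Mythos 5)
Your proof follows the same skeleton as the paper's: AM--GM on the eigenvalues of $\Sigma_N$ to pass from the geometric mean to the arithmetic mean, then Jensen via concavity of $x \mapsto \log^2(e+x)$ on $[0,\infty)$. But you correctly flag a real gap in the paper's first step. The paper passes from $\Ex[\log^2 Y]$ to $\Ex[\log^2(e+Y)]$ (with $Y=|\Sigma_N|^{1/d}$) as though the pointwise inequality $\log^2 y \le \log^2(e+y)$ held for every $y>0$; it fails once $y$ drops below roughly $0.33$, where $\log(1/y) > \log(e+y)$. The only deterministic lower bound available is $Y\ge\omega$, and $\omega = 1/c_2$ with $c_2=\poly(1/(\alpha\beta))$ can easily exceed $3$, so this regime is not excluded. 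Your case split around $Y=1$, paying a $\log^2(1/\omega)$ tax on $\{Y<1\}$ using $Y\ge\omega$, is exactly the right repair, and the concavity computation $g''(x)=2(1-\log(e+x))/(e+x)^2\le 0$ for $x\ge 0$ is correct.

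The one caveat is your final sentence. What you actually derive is $\Ex[\log^2 Y] \le \log^2\lrbig{e + (\trace(\Sigma_0)+TR_x^2)/d} + \log^2(1/\omega)$, which carries extra additive $\tO(1)$ terms compared to the literal $\le \log^2(e+TR_x^2)$ of the Claim; you cannot simply ``obtain'' the stated bound, only the stated bound up to these additive corrections. Since the Claim is used only to conclude $\Ex[\log^2|\Sigma_N|^{1/d}]=\tO(1)$ inside the Cauchy--Schwarz bound on $E_2$, which already carries a $\log^2(1/\omega)$ slack coming from Claim \ref{cl:det-pot}, nothing breaks downstream; still, the honest statement of the Claim should include the $+\log^2(1/\omega)$ term. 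Note that the paper's own proof also silently drops $\trace(\Sigma_0)$ in the AM--GM step $\frac{1}{d}\trace(\Sigma_N)\le\sum_t\|x_t\|^2$, so your version is at least as tight and is more explicit about where slack is taken.
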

\begin{claim}\label{cl:max}
$\Ex \left[ \max_{i}
  \|z_{t_i}-y_{t_i}\|^4\right]\le \tO(1)\cdot (d^2+R^4+R_w^4)$
\end{claim}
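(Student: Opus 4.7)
The plan is to invoke the pointwise decomposition
\[
\|z_t-y_t\|\le \|z_t-\mu_t\|\cdot \cha\{\mT_t\} + \|\mu_t-\mu_t^*\| + \|w_t\|
\]
already established in (\ref{decomp:main}), raise it to the fourth power via $(a+b+c)^4\le 27(a^4+b^4+c^4)$, and use $\max_i(\alpha_i+\beta_i+\gamma_i)\le \max_i\alpha_i+\max_i\beta_i+\max_i\gamma_i$ for nonnegative sequences. This reduces the claim to three separate bounds on $\Ex[\max_i(\cdot)^4]$.

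Two of the three summands are routine. For the noise, the hypothesis that $\|w_t\|$ is $R_w^2$-subgaussian yields $\Pr[\|w_t\|\ge s]\le 2\exp(-s^2/CR_w^2)$, and a standard tail-integration / union bound over $N\le T$ indices gives $\Ex[\max_i \|w_{t_i}\|^4]\le \tO(R_w^4)$. For the covariate-driven term, I reuse the bound $\|\mu_t-\mu_t^*\|\le 2\|\Sigma_0^{-1/2}x_t\|$ derived in the main text from $A(t),\sA\in \mK$; since $\mK$ is $(R,\omega)$-accurate, $\|\Sigma_0^{-1/2}x_t\|$ is $R^2$-subgaussian, and the same tail integration yields $\Ex[\max_i\|\mu_{t_i}-\mu_{t_i}^*\|^4]\le \tO(R^4)$.

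The interesting term is $\Ex[\max_i\|z_{t_i}-\mu_{t_i}\|^4\cdot \cha\{\mT_{t_i}\}]$, which I split using the Test-correctness event $C_t$ from Appendix \ref{e1}. On $C_t\land\mT_t$ the survival probability satisfies $\gamma_t\ge \gamma=(\alpha/2)^{c_\gamma}$, so by Claims \ref{appdx:prob1}, \ref{appdx:prob2} in Appendix \ref{appdx:prob} the vector $z_t-\mu_t$ is conditionally $\tO(d)(1\lor\log(1/\gamma_t))$-subgaussian; on this event this becomes a uniform $\tO(d)$-subgaussian bound (since $\log(1/\gamma)=\tO(1)$), and tail integration over at most $N$ indices gives $\tO(d^2)$ for the max. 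For the complementary piece on $\neg C_t\land\mT_t$ I trade the max for a sum and, for each $t$, mimic the split used in Claim \ref{cl:Q3:log} on the event $P_t=\{\gamma_t\ge 1/T^2\}$: on $P_t$ I use Cauchy-Schwarz together with $\Pr[\neg C_t]\le 1/T^2$ and the eighth-moment bound $\Ex[\|z_t-\mu_t\|^8\mid \mT_t,P_t]=\tO(d^4)$; on $\neg P_t$ I use the Bernoulli estimate $\Pr[\mT_t\mid \gamma_t]\le \gamma_tk$ together with the analytic fact that $x\log^r(1/x)\le (\log T^2)^r/T^2$ for $x\le 1/T^2$ and $r=O(1)$. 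Both pieces contribute $\tO(d^2)/T^{\Omega(1)}$ per $t$, so the full sum over $t$ is still $\tO(d^2)$.

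The main obstacle is precisely this last bad-event bookkeeping. When Test misfires ($\neg C_t$) the draw $z_t$ can come from an extremely concentrated truncated Gaussian with a heavy-tailed radius reflected by the $\log(1/\gamma_t)$ factor in its subgaussian norm, and the fourth moment therefore carries a $\log^2(1/\gamma_t)$ blow-up; this has to be balanced against the $\gamma_tk$ probability that Test returns True under a tiny survival probability. Once the three summand bounds are combined, the claim follows.
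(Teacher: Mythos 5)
Your proposal is correct and follows essentially the same route as the paper's proof. The paper's Appendix \ref{appdx:prf:var} decomposes $M = \max_i \|z_{t_i}-y_{t_i}\|$ as $M \le M_1 + M_2 + 4M_3 + 2M_4$ with $M_1,M_2$ being the two halves of your $\|z_t-\mu_t\|\cha\{\mT_t\}$ term split on $C_t$, $M_3 = \max_t\|\Sigma_0^{-1/2}x_t\|$, and $M_4 = \max_t\|w_t\|$; it then bounds each fourth moment exactly as you outline, with $M_3,M_4$ handled by $(R,\omega)$-accuracy and subgaussianity of $\|w_t\|$ via Claim \ref{appdx:prob3-max}, $M_1$ via the uniform $\tO(d)$-subgaussian norm on the good event, and $M_2$ via the $P_t$-split, the $1-(1-\gamma_t)^k \le \gamma_t k$ estimate and Claim \ref{cl:chernoff}. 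The one place you deviate is the $\neg C_t \land P_t$ piece: you use Cauchy--Schwarz against a conditional eighth moment (using the uniform $\log(1/\gamma_t) \le 2\log T$ bound on $P_t$), whereas the paper computes the conditional fourth moment with its $\log^2(1/\gamma_t)$ weight and reuses Claim \ref{cl:Q3:log} with $r=2$; your variant gives $\tO(d^2)/T$ per $t$ instead of the paper's $\tO(d^2)/T^2$, which is looser by a factor of $T$ but still sums to $\tO(d^2)$, so the claim goes through unchanged.
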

We now prove these two claims.
\subsection{Proof of Claim \ref{cl:max}}
Let $M\coloneqq \max_{i}\|z_{t_i}-y_{t_i}\|$, and also let
\begin{enumerate}
  \item  $M_1\coloneqq \max_{t} \left\{\|z_{t}'-\mu_{t}\| \cdot \cha\{C_t\land  \mT_t\} \right\}$
  \item $M_2\coloneqq \max_{t} \left\{\|z_{t}'-\mu_{t}\| \cdot \cha\{ \neg C_t \land \mT_t\} \right\}$
  \item $M_3\coloneqq \max_{t} \|{\Si_0}^{-1/2}x_t\|$
  \item $M_4\coloneqq \max_{t} \|w_t\|$
\end{enumerate}
\begin{proposition}
$M\le M_1+M_2+4M_3+2M_4$
\end{proposition}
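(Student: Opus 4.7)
The plan is to fix an arbitrary index $i$, let $t = t_i$, and apply the triangle inequality to the decomposition
\[
z_{t_i} - y_{t_i} \;=\; (z_{t_i} - \mu_{t_i}) \,+\, (\mu_{t_i} - \mu_{t_i}^*) \,+\, (\mu_{t_i}^* - y_{t_i}),
\]
so that $\|z_{t_i} - y_{t_i}\| \le \|z_{t_i} - \mu_{t_i}\| + \|\mu_{t_i} - \mu_{t_i}^*\| + \|w_{t_i}\|$, where $w_t \defeq y_t - A_* x_t$ is the (truncated) noise. Since maxima of sums are bounded by sums of maxima, once each summand is bounded uniformly in $t$ by the claimed quantities, taking the max over $i$ on the left delivers the proposition.

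For the first term, I would use that outside $\mathcal{T}_t$ the \text{SwitchGrad} routine sets $z_t = \mu_t$, so $\|z_t - \mu_t\| = \|z_t - \mu_t\|\cdot \cha\{\mathcal{T}_t\}$, and on $\mathcal{T}_t$ we have $z_t = z_t'$. Partitioning via $\cha\{\mathcal{T}_t\} = \cha\{C_t\land\mathcal{T}_t\} + \cha\{\neg C_t\land\mathcal{T}_t\}$ and taking maxima over $t$ yields $\|z_{t_i}-\mu_{t_i}\|\le M_1+M_2$.

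For the second term, observe that the projection step in Algorithm~\ref{alg:newton} guarantees $A(t)\in\mathcal{K}$, and $A_*\in\mathcal{K}$ by the $(R,\omega)$-accuracy assumption on $\mathcal{K}$. Thus $\|A_0 - A(t)\|_{\Sigma_0}\le 1$ and $\|A_0 - A_*\|_{\Sigma_0}\le 1$, which by the triangle inequality give $\|A(t)-A_*\|_{\Sigma_0}\le 2$. Inserting $\Sigma_0^{1/2}\Sigma_0^{-1/2}$ and using the elementary bound $\|MB\|\le \|M\|_2\|B\|\le \|M\|_F\|B\|$ for $M = (A(t)-A_*)\Sigma_0^{1/2}$ and $B = \Sigma_0^{-1/2}x_t$, I get
\[
\|\mu_t - \mu_t^*\| = \|(A(t)-A_*)x_t\| \le \|A(t)-A_*\|_{\Sigma_0}\,\|\Sigma_0^{-1/2}x_t\| \le 2\|\Sigma_0^{-1/2}x_t\|,
\]
which is at most $2M_3$ after taking the max, comfortably below the $4M_3$ in the statement. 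The third term $\|\mu_{t_i}^* - y_{t_i}\| = \|w_{t_i}\|$ is bounded by $M_4 \le 2M_4$ directly from the definition of $M_4$.

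There is essentially no obstacle; this is a routine triangle-inequality argument combined with the defining property of $\mathcal{K}$. The only care-point is the step that converts the weighted-norm bound $\|A(t)-A_*\|_{\Sigma_0}\le 2$ into a pointwise bound on $\|(A(t)-A_*)x_t\|$ via the $\Sigma_0^{1/2}\Sigma_0^{-1/2}$ trick, which is why $\Sigma_0$ (rather than some other preconditioner) is built into the definition of $\mathcal{K}$. The slack in the stated constants (4 instead of 2, and 2 instead of 1) leaves room for this simple derivation without optimizing the constants.
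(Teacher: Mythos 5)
Your proof is correct and follows essentially the same route as the paper's: a triangle-inequality decomposition of $\|z_{t_i}-y_{t_i}\|$ combined with the key observation that $z_t=\mu_t$ off $\mT_t$ and the ellipsoid bound $\|\mu_t-\mu_t^*\|\le 2\|\Sigma_0^{-1/2}x_t\|$ (the paper's inequality (\ref{eq:acc:conseq})). The only cosmetic difference is that you decompose directly into three terms, whereas the paper first bounds $\|z_t'-y_t\|\cha\{\mT_t\}+\|\mu_t-y_t\|\cha\{\neg\mT_t\land O_t\}$ by $\|z_t'-\mu_t\|\cha\{\mT_t\}+2\|\mu_t-y_t\|\cha\{O_t\}$, which is why the paper ends up with the looser constants $4M_3+2M_4$ that your direct decomposition improves to $2M_3+M_4$.
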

\begin{proof}
$M_1=\max_{t} \left\{\|z_{t}-y_{t}\| \cdot \cha\{O_t\} \right\}$. Now,
\begin{align*}
    \|z_{t}-y_{t}\| \cdot \cha\{O_t\} &=\|z_{t}-y_{t}\| \cdot \cha\{\mT_t\}+\|z_{t}-y_{t}\| \cdot \cha\{\neg \mT_t\land O_t\} \\
    & = \|z_{t}'-y_{t}\| \cdot \cha\{\mT_t\}+\|\mu_{t}-y_{t}\| \cdot \cha\{\neg \mT_t\land O_t\} \\
    & \le \|z_{t}'-y_{t}\| \cdot \cha\{\mT_t\}+\|\mu_{t}-y_{t}\| \cdot \cha\{ O_t\} \\
    & \le \left( \|z_{t}'-\mu_{t}\|+ \|\mu_t-y_{t}\|\right) \cdot \cha\{\mT_t\}+\|\mu_{t}-y_{t}\| \cdot \cha\{ O_t\} \\
    &\le   \|z_{t}'-\mu_{t}\|\cdot \cha\{\mT_t\}+2\|\mu_{t}-y_{t}\| \cdot \cha\{ O_t\}\\
    &\le  \|z_{t}'-\mu_{t}\|\cdot \cha\{\mT_t\}+2\|\mu_{t}-\mu_{t}^*\| \cdot \cha\{ O_t\} +2\|w_t\|\\
    &\le \|z_{t}'-\mu_{t}\|\cdot \cha\{\mT_t\}+4\|\Sigma_0^{-1/2} x_t\| +2\|w_t\|\\
    &= \|z_{t}'-\mu_{t}\|\cdot \cha\{C_t\land \mT_t\}+\|z_{t}'-\mu_{t}\|\cdot \cha\{\neg C_t\land \mT_t\}+4\|\Sigma_0^{-1/2} x_t\| +2\|w_t\|,
\end{align*}
where in the second-to-last step we used (\ref{eq:acc:conseq}).
\end{proof}
We will prove the following claims, which finish the proof of Claim \ref{cl:max}.
\begin{claim}\label{prop:M1}
$\Ex[M_1^4]\le \tO(d^2)$.
\end{claim}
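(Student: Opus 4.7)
\textbf{Proof proposal for Claim \ref{prop:M1}.} The plan is to exploit the conditional subgaussianity of $\|z_t' - \mu_t\|$ that is already provided by Claim~\ref{cl:Q3:1}, together with the fact that on the event $C_t \land \mT_t$ the survival probability $\gamma_t$ is bounded below, so the subgaussian parameter simplifies. Then a standard maximal inequality for subgaussian variables will give the $\tO(d^2)$ bound on the fourth moment of the maximum over $t \in [T]$.

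More concretely, I would proceed as follows. First, on $C_t \land \mT_t$, by definition of the event $C_t$ the Test returning True implies $\gamma_t \geq \gamma = (\alpha/2)^{c_\gamma}$, and therefore $\log(1/\gamma_t) \leq c_\gamma \log(2/\alpha) = \tO(1)$ (polylogarithmic in the parameters absorbed by $\tO(\cdot)$). Plugging into Claim~\ref{cl:Q3:1}, which states that given $\mF_t$ and $\mT_t \land (\neg \mE_t)$ the norm $\|z_t'-\mu_t\|$ is $\tO(d)\cdot(1 \lor \log(1/\gamma_t))$-subgaussian, the same argument (which only uses that $z_t' \sim \mN(\mu_t, I, S_t)$ and the lower bound on $\gamma_t$) gives that conditional on $\mF_t$ and $C_t \land \mT_t$, the random variable $\|z_t'-\mu_t\|$ is $\tO(d)$-subgaussian, uniformly in $t$.

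Next, set $Y_t \coloneqq \|z_t'-\mu_t\| \cdot \cha\{C_t \land \mT_t\}$. For any $s > 0$,
\begin{align*}
\Pr[Y_t > s] \leq \Pr\big[\|z_t'-\mu_t\| > s \,\big|\, C_t \land \mT_t\big] \leq 2\exp\!\lrbra{-s^2/\tO(d)},
\end{align*}
so each $Y_t$ is $\tO(d)$-subgaussian (unconditionally). By a standard union-bound argument, if $Y_1,\dots,Y_T$ are each $\sigma^2$-subgaussian then $\Pr[\max_t Y_t > s] \leq 2T\exp(-s^2/\sigma^2)$, and integrating tails gives $\Ex[\max_t Y_t^4] \leq O(\sigma^4 \log^2 T)$. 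Applying this with $\sigma^2 = \tO(d)$ yields $\Ex[M_1^4] \leq \tO(d^2)\cdot \log^2 T = \tO(d^2)$, which is the claimed bound.

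The argument is essentially routine once Claim~\ref{cl:Q3:1} is in hand; the only subtlety is to verify that the conditional subgaussian statement transfers to an unconditional tail bound for the indicator-modified variable $Y_t$, which is immediate because multiplication by $\cha\{C_t \land \mT_t\}$ can only decrease tail probabilities. Therefore I do not anticipate a real obstacle, beyond tracking that the $\log(1/\gamma)$ factors introduced by conditioning on $C_t \land \mT_t$ are indeed polylogarithmic in $1/\alpha$ and thus absorbed into $\tO(\cdot)$.
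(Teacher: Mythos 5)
Your proposal is correct and follows essentially the same route as the paper: you condition on $\mF_t$, use the fact that given $\mF_t$ and $\mT_t$ we have $z_t'\sim \mN(\mu_t,I,S_t)$, apply the subgaussianity results (Claims~\ref{appdx:prob1},~\ref{appdx:prob2}, underlying Claim~\ref{cl:Q3:1}), use that on $C_t\land\mT_t$ the survival probability $\gamma_t\ge\gamma$ turns the random subgaussian parameter into the deterministic $\tO(d)$, and finish with the standard max-of-subgaussians moment bound (Claim~\ref{appdx:prob3-max}). The only small phrasing difference is that you bootstrap from Claim~\ref{cl:Q3:1} (stated for $\mE_{t,5}=\neg\mE_t\land\mT_t$) and note it transfers, whereas the paper re-derives the conditional subgaussianity directly for $C_t\land\mT_t$ — but the underlying argument is the same.
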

\begin{claim}\label{prop:M2}
$\Ex[M_2^4]\le \tO(1)\cdot \frac{d^2}{\gamma T^2}$.
\end{claim}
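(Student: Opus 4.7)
The plan is to exploit that the indicator $\cha\{\neg C_t \land \mT_t\}$ corresponds to a Chernoff-rare failure of the Test in Algorithm \ref{alg:test}: the Test returned ``True'' at step $t$ while $\gamma_t<\gamma$, so at least $2\gamma k$ of the $k=(4/\gamma)\log T$ Bernoulli$(\gamma_t)$ samples must have landed in $S_t$ despite $\gamma_t<\gamma$. Combined with the fact that $\|z_t'-\mu_t\|$ is subgaussian of the right order on $\mT_t$, this should give the claimed $\tO(d^2)/(\gamma T^2)$.

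First, I would convert the maximum into a sum. Since $M_2\ge 0$ and $\cha\{\cdot\}$ is idempotent,
\[ M_2^4 \le \sum_{t=1}^T \|z_t'-\mu_t\|^4\,\cha\{\neg C_t \land \mT_t\}, \]
so by linearity $\Ex[M_2^4] \le \sum_t \Ex\!\left[\|z_t'-\mu_t\|^4\,\cha\{\neg C_t\land\mT_t\}\right]$.

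Next, for each $t$ I would condition on $\mF_t$. Given $\mF_t$ and $\mT_t$, the vector $z_t'$ is produced by independent rejection-sampling randomness, so $z_t'\sim \mN(\mu_t,I,S_t)$ and is conditionally independent of the indicator $\cha\{\neg C_t\land\mT_t\}$ (which depends only on the Test's internal samples and $\mF_t$). Combining Claims \ref{appdx:prob1} and \ref{appdx:prob2} (as already used in Claim \ref{cl:Q3:1}) yields
\[ \Ex\!\left[\|z_t'-\mu_t\|^4\,\big|\,\mF_t,\mT_t\right] \le \tO(d^2)\,(1\lor\log(1/\gamma_t))^2, \]
hence
\[ \Ex\!\left[\|z_t'-\mu_t\|^4\,\cha\{\neg C_t\land\mT_t\}\,\big|\,\mF_t\right] \le \tO(d^2)\,(1\lor\log(1/\gamma_t))^2\cdot\Pr[\neg C_t\land\mT_t\mid\mF_t]. \]
Taking full expectation and applying Claim \ref{cl:Q3:log} with $r=2$ in a sharpened form, namely
\[ \Ex\!\left[(1\lor\log(1/\gamma_t))^2\cdot\cha\{\neg C_t\land\mT_t\}\right] \le \tO(\gamma^{-1})/T^3, \]
gives $\Ex[\|z_t'-\mu_t\|^4\,\cha\{\neg C_t\land\mT_t\}]\le \tO(d^2\gamma^{-1})/T^3$ per $t$. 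Summing over $t=1,\dots,T$ then produces the target $\Ex[M_2^4]\le \tO(1)\cdot d^2/(\gamma T^2)$.

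The main obstacle is getting the extra factor of $1/T$ in Step 3: Claim \ref{cl:Q3:log} as stated delivers only $\tO(\gamma^{-1})/T^2$, which after the union-bound summation would yield $\tO(d^2/(\gamma T))$, one factor of $T$ short. To close the gap I would revisit the Chernoff step inside Claim \ref{cl:Q3:log}: on $\{\gamma_t<\gamma\}$ the event $\mT_t$ is a \emph{large} deviation of the Binomial$(k,\gamma_t)$ above $2\gamma k$, which by multiplicative Chernoff has probability at most $(e\gamma_t/(2\gamma))^{2\gamma k}=(e\gamma_t/(2\gamma))^{8\log T}$, rather than the trivial $\Pr[\neg C_t]\le 1/T^2$ used in the main text. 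Splitting the $\gamma_t$-range at $1/T^2$ exactly as in that proof, and weighting by $(\log(1/\gamma_t))^2$, the $\gamma_t\ge 1/T^2$ branch contributes $\tO(1)\cdot(\log T)^2\cdot T^{-3}$ (one extra $1/T$ from the sharper Chernoff constant), while the $\gamma_t<1/T^2$ branch, using $\Pr[\mT_t\mid\gamma_t]\le k\gamma_t$ and $\gamma_t\log^2(1/\gamma_t)$ monotone, contributes $\tO(k/T^{3})=\tO(\gamma^{-1})/T^3$. Both branches combine to the needed $\tO(\gamma^{-1})/T^3$ bound and hence to the desired $\Ex[M_2^4]\le \tO(1)\cdot d^2/(\gamma T^2)$ after summing over $t$.
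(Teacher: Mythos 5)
You have correctly identified a real gap. The paper's proof of this claim only establishes the per-$t$ bound $\Ex\big[\|z_t'-\mu_t\|^4\cha\{\neg C_t \land \mT_t\}\big] \le \tO(d^2)\,\gamma^{-1}/T^2$ and stops there, without the union-over-$t$ step that $M_2=\max_t(\cdot)$ requires; carrying out that step as you do (via $M_2^4\le\sum_t\|z_t'-\mu_t\|^4\cha\{\neg C_t\land\mT_t\}$) yields $\Ex[M_2^4]\le\tO(d^2/(\gamma T))$, one $T$-factor weaker than stated. Your preceding steps — conditioning on $\mF_t$, the conditional independence of $z_t'$ from the Test's internal samples given $\mT_t$, and the fourth-moment bound via Claims \ref{appdx:prob1} and \ref{appdx:prob2} — are the same as in the paper.

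Your proposed repair of the final $T$-factor, however, does not go through as written. The Chernoff form $\Pr\big[\mathrm{Bin}(k,\gamma_t)\ge 2\gamma k\big]\le\big(e\gamma_t/(2\gamma)\big)^{2\gamma k}$ is vacuous whenever $\gamma_t\ge 2\gamma/e$, yet $\neg C_t\land\mT_t$ only forces $\gamma_t<\gamma$, so on $\gamma_t\in[2\gamma/e,\gamma)$ your bound gives nothing (and indeed even the paper's Claim \ref{cl:chernoff} is borderline on this range: the optimal Chernoff exponent at $\gamma_t\uparrow\gamma$ is about $(2\ln2-1)k\gamma$, which with $k=(4/\gamma)\log T$ is only $\approx 1.5\log T$, below the $2\log T$ claimed, let alone the $3\log T$ you need). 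Pushing $\Pr[\neg C_t\land\mT_t]$ to $T^{-3}$ genuinely requires enlarging the prefactor in $k$ so that $k\gamma\ge C\log T$ for a larger absolute constant $C$ — a constants adjustment in Algorithm \ref{alg:test} — not a tighter bound at the stated parameters. Note the gap is harmless downstream: $\Ex[M_2^4]\le\tO(d^2/(\gamma T))$ already suffices for Claim \ref{cl:max}, since $T\ge\poly(1/\alpha)=\poly(1/\gamma)$ makes $\tO(d^2/(\gamma T))\ll d^2$; so the cleanest fix is either to weaken the exponent of $T$ in the claim statement to $1$, or to enlarge $k$.
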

\begin{claim}\label{prop:M34}
$\Ex[M_3^4]\le \tO(R^4)$, and $\Ex[M_4^4]\le \tO(R_w^4)$.
\end{claim}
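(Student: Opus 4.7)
The plan is to bound each fourth moment by combining the per-coordinate subgaussian assumptions with a union bound over the $T$ timesteps, then integrating the tail. Concretely, by the $(R,\omega)$-accuracy of $\mK$, every $\|\Sigma_0^{-1/2}x_t\|$ is $R^2$-subgaussian, and by assumption (c) of Theorem~\ref{thm:time-series} every $\|w_t\|$ is $R_w^2$-subgaussian. Thus for $M_3$ the definition of subgaussianity gives $\Pr\!\big[\|\Sigma_0^{-1/2}x_t\|\ge R\delta\big]\le \exp(-\delta^2/C)$ for some absolute constant $C$, and similarly for $\|w_t\|$ with $R_w$ in place of $R$.

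First I would apply a union bound over $t\in[T]$, yielding $\Pr[M_3\ge R\delta]\le T\exp(-\delta^2/C)$. Choosing the threshold $\delta_0=\sqrt{C\log(T\cdot u)}$ for a free parameter $u\ge 1$, the right-hand side becomes $1/u$. Next I would compute the fourth moment via the layer-cake formula
\begin{align*}
\Ex[M_3^4] \eq \int_0^\infty 4y^3\,\Pr[M_3\ge y]\,dy.
\end{align*}
Splitting the integral at $y^*=R\sqrt{C\log T}$, the contribution from $[0,y^*]$ is at most $(y^*)^4=O(R^4\log^2 T)$, and the contribution from $[y^*,\infty)$ is, after the substitution $y=R\delta$, bounded by
\begin{align*}
4R^4\int_{\sqrt{C\log T}}^\infty \delta^3\cdot T\exp(-\delta^2/C)\,d\delta \leq O(R^4),
\end{align*}
since for $\delta\ge\sqrt{C\log T}$ we have $T\exp(-\delta^2/C)\le \exp(-\delta^2/(2C))$, and $\int \delta^3 e^{-\delta^2/(2C)}d\delta$ converges. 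Summing the two pieces yields $\Ex[M_3^4]=O(R^4\log^2 T)=\tO(R^4)$. The identical argument with $R_w$ in place of $R$ gives $\Ex[M_4^4]=\tO(R_w^4)$.

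There is no real obstacle here; this is a standard maximal inequality for subgaussian random variables applied with the tail bound from the paper's own subgaussianity convention. The only minor point to watch is that we do \emph{not} need the $\|\Sigma_0^{-1/2}x_t\|$ (resp.\ $\|w_t\|$) to be independent across $t$ — the union bound only uses the marginal tail, which is exactly what the $(R,\omega)$-accuracy of $\mK$ and assumption (c) of Theorem~\ref{thm:time-series} provide. The $\tO(\cdot)$ notation absorbs the $\log^2 T$ factors, completing both bounds.
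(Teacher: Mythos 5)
Your argument is correct and matches the paper's proof in substance: the paper proves Claim \ref{prop:M34} by invoking its auxiliary Claim \ref{appdx:prob3-max}, which is exactly the same maximal inequality for $T$ (not necessarily independent) subgaussian variables, derived from the same union-bound tail estimate $\Pr[M\ge B+x]\le e^{-x^2/(2\sigma^2)}$ with $B=\sqrt{2\sigma^2\log T}$ and then a moment bound. You simply carry out the tail integration inline rather than citing the generic lemma, arriving at the same $O(R^4\log^2 T)=\tO(R^4)$ (resp.\ $\tO(R_w^4)$) conclusion.
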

First, Claim \ref{prop:M34} follows immediately from (a) $\|w_t\|$ being $R_w^2$-subgaussian, (b) $\|\Sigma_0^{-1/2} x_t\|$ being $R^2$-subgaussian, and (c) Claim \ref{appdx:prob3-max} (Appendix \ref{appdx:prob}). We now prove the other two claims.
\subsubsection{Proof of Claim \ref{prop:M1}}
Let $\sigma^2\coloneqq (1\lor \log{(1/\gamma)})\cdot d \cdot \log{(2d)}$. We will show that for all $t$,  $\|z_{t}'-\mu_{t}\| \cdot \cha\{C_t\land  \mT_t\} $ is $O(\sigma^2)$-subgaussian. Given this, Claim \ref{appdx:prob3-max} (Appendix \ref{appdx:prob}) finishes the proof. Fix an $r>0$.
\begin{align*}
    \Pr\left[\|z_{t}'-\mu_{t}\| \cdot \cha\{C_t\land  \mT_t\}  \ge \sigma \cdot r\right]\le \Pr\left[\|z_{t}'-\mu_{t}\|   \ge \sigma \cdot r\ \big|\ C_t\land  \mT_t\right]
\end{align*}
Observe given $C_t\land  \mT_t$, we have $\gamma_t\ge \gamma$. Let $\sigma_t^2\coloneqq (1\lor \log{(1/\gamma_t)})\cdot d \cdot \log{(2d)} $.
\begin{align*}
    \Pr\left[\|z_{t}'-\mu_{t}\| \cdot \cha\{C_t\land  \mT_t\}  \ge \sigma \cdot r\right]&\le \Pr\left[\|z_{t}'-\mu_{t}\|   \ge \sigma_t \cdot r\ \big|\ C_t\land  \mT_t\right] \\
    &=\Ex\left[\Pr[\|z_{t}'-\mu_{t}\|   \ge \sigma_t \cdot r \ |\ \mF_t,\mT_t] \ |\  C_t \land \mT_t\right].
\end{align*}
But, given $\mF_t$ and $\mT_t$, we have $z_t'\sim \mN(\mu_t,I,S_t)$, and so from Claims \ref{appdx:prob1} and \ref{appdx:prob2} (Appendix \ref{appdx:prob}), $\|z_t'-\mu_t\|$ is $O(\sigma_t^2)$-subgaussian. Thus, $ \Pr\left[\|z_{t}'-\mu_{t}\| \cdot \cha\{C_t\land  \mT_t\}  \ge \sigma \cdot r\right]\le e^{-\Omega(r^2)}$.     $\ 
\blacksquare$
\subsubsection{Proof of Claim \ref{prop:M2}}
As previously, given $\mF_t$ and $\neg C_t\land \mT_t$, we have $z_t'\sim \mN(\mu_t,I,S_t)$, and so from Claims \ref{appdx:prob1} and \ref{appdx:prob2} (Appendix \ref{appdx:prob}), $\|z_t'-\mu_t\|$ is $\sigma^2_t$-subgaussian. Thus,
\begin{align*}
    \Ex \Big[\|z_t'-\mu_t\|^4 \cdot \cha\{\neg C_t \land \mT_t\} \Big]&=\Pr[\neg C_t\land \mT_t]\cdot  \Ex\bigg[\Ex\Big[\|z_t'-\mu_t\|^4\ \big|\ \mF_t,\neg C_t\land \mT_t\Big]\ \Big|\ \neg C_t\land \mT_t\bigg] \\
    &\le O(1)\cdot \Ex\left[\sigma^4_t \cdot \cha\{\neg C_t\land \mT_t\}\right]\\
    &\le  \tO(d^2)\cdot \Ex\left[\left(1\lor \log(1/\gamma_t)\right)^2 \cdot \cha\{\neg C_t\land \mT_t\}\right] \\
    &\le \tO(d^2)\cdot \left(\Pr[\neg C_t]+\Ex\left[\left(\log(1/\gamma_t)\right)^2 \cdot \cha\{\neg C_t\land \mT_t\}\right] \right)\\
    &\le \tO(d^2)\cdot(1/T^2+\gamma^{-1}/T^2),
\end{align*}
where in the last step we used Claim \ref{cl:chernoff} (Appendix \ref{appdx:prob}) and Claim \ref{cl:Q3:log}. $\blacksquare$
\subsection{Proof of Claim \ref{cl:det:cs}}
The function $\log^2(x)$ is concave for $x\ge e$. Thus, from Jensen's inequality,
\begin{align*}
    \Ex\left[\log^2|\Si_N|^{1/d}\right]\le \Ex\left[\log^2\left(e+|\Si_N|^{1/d}\right)\right]\le \log^2\left(e+\Ex\left[|\Si_N|^{1/d}\right]\right)
\end{align*}
Now, from AM-GM, $|\Si_N|^{1/d}\le \frac{1}{d}\cdot \trace(\Si_N)\le\sum_{t=1}^T \|x_t\|^2$. Thus,
\begin{align*}
    \Ex\left[\log^2|\Si_N|^{1/d}\right]\le \log^2\left(e+T\cdot R_x^2\right).
\end{align*}

\section{Proof of Theorem \ref{thm:init:new}}\label{appdx:prf:init}
For all $\mI \subseteq [T]$, we define $X_{\mI}\in \R^{d\times |\mI|}$ be the matrix whose columns are the $\{x_t\}_{t\in \mI}$, placed in temporal order. Also, let $X\coloneqq X_{[T]}$, and $\Gamma_t :=  \sum_{s=0}^{t-1} (\sA^{s})(\sA^s)^{\top}$. The following lemma (a) upper and lower bounds the "size" of the covariates, (b) certifies that our algorithms will use enough data and that $|\mB(\alpha)|$ is small, and (c) bounds $\|w_t\|$.
\begin{lemma}\label{lem:Eg}
Let $\mI'_0\coloneqq \mI_0 \setminus \mB(\alpha)$, and $C_{\alpha,\beta}=\poly(1/\alpha,1/\beta)$ (sufficiently large). Also, let
\begin{align*}
    \mE_{g,1}\coloneqq \left\{|\mB(\alpha)|\le L\right\}\land \left\{|\mI_0|,|\mI_1|\ge \alpha\beta T/4\right\},\ \ \text{and} 
\end{align*}
\begin{align*}
    \mE_{g,2}\coloneqq \left\{\frac{1}{T}XX^\top \preccurlyeq  \frac{d}{\delta}\Gamma_T\right\}\land \left\{\frac{1}{|\mI_0|}X_{\mI_0'}X_{\mI_0'}^\top  \succcurlyeq \frac{1}{C_{\alpha,\beta}}\Gamma_T \right\} \land \left\{\frac{1}{|\mI_1|}X_{\mI_1}X_{\mI_1}^\top \succcurlyeq \frac{1}{C_{\alpha,\beta}}\Gamma_T \right\},
\end{align*}
and $\mE_{g,3}=\left\{\forall t:\ \|w_t\|\le \tO(d)\right\}$. Let $\mE_g \coloneqq \mE_{g,1}\land \mE_{g_2} \land \mE_{g,3}$. Then, $\Pr[\mE_g]\ge 1-\delta-o(1)$.
\end{lemma}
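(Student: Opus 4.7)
The strategy is to prove the three sub-events $\mE_{g,1}$, $\mE_{g,2}$, $\mE_{g,3}$ separately and union-bound. The bound $|\mB(\alpha)|\le L$ in $\mE_{g,1}$ is exactly the high-probability conclusion of Assumption~\ref{ass:survival-prob}. The cardinality bounds $|\mI_0|,|\mI_1|\ge \alpha\beta T/4$ follow from Proposition~\ref{prop:largeN}: with probability $1-o(1)$, $M=|\mP|\ge \alpha\beta T/2$, and since $|\mI_0|=|\mP_0|=\lfloor M/2\rfloor$ and $|\mI_1|=|\mP_1|\ge M/2$, both sets have size at least $\alpha\beta T/4$ once $T$ is sufficiently large. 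For $\mE_{g,3}$, since $w_t\sim \mN(0,I_d)$, standard $\chi^2_d$ concentration gives $\|w_t\|\le \tO(\sqrt{d})\le \tO(d)$ with probability $1-1/T^2$, and a union bound over $t\in[T]$ yields $\mE_{g,3}$ with probability $1-o(1)$. For the upper bound in $\mE_{g,2}$, note that $x_t=\sum_{s=0}^{t-1}\sA^{s}w_{t-1-s}$ is Gaussian with $\Ex[x_tx_t^\top]=\Gamma_t\preccurlyeq \Gamma_T$, so $\Ex\!\big[\trace(\Gamma_T^{-1/2}XX^\top\Gamma_T^{-1/2})\big]\le Td$, and Markov's inequality gives $XX^\top/T\preccurlyeq (d/\delta)\,\Gamma_T$ with probability $1-\delta$ (the trace dominates the spectral norm when $\Gamma_T^{-1/2}XX^\top\Gamma_T^{-1/2}$ is PSD).

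\textbf{The main obstacle} is the lower bound on $\frac{1}{|\mI_0|}X_{\mI_0'}X_{\mI_0'}^\top$ and $\frac{1}{|\mI_1|}X_{\mI_1}X_{\mI_1}^\top$: both $\mI_0'=\mI_0\setminus \mB(\alpha)$ and $\mI_1$ are random index sets that depend on the entire trajectory through the censoring process, so the single-subset BMSB bound of \cite{simchowitz2018learning} does not apply directly. A naive union bound over all $\Omega(T)$-sized subsets $\mI\subseteq[T]$ incurs a $2^{\Theta(T)}$ factor and is vacuous whenever the mixing time is nontrivial. My approach is to prove the following stronger \emph{uniform} small-ball guarantee: with probability $1-o(1)$, for every (possibly data-dependent) $\mI\subseteq [T]$ with $|\mI|\ge \alpha\beta T/8$,
\begin{align*}
    \sum_{t\in \mI}x_tx_t^\top \succcurlyeq \frac{|\mI|}{C_{\alpha,\beta}}\,\Gamma_T.
\end{align*}
Partition $[T]$ into $\lfloor T/k\rfloor$ consecutive blocks of length $k\asymp 1/(1-\rho(\sA))$. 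Using Gaussianity of $x_t$, establish a block-martingale small-ball property: for every direction $v\in \mathbb{S}^{d-1}$, at each block start $j$, $\Pr\big[|\langle v,x_{t}\rangle|\ge c\sqrt{v^\top \Gamma_T v}\,\big|\,\mathcal{F}_{jk}\big]\ge q$ for absolute constants $c,q>0$ and some $t$ in the block. A covering of $\mathbb{S}^{d-1}$ (paying only $\log\mathrm{cond}(U)$ overhead, absorbed in $\tO(1)$) reduces the directional requirement to a finite net.

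\textbf{Pointwise-to-uniform and finish.} The key to avoiding the exponential union bound is to aggregate along \emph{block-indices} rather than along subsets. Specifically, for each $v$ in the net, use a lower-tail Freedman-type inequality for the conditionally-independent block indicators $Z_j\coloneqq \cha\{|\langle v, x_{t_j}\rangle|\ge c\sqrt{v^\top \Gamma_T v}\}$ to show that $\sum_j Z_j \ge \Omega(T/k)$ with probability $1-e^{-\Omega(T/k)}$; union-bounding over the $d$-dimensional net contributes at most a $2^{O(d)}$ factor, controlled by the hypothesis $T\ge \widetilde{\Theta}(c_1)\cdot(d^2+d/(1-\rho(\sA))+dL)$. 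The pointwise-to-uniform passage then uses a purely combinatorial fact: any $\mI\subseteq[T]$ with $|\mI|\ge \alpha\beta T/8$ must contain at least $\Omega(|\mI|/k)$ of these block-indices, so the block-wise small-ball firings restricted to $\mI$ already contribute $\Omega(|\mI|\cdot v^\top \Gamma_T v)$ to $v^\top X_{\mI}X_{\mI}^\top v$. This gives the desired $\succcurlyeq |\mI|\Gamma_T/C_{\alpha,\beta}$ bound with $C_{\alpha,\beta}=\poly(1/(\alpha\beta))$. Applying the uniform bound to $\mI_1$ and to $\mI_0'$ (whose size is at least $|\mI_0|-L\ge \alpha\beta T/8$ under $T\gg L$) completes $\mE_{g,2}$. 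A final union bound over the three sub-events gives $\Pr[\mE_g]\ge 1-\delta-o(1)$, as claimed.
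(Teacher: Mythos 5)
The easy parts — $\mE_{g,1}$ via Assumption~\ref{ass:survival-prob} and Proposition~\ref{prop:largeN}, $\mE_{g,3}$ via $\chi^2$ concentration plus a union bound, and the upper bound $\frac{1}{T}XX^\top \preccurlyeq \frac{d}{\delta}\Gamma_T$ via a trace/Markov argument — are all correct and coincide with what the paper does. The covering-net idea and the "aggregate over blocks, not over subsets" instinct for the lower bound are also on the right track. However, the crux of your pointwise-to-uniform step has a genuine quantitative gap.

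You establish, per block of length $k$, a small-ball firing at a single index $t_j$ with constant probability $q>0$, so the set of "good" indices has size $\Theta(T/k)$. You then assert that any $\mI$ with $|\mI|\ge \alpha\beta T/8$ must contain $\Omega(|\mI|/k)$ of these. That inference fails: the good set has size only $O(T/k)$, while the complement of $\mI$ has size $T-|\mI|\ge (1-\alpha\beta/8)T$, which dominates $T/k$ whenever $k\ge 2$. Hence an adversarial $\mI$ of size $\Omega(T)$ can avoid \emph{all} the firing indices, and the restricted sum $\sum_{t\in\mI}(v^\top x_t)^2$ receives no contribution from them. The issue is precisely that the BMSB with a small constant probability $q$ certifies too few good indices per block.

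The paper's proof (Lemma~\ref{lem:fix-v} via Claim~\ref{cl:bmsb} and Proposition~\ref{prop:linear_system_small_ball_full}) avoids this by proving a \emph{strengthened} BMSB with probability $p=1-\epsilon$, i.e., the \emph{average} firing probability over the block is close to $1$. Combined with a Markov-type argument (the inner proposition in Claim~\ref{cl:bmsb-prob}: $\Ex[Z]\ge 1-\epsilon$ implies $\Pr[Z\ge 1-\sqrt{\epsilon}]\ge 1-\sqrt{\epsilon}$), this shows that a $(1-\sqrt{\epsilon})$ \emph{fraction} of each block's indices are good, for a $(1-\sqrt{\epsilon}/2)$ fraction of blocks. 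The good set then has size $\approx (1-O(\sqrt{\epsilon}))T$, and the combinatorial step is a pigeonhole argument: any $\mI$ with $|\mI|\ge 5\sqrt{\epsilon}T$ satisfies $|\mI\cap \mathcal{G}|\ge |\mI|+|\mathcal{G}|-T\ge |\mI|/2$. To repair your proof, you would need to (i) establish the per-block firing probability to be $1-\epsilon$ (not a small constant $q$), e.g., by taking $\nu$ small of order $\epsilon$ via one-dimensional Gaussian anti-concentration as in Proposition~\ref{prop:linear_system_small_ball_full}, and (ii) replace the "one index per block fires" counting by the fraction-of-block counting above. Without this, the uniform-over-subsets conclusion does not hold.
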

We will prove Lemma \ref{lem:Eg} in Appendix \ref{appdx:prf:eg}. We will need some definitions to ease notation. Let $B\coloneqq\mB(\alpha), X_0\coloneqq X_{\mI_0}$, $N_0\coloneqq |\mI_0|$, $\nu_t^*\coloneqq \Ex_{y\sim \mN(\sA x_t,I,\mS_{t+1})}[y]$, and $\zeta_t\coloneqq \nu_t^*-\sA x_t$. Finally, for all $\mI\subseteq [T]$, let $W_{\mI}, Z_{\mI}$ denote the $d\times |\mI|$ matrices whose columns are $(w_t)_{t\in \mI}$ and $(\zeta_t)_{t\in \mI}$ respectively (in temporal order). 
\par On $\mE_g$, $X_0X_0^\top \succ 0$, so the least-squares solution is unique, i.e., $ A_0=\sum_{t\in \mI_0}x_{t+1}x_t^\top \left(X_0X_0^\top\right)^{-1}$. Using $x_{t+1}=\sA x_t+w_t$, we get
\begin{align*}
    (A_0-\sA)\cdot \left(\frac{X_0X_0^\top}{N_0}\right)^{1/2}&=\frac{1}{\sqrt{N_0}}\sum_{t\in \mI_0}w_tx_t^\top  \left(X_0X_0^\top\right)^{-1/2} \\
    &= \underbrace{\frac{1}{\sqrt{N_0}}\sum_{t\in \mI_0'}(w_t-\zeta_t ) x_t^\top  \left(X_0X_0^\top\right)^{-1/2}}_{\Delta_1}+\underbrace{\frac{1}{\sqrt{N_0}}\sum_{t\in \mI_0'}\zeta_t x_t^\top\left(X_0X_0^\top\right)^{-1/2}}_{\Delta_2} \\  
    &\ \ \ \ \ \ \ \ \ \ \  \ \ \ \  \ \ \ \ \ \ \ \ \ \ \ \ \ \ \ \  \ \ \   \ \ \ \ \ \ \ \ \ \  \ \     \ \ \ \ \ \ \ \ +\underbrace{\frac{1}{\sqrt{N_0}}\sum_{t\in \mI_0\cap B}w_tx_t^\top\left(X_0X_0^\top\right)^{-1/2}}_{\Delta_3},
\end{align*}
where remember that $\mI_0'=\mI_0\setminus B$. We will control each $\|\Delta_i\|_{\text{F}}$ separately.
\begin{claim}\label{cl:delta3}
On $\mE_g$, $\|\Delta_3\|_{\text{F}}^2\le \tO(dL/N_0)$.
\end{claim}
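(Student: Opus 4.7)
The claim bounds the Frobenius norm of
\[
\Delta_3 \;=\; \frac{1}{\sqrt{N_0}}\sum_{t\in \mI_0\cap B} w_t x_t^\top \bigl(X_0X_0^\top\bigr)^{-1/2}.
\]
My plan is to package this sum into matrix form, peel off the rescaling factor $(X_0X_0^\top)^{-1/2}$ as an operator of norm at most $1$, and then reduce the problem to controlling $\sum_{t\in \mI_0\cap B}\|w_t\|^2$. Concretely, let $B' \eqdef \mI_0\cap B$, and let $W_{B'}\in\R^{d\times |B'|}$ and $X_{B'}\in\R^{d\times |B'|}$ be the matrices whose columns are $(w_t)_{t\in B'}$ and $(x_t)_{t\in B'}$, respectively. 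Then $\Delta_3 = \tfrac{1}{\sqrt{N_0}}\,W_{B'} X_{B'}^\top (X_0X_0^\top)^{-1/2}$, and note that on $\mE_g$ we have $X_0X_0^\top \succcurlyeq X_{\mI_0'}X_{\mI_0'}^\top \succ 0$, so the inverse is well-defined.

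The next step is to expand the Frobenius norm and group the middle factor:
\[
\|\Delta_3\|_{\mathrm F}^2 \;=\; \frac{1}{N_0}\,\trace\!\bigl(W_{B'} P\, W_{B'}^\top\bigr) \;=\; \frac{1}{N_0}\,\trace\!\bigl(P \,W_{B'}^\top W_{B'}\bigr),
\qquad \text{where } P \eqdef X_{B'}^\top (X_0X_0^\top)^{-1} X_{B'}.
\]
The key observation is that $P$ is a PSD matrix with $\|P\|_2 \le 1$. Indeed, $P = \bigl((X_0X_0^\top)^{-1/2}X_{B'}\bigr)^\top\bigl((X_0X_0^\top)^{-1/2}X_{B'}\bigr)$, and since $X_{B'}X_{B'}^\top \preccurlyeq X_0X_0^\top$ (because the remaining columns contribute a PSD matrix), the matrix $(X_0X_0^\top)^{-1/2}X_{B'}$ has largest singular value $\le 1$.

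Since $P$ and $W_{B'}^\top W_{B'}$ are both PSD, the standard inequality $\trace(AB)\le \|A\|_2\,\trace(B)$ yields
\[
\trace\!\bigl(P \,W_{B'}^\top W_{B'}\bigr) \;\le\; \|P\|_2 \cdot \trace\!\bigl(W_{B'}^\top W_{B'}\bigr) \;\le\; \sum_{t\in B'} \|w_t\|^2.
\]
Finally, we use the two bounds provided by $\mE_g$: on $\mE_{g,1}$, $|B'|\le |B|\le L$, and on $\mE_{g,3}$, $\|w_t\|^2 \le \tO(d)$ for every $t$. Combining these,
\[
\|\Delta_3\|_{\mathrm F}^2 \;\le\; \frac{1}{N_0}\cdot L \cdot \tO(d) \;=\; \tO\!\left(\frac{dL}{N_0}\right),
\]
which is the desired bound. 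There is no real obstacle here; the only subtlety is recognizing that the "off-diagonal" correlations between the $w_t$'s do not blow up the trace, and this is taken care of cleanly by the operator-norm bound on $P$, which in turn just expresses the fact that $B'\subseteq \mI_0$ so that $X_{B'}X_{B'}^\top$ is dominated by $X_0X_0^\top$.
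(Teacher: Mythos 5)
Your proof is correct, and it reaches the same intermediate inequality $\|\Delta_3\|_{\mathrm{F}}^2 \le \frac{1}{N_0}\sum_{t\in B'}\|w_t\|^2$ as the paper before invoking $\mE_{g,1}$ and $\mE_{g,3}$ to finish. The one step where you diverge is how you justify $\trace\bigl(P\, W_{B'}^\top W_{B'}\bigr)\le \trace\bigl(W_{B'}^\top W_{B'}\bigr)$ with $P = X_{B'}^\top(X_0X_0^\top)^{-1}X_{B'}$. The paper inserts the column-space projection $P_{X_{B'}}$ on both sides of $(X_0X_0^\top)^{-1}$, invokes a pseudoinverse-domination lemma (cited from Arora et al.) to get $P_{X_{B'}}(X_0X_0^\top)^{-1}P_{X_{B'}}\preccurlyeq (X_{B'}X_{B'}^\top)^+$, and then uses that $X_{B'}^\top(X_{B'}X_{B'}^\top)^+X_{B'}$ is a projection (eigenvalues $\le 1$). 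You instead prove $\|P\|_2\le 1$ directly: writing $P = M^\top M$ with $M=(X_0X_0^\top)^{-1/2}X_{B'}$ and noting $MM^\top = (X_0X_0^\top)^{-1/2}X_{B'}X_{B'}^\top(X_0X_0^\top)^{-1/2}\preccurlyeq I$. This is more elementary and self-contained — it avoids pseudoinverses and the external lemma entirely — while yielding exactly the same bound; the pseudoinverse machinery buys nothing extra here since all one needs is $P\preccurlyeq I$.
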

\begin{claim}\label{cl:delta2}
On $\mE_g$, $\|\Delta_2\|_{\text{F}}^2\le O(\log(1/\alpha)+1)$.
\end{claim}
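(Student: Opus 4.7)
The plan is to bound $\|\Delta_2\|_{\text{F}}^2$ by a deterministic argument that exploits two facts: (i) the "residual biases" $\zeta_t$ are uniformly small on $\mI_0'$ because we have excluded the low-survival-probability timesteps, and (ii) the post-multiplication by $(X_0X_0^\top)^{-1/2}$ is a contraction when restricted to the subset $\mI_0'\subseteq \mI_0$ of columns.

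First, I would rewrite $\Delta_2$ in matrix form. Letting $Z' = Z_{\mI_0'}$ and $X' = X_{\mI_0'}$, we have $\Delta_2 = \frac{1}{\sqrt{N_0}}\,Z'\,(X')^\top (X_0X_0^\top)^{-1/2}$, so
\begin{align*}
\|\Delta_2\|_{\text{F}}^2 = \frac{1}{N_0}\,\trace\!\left(Z'\cdot\left[(X')^\top (X_0X_0^\top)^{-1} X'\right]\cdot (Z')^\top\right).
\end{align*}
Since $\mI_0'\subseteq \mI_0$, we have $X'(X')^\top \preceq X_0 X_0^\top$, which implies that the matrix $M\coloneqq(X_0X_0^\top)^{-1/2}X'$ has spectral norm at most $1$, and hence $M^\top M = (X')^\top (X_0X_0^\top)^{-1} X' \preceq I_{|\mI_0'|}$. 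Plugging this into the trace, cyclicity gives $\|\Delta_2\|_{\text{F}}^2 \le \tfrac{1}{N_0}\trace(Z'(Z')^\top) = \tfrac{1}{N_0}\sum_{t\in\mI_0'}\|\zeta_t\|^2$.

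The second step is to bound each $\|\zeta_t\|$ for $t\in \mI_0'$. By definition, $\mI_0' = \mI_0 \setminus \mB(\alpha)$, so for every $t\in \mI_0'$ the survival probability satisfies $\mN(\sA x_t, I; \mS_{t+1})\ge \alpha$. Since $\nu_t^*$ is the mean of the truncated Gaussian $\mN(\sA x_t,I,\mS_{t+1})$, Claim~\ref{cl:means:new} (applied with $\mu = \sA x_t$, $\gamma = \alpha$, $S = \mS_{t+1}$) yields $\|\zeta_t\| = \|\nu_t^* - \sA x_t\| \le s(\alpha) = \sqrt{2\log(1/\alpha)}+1$ deterministically.

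Combining the two steps gives
\begin{align*}
\|\Delta_2\|_{\text{F}}^2 \le \frac{|\mI_0'|}{N_0}\cdot s(\alpha)^2 \le s(\alpha)^2 = O(\log(1/\alpha)+1),
\end{align*}
which is the claimed bound. No probabilistic concentration is needed here beyond what is already built into $\mE_g$ (used only to ensure $X_0X_0^\top$ is invertible and that $\mI_0'$ is well-defined); the only nontrivial input is the a priori bound on truncated-Gaussian means from Claim~\ref{cl:means:new}. I expect no real obstacle: the entire argument is a one-line matrix contraction plus a pointwise bound on $\|\zeta_t\|$, in contrast to $\Delta_1$ and $\Delta_3$, which genuinely require martingale/concentration machinery.
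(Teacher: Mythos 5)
Your proof is correct and reaches the same bound $\|\Delta_2\|_{\text{F}}^2 \le \tfrac{1}{N_0}\|Z_{\mI_0'}\|_{\text{F}}^2 \le s(\alpha)^2$ via the same high-level strategy (a deterministic truncated-mean bound from Claim~\ref{cl:means:new} plus a contraction step), but your contraction argument is more elementary than the paper's. The paper reuses the machinery from Claim~\ref{cl:delta3}: it inserts the column-space projection $P_{X_{\mI_0'}}$, invokes $P_{X_{\mI_0'}}(X_0X_0^\top)^{-1}P_{X_{\mI_0'}} \preccurlyeq (X_{\mI_0'}X_{\mI_0'}^\top)^{+}$ (a pseudoinverse comparison cited from \cite{arora2019fine}), and then recognizes $X_{\mI_0'}^\top(X_{\mI_0'}X_{\mI_0'}^\top)^{+}X_{\mI_0'}$ as a row-space projection. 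You instead observe directly that $X_{\mI_0'}X_{\mI_0'}^\top \preceq X_0X_0^\top$ forces $M \coloneqq (X_0X_0^\top)^{-1/2}X_{\mI_0'}$ to satisfy $MM^\top \preceq I_d$, hence $M^\top M \preceq I$, and the trace bound follows immediately. Your route avoids the pseudoinverse lemma and the projection-matrix bookkeeping entirely, at no loss of generality (the singular-value equivalence $MM^\top\preceq I \Leftrightarrow M^\top M\preceq I$ holds regardless of rank). Your observation that the $\|\zeta_t\|$ bound is purely deterministic (rather than phrased through the conditional filtration, as the paper does) is also a valid simplification, since $\nu_t^*$ is a deterministic function of $(x_t,\mS_{t+1})$ and $t\in\mI_0'$ directly encodes the survival-probability condition that Claim~\ref{cl:means:new} needs.
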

\begin{claim}\label{cl:delta1}
On $\mE_g$, with probability at least $1-\delta$, we have $\|\Delta_1\|_\text{F}^2\le \tO(d^2/N_0)$.
\end{claim}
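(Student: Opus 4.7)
The plan is to recognize $\|\Delta_1\|_F^2$ as a self-normalized matrix-valued martingale sum and apply a coordinatewise self-normalized martingale tail bound (of Abbasi-Yadkori et al.~style).

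Set $\epsilon_t := w_t - \zeta_t$. The key structural fact is that on the event $\{t \in \mI_0'\} = \{t \in \mI_0\}\setminus \mB(\alpha)$, the conditional law of $w_t$ given the past is the recentered truncated normal with survival probability at least $\alpha$; hence $\epsilon_t$ is mean zero with each coordinate $\tO(1)$-subgaussian, with constant depending polynomially on $1/\alpha$ only (cf.\ Claims \ref{appdx:prob1} and \ref{appdx:prob2}). Introduce the filtration $\mathcal{G}_t := \sigma(\mS_{\le t+1},\, x_{\le t+1},\, o_{\le t})$, so that $x_t$ is $\mathcal{G}_{t-1}$-predictable, and extend by zero off $\mI_0'$ to define $\widetilde{\epsilon}_t := \epsilon_t \cdot \cha\{t\in \mI_0'\}$. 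A direct computation gives $\Ex[\widetilde{\epsilon}_t \mid \mathcal{G}_{t-1}] = 0$: conditional on the $\mathcal{G}_{t-1}$-measurable event $\{t \in \mI_0,\, t \notin \mB(\alpha)\}$, the remaining inclusion $t \in \mI_0'$ reduces to $\{x_{t+1} \in \mS_{t+1}\}$, and $\Ex\bigl[(x_{t+1} - \nu_t^*)\cdot\cha\{x_{t+1} \in \mS_{t+1}\} \bigm| x_t, \mS_{t+1}\bigr] = 0$ by definition of $\nu_t^*$.

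Write $Y := \sum_{t\in \mI_0'} \epsilon_t x_t^\top$, so that $\|\Delta_1\|_F^2 = N_0^{-1}\trace\bigl((X_0 X_0^\top)^{-1} Y^\top Y\bigr)$. Since $\mI_0'\subseteq \mI_0$, one has $X_0 X_0^\top \succcurlyeq X_{\mI_0'} X_{\mI_0'}^\top$, hence $(X_0 X_0^\top)^{-1} \preccurlyeq V^{-1}$ for $V := \lambda I + X_{\mI_0'} X_{\mI_0'}^\top$, provided $\lambda$ is chosen much smaller than the deterministic lower bound $X_{\mI_0'} X_{\mI_0'}^\top \succcurlyeq \Omega(T/\poly(1/\alpha\beta))\cdot I$ guaranteed by $\mE_g$ (using $\Gamma_T\succcurlyeq I$); this regularization costs only a $1+o(1)$ multiplicative factor. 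The scalar self-normalized bound applied to each of the $d$ coordinates of $\epsilon_t$, together with a union bound, yields
\begin{align*}
\trace(V^{-1} Y^\top Y) \le C\cdot\tO(1)\cdot\bigl(d\log(\det V/\det(\lambda I)) + d\log(d/\delta)\bigr)
\end{align*}
with probability at least $1-\delta$. Finally, $\log(\det V/\det(\lambda I)) = \tO(d)$ because $\mE_g$'s upper bound $XX^\top \preccurlyeq (Td/\delta)\,\Gamma_T$ makes $\trace(V)$ polynomially bounded in $T$, $d$, and $1/(1-\rho(\sA))$. Combining, $\|\Delta_1\|_F^2 \le \tO(d^2)/N_0$ on $\mE_g$, as claimed.

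The principal obstacle is verifying the martingale adaptation, since the index set $\mI_0'$ depends on $x_{t+1}$ (through $\{x_{t+1}\in \mS_{t+1}\}$) and on the running count of past observed pairs, which forces the enlarged filtration $\mathcal{G}_t$ above rather than the $\mF_t$ used elsewhere in the paper. Once the filtration is fixed and $\widetilde{\epsilon}_t$ is extended by zero off $\mI_0'$, the martingale-difference property is clean and the self-normalized machinery applies verbatim. The subgaussian control on $\epsilon_t$ crucially uses $t \notin \mB(\alpha)$, which is exactly why contributions from $t \in \mI_0 \cap \mB(\alpha)$ are peeled off into $\Delta_3$ and handled there by a direct worst-case estimate.
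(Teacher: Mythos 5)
Your proof is correct but takes a genuinely different route from the paper. The paper first relaxes the Frobenius norm to a spectral norm, $\|\Delta_1\|_{\text{F}}\le\sqrt{d/N_0}\,\|\cdot\|_2$, writes the operator norm as a supremum of the bilinear form $v_1^\top(W_{\mI_0'}-Z_{\mI_0'})X_{\mI_0'}^\top v_2/\|X_{\mI_0'}^\top v_2\|$, applies the scalar self-normalized martingale tail bound of \cite{simchowitz2018learning} (their Lemma 4.2) to each fixed $(v_1,v_2)$, and finishes with a union bound over covering nets of the sphere (reusing their net machinery and the $\Gamma_{\min}\preccurlyeq X_{\mI_0'}X_{\mI_0'}^\top\preccurlyeq\Gamma_{\max}$ sandwich on $\mE_g$). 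You instead keep the Frobenius norm, recognize $\trace((X_0X_0^\top)^{-1}Y^\top Y)=\sum_{i=1}^d\|\sum_{t\in\mI_0'}\epsilon_{t,i}x_t\|^2_{(X_0X_0^\top)^{-1}}$, replace $(X_0X_0^\top)^{-1}$ by a regularized $V^{-1}$ at the cost of a $1+o(1)$ factor using the lower bound $X_{\mI_0'}X_{\mI_0'}^\top\succcurlyeq\Omega(T/\poly)I$ from $\mE_g$, and bound each coordinate by an Abbasi-Yadkori-type determinant inequality followed by a union bound over the $d$ coordinates. Both routes use the same martingale structure --- with the indicator $\cha\{t\in\mI_0'\}$ absorbed into the noise term and the filtration enlarged to carry $\cha\{x_{t+1}\in\mS_{t+1}\}$ (the paper's $\mF_t$; your $\mathcal{G}_t$ serves the same purpose), which is exactly what makes the conditional mean of $\epsilon_t$ vanish on $\mI_0'$ --- and the same $\tO(1\lor\log(1/\alpha))$-subgaussianity from Claim \ref{appdx:prob1}. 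What the paper's net argument buys is a direct path to spectral-norm bounds; what your coordinatewise determinant argument buys is a cleaner Frobenius bound without the lossy $\|\cdot\|_{\text{F}}\le\sqrt{d}\,\|\cdot\|_2$ conversion and without having to set up the $\Gamma_{\min}/\Gamma_{\max}$ covering net, at the cost of the mild regularization parameter $\lambda$ and the (straightforward on $\mE_g$) deterministic control of $\log\det V$. One shared point you should be a bit careful about, which the paper also glosses over: $\cha\{t\in\mI_0'\}$ involves membership in the \emph{first half} of $\mP$, and as written ($\lfloor M/2\rfloor$ with $M=|\mP|$ random) that threshold is not determined by time-$t$ information even with the enlarged filtration; the clean fix is to define the split using a deterministic count threshold $\Theta(\alpha\beta T)$, which Proposition \ref{prop:largeN} justifies. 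You flag this issue, which is to your credit, but naming the running count as the missing ingredient does not fully resolve it --- the issue is the random denominator $M$, not the running numerator.
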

Given these claims, we have that on $\mE_g$, with probability at least $1-\delta$, 
\begin{align*}
    \left\|(A_0-\sA)\cdot \left(\frac{X_0X_0^T}{N_0}\right)^{1/2} \right\|_{\text{F}}^2 \leq \tO(1)\cdot \frac{d^2+dL}{N_0} + O(\log(1/\alpha)+1)
\end{align*}
But, on $\mE_g$, $N_0\ge \alpha \beta T/4$ and $\frac{X_0X_0^\top}{N_0}\succcurlyeq \frac{1}{\poly\left(\frac{1}{\alpha \beta}\right)}\Gamma_T$. Setting $\delta=1/T$, and using the assumed lower bound for $T$, we have that with probability $1-o(1)$, $\|A_0-\sA\|_{\Gamma_T}\le O(\sqrt{\log(1/\alpha)}+1)$. Moreover, in Appendix \ref{cl:normalized}, we show that $\|\Gamma_T^{-1/2} x_t\| $ is $O(d)$-subgaussian. Combining with Lemma \ref{lem:Eg}, we complete the proof of Theorem \ref{thm:init:new}. We now prove the three claims.
\subsection{Proof of Claim \ref{cl:delta3}}

Let $B'= \mI_0\cap B$. We will show that on $\mE_g$, $\|\Delta_3\|_{\text{F}}^2\le \frac{1}{N_0}\sum_{t\in B'}\|w_t\|^2$. Observe that from Lemma \ref{lem:Eg}, this immediately gives the claim.
\begin{align*}
    \|\Delta_3\|_{\text{F}}^2& =\frac{1}{N_0}\trace \left(W_{B'}X_{B'}^\top (X_0X_0^\top)^{-1}X_{B'}W_{B'}^\top\right)\\
    &=\frac{1}{N_0} \Big\langle W_{B'}^\top W_{B'},X_{B'}^\top (X_0X_0^\top )^{-1}X_{B'} \Big\rangle \\
    & = \frac{1}{N_0}\Big\langle W_{B'}^\top W_{B'},X_{B'}^\top P_{X_{B'}}(X_0X_0^\top)^{-1}P_{X_{B'}}X_{B'}\Big\rangle
\end{align*}
where $P_{X_{B'}}$ is the projection matrix for the columnspace of $X_{B'}$. Now, note that $X_0X_0^T\succcurlyeq X_{B'}X_{B'}^T$. This implies that 
\begin{align*}
    P_{X_{B'}}\left(X_0X_0^\top \right)^{-1}P_{X_{B'}} \preccurlyeq (X_{B'}X_{B'}^\top)^+,
\end{align*}
where $(X_{B'}X_{B'}^T)^+$ denotes the pseudoinverse of $X_{B'}X_{B'}^T$. For a proof of this implication, see \cite{arora2019fine} Lemma E.1. Thus,
\begin{align*}
    \|\Delta_3\|_{\text{F}}^2& \le \frac{1}{N_0}\Big\langle W_{B'}^\top W_{B'},X_{B'}^\top (X_{B'}X_{B'}^\top)^+X_{B'} \Big\rangle.
\end{align*}
But, $X_{B'}^T (X_{B'}X_{B'}^\top )^+X_{B'}$ is just the projection matrix for the rowspace of $X_{B'}$, and so $ \|\Delta_3\|_{\text{F}}^2\le \frac{1}{N_0} \|W_{B'}\|_{\text{F}}^2$, and we are done. $\blacksquare$
\subsection{Proof of Claim \ref{cl:delta2}}

First we show that on $\mE_g$, if $t\in \mI_0'$, then $\|\zeta_t\|\le \sqrt{2\log(1/\alpha)}+1 $. Consider the filtration $\mG_t=\sigma(w_0,\dots,w_{t-1}, \mS_1,\dots,\mS_{t+1})$, and (similarly to the time-series case) the events $O_t=\{x_t\in \mS_t\}$, and 
\begin{align*}
    G_t\coloneqq \left\{O_t \rightarrow \left\{\mN(\sA x_t, I;\mS_{t+1})\ge \alpha \right\}\right\}
\end{align*}
Notice that $t\in\mI_0'$ implies $O_t\land G_t$. Now, given $\mG_{t}$ and $O_t\land G_t$, we have $x_{t+1}\sim \mN(\sA x_t,I ,\mS_{t+1})$, and $\mN(\sA x_t,I ;\mS_{t+1})\ge \alpha$. Using Claim \ref{cl:means:new}, we get $\|\zeta_t\|=\|\nu_t^*-\sA x_t\|\le \sqrt{2\log(1/\alpha)}+1$. Now,
\begin{align*}
\| \Delta_2 \|_{\text{F}}^2=\frac{1}{N_0} \Big\langle Z_{\mI_0'}^\top Z_{\mI_0'},X_{\mI_0'}^\top (X_0X_0^\top)^{-1}X_{\mI_0'} \Big\rangle,
\end{align*}
and since $X_{\mI_0'} X_{\mI_0'}^\top \preccurlyeq (X_0X_0^\top)$, the exact same arguments as in proof of Claim \ref{cl:delta3} show that $\|\Delta_3\|_{\text{F}}^2\le \frac{1}{N_0} \| Z_{\mI_0'}\|_{\text{F}}^2\le O(\log(1/\alpha)+1)$. $\blacksquare$

\subsection{Proof of Claim \ref{cl:delta1}}
First, remember that on $\mE_g,$ $X_0X_0^\top \succcurlyeq X_{\mI_0'}X_{\mI_0'}^\top\succ 0$. So,
\begin{align}\label{eq:forb-to-spec}
    \|\Delta_1\|_\text{F}&\le \sqrt{d/N_0}\cdot \left\|\left(W_{\mI_0'}-Z_{\mI_0'}\right)X_{\mI_0'}^\top \left(X_{0}X_{0}^\top\right)^{-1/2}\right\|_2 \\
    &\le \sqrt{d/N_0}\cdot \left\|\left(W_{\mI_0'}-Z_{\mI_0'}\right)X_{\mI_0'}^\top \left(X_{\mI_0'}X_{\mI_0'}^\top \right)^{-1/2}\right\|_2 
\end{align}
Now,
\begin{align*}
  \left\|\left(W_{\mI_0'}-Z_{\mI_0'}\right)X_{\mI_0'}^\top \left(X_{\mI_0'}X_{\mI_0'}^\top\right)^{-1/2}\right\|_2&= \sup_{v_1,v_2\in \mS^{d-1}}v_1^T \left(W_{\mI_0'}-Z_{\mI_0'}\right)X_{\mI_0'}^\top\left(X_{\mI_0'}X_{\mI_0'}^\top\right)^{-1/2} v_2 \\
  &= \sup_{v_1,v_2\in \mS^{d-1}}\frac{v_1^T \left(W_{\mI_0'}-Z_{\mI_0'}\right)X_{\mI_0'}^\top v_2}{\left\|\left(X_{\mI_0'}X_{\mI_0'}^\top\right)^{1/2} v_2\right\|_2}  \\
  &=\sup_{v_1,v_2\in \mS^{d-1}}\frac{v_1^T \left(W_{\mI_0'}-Z_{\mI_0'}\right)X_{\mI_0'}^\top v_2}{\left\|X_{\mI_0'}^\top v_2\right\|_2}\\
  &=\sup_{v_1\in \mS^{d-1},v_2\in \R_*^{d}}\frac{v_1^T \left(W_{\mI_0'}-Z_{\mI_0'}\right)X_{\mI_0'}^\top v_2}{\left\|X_{\mI_0'}^\top  v_2\right\|_2}.
\end{align*}
Fix some $v_1\in \mS^{d-1},v_2\in \R_*^{d}$. Let $H_t\coloneqq v_1^T(w_t-\zeta_t)\cdot \cha\{t\in I_0'\}$, and $U_t\coloneqq v_2^Tx_t \cdot \cha\{t\in I_0'\}$, and observe that on $\mE_g$,
\begin{align*}
   \frac{v_1^T \left(W_{\mI_0'}-Z_{\mI_0'}\right) X_{\mI_0'}^\top v_2}{\left\|X_{\mI_0'}^\top v_2\right\|_2}= \frac{\sumt H_t\cdot U_t}{\sqrt{\sumt U_t^2}}
\end{align*}
Let $\Gamma_{\text{min}}\coloneqq \frac{ 1}{C_{\alpha,\beta}}\Gamma_{T}$, $\Gamma_{\text{max}} \coloneqq C_{\alpha,\beta}\frac{d}{\delta }\Gamma_T$, and $C_{\alpha,\beta}=\poly(1/\alpha,1/\beta)$ that is large enough, so that $\Gamma_{\text{min}}\preccurlyeq X_{\mI_0'}X_{\mI_0'}^T \preccurlyeq \Gamma_{\text{max}} $ on $\mE_g$. Fix a $\delta\in(0,1)$, and consider a large enough $K= \widetilde{\Theta}(d)$. We will bound the probability
\begin{align*}
   p_1\coloneqq \Pr \left[\left\{\frac{\sumt H_t\cdot U_t}{\sqrt{\sumt U_t^2}}\ge K\right\} \land \left\{v_2^\top \Gamma_{\text{min}} v_2\le \sumt U_t^2 \le v_2^\top\Gamma_{\text{max}} v_2\right\} \right] .
\end{align*}
Consider the filtration $\mF_t\coloneqq \sigma(w_0,w_1,\dots,w_{t-1},\mS_1,\dots,\mS_{t+1},\cha\{x_{t+1}\in \mS_{t+1}\})$ (similarly to the time-series case). Given $\mF_t$ and that $t\in \mI_0'$, we have $\sA x_t+w_t =x_{t+1} \sim \mN(\sA x_t,I,\mS_{t+1})$, and $\mN(\sA x_t,I;\mS_{t+1})\ge \alpha$. Combining with Claim \ref{appdx:prob1} (Appendix \ref{appdx:prob}), we get $\|v_1^Tw_t\|_{\psi_2}^2\le O(1\lor \log{(1/\alpha)})$. Also, from Claim \ref{cl:means:new}, $\|v_1^T\zeta_t\|_{\psi_2}^2\le \|\zeta_t\|_{2}^2\le O(1\lor \log{(1/\alpha)})$. Thus, given $\mF_t$, $H_t$ is mean zero, and $O(1\lor \log{(1/\alpha)})$-subgaussian. Furthermore, observe that $H_t$ is $\mF_{t+1}$-measurable, while $U_t$ is $\mF_{t}$-measurable. Using Lemma 4.2 from \cite{simchowitz2018learning}, we get 
\begin{align*}
   p_1\le \log \left(\frac{v_2^\top \Gamma_{\text{min}} v_2}{v_2^\top \Gamma_{\text{max}} v_2}\right) \cdot \exp \left(-\frac{K^2}{O(1\lor \log{(1/\alpha)})}\right).
\end{align*}
Also, observe that 
\begin{align*}
   \Pr\left[ \frac{v_1^\top \left(W_{\mI_0'}-Z_{\mI_0'}\right)X_{\mI_0'}^\top v_2}{\left\|X_{\mI_0'}^\top v_2\right\|_2} \cdot \cha\{\mE_g\}> K \right] \le p_1
\end{align*}
Now, let $\mN_1$ be a $1/2$-net of $\mS^{d-1}$, and let $\mN_2$ be a $1/4$-net of $\mS_{\Gamma_{\text{min}}}$ in the norm $\|\Gamma_{\text{max}}^{1/2}(\cdot)\|_2$. Taking a union bound,
\begin{align*}
   \Pr\left[\exists\ v_1\in \mN_1,v_2\in \mN_2:\  \frac{v_1^\top \left(W_{\mI_0'}-Z_{\mI_0'}\right)X_{\mI_0'}^\top v_2}{\left\|X_{\mI_0'}^\top v_2\right\|_2} \cdot \cha\{\mE_g\}> K \right] \le p_1\cdot |\mN_1|\cdot |\mN_2|
\end{align*}
Using the bounds on $|\mN_1|, |\mN_2|$ from \cite{simchowitz2018learning} (Lemma D.1), we get
\begin{align*}
 p_1\cdot |\mN_1|\cdot |\mN_2|&\le  \log\left(\frac{v_2^\top\Gamma_{\text{max}} v_2}{v_2^\top\Gamma_{\text{min}} v_2}\right) \cdot \exp \left(O(d)+2\log \det(\Gamma_{\text{max}}\Gamma_{\text{min}}^{-1})-\frac{K^2}{O(1\lor \log{(1/\alpha)})}\right) \le \delta,
\end{align*}
where we used the definitions of $\Gamma_{\text{max}},\Gamma_{\text{min}}$, and $K$. So now we have
\begin{align*}
 \Pr\left[\left\{\sup_{ v_1\in \mN_1,v_2\in \mN_2}  \frac{v_1^\top \left(W_{\mI_0'}-Z_{\mI_0'}\right)X_{\mI_0'}^\top v_2}{\left\|X_{\mI_0'}^\top v_2\right\|_2} > K \right\} \land \mE_g \right] \le \delta.  
\end{align*}
With an argument identical to the one of \cite{simchowitz2018learning} (Appendix D.2), we get that 
\begin{align*}
 \Pr\left[\left\{\sup_{ v_1\in \mS^{d-1},v_2\in \R_*^d}  \frac{v_1^\top \left(W_{\mI_0'}-Z_{\mI_0'}\right) X_{\mI_0'}^\top v_2}{\left\|X_{\mI_0'}^\top v_2\right\|_2} > 4K \right\}\land \mE_g\right] \le \delta,
\end{align*}
and we are done. $\blacksquare$

\subsection{ Proof of Lemma \ref{lem:Eg}}\label{appdx:prf:eg}
We will deduce Lemma \ref{lem:Eg} as a corollary of the following lemma.
\begin{lemma}\label{lem:cov}
Fix $\delta,\epsilon\in(0,1)$, and an integer $k\ge 1$. Then, there exist absolute constants $C,C'>0$, such that if
\begin{align}\label{eq:k-constr}
    \frac{T}{k}\ge C\left( \frac{d}{\epsilon^2}\log{\left(\frac{d}{\delta\epsilon}\right)}+\frac{1}{\epsilon^2}\log\det (\Gamma_T \Gamma_{k}^{-1})\right), \ \ \text{then}
\end{align}
\begin{align}\label{eq:covariance-general}
    \Pr\left[\left\{\frac{1}{T}XX^\top \preccurlyeq  \frac{d}{\delta}\Gamma_T\right\}\land \left\{ \forall\   \mI\subseteq [T]\ \text{s.t.}\ |\mI|\ge \epsilon T,\ \text{we have}\   \frac{1}{|\mI|}X_{\mI}X_{\mI}^\top\succcurlyeq \frac{ \epsilon^4}{C'}\Gamma_{k} \right\}\right] \ge 1-\delta
\end{align}
\end{lemma}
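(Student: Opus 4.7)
The bound $\tfrac{1}{T}XX^\top\preccurlyeq \tfrac{d}{\delta}\Gamma_T$ is immediate from Markov: since $x_t=\sum_{s<t}\sA^s w_{t-1-s}$ implies $\Ex[x_tx_t^\top]=\Gamma_t\preccurlyeq\Gamma_T$, one has $\Ex\!\left[\trace(\Gamma_T^{-1/2}XX^\top\Gamma_T^{-1/2})\right]\le dT$, and Markov's inequality on this nonnegative trace (together with the fact that a PSD matrix with trace $\le M$ is $\preccurlyeq M\cdot I$) yields the operator bound with probability $\ge 1-\delta$.

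\textbf{Lower bound, the core idea.} For the harder subset-uniform lower bound, the plan is to extend the block-martingale small-ball technique of \cite{simchowitz2018learning} so the guarantee holds for every $\mI$ of size $\ge \epsilon T$ \emph{without} a $\binom{T}{\epsilon T}$ union bound. Fix $v\in\R^d$, and by homogeneity assume $v^\top\Gamma_kv=1$. For $t\ge k$, the conditional distribution of $v^\top x_t$ given $\mathcal{F}_{t-k}:=\sigma(w_0,\dots,w_{t-k-1})$ is Gaussian with variance $v^\top\Gamma_kv=1$, so Gaussian anti-concentration gives, for every $\tau>0$,
\begin{equation*}
\Pr\!\left[(v^\top x_t)^2\le \tau \,\big|\,\mathcal{F}_{t-k}\right]\le C\sqrt{\tau}.
\end{equation*}
I would pick $\tau=c\epsilon^2$ so that $C\sqrt{\tau}\le \epsilon/4$, split $[T]$ into $k$ staggered sub-sequences of step $k$ (on each of which the indicators $\mathbbm{1}\{(v^\top x_t)^2\le \tau\}$ are bounded and conditionally upper-biased by $\epsilon/4$), apply Azuma's inequality to each, and union-bound over the $k$ sub-sequences to conclude
\begin{equation*}
\big|\{t\in[T]:(v^\top x_t)^2\le \tau\cdot v^\top\Gamma_k v\}\big|\le \tfrac{\epsilon}{2} T
\end{equation*}
with failure probability at most $k\exp(-\Omega(\epsilon^2 T/k))$.

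\textbf{From times to subsets without exponential union bound.} The key observation is that the previous conclusion concerns \emph{individual times}, not subsets. For every $\mI$ with $|\mI|\ge \epsilon T$ the bad set absorbs at most $\epsilon T/2\le |\mI|/2$ of the elements of $\mI$, so at least $|\mI|/2$ times in $\mI$ satisfy $(v^\top x_t)^2\ge \tau\cdot v^\top\Gamma_kv$, giving
\begin{equation*}
\frac{1}{|\mI|}\sum_{t\in \mI}(v^\top x_t)^2\ge \frac{\tau}{2}\cdot v^\top\Gamma_kv
\end{equation*}
simultaneously for every such $\mI$, completely sidestepping any union bound over subsets. This pigeonhole-style reduction from ``uniform over times'' to ``uniform over subsets'' is the central conceptual step, and it is precisely where the subset-uniformity is purchased for a polynomial, rather than exponential, price.

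\textbf{Net argument and main obstacle.} To upgrade the fixed-$v$ bound to the matrix inequality, I would cover the ellipsoid $\{v:v^\top\Gamma_k v\le 1\}$ with a $\tfrac14$-net in the $\|\cdot\|_{\Gamma_k}$-norm, apply the time-counting statement of the previous step at each net point, and pass from the net to the full sphere using the net-to-sphere perturbation argument of Appendix~D of \cite{simchowitz2018learning} together with the upper bound on $\tfrac{1}{T}XX^\top$ already established. The log-size of such a net scales as $O(d)+\log\det(\Gamma_T\Gamma_k^{-1})$, which is exactly the source of the $\log\det(\Gamma_T\Gamma_k^{-1})$ term in condition (\ref{eq:k-constr}). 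The step I expect to be the main obstacle is this last one: unlike in the fully-observed setting, the net-to-sphere perturbation has to be carried out \emph{uniformly over} all subsets $\mI$ with $|\mI|\ge \epsilon T$, so the ambient upper bound used to control the perturbation of $v\mapsto \tfrac{1}{|\mI|}\sum_{t\in \mI}(v^\top x_t)^2$ must itself be subset-uniform rather than just hold for $\mI=[T]$; threading the subset-counting argument through the direction-net is what forces the extra $\poly(1/\epsilon)$ slack that shows up as the $\epsilon^4/C'$ factor in the final statement.
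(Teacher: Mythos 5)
Your proposal takes essentially the same route as the paper's: martingale concentration to show the fraction of bad times is small, a pigeonhole observation transferring this to uniformity over all subsets $\mathcal{I}$ with $|\mathcal{I}|\ge\epsilon T$, and a direction-net of log-size $O(d)+\log\det(\Gamma_T\Gamma_k^{-1})$ yielding the matrix statement (this is exactly Lemma~\ref{lem:fix-v}, its Corollary, and the citation of Simchowitz et al.'s net lemma). Your direct Azuma-over-$k$-staggered-subsequences step is a lightweight variant of the paper's block martingale small-ball (BMSB) packaging and would in fact produce a slightly better constant ($\Omega(\epsilon^2)\Gamma_k$ rather than the paper's $\Omega(\epsilon^4)\Gamma_k$, which loses a $\sqrt{\epsilon}$ in the Markov step inside Claim~\ref{cl:bmsb} and another factor in reparametrizing $5\sqrt{\epsilon}\mapsto\epsilon$); you should just note that the $<k$ initial times, where the conditional variance is strictly less than $v^\top\Gamma_k v$, are harmlessly absorbed into the bad set since $k\ll\epsilon T$ under condition~(\ref{eq:k-constr}). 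Finally, the ``main obstacle'' you anticipate in the net-to-sphere step is dispatched in the paper by a one-line observation: for every $\mathcal{I}$ with $|\mathcal{I}|\ge 5\sqrt{\epsilon}T$ one has, deterministically, $\frac{1}{|\mathcal{I}|}X_{\mathcal{I}}X_{\mathcal{I}}^\top\preccurlyeq\frac{1}{5\sqrt{\epsilon}}\cdot\frac{1}{T}XX^\top$ (removing columns only decreases the Gram matrix), so the single Markov upper bound on $\frac{1}{T}XX^\top$ is already subset-uniform up to a benign $1/(5\sqrt{\epsilon})$ inflation of $\Gamma_{\text{max}}$.
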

Given the above, we prove Lemma \ref{lem:Eg}.
\begin{proof}
We will use Lemma \ref{lem:cov}, with $\epsilon=\alpha \beta/8$, and $k=\widetilde{\Theta}\left(\frac{1}{1-\rho(\sA)}\right)$. We need the following proposition.
\begin{proposition}
For large enough $k=\widetilde{\Theta}\left(\frac{1}{1-\rho(\sA)}\right)$, we have 
$2\Gamma_k\succcurlyeq \Gamma_T$.
\end{proposition}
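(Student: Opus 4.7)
The plan is to show $2\Gamma_k \succcurlyeq \Gamma_T$ by splitting $\Gamma_T = \Gamma_k + \sum_{s=k}^{T-1}\sA^s(\sA^s)^\top$ and arguing that, for $k$ sufficiently large, the tail is dominated by the $s=0$ identity term already present in $\Gamma_k$. Concretely, since the $s=0$ summand of $\Gamma_k$ equals $I$, we trivially have $\Gamma_k \succcurlyeq I$. Thus it suffices to prove the tail bound
\[
\sum_{s=k}^{T-1} \sA^s (\sA^s)^\top \ \preccurlyeq\ I,
\]
because then $\Gamma_k \succcurlyeq I \succcurlyeq \Gamma_T - \Gamma_k$, which rearranges to the claim.

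To control the tail, I would invoke Assumption~\ref{ass:stability}: writing $\sA = UDU^{-1}$ with $D$ diagonal and $\rho(\sA)=\max_i|D_{ii}|<1$, we get $\sA^s = UD^sU^{-1}$, and hence
\[
\|\sA^s\|_2 \ \le\ \|U\|_2\,\|D^s\|_2\,\|U^{-1}\|_2 \ =\ \mathrm{cond}(U)\cdot \rho(\sA)^s,
\]
since the spectral norm of the diagonal matrix $D^s$ is $\max_i|D_{ii}|^s=\rho(\sA)^s$ (this remains valid even though $U,D$ may be complex, because the product $\sA^s$ is real). Then $\sA^s(\sA^s)^\top \preccurlyeq \|\sA^s\|_2^2\cdot I$, and summing the geometric series yields
\[
\sum_{s=k}^{T-1}\sA^s(\sA^s)^\top \ \preccurlyeq\ \mathrm{cond}(U)^2\cdot \frac{\rho(\sA)^{2k}}{1-\rho(\sA)^2}\cdot I.
\]

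It remains to choose $k$ so that the right-hand side is at most $I$, i.e., $\rho(\sA)^{2k} \le (1-\rho(\sA)^2)/\mathrm{cond}(U)^2$. Taking logarithms and using the elementary inequality $\log(1/\rho(\sA)) \ge 1-\rho(\sA)$, this is satisfied whenever
\[
k \ \ge\ \frac{\log\!\bigl(\mathrm{cond}(U)^2/(1-\rho(\sA)^2)\bigr)}{2(1-\rho(\sA))} \ =\ \widetilde{\Theta}\!\left(\tfrac{1}{1-\rho(\sA)}\right),
\]
where $\widetilde{\Theta}(\cdot)$ hides the polylogarithmic factor in $\mathrm{cond}(U)$ (consistent with the theorem's convention). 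The main (minor) subtlety is just the complex-diagonalization bookkeeping in establishing $\|\sA^s\|_2 \le \mathrm{cond}(U)\rho(\sA)^s$; there is no deeper obstacle because the quadratic-form inequality $\sA^s(\sA^s)^\top \preccurlyeq \|\sA^s\|_2^2 I$ is standard and the tail sum is just a geometric series.
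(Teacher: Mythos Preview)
Your proof is correct and follows essentially the same approach as the paper: both bound the tail $\Gamma_T-\Gamma_k$ in operator norm via $\|\sA^s\|_2\le\mathrm{cond}(U)\,\rho(\sA)^s$, sum the geometric series, and then invoke $\Gamma_k\succcurlyeq I$ to conclude. Your version is in fact more carefully written, making the geometric-series bound and the resulting threshold on $k$ explicit.
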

\begin{proof}
$\|\Gamma_T-\Gamma_k\|_2\le \sum_{s=k}^{T-1}\left\|(\sA^{s})(\sA^s)^{\top}\right\|_2$. Also, observe that $\left\|(\sA^{s})(\sA^s)^{\top}\right\|_2\le \text{cond}(S)^2 \rho^{2k}(\sA)$. Summing up and using the scale of $k$, we get $\|\Gamma_T-\Gamma_k\|_2<1$. The fact that $\Gamma_k \succcurlyeq I$ completes the proof.
\end{proof}
Observe that the above proposition, and the assumed lower bound on $T$ imply that $\ref{eq:k-constr}$ is satisfied. Now, from Proposition (\ref{prop:largeN}), $\Pr[|\mI_0|,|\mI_1|\ge \alpha \beta T/4]\ge 1-o(1)$. Also, from Assumption \ref{ass:survival-prob}, $\Pr[|\mB(\alpha)|\le L]\ge 1-o(1)$. Thus, $\Pr[|\mI_0|,|\mI_1|,|\mI_0'|\ge \epsilon T]\ge 1-o(1)$ (remember that $L<<T$). Lemma \ref{lem:cov} and Gaussian-norm concentration finish the proof.
\end{proof}
The main part of this section is devoted to proving the following lemma, from which we can easily deduce Lemma \ref{lem:cov}.
\begin{lemma}\label{lem:fix-v}
Fix $\epsilon\in(0,1)$, $1\le k\le \sqrt{\epsilon}T$, and $v\in\mS^{d-1}$. Let $c(\epsilon)=\frac{\sqrt{8/\pi}}{\epsilon}$, and $k(\epsilon)=\floor{\frac{\epsilon k}{2}}$. Then,
\begin{align*}
    \Pr\left[\exists\  \mI\subseteq [T]:\ |\mI|\ge 5\sqrt{\epsilon }T\ \ \text{and}\ \ \frac{1}{|\mI|}\sum_{t \in \mI}(v^{\top}x_t)^2 < \frac{v^\top \Gamma_{k(\epsilon)}v}{2c^2(\epsilon)}\right] \le \exp\left(- \frac{\floor{T/k}}{8}\right).
\end{align*}
\end{lemma}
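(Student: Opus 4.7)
Set $K := k(\epsilon) = \floor{\epsilon k/2}$, $\tau_v := v^\top \Gamma_{K} v / c^2(\epsilon)$, and $m := \ceil{5\sqrt{\epsilon}\, T}$. The starting point is the elementary observation that $\min_{|\mI| \ge m} \frac{1}{|\mI|} \sum_{t \in \mI} (v^\top x_t)^2$ is attained at the subset consisting of the $m$ smallest values of $(v^\top x_t)^2$, since appending larger values to a subset only raises its average. Therefore, if the ``bad set'' $B := \{t \in [T] : (v^\top x_t)^2 < \tau_v\}$ satisfies $|B| \le m/2$, then at least $m/2$ of the $m$ smallest values are $\ge \tau_v$, so the minimum average is $\ge \tau_v/2 = v^\top \Gamma_{K} v/(2 c^2(\epsilon))$, exactly the lemma's bound. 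The whole task thus collapses to proving the cardinality estimate $|B| \le m/2$ with probability $\ge 1 - \exp(-\floor{T/k}/8)$, sidestepping any naive union bound over exponentially many subsets $\mI$.

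\textbf{Step 1 (Conditional anti-concentration).} For any $t > K$, decompose $v^\top x_t = v^\top A^{K} x_{t-K} + v^\top \xi_t$, where $\xi_t := \sum_{i=0}^{K-1} A^i w_{t-1-i} \sim \mN(0, \Gamma_{K})$ is independent of $\mathcal{F}_{t - K}$. Conditional on $\mathcal{F}_{t-K}$, then, $v^\top x_t$ is a one-dimensional Gaussian with variance $v^\top \Gamma_{K} v$, and the standard density-at-the-mode bound gives
\[
\Pr\bigl[(v^\top x_t)^2 < \tau_v \,\big|\, \mathcal{F}_{t - K}\bigr] \;\le\; \sqrt{2/\pi}/c(\epsilon) \;=\; \epsilon/2,
\]
where the paper's value $c(\epsilon) = \sqrt{8/\pi}/\epsilon$ is tuned precisely to make the right-hand side equal to $\epsilon/2$.

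\textbf{Step 2 (Concentration of $|B|$).} Partition $[T]$ into $k$ arithmetic progressions $AP_i = \{i, i+k, i+2k, \dots\} \cap [T]$ for $i = 1, \ldots, k$; each has $\floor{T/k}$ or $\ceil{T/k}$ elements, and consecutive elements within a single $AP_i$ are spaced by exactly $k \ge K$. By Step 1 combined with the tower property, inside each $AP_i$ the indicators $\cha\{t \in B\}$ (restricted to $t > K$) are conditionally Bernoulli with parameter $\le \epsilon/2$ given the strictly earlier elements of the same AP. A variance-sensitive martingale concentration inequality (Freedman/Bernstein) applied on each $AP_i$ then yields $\sum_{t \in AP_i} \cha\{t \in B\} \le O(\sqrt{\epsilon})\cdot |AP_i|$ with probability $\ge 1 - \exp(-\floor{T/k}/8)$. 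Summing over the $k$ APs and absorbing the at most $K \le \epsilon k /2$ early indices (for which Step 1 does not apply, and which we bound crudely by $1$ each) gives $|B| \le m/2$, closing the argument.

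\textbf{Main obstacle.} The delicate technical point is the Step 2 concentration: plain Azuma-Hoeffding is too loose because its $\exp(-\lambda^2/(2N))$-tail ignores the small conditional mean $\epsilon/2$, so a Bernstein/Freedman-style bound is needed to upgrade $|B| \le O(1)\cdot T$ to $|B| \le O(\sqrt{\epsilon})\cdot T$. Moreover, the failure probability $\exp(-\floor{T/k}/8)$ must accommodate a union bound over the $k$ APs; this is possible because the hypothesis $k \le \sqrt{\epsilon}\, T$ forces $T/k \gg \log k$ in the regime of interest, so the $\log k$ from the union bound is absorbed in the exponent. Once these concentration details are calibrated, the remainder is bookkeeping, and the genuinely new ingredient relative to the original small-ball lemma of \cite{simchowitz2018learning} is the opening reduction from ``uniform over subsets'' to ``single cardinality bound on $B$'' via the sorted-values monotonicity.
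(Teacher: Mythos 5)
Your opening reduction---from ``uniform over all subsets $\mI$ of size $\ge 5\sqrt{\epsilon}T$'' to ``the bad set $B=\{t:(v^\top x_t)^2<\tau_v\}$ has cardinality $\le m/2$''---is correct and is in fact exactly the final step of the paper's proof, so it is not new relative to \cite{simchowitz2018learning} as you suggest. Your Step 1 (conditional anti-concentration given $\mathcal{F}_{t-K}$, with $c(\epsilon)$ tuned so the failure probability is $\epsilon/2$) is also sound and is essentially the paper's Proposition \ref{prop:linear_system_small_ball_full}.

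The gap is in Step 2. You partition $[T]$ into $k$ arithmetic progressions, apply a martingale Bernstein/Freedman bound separately on each, and then must union-bound over the $k$ APs. Per-AP, with $N=\floor{T/k}$ points of conditional variance $\le\epsilon/2$ each, the failure probability for the event $\sum_{t\in AP_i}\cha\{t\in B\}>c'\sqrt{\epsilon}N$ is roughly $\exp\left(-c\sqrt{\epsilon}N\right)$ for some constant $c$, and the union bound gives $k\exp\left(-c\sqrt{\epsilon}N\right)$. For this to be $\le\exp\left(-N/8\right)$ you need $\log k\le (c\sqrt{\epsilon}-1/8)N$. This \emph{fails in general}: first, for any $\epsilon$ small enough that $c\sqrt{\epsilon}<1/8$, the right-hand side is negative and the inequality is unsatisfiable for $k\ge 2$; second, even when $c\sqrt{\epsilon}>1/8$, the hypothesis $k\le\sqrt{\epsilon}T$ only forces $T/k\ge 1/\sqrt{\epsilon}$, which does not control $\log k$ (take, e.g., $T$ astronomically large and $k=\sqrt{\epsilon}T$, so that $T/k=1/\sqrt{\epsilon}=O(1)$ while $\log k\to\infty$). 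So your claim that the hypothesis forces $T/k\gg\log k$ is incorrect, and the argument does not deliver the stated bound $\exp(-\floor{T/k}/8)$.

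The paper avoids the union bound entirely by aggregating over \emph{consecutive blocks} of size $k$ rather than over $k$ interleaved APs. The block martingale small-ball property says that, conditional on the start of each block, the \emph{expected} fraction of good indices in that block is $\ge 1-\epsilon$; a Markov-type argument upgrades this to ``the fraction of good indices in the block is $\ge 1-\sqrt{\epsilon}$ with conditional probability $\ge 1-\sqrt{\epsilon}$''; and then a \emph{single} Azuma--Hoeffding inequality on the resulting block-level indicator martingale (over $\tau=\floor{T/k}$ terms) directly gives the failure probability $\exp(-\tau/8)$, with no $\log k$ leakage. The constraint $k\le\sqrt{\epsilon}T$ plays a different role there: it only controls the $<k$ discarded indices in the last partial block. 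If you want to salvage your residue-class approach you would have to add an extra hypothesis of the form $T/k\gtrsim\log k$, and the resulting statement would no longer be what the lemma claims.
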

Given Lemma \ref{lem:fix-v}, we can show the following corollary, from which Lemma \ref{lem:cov} immediately follows.
\begin{corollary}
Fix $\epsilon\in (0,1)$, $1\le k\le \sqrt{\epsilon}T$, and let $c(\epsilon),k(\epsilon)$ as in Lemma \ref{lem:fix-v}. Then,
\begin{align*}
 \Pr\Bigg[\left\{\frac{1}{T}XX^\top \preccurlyeq  \frac{d}{\delta}\Gamma_T\right\}\land \bigg\{ \forall \  \mI\subseteq [T]\ \text{s.t.}\ |\mI|&\ge 5\sqrt{\epsilon }T\ \text{we have}\  \frac{1}{|\mI|}X_{\mI}X_{\mI}^\top\succcurlyeq  \Gamma_{\text{min}}\bigg\}\Bigg]\\
&    \ge 1-\delta -\exp\left(d\log{9}+\log\det (\Gamma_{\text{max}}\Gamma_{\text{min}}^{-1})- \frac{\floor{T/k}}{8}\right), 
\end{align*}
where $\Gamma_{\text{min}}=\frac{ 1}{4c^2(\epsilon)}\Gamma_{k(\epsilon)}$, $\Gamma_{\text{max}}=\frac{d}{5\delta\sqrt{\epsilon}}\Gamma_T$.
\end{corollary}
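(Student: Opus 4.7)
The plan is to split the target event into two pieces: the upper-bound event $\mathcal{E}_1\coloneqq \{\tfrac{1}{T}XX^\top \preccurlyeq \tfrac{d}{\delta}\Gamma_T\}$ and the uniform lower-bound event $\mathcal{E}_2\coloneqq\{\forall\, \mI\subseteq[T]\ \text{with}\ |\mI|\ge 5\sqrt{\epsilon}T,\ \tfrac{1}{|\mI|}X_{\mI}X_{\mI}^\top \succcurlyeq \Gamma_{\text{min}}\}$, and to combine them by a union bound. For $\mathcal{E}_1$, because $x_t = \sum_{s=0}^{t-1}\sA^s w_{t-1-s}$, we have $\Ex[x_tx_t^\top]=\Gamma_t\preccurlyeq \Gamma_T$, so $\Ex[\trace(\Gamma_T^{-1/2}\tfrac{1}{T}XX^\top \Gamma_T^{-1/2})]\le d$. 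Since the operator norm of a PSD matrix is at most its trace, Markov's inequality gives $\Pr[\mathcal{E}_1^c]\le \delta$. Crucially, on $\mathcal{E}_1$, every $\mI$ with $|\mI|\ge 5\sqrt{\epsilon}T$ automatically satisfies $\tfrac{1}{|\mI|}X_{\mI}X_{\mI}^\top \preccurlyeq \tfrac{1}{5\sqrt{\epsilon}}\cdot \tfrac{d}{\delta}\Gamma_T=\Gamma_{\text{max}}$, which is the ceiling that will be fed into the net argument below.

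For $\mathcal{E}_2$, the goal is to upgrade the single-$v$ statement of Lemma \ref{lem:fix-v} to a simultaneous statement over all directions. Let $\mathcal{N}$ be a $1/4$-net of the boundary of the ellipsoid $E\coloneqq\{v\in\R^d:\ v^\top \Gamma_{\text{min}}v\le 1\}$ in the norm $\|\Gamma_{\text{max}}^{1/2}(\cdot)\|_2$. A standard volumetric estimate (analogous to $|\mathcal{N}_2|$ in the proof of Claim \ref{cl:delta1}) gives $\log|\mathcal{N}|\le d\log 9+\tfrac{1}{2}\log\det(\Gamma_{\text{max}}\Gamma_{\text{min}}^{-1})$, which is absorbed into the exponential term in the statement of the corollary. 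For each $v\in \mathcal{N}$, applying Lemma \ref{lem:fix-v} to $v/\|v\|_2$ and rescaling shows that, except on a set of probability $\exp(-\floor{T/k}/8)$, every $\mI$ with $|\mI|\ge 5\sqrt{\epsilon}T$ obeys $\tfrac{1}{|\mI|}\sum_{t\in \mI}(v^\top x_t)^2\ge \tfrac{v^\top \Gamma_{k(\epsilon)}v}{2c^2(\epsilon)}= 2\,v^\top\Gamma_{\text{min}}v=2$, since $v\in \partial E$. A union bound over $\mathcal{N}$ makes the failure probability at most $\exp(d\log 9+\log\det(\Gamma_{\text{max}}\Gamma_{\text{min}}^{-1})-\floor{T/k}/8)$.

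Finally, the net bound must be promoted to all $v\in \partial E$ via the standard discretization trick (essentially as in \cite{simchowitz2018learning}, Appendix D.2): given $v\in\partial E$, pick $v'\in \mathcal{N}$ with $\|v-v'\|_{\Gamma_{\text{max}}}\le 1/4$, and expand $v^\top M v=(v')^\top M v' + 2(v')^\top M (v-v')+(v-v')^\top M (v-v')$, where $M=\tfrac{1}{|\mI|}X_{\mI}X_{\mI}^\top$. The upper bound $M\preccurlyeq \Gamma_{\text{max}}$ from $\mathcal{E}_1$ controls the residual $(v-v')^\top M (v-v')\le 1/16$ and, via Cauchy--Schwarz in the $M$-semi-inner-product, the cross term $|(v')^\top M(v-v')|\le \sqrt{(v')^\top M v'}\cdot \sqrt{(v-v')^\top M (v-v')}$. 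Combining with the net estimate $(v')^\top M v'\ge 2$, elementary algebra yields $v^\top M v\ge 1 = v^\top \Gamma_{\text{min}} v$. The main technical care is precisely in this discretization step: one must track constants to confirm that the slack factor of $2$ from Lemma \ref{lem:fix-v} suffices to absorb the $O(1/4)$ error introduced by the net approximation. Once this is verified, the corollary follows from $\Pr[(\mathcal{E}_1\cap\mathcal{E}_2)^c]\le \Pr[\mathcal{E}_1^c]+\Pr[\mathcal{E}_2^c]\le \delta + \exp(d\log 9+\log\det(\Gamma_{\text{max}}\Gamma_{\text{min}}^{-1})-\floor{T/k}/8)$.
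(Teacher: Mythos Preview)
Your proposal is correct and follows essentially the same approach as the paper, which simply notes that $\frac{1}{|\mI|}X_{\mI}X_{\mI}^\top\preccurlyeq \frac{1}{5\sqrt{\epsilon}}\cdot\frac{1}{T}XX^\top$ for $|\mI|\ge 5\sqrt{\epsilon}T$ and then defers to the net-based argument of \cite{simchowitz2018learning} (Lemma~D.1 and Section~4), i.e., exactly the Markov-for-$\mathcal{E}_1$ plus ellipsoid-net-for-$\mathcal{E}_2$ scheme you have spelled out. One small cleanup: since the discretization step uses $M\preccurlyeq \Gamma_{\text{max}}$, which is only guaranteed on $\mathcal{E}_1$, the final union bound should be written as $\Pr[(\mathcal{E}_1\cap\mathcal{E}_2)^c]\le \Pr[\mathcal{E}_1^c]+\Pr[\mathcal{E}_1\cap\mathcal{E}_2^c]$ rather than $\Pr[\mathcal{E}_1^c]+\Pr[\mathcal{E}_2^c]$.
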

\begin{proof}
Using that for any $\mI\subseteq [T]$ such that $ |\mI|\ge 5\sqrt{\epsilon }T$, we have $\frac{1}{|\mI|}X_{\mI}X_{\mI}^\top\preccurlyeq \frac{1}{5\sqrt{\epsilon}}\frac{1}{T}XX^\top$, the corollary follows from Lemma \ref{lem:fix-v} by using exactly the same (net-based) arguments as \cite{simchowitz2018learning} (Lemma D.1., and beginning of Section 4).
\end{proof}
For the rest of the section, we work on proving Lemma \ref{lem:fix-v}. We use the martingale small-ball technique of \cite{simchowitz2018learning}.
\begin{definition}[Martingale Small-Ball; \cite{simchowitz2018learning}]\label{def:bmsb} Let $\mG_t$ be a filtration, and let $(z_t)_{t \ge 1}$ be a $\{\mG_t\}_{t \ge 1}$-adapted random process taking values in $\R$. We say $(z_t)_{t\ge 1}$ satisfies the $(k,\nu,p)$-block martingale small-ball (BMSB) condition if, for any $j \ge 0$, one has $\frac{1}{k}\sum_{i=1}^k \Pr( |z_{j+i}\ |\ \ge \nu | \mG_{j}) \ge p$ almost surely.  Given a process $(x_t)_{t \ge 1}$ taking values in $\R^d$, we say that it satisfies the $(k,\Gamma_{\text{sb}},p)$-BMSB condition for $\Gamma_{\text{sb}} \succ 0$ if, for any fixed $v\in \mS^{d-1}$, the process $z_t\coloneqq \langle v, x_t\rangle$ satisfies $(k,\sqrt{v^\top \Gamma_{\text{sb}} v},p)$-BMSB.
	\end{definition}
	
	Now, suppose that a (real) process $(z_t)_t$ satisfies the $(k,\nu,1-\epsilon)$-BMSM condition. We partition the sequence $z_1,\dots,z_T$ into $\tau\coloneqq\floor{T/k}$ blocks of size $k$ (we discard the remaining terms). For $0\le\ell\le \tau-1$, we consider the events
	\begin{align*}
	    \mE_\ell\coloneqq \left\{\frac{1}{k}\sum_{j=1}^k\cha\left\{|Z_{\ell k+j}|\ge \nu \right\}\ge 1-\sqrt{\epsilon}\right\}
	\end{align*}
	\begin{claim}\label{cl:bmsb}
	$\Pr\left[\sum_{\ell=0}^{\tau-1}\cha\{\mE_\ell\} \le (1-\sqrt{\epsilon}/2)\tau\right]\le \exp(-\tau/8)$.
	\end{claim}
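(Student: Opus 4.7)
The plan is to prove Claim \ref{cl:bmsb} in two natural steps: a block-by-block conditional Markov bound that converts the BMSB hypothesis into a pointwise lower bound on $\Pr[\mE_\ell\mid\mG_{\ell k}]$, followed by a martingale concentration inequality across the $\tau$ blocks on the coarsened filtration $\{\mG_{\ell k}\}_{\ell}$.

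\textbf{Step 1 (Markov per block).} Fix $\ell\in\{0,\dots,\tau-1\}$ and define
\begin{align*}
S_\ell \;:=\; \tfrac{1}{k}\sum_{j=1}^k \cha\{|Z_{\ell k+j}|\ge\nu\},
\end{align*}
so that $\mE_\ell=\{S_\ell\ge 1-\sqrt{\epsilon}\}$. Applying the $(k,\nu,1-\epsilon)$-BMSB hypothesis with $j=\ell k$ gives $\Ex[S_\ell\mid\mG_{\ell k}]\ge 1-\epsilon$ a.s. Since $1-S_\ell\in[0,1]$ and $\Ex[1-S_\ell\mid\mG_{\ell k}]\le\epsilon$, conditional Markov yields
\begin{align*}
\Pr[\neg\mE_\ell\mid\mG_{\ell k}] \;=\; \Pr[1-S_\ell>\sqrt{\epsilon}\mid\mG_{\ell k}] \;\le\; \sqrt{\epsilon}\quad\text{a.s.},
\end{align*}
i.e.\ $p_\ell:=\Pr[\mE_\ell\mid\mG_{\ell k}]\ge 1-\sqrt{\epsilon}$ almost surely.

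\textbf{Step 2 (martingale concentration across blocks).} Each indicator $X_\ell:=\cha\{\mE_\ell\}$ is $\mG_{(\ell+1)k}$-measurable, so $\{X_\ell-p_\ell\}_{\ell=0}^{\tau-1}$ is a martingale difference sequence with increments in $[-1,1]$, adapted to $\{\mG_{\ell k}\}_\ell$. I plan to apply a sharp lower-tail martingale inequality that exploits the Bernoulli structure of each $X_\ell$ with success probability close to one. Concretely, iterating the conditional MGF bound
\begin{align*}
\Ex\!\bigl[e^{\lambda(1-X_\ell)}\,\big|\,\mG_{\ell k}\bigr] \;\le\; 1+(1-p_\ell)(e^\lambda-1) \;\le\; \exp\!\bigl((1-p_\ell)(e^\lambda-1)\bigr)
\end{align*}
through the coarsened filtration via the tower property, together with the pointwise bound $1-p_\ell\le\sqrt{\epsilon}$ from Step 1, and finally optimizing $\lambda$ against the deviation $\sqrt{\epsilon}\tau/2$, should produce the claimed conclusion
\begin{align*}
\Pr\!\Bigl[\sum_{\ell=0}^{\tau-1}\cha\{\mE_\ell\}\le(1-\sqrt{\epsilon}/2)\tau\Bigr] \;\le\; \exp(-\tau/8).
\end{align*}

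\textbf{Main obstacle.} The bulk of the work is in Step 2. A naive Azuma--Hoeffding is too crude: the deviation of interest is only $\sqrt{\epsilon}\tau/2$ above the mean lower bound $(1-\sqrt{\epsilon})\tau$, which an Azuma bound would turn into an exponent of order $\epsilon\tau$ rather than the stated $\tau/8$. The right tool is therefore a Freedman/Bernstein-type refinement that uses the fact that the total conditional variance $\sum_\ell\mathrm{Var}(X_\ell\mid\mG_{\ell k})=\sum_\ell p_\ell(1-p_\ell)\le\sqrt{\epsilon}\tau$ is itself small, matching the scale of the deviation. Carefully balancing the variance term against the deviation in the optimization of $\lambda$, while handling the (random, not deterministic) conditional means $p_\ell\ge 1-\sqrt{\epsilon}$, is the delicate bookkeeping step; I expect it to be the crux of verifying the precise constant $1/8$ in the exponent.
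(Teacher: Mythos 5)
Your Step 1 --- the conditional-Markov reduction giving $\Pr[\mE_\ell\mid\mG_{\ell k}]\ge 1-\sqrt{\epsilon}$ --- is exactly the paper's first step. Step 2, however, genuinely diverges. The paper does not reach for a Bernstein/Freedman refinement at all: it applies plain Azuma--Hoeffding to the centered differences $\Delta_\ell=\cha\{\mE_\ell\}-\Pr[\mE_\ell\mid\mG_{\ell k}]$ with $|\Delta_\ell|\le 1$, obtaining $\Pr\left[\sum_\ell\Delta_\ell\le-t\right]\le\exp(-t^2/(2\tau))$ and then setting $t=\tau(1-\sqrt{\epsilon}/2)$, using $(1-\sqrt{\epsilon}/2)^2\ge 1/4$ to land on $\exp(-\tau/8)$. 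So the "small total conditional variance" bookkeeping that you flag as the crux is not part of the paper's argument.

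More importantly, the route you sketch has a gap that no amount of constant-chasing around $\lambda$ will fix, and it is a sign issue rather than a sharpness issue. Write $X_\ell=\cha\{\mE_\ell\}$. Since $(1-\sqrt{\epsilon}/2)\tau>(1-\sqrt{\epsilon})\tau$, the threshold in the claim sits \emph{above} the pointwise lower bound $(1-\sqrt{\epsilon})\tau$ on $\sum_\ell p_\ell$; the event $\{\sum_\ell X_\ell\le(1-\sqrt{\epsilon}/2)\tau\}$ therefore forces only $\sum_\ell(1-X_\ell)\ge\sqrt{\epsilon}\tau/2$, a level that lies \emph{at or below} the worst-case conditional mean $\sum_\ell(1-p_\ell)\le\sqrt{\epsilon}\tau$ of the very quantity you want to bound. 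Iterating your MGF estimate through the tower property gives $\Pr\left[\sum_\ell(1-X_\ell)\ge\sqrt{\epsilon}\tau/2\right]\le\exp\left(\sqrt{\epsilon}\tau(e^\lambda-1)-\lambda\sqrt{\epsilon}\tau/2\right)$, whose exponent is strictly positive for every admissible $\lambda>0$ (its derivative at $\lambda=0$ is $\sqrt{\epsilon}\tau/2>0$), so the Chernoff optimization produces no decay at all. Freedman/Bernstein buys you something only when the deviation is far beyond the variance scale; here it is inside it. To make your plan work you would need a per-block estimate substantially sharper than Markov (pushing $p_\ell$ close to $1$ at a different rate), or a smaller threshold in the claim. (Incidentally, the paper's own Azuma step is terse on exactly this point: $\{\sum_\ell X_\ell\le(1-\sqrt{\epsilon}/2)\tau\}$ yields only $\sum_\ell\Delta_\ell\le+\sqrt{\epsilon}\tau/2$, not $\sum_\ell\Delta_\ell\le-\tau(1-\sqrt{\epsilon}/2)$, so reconciling the stated threshold with the Azuma event requires a slight restatement.)
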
 
	We provide a proof of Claim \ref{cl:bmsb} in Appendix \ref{appdx:prob} (Claim \ref{cl:bmsb-prob}).
	\subsubsection*{Application to LDSs}
	\begin{proposition}\label{prop:linear_system_small_ball_full} Consider the linear dynamical system $x_{t+1} = \sA x_t + w_t $, where $x_0= 0$, $w_t  \simiid \mN(0, I)$, and let  $\Gamma_t :=  \sum_{s=0}^{t-1} (\sA^{s})(\sA^s)^{\top}$. Fix an $\epsilon\in(0,1)$. Then, for $1 \le k \le T$, the process $( x_t )_{t \ge 1}$ satisfies the
\begin{align*}
   \left(k,\frac{1}{c^2(\epsilon)}\Gamma_{k(\epsilon)},1-\epsilon \right)\text{-block martingale small-ball condition},
\end{align*}
where $c(\epsilon)=\frac{\sqrt{8/\pi}}{\epsilon}$ and $k(\epsilon)=\floor{\frac{\epsilon k}{2}}$.
\end{proposition}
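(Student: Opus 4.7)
Fix an arbitrary direction $v\in\mS^{d-1}$, a base time $j\ge 0$, and condition on $\mG_j$. Unrolling the recursion gives
\begin{align*}
    x_{j+i} \;=\; \sA^{i}x_j \;+\; \sum_{s=0}^{i-1}\sA^{s}w_{j+i-1-s},
\end{align*}
so the scalar $z_{j+i}\coloneqq\langle v, x_{j+i}\rangle$, conditional on $\mG_j$, is Gaussian with random mean $\mu_i = v^\top\sA^{i}x_j$ (which is $\mG_j$-measurable) and deterministic variance $\sigma_i^2 = v^\top\Gamma_i v$. The key structural fact is that $\sigma_i^2$ does not depend on $x_j$ at all, and by the PSD-monotonicity $\Gamma_i\succcurlyeq \Gamma_{k(\epsilon)}$ for every $i\ge k(\epsilon)$, so $\sigma_i\ge \sqrt{v^\top \Gamma_{k(\epsilon)} v}$ for all such $i$.

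The next ingredient is standard one-dimensional Gaussian anti-concentration. By Anderson's inequality (equivalently, by bounding the Gaussian density by $1/(\sigma\sqrt{2\pi})$), for any $Z\sim\mN(\mu,\sigma^2)$ and any $\nu>0$,
\begin{align*}
    \Pr(|Z|\le \nu) \;\le\; \Pr(|Z-\mu|\le \nu) \;\le\; \nu\sqrt{2/\pi}/\sigma.
\end{align*}
The first inequality is the crucial point: it lets me ignore the random mean $\mu_i$ and reduce small-ball on $|z_{j+i}|$ to small-ball on a centred Gaussian with the same variance, matching exactly the Simchowitz--Mania--Tu--Recht strategy adapted to our filtration.

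Setting $\nu\coloneqq\sqrt{v^\top \Gamma_{k(\epsilon)} v}/c(\epsilon)$, which is precisely the radius $\sqrt{v^\top\Gamma_{\mathrm{sb}}v}$ demanded by the proposition, and using $c(\epsilon)=\sqrt{8/\pi}/\epsilon$ so that $\sqrt{2/\pi}/c(\epsilon)=\epsilon/2$, I would get for every $i\ge k(\epsilon)$,
\begin{align*}
    \Pr\big(|z_{j+i}|\ge \nu \,\big|\, \mG_j\big) \;\ge\; 1-\frac{\nu\sqrt{2/\pi}}{\sigma_i} \;\ge\; 1-\frac{\sqrt{2/\pi}}{c(\epsilon)} \;=\; 1-\epsilon/2.
\end{align*}
For the remaining $i<k(\epsilon)$ indices I would just use the trivial bound $\Pr\ge 0$; there are at most $k(\epsilon)\le \epsilon k/2$ of them. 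Averaging over $i=1,\dots,k$,
\begin{align*}
    \frac{1}{k}\sum_{i=1}^{k}\Pr\big(|z_{j+i}|\ge \nu \,\big|\, \mG_j\big) \;\ge\; \Big(1-\tfrac{\epsilon}{2}\Big)\cdot\Big(1-\tfrac{\epsilon}{2}\Big) \;\ge\; 1-\epsilon,
\end{align*}
which is the required BMSB estimate for this $v$. Since $v\in\mS^{d-1}$ was arbitrary and the bound is uniform in $v$, the process $(x_t)$ satisfies the stated $(k,\Gamma_{k(\epsilon)}/c^2(\epsilon),1-\epsilon)$-BMSB condition.

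\textbf{What is hard, what is not.} There is essentially no technical obstacle: the argument is a direct conditional-Gaussian computation glued together with a single one-dimensional anti-concentration bound, and all constants land exactly because of the specific choice $c(\epsilon)=\sqrt{8/\pi}/\epsilon$. The one place to be a little careful is invoking Anderson's inequality to eliminate the random conditional mean $v^\top \sA^i x_j$ — without it one would obtain a small-ball radius that degrades whenever $x_j$ is atypically large, which would ruin the uniform-in-$\mG_j$ statement needed to feed Claim~\ref{cl:bmsb} and hence Lemma~\ref{lem:fix-v}.
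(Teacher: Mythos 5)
Your proof is correct and takes essentially the same route as the paper: condition on $\mG_j$, observe that $v^\top x_{j+i}$ is Gaussian with $\mG_j$-measurable mean $v^\top \sA^i x_j$ and deterministic variance $v^\top \Gamma_i v$, apply scalar Gaussian anti-concentration, use $\Gamma_i\succcurlyeq \Gamma_{k(\epsilon)}$ for $i\ge k(\epsilon)$, and average over the block discarding the first $k(\epsilon)$ terms. One small remark: invoking Anderson's lemma to pass from $\Pr(|Z|\le\nu)$ to $\Pr(|Z-\mu|\le\nu)$ is unnecessary — the paper's Claim~\ref{cl:gaus-antic} bounds $\Pr[|Z|<\sigma/c]\le \sqrt{2/\pi}/c$ for \emph{arbitrary} mean directly, by upper-bounding the Gaussian density by $1/(\sigma\sqrt{2\pi})$ over the interval $(-\sigma/c,\sigma/c)$, which already gives a bound uniform in $\mu$ — you note this equivalence yourself parenthetically, so the argument goes through either way.
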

\begin{proof}
Consider the filtration $\mG_t\coloneqq \sigma(w_0,w_1,\dots,w_{t-1})$. Fix a $v\in \mS^{d-1}$. We have 
\begin{align*}
   v^\top x_{s+j}=v^\top \sA^jx_s +v^\top \sum_{i=0}^{j-1}\sA^iw_{s+j-i-1}.
\end{align*}
Given $\mG_s$, $v^\top \sum_{i=0}^{j-1}\sA^iw_{s+j-i-1}\sim \mN(0,v^
\top \Gamma_j v)$. From Claim \ref{cl:gaus-antic} (Appendix \ref{appdx:prob}), for any $c>0$,
\begin{align*}
    \Pr\left[|v^\top x_{s+j}|< \frac{v^
\top \Gamma_j v}{c}\ \bigg|\ \mG_s \right]\le \frac{\sqrt{2/\pi}}{c}.
\end{align*}
We choose $c=\frac{2\sqrt{2/\pi}}{\epsilon}$. Using that for all $j\ge k(\epsilon)$, $\Gamma_j \succcurlyeq \Gamma_{k(\epsilon)}$, we get
\begin{align*}
   \frac{1}{k}\sum_{j=0}^{k} \Pr\left[|v^\top x_{s+j}|\ge \frac{\sqrt{v^
\top \Gamma_{k(\epsilon)} v}}{c}\ \Bigg|\ \mG_s \right]& \ge \frac{1}{k}\sum_{j=k(\epsilon)}^{k} \Pr\left[|v^\top x_{s+j}|\ge \frac{\sqrt{v^
\top \Gamma_{k(\epsilon)} v}}{c}\ \Bigg|\ \mG_s \right] \\
&\ge \frac{1}{k}\sum_{j=k(\epsilon)}^{k} \Pr\left[|v^\top x_{s+j}| \ge \frac{\sqrt{v^
\top \Gamma_j v}}{c}\ \Bigg|\ \mG_s \right] \\
&\ge \frac{k-k(\epsilon)}{k}\left(1-\frac{\sqrt{2/\pi}}{c}\right)\ge (1-\epsilon/2)^2\ge 1-\epsilon.
\end{align*}
\end{proof}

Now, fix a $v\in \mS^{d-1}$, a $k\in [T]$, and an $\epsilon \in (0,1)$. Let $\sigma^2(v,\epsilon)\coloneqq \frac{1}{c^2(\epsilon)}v^\top \Gamma_{k(\epsilon)}v$. We define the events: 
\begin{align*}
\mE_\ell\coloneqq \left\{\frac{1}{k} \sum_{j=1}^{k}\mathds{1}\left\{ (v^\top x_{\ell k+j})^2\ge \sigma^2(v,\epsilon)p\right\} \ge 1 - \sqrt{\epsilon} \right\}, 
\end{align*}
for all $0 \le \ell \le \tau - 1$. Observe that Claim \ref{cl:bmsb} and Proposition \ref{prop:linear_system_small_ball_full} imply
\begin{align*}
    \Pr\left[\sum_{\ell =0}^{\tau -1}\mathds{1}\left\{\mE_\ell\right\}\le (1-\sqrt{\epsilon}/2) \tau \right] \le \exp\left(- \frac{\tau}{8}\right),
\end{align*}
which implies 
\begin{align*}
    \Pr\left[\sum_{t =1}^{T}\mathds{1}\left\{(v^{\top}x_t)^2\ge \sigma^2(v,\epsilon)\right\}\le (1-\sqrt{\epsilon}/2)(1-\sqrt{\epsilon})k \tau \right] \le \exp\left(- \frac{\tau}{8}\right).
\end{align*}
For $k\le\floor{\sqrt{\epsilon}T}$, we have $T-\tau k\le \sqrt{\epsilon}T$, and so 
\begin{align*}
    \Pr\left[\sum_{t =1}^{T}\mathds{1}\left\{(v^{\top}x_t)^2\ge \sigma^2(v,\epsilon)\right\}\le (1-5\sqrt{\epsilon}/2)T \right] \le \exp\left(- \frac{\tau}{8}\right).
\end{align*}
Observe that this implies
\begin{align*}
    \Pr\left[\exists\  \mI\subseteq [T]:\ |\mI|\ge 5\sqrt{\epsilon }T\ \ \text{and}\ \ \frac{1}{|\mI|}\sum_{t \in \mI}(v^{\top}x_t)^2 < \frac{\sigma^2(v,\epsilon)}{2}\right] \le \exp\left(- \frac{\tau}{8}\right).\ \ \blacksquare
\end{align*}

\section{Technical Claims}\label{appdx:prob}
\begin{claim}\label{appdx:prob1}
Let $z\sim \mN(\mu,I,S)$, and $\gamma=\mN(\mu,I;S)$. Then, $\|z-\mu\|_{\psi_2}^2\le O(1\lor \log{(1/\gamma)})$.
\end{claim}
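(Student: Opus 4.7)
The plan is to reduce to a one-dimensional tail bound via the identity $\|z-\mu\|_{\psi_2} = \sup_{v \in \mathbb{S}^{d-1}} \|v^\top(z-\mu)\|_{\psi_2}$ from Appendix~\ref{appdx:not}, and then convert a Gaussian tail into a sub-Gaussian norm. Fix an arbitrary direction $v \in \mathbb{S}^{d-1}$ and set $X := v^\top(z-\mu)$. Under the \emph{untruncated} law $\widetilde{z} \sim \mN(\mu, I)$, the projection $v^\top(\widetilde{z}-\mu)$ is standard normal, so $\Pr[\,|v^\top(\widetilde{z}-\mu)| \ge t\,] \le 2 e^{-t^2/2}$ for all $t \ge 0$.

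First I would lift this bound to the truncated distribution. Since conditioning on an event of probability $\gamma$ inflates probabilities by at most a factor $1/\gamma$,
\[
\Pr[\,|X| \ge t\,] \;\le\; \min\!\Big\{1,\;\tfrac{2}{\gamma}\, e^{-t^2/2}\Big\}.
\]
The trivial branch $\le 1$ dominates for small $t$, while the Gaussian branch dominates once $t$ is large enough to overcome the $1/\gamma$ loss; this split is the sole source of the $\log(1/\gamma)$ factor in the final bound.

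Next, I would verify that this RHS is bounded by $2 e^{-t^2/(4\sigma^2)}$ for \emph{every} $t \ge 0$, provided $\sigma^2 = C\!\cdot\!(1 \lor \log(1/\gamma))$ for a sufficiently large absolute constant $C$. Setting the crossover at $t_0 := 2\sigma\sqrt{\log 2}$, the inequality holds by the trivial bound when $t \le t_0$ (the exponential RHS is then $\ge 1$); for $t \ge t_0$ it rearranges to $t^2\bigl(\tfrac{1}{2} - \tfrac{1}{4\sigma^2}\bigr) \ge \log(1/\gamma)$, which follows from $t \ge t_0$ and $\sigma^2 \ge \log(1/\gamma)/\log 2$. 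The $1$ in $1 \lor \log(1/\gamma)$ covers the regime $\gamma = \Omega(1)$.

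Finally, the standard equivalence between sub-Gaussian characterizations (e.g., Proposition~2.5.2 of \cite{vershynin2018high}) converts the tail $\Pr[|X| \ge t] \le 2 e^{-t^2/(4\sigma^2)}$ into $\|X\|_{\psi_2}^2 = O(\sigma^2) = O(1 \lor \log(1/\gamma))$. Since this bound is uniform in $v$, taking the supremum gives the claim. I do not anticipate a real obstacle here: everything rides on the elementary conditioning inequality $\Pr[A\mid B] \le \Pr[A]/\Pr[B]$, with the only care being to track absolute constants through the two-regime split so that the final $\gamma$-dependence is exactly $1 \lor \log(1/\gamma)$ rather than something weaker like $1/\gamma$.
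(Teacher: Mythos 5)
Your proof is correct, and it takes a genuinely different and simpler route than the paper. The paper's proof constructs an auxiliary family of roughly $\lceil 1/\gamma\rceil$ i.i.d.\ untruncated Gaussians, bounds the tail of their coordinatewise maximum, and then relates that maximum to the truncated distribution through the observation that at least one auxiliary sample lands in $S$ with probability $>1/2$; after some algebra (including the convexity inequality $e^{-\theta}\le 1-\theta(1-1/e)$) this yields a tail bound of the form $\Pr[|z_1-\mu_1|\ge \sqrt{2\log(1/\gamma)}+x]\le 4e^{-x^2/2}$. Your argument replaces all of that machinery by the one-line conditioning inequality $\Pr[A\mid B]\le \Pr[A]/\Pr[B]$, which directly gives $\Pr[|v^\top(z-\mu)|\ge t]\le \min\{1,\tfrac{2}{\gamma}e^{-t^2/2}\}$; this is in fact at least as tight as the paper's bound (for $t=\sqrt{2\log(1/\gamma)}+x$ it evaluates to $2e^{-x^2/2-x\sqrt{2\log(1/\gamma)}}\le 2e^{-x^2/2}$). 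Your two-regime verification that this is dominated by $2e^{-t^2/(4\sigma^2)}$ for $\sigma^2=C(1\lor\log(1/\gamma))$ and an absolute constant $C\ge 1/\log 2$ is elementary and checks out, and the final passage from tail bound to $\psi_2$ norm via the standard sub-Gaussian equivalence is exactly what's needed. The reduction to one dimension via the directional definition of $\|\cdot\|_{\psi_2}$ matches the paper's notation. In short: same conclusion, but your proof is shorter, more transparent, and avoids the somewhat indirect i.i.d.-sampling device the paper inherits from the truncated-statistics literature.
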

\begin{proof}
It suffices to show that for some absolute constant $c > 0$, and for all $v \in \mS^{d-1}$,
\begin{align}\label{myrto3}
    \Pr\left[|v^\top z - v^\top\mu |\ge t\right] \le \exp\left(-\frac{t^2}{c \cdot \left(1\lor \log\left(\frac{1}{\gamma}\right)\right)}\right)
\end{align}
Observe that it suffices to show it for $v=(1,0,\cdots,0)$ (otherwise we can just change coordinates). So, it suffices to prove
\begin{align*}
    \Pr\left[|z_1 -\mu_1 |\ge t\right] \le \exp\left(-\frac{t^2}{c \cdot \left(1\lor \log\left(\frac{1}{\gamma}\right)\right)}\right)
\end{align*}
Now, consider samples $\xi_1,\dots,\xi_{\ceil{\frac{1}{\gamma}}} \simiid \mN(\mu,I) $. From \cite{van2014probability},
\begin{align*}
\text{$\forall x>0$,}\ \Pr\left[\max_i|\xi_{i,1} - \mu_1|\ge \sqrt{2\log\frac{1}{\gamma}} + x\right]\le 2 e^{-x^2/2}.
\end{align*}
Also, note that $\Pr\left[\exists i: \xi_i \in S\right] \ge 1 - (1 - \gamma)^{\ceil{\frac{1}{\gamma}}}>1/2$.
Let 
\begin{align*}
    p \coloneqq \Pr\left[|z_1 - \mu_1| \ge  t\right]= \Pr\left[|z_1 - \mu_1| \ge  \sqrt{2\log\frac{1}{\gamma}}+t-\sqrt{2\log\frac{1}{\gamma}}\right].
\end{align*}
For $t \ge \sqrt{2\log\frac{1}{\gamma}}$, let $x = t - \sqrt{2\log\frac{1}{\gamma}} \ge 0$, and so $p = \Pr  \left[|z_1 - \mu_1| \ge \sqrt{2\log\frac{1}{\gamma}} + x \right]$, and
\begin{align*}
    \Pr\left[\max_i |\xi_{i,1} - \mu_1| \ge \sqrt{2\log\frac{1}{\gamma}} + x \right]&= 1 - \prod_i \Pr \left[|\xi_{i,1} - \mu_1| < \sqrt{2\log\frac{1}{\gamma}} + x \right] \\
    &= 1 - \prod_i\left(1-\Pr\left[|\xi_{i,1} - \mu_1| \ge \sqrt{2\ln\frac{1}{\gamma}} + x \right]\right)\\
    &\ge 1 - \prod_i\left(1-\Pr\left[\xi_i \in S\right]\cdot\Pr\left[|\xi_{i,1} - \mu_1| \ge \sqrt{2\ln\frac{1}{\gamma}} + x\  \Big|\  \xi_i \in S\right]\right)
\end{align*}
Now, $\Pr\left[\xi_i \in S\right] = \gamma$, and given $\xi_i \in S$, we have $\xi_i \sim \mN(\mu,I,S)$, and so observe that 
\begin{align*}
    \Pr\left[|\xi_{i,1} - \mu_1|\ge \sqrt{2\ln\frac{1}{\gamma}} + x \ \bigg|\ \xi_i\in S\right] = p, \ \text{Thus,}
\end{align*}
Thus, 
\begin{align*}
    2e^{-x^2/2}\ge\Pr\left[\max_i |\xi_{i,1} - \mu_1| \ge \sqrt{2\ln\frac{1}{\gamma}} + x \right]
    \ge 1 - \prod_i\left( 1 - \gamma p \right) 
    = 1 - \left( 1 - \gamma p \right)^{\ceil{\frac{1}{\gamma}}}\ge  1 -e^{-p}
\end{align*}
Now, observe that due to $e^{- \theta}$ being convex in $[0,1]$, we have $e^{-\theta}\le 1-\theta(1-1/e)$, which allows us to conclude that $p\le 4e^{-x^2/2}$. Let $c(\gamma)=\sqrt{2\log(1/\gamma)}$. We showed that for $t\ge c(\gamma)$,
\begin{align}\label{myrto5}
    \Pr\left[ |z_1 - \mu_1| \ge t \right] \le 2\cdot\exp\left(-\frac{\left(t-c(\gamma)\right)}{2}^2+\log 2\right).
\end{align}
From here, straightforward calculations complete the proof.
\end{proof}

\begin{claim}\label{appdx:prob2}
Let $X$ be a random vector in $\R^d$, such that $\|X\|_{\psi_2}^2\le \sigma^2$. Then, $\|X\|_2$ is $\O(\sigma^2 d\log{(2d)})$-subgaussian.
\end{claim}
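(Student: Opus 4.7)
The plan is to reduce $\|X\|_2 = \sup_{v \in S^{d-1}} \langle v, X\rangle$ to a finite maximum via a standard $\varepsilon$-net on the unit sphere, and then to apply a union bound using the hypothesis that every one-dimensional projection $\langle v, X\rangle$ is $\sigma^2$-subgaussian.

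\textbf{Step 1 (net).} First I would take a $1/2$-net $\mathcal{N} \subseteq S^{d-1}$ of the unit sphere with $|\mathcal{N}| \le 5^d$, whose existence is standard (e.g., a volumetric covering argument). A well-known net inequality (see, e.g., \cite{vershynin2018high}) yields
\begin{align*}
    \|X\|_2 \ \le\ 2 \max_{v \in \mathcal{N}} \langle v, X\rangle.
\end{align*}
This step is what lets us pass from an uncountable supremum over the sphere to a discrete maximum over an exponential-sized set.

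\textbf{Step 2 (union bound on the net).} For each fixed $v \in \mathcal{N}$, the hypothesis $\|X\|_{\psi_2}^2 \le \sigma^2$ gives $\Pr[|\langle v, X\rangle| \ge t] \le 2\exp(-c t^2/\sigma^2)$ for an absolute constant $c > 0$. Applying a union bound over the net,
\begin{align*}
    \Pr\big[\|X\|_2 \ge 2t\big] \ \le\ \Pr\!\left[\max_{v \in \mathcal{N}} |\langle v, X\rangle| \ge t\right] \ \le\ 2 \cdot 5^d \cdot \exp\!\left(-\frac{c t^2}{\sigma^2}\right).
\end{align*}

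\textbf{Step 3 (matching the subgaussian definition).} Finally I would pick a constant $C_1$ large enough that for all $t^2 \ge C_1 \sigma^2 d$ we have $c t^2/\sigma^2 \ge 2 d\log 5 + t^2/(2 C_1 \sigma^2 d)$, yielding $\Pr[\|X\|_2 \ge 2t] \le 2\exp(-t^2/(2 C_1 \sigma^2 d))$. For the remaining range $t^2 < C_1 \sigma^2 d$, the trivial bound $\Pr[\|X\|_2 \ge 2t] \le 1$ is absorbed by enlarging the constant. This establishes that $\|X\|_2$ is $O(\sigma^2 d)$-subgaussian, which is in fact stronger than the claimed $O(\sigma^2 d\log(2d))$ bound.

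\textbf{Main obstacle.} There is no substantive obstacle: the argument is routine once one invokes the net comparison $\|X\|_2 \le 2\max_{v\in\mathcal{N}}\langle v,X\rangle$. The only care needed is in Step 3, to translate the tail bound with an $e^d$ prefactor into the form demanded by the paper's definition of subgaussianity by appropriately choosing the implicit constant.
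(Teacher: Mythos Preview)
Your proposal is correct and in fact yields the sharper conclusion that $\|X\|_2$ is $O(\sigma^2 d)$-subgaussian. However, it takes a genuinely different route from the paper. The paper's argument is more elementary: it simply applies the subgaussian tail bound to each \emph{coordinate} $X_i = \langle e_i, X\rangle$, union-bounds over the $d$ standard basis vectors to get $\Pr[\max_i |X_i| \ge t] \le 2d\,e^{-\Omega(t^2/\sigma^2)}$, and then uses the crude inequality $\|X\|_2 \le \sqrt{d}\,\max_i |X_i|$; the residual factor $d$ in front of the exponential is what produces the extra $\log(2d)$ in the subgaussian parameter. Your $\tfrac12$-net argument replaces the $d$-point coordinate cover by a $5^d$-point cover of the whole sphere together with the sharper comparison $\|X\|_2 \le 2\max_{v\in\mathcal N}\langle v,X\rangle$; the exponential-in-$d$ prefactor is then absorbed into the subgaussian parameter without a logarithmic loss. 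So your approach buys a $\log(2d)$ improvement at the cost of invoking a covering argument, while the paper's approach is a two-line union bound that suffices for their purposes since the $\log(2d)$ is swallowed by the ambient $\tO(\cdot)$.
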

\begin{proof}
For all $i$, $\Pr\left[x_i^2\ge t^2\right]\le2\cdot\exp\left(-\Omega(t^2/\sigma^2)\right)$. Letting $t=\Theta(\sigma)\cdot\sqrt{\log\left(\frac{2d}{\delta}\right)}$ and applying union-bound finish the proof.
\end{proof}
\begin{claim}\label{appdx:prob3-max}
Let $M$ be the maximum of $T$ nonnegative $\sigma^2$-subgaussian random variables (not necessarily independent). Then, for all $q\ge 1$,
\begin{align*}
    \Ex[M^{q}]\le \left(\sqrt{2\sigma^2\log{T}}+C\sigma \sqrt{q}\right)^q,
\end{align*}
where $C>0$ is an absolute constant.
\end{claim}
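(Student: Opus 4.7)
The plan is a standard peeling argument combined with Minkowski's inequality in $L^q$. First, I would extract from the $\sigma^2$-subgaussian hypothesis the one-sided tail $\Pr[X_i > t] \leq \exp\!\left(-t^2/(c\sigma^2)\right)$ for some absolute constant $c > 0$, and then apply a union bound to obtain
\begin{align*}
    \Pr[M > t] ~\leq~ \min\!\left\{1,\ T\exp\!\left(-t^2/(c\sigma^2)\right)\right\}.
\end{align*}
The key parameter is the threshold $t_0$ at which the union bound starts to dominate the trivial bound; choosing $t_0 \coloneqq \sqrt{2\sigma^2 \log T}$ (adjusting $c$ if necessary so that $T\exp(-t_0^2/(c\sigma^2)) \leq 1$) makes the tail behave like a single subgaussian past $t_0$.

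Next, I would use the pointwise inequality $M \leq t_0 + (M - t_0)_+$, together with Minkowski's inequality $\|M\|_q \leq t_0 + \|(M-t_0)_+\|_q$. The decomposition $(t_0+s)^2 \geq t_0^2 + s^2$ combined with the choice of $t_0$ yields, for $s \geq 0$,
\begin{align*}
    \Pr[(M - t_0)_+ > s] ~=~ \Pr[M > t_0 + s] ~\leq~ T\exp\!\left(-\tfrac{t_0^2 + s^2}{c\sigma^2}\right) ~\leq~ \exp\!\left(-s^2/(c\sigma^2)\right).
\end{align*}

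Then, by the layer-cake formula and a change of variables,
\begin{align*}
    \|(M-t_0)_+\|_q^q ~=~ \int_0^\infty q s^{q-1}\Pr[(M-t_0)_+ > s]\, ds ~\leq~ \int_0^\infty q s^{q-1}\exp\!\left(-s^2/(c\sigma^2)\right) ds ~=~ (c\sigma^2)^{q/2}\,\Gamma(q/2 + 1).
\end{align*}
Taking $q$-th roots and invoking Stirling's approximation $\Gamma(q/2+1)^{1/q} \leq C'\sqrt{q}$ for an absolute constant $C'$, I obtain $\|(M-t_0)_+\|_q \leq C\sigma\sqrt{q}$ for an absolute constant $C$. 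Combining with Minkowski gives $\|M\|_q \leq \sqrt{2\sigma^2\log T} + C\sigma\sqrt{q}$, and raising to the $q$-th power yields the claimed bound $\Ex[M^q] \leq (\sqrt{2\sigma^2\log T} + C\sigma\sqrt{q})^q$.

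There is no real obstacle here; the only minor point requiring care is tracking the absolute constant in the subgaussian tail so that the leading coefficient in front of $\sqrt{\sigma^2\log T}$ can be taken to be $\sqrt{2}$, which may require absorbing constants into $C$ and/or slightly enlarging $t_0$. All the other steps (layer-cake, Minkowski, Gaussian integral, Stirling) are entirely routine.
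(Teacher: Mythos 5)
Your proof is correct and takes essentially the same route as the paper: both set the threshold $B = \sqrt{2\sigma^2\log T}$, observe that the overshoot $M\lor B - B$ (your $(M-t_0)_+$) has a subgaussian tail, bound its $L^q$ norm by $O(\sigma\sqrt{q})$, and finish with the triangle inequality. The only difference is cosmetic: the paper delegates the two intermediate facts to citations (van Handel for the max-tail bound, Vershynin for converting a subgaussian tail into an $L^p$ bound), whereas you carry them out explicitly via the union bound, the layer-cake formula, and Stirling's approximation.
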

\begin{proof}
Let $B:=\sqrt{2\sigma^2\log T}$. From \cite{van2014probability}, $\Pr[M\ge B+x]\le e^{-x^2/(2\sigma^2)}$. Let $Y\coloneqq M\lor B - B\ge 0$. For all $x>0$, $\Pr[Y\ge x]\le e^{-x^2/(2\sigma^2)}$. This implies that for all $p\ge 1$, $\|Y\|_p \le O(\sigma \sqrt{p})$ (\cite{vershynin2018high}). Thus, $\|M\|_p \le \|M\lor B\|_p \le B + O(\sigma)\sqrt{p}$.
\end{proof}
\begin{claim}\label{cl:chernoff}
$\Pr[\neg C_t\ ]\le 1/T^2$.
\end{claim}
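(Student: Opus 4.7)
The plan is to condition on $\mathcal{F}_t$ and on the event $O_t=\{o_t=1\}$, and then apply standard Chernoff bounds to the Bernoulli sample drawn inside the Test function. On $O_t$, the algorithm executes iteration $i(t)$, during which the Test function draws $\xi_1,\dots,\xi_k \simiid \mathcal{N}(\mu_t,I)$ independently of $\mathcal{F}_t$ (the $\xi_j$'s are fresh internal randomness). Hence, conditionally on $\mathcal{F}_t$ and $O_t$, the sum $Y := kp = \sum_{j=1}^k \mathds{1}\{\xi_j \in S_t\}$ is $\mathrm{Binomial}(k,\gamma_t)$ with $\gamma_t=\mathcal{N}(\mu_t,I;S_t)$ an $\mathcal{F}_t$-measurable quantity. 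Moreover, the Test returns True iff $Y \ge 2k\gamma$.

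I would then decompose $\{\neg C_t\}\cap O_t$ into the two disjoint bad events
\[
A_1 := \{Y \ge 2k\gamma,\ \gamma_t < \gamma\}, \qquad A_2 := \{Y < 2k\gamma,\ \gamma_t > 4\gamma\},
\]
and bound each conditionally on $\gamma_t$. For $A_1$: when $\gamma_t<\gamma$, the random variable $Y$ is stochastically dominated by $Y' \sim \mathrm{Binomial}(k,\gamma)$ with mean $k\gamma$, so the multiplicative Chernoff upper tail gives
\[
\Pr[Y \ge 2k\gamma \mid \gamma_t] \le \Pr[Y' \ge 2\mathbb{E}[Y']] \le \exp(-k\gamma/3).
\]
For $A_2$: when $\gamma_t>4\gamma$ we have $2k\gamma < k\gamma_t/2$, so the multiplicative Chernoff lower tail yields
\[
\Pr[Y < 2k\gamma \mid \gamma_t] \le \Pr[Y < k\gamma_t/2 \mid \gamma_t] \le \exp(-k\gamma_t/8) \le \exp(-k\gamma/2).
\]
Plugging in $k=(4/\gamma)\log T$ (taking the constant $c_\gamma$, and if needed the constant $4$ in the definition of $k$, a bit larger to absorb the factor $1/3$) makes both exponentials at most $\tfrac12 T^{-2}$; taking expectation over $\mathcal{F}_t$ and using $\{\neg C_t\}\subseteq O_t$ gives $\Pr[\neg C_t] \le T^{-2}$. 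The only mildly delicate point is the independence of the $\xi_j$'s from $\mathcal{F}_t$, which follows from the fact that they are internal algorithmic randomness; apart from that, this is a one-shot Chernoff calculation with no further obstacle.
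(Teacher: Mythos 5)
Your proof is correct and takes essentially the same approach as the paper: condition on $\mF_t$ and $O_t$, observe that the Test count $kp$ is Binomial$(k,\gamma_t)$ with fresh internal randomness, split $\neg C_t$ into the two disjoint bad events ($\gamma_t<\gamma$ but Test True; $\gamma_t>4\gamma$ but Test False), and bound each by a one-shot multiplicative Chernoff bound, using $k=\Theta(\gamma^{-1}\log T)$. The only cosmetic difference is that for the upper tail you route through stochastic domination by Binomial$(k,\gamma)$ and $\exp(-k\gamma/3)$, whereas the paper applies the Chernoff bound directly at parameter $\gamma_t$ and gets $\exp(-k\gamma)$; both land at $\poly(1/T)$ after adjusting constants, as you note.
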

\begin{proof}
Given $\mF_t$ and $O_t$, let $i=i(t)$, and consider $p$, $k$, and the $\xi_j$'s of the Test at iteration $i$. Then $\mathbb{E}[p\ |\ \mF_t, O_t]=\gamma_t$. We consider two cases, and we use standard Chernoff-bounds in both of them.
\begin{enumerate}
    \item If $\gamma_t> 4\gamma$, then $\Pr[p< 2\gamma]\le \exp(-k\gamma/2)\le 1/T^2$.
    \item If $\gamma_t < \gamma$, then $\Pr[p\ge 2\gamma]\le  \exp \left(-\frac{(2\gamma/\gamma_t)^2}{2+2\beta/\gamma_t}\cdot \gamma_t \cdot k\right)\le \exp(-k \gamma )\le 1/T^2$.
\end{enumerate}
\end{proof}
\begin{claim}\label{cl:gaus-antic}
Let $z\sim \mN(\mu,\sigma^2)$ be a one-dimensional normal. Fix a $c>0$. Then, $\Pr[|z|<\sigma/c]\le \frac{\sqrt{2/\pi}}{c}$.
\end{claim}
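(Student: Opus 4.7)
The plan is to reduce this to the trivial observation that any Gaussian density is pointwise bounded by its peak value, and then to bound the probability of landing in a short interval by (peak density) $\times$ (length of interval).

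Concretely, I would first write the probability as a density integral:
\begin{align*}
\Pr[|z|<\sigma/c] \eq \int_{-\sigma/c}^{\sigma/c} \frac{1}{\sigma\sqrt{2\pi}}\exp\!\lr{-\frac{(x-\mu)^2}{2\sigma^2}}\,dx.
\end{align*}
Since $\exp(-(x-\mu)^2/(2\sigma^2)) \le 1$ for every $x$ (and every $\mu$), the integrand is pointwise bounded by $\frac{1}{\sigma\sqrt{2\pi}}$. The interval of integration has length $2\sigma/c$, so
\begin{align*}
\Pr[|z|<\sigma/c] \leq \frac{1}{\sigma\sqrt{2\pi}}\cdot \frac{2\sigma}{c} \eq \frac{2}{c\sqrt{2\pi}} \eq \frac{\sqrt{2/\pi}}{c}.
\end{align*}

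I do not expect any real obstacle here; the only subtlety worth mentioning is that the bound holds uniformly in $\mu$ because we are using the $L^\infty$ bound on the density rather than exploiting where the Gaussian is centered. In particular, the worst case is $\mu=0$, where the integral is a Mills-ratio-type quantity that is at most the crude box bound $2\sigma/c \cdot (\sigma\sqrt{2\pi})^{-1}$. This is precisely the constant $\sqrt{2/\pi}$ that appears in the statement, so the bound is tight up to the quality of the $L^\infty$ approximation of the Gaussian density.
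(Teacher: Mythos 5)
Your proof is correct and takes essentially the same approach as the paper: bound the Gaussian density pointwise by its peak value $\frac{1}{\sigma\sqrt{2\pi}}$ (equivalently, bound the exponential factor by $1$) and multiply by the interval length $2\sigma/c$.
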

\begin{proof}
$\Pr[|z|\le \sigma/c]=\frac{1}{\sqrt{2\pi}\sigma}\int_{-\sigma/c}^{\sigma/c}e^{-(x-\mu^2)/(2\sigma^2)}dx\le \frac{1}{\sqrt{2\pi}\sigma}\int_{-\sigma/c}^{\sigma/c} dx\le  \frac{\sqrt{2/\pi}}{c}$.
\end{proof}

	\begin{claim}\label{cl:bmsb-prob}
	$\Pr\left[\sum_{\ell=0}^{\tau-1}\cha\{\mE_\ell\} \le (1-\sqrt{\epsilon}/2)\tau\right]\le \exp(-\tau/8)$.
	\end{claim}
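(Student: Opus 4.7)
The plan is to reduce the claim to a martingale concentration inequality on the coarser time scale of blocks, by first using Markov's inequality to control each block in isolation and then using Azuma--Hoeffding to combine across blocks. This blocking$+$Markov$+$Azuma pipeline is the standard treatment of BMSB-type processes, mirroring the proof of Proposition~2.5 in \cite{simchowitz2018learning}.

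The per-block step goes as follows. Setting $Y_\ell \coloneqq \tfrac{1}{k}\sum_{j=1}^{k}\cha\{|z_{\ell k+j}|\ge \nu\}$, so that $\mE_\ell = \{Y_\ell \ge 1-\sqrt{\epsilon}\}$, the $(k,\nu,1-\epsilon)$-BMSB condition applied at the starting index $j=\ell k$ gives $\mathbb{E}[1-Y_\ell \mid \mG_{\ell k}] \le \epsilon$. Since $1-Y_\ell \in [0,1]$ is nonnegative, Markov's inequality yields $\Pr[1-Y_\ell \ge \sqrt{\epsilon} \mid \mG_{\ell k}] \le \epsilon/\sqrt{\epsilon} = \sqrt{\epsilon}$, and therefore $q_\ell \coloneqq \Pr[\mE_\ell \mid \mG_{\ell k}] \ge 1-\sqrt{\epsilon}$ almost surely.

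The across-block step then introduces the bounded ($|D_\ell|\le 1$) martingale-difference sequence $D_\ell \coloneqq \cha\{\mE_\ell\} - q_\ell$, adapted to $(\mG_{\ell k})_{\ell \ge 0}$. Decomposing $\sum_{\ell=0}^{\tau-1}\cha\{\mE_\ell\} = \sum_\ell q_\ell + \sum_\ell D_\ell$ and using $\sum_\ell q_\ell \ge (1-\sqrt{\epsilon})\tau$, the target event $\{\sum \cha\{\mE_\ell\} \le (1-\sqrt{\epsilon}/2)\tau\}$ implies $\sum_\ell D_\ell \le (1-\sqrt{\epsilon}/2)\tau - \sum_\ell q_\ell$. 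Azuma--Hoeffding, $\Pr[\sum_\ell D_\ell \le -u] \le \exp(-u^2/(2\tau))$, applied with $u$ proportional to $\sqrt{\epsilon}\,\tau$, then closes the argument.

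The main technical difficulty is the precise constant in the exponent. A vanilla Markov$+$Azuma pipeline produces an exponent of order $\epsilon\tau/8$, not the $\tau/8$ appearing in the stated bound: when every $q_\ell$ sits at its lower bound $1-\sqrt{\epsilon}$, the conditional mean $\sum_\ell q_\ell$ is itself near the threshold $(1-\sqrt{\epsilon}/2)\tau$, and only a gap of order $\sqrt{\epsilon}\,\tau$ separates the target event from typical behaviour. I therefore read the $\exp(-\tau/8)$ in the statement as a minor notational oversight---the exponent should really be $\Omega(\epsilon\tau)$---which does not affect the downstream use in Lemma~\ref{lem:fix-v}, because $\epsilon$ is a $\poly(1/\alpha,1/\beta)$ constant in the final deployment and the missing factor is absorbed into those polynomial constants. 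Recovering the verbatim $\tau/8$ exponent would require either an extra conditional-variance bound on the block indicators (enabling a Bernstein/Freedman upgrade, using $\mathrm{Var}(D_\ell \mid \mG_{\ell k}) = q_\ell(1-q_\ell) \le \sqrt{\epsilon}$) or a sharper within-block concentration for $Y_\ell$ that BMSB alone does not provide.
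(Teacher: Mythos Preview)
Your pipeline (Markov within blocks to get $q_\ell\coloneqq\Pr[\mE_\ell\mid\mG_{\ell k}]\ge 1-\sqrt{\epsilon}$, then Azuma--Hoeffding on the martingale differences $D_\ell=\cha\{\mE_\ell\}-q_\ell$) is exactly the paper's proof: the paper's auxiliary proposition is just your Markov step in disguise, and the paper also finishes with Azuma on the $D_\ell$'s. So at the structural level you have reproduced the paper's argument.

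You are right that the argument does not yield the stated $\exp(-\tau/8)$, but your diagnosis of \emph{why} is off by a sign, and this matters. You write that ``only a gap of order $\sqrt{\epsilon}\,\tau$ separates the target event from typical behaviour'' and then invoke $\Pr[\sum D_\ell\le -u]$ with $u\asymp\sqrt{\epsilon}\,\tau$. But compute the direction: the worst-case conditional mean is $(1-\sqrt{\epsilon})\tau$, which lies \emph{below} the threshold $(1-\sqrt{\epsilon}/2)\tau$, not above it. Hence the target event only forces $\sum_\ell D_\ell\le (1-\sqrt{\epsilon}/2)\tau-\sum_\ell q_\ell\le +\tfrac{\sqrt{\epsilon}}{2}\tau$, which is not a lower-tail event at all. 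In a realisation where every $q_\ell=1-\sqrt{\epsilon}$ (attainable in the limit under BMSB), the indicators are $\mathrm{Ber}(1-\sqrt{\epsilon})$ and $\Pr[\sum\cha\{\mE_\ell\}\le(1-\sqrt{\epsilon}/2)\tau]\to 1$, so neither $\exp(-\tau/8)$ nor your proposed $\exp(-\epsilon\tau/8)$ holds. The paper's own final line (``setting $t=\tau(1-\sqrt{\epsilon}/2)$'') has the same flaw: that choice of $t$ would require $\sum q_\ell\ge 2(1-\sqrt{\epsilon}/2)\tau>\tau$, which is impossible.

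The statement itself therefore needs a corrected threshold, e.g.\ $(1-2\sqrt{\epsilon})\tau$ in place of $(1-\sqrt{\epsilon}/2)\tau$, after which your Markov$+$Azuma argument goes through verbatim with $u=\sqrt{\epsilon}\,\tau$ and yields $\exp(-\epsilon\tau/2)$. As you already noted, this is harmless downstream since $\epsilon=\poly(\alpha\beta)$ is a fixed constant and the lower bound on $T/k$ in Lemma~\ref{lem:cov} already carries the $1/\epsilon^2$ factor needed to absorb it. Your Freedman remark is correct in spirit but does not rescue the original threshold either, for the same sign reason.
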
 
	\begin{proof}
	Let $\mH_t = \sigma(z_1,\dots,z_t)$. Fix a block $\ell$. From $(k,\nu,1 -\epsilon )$-BMSB, $\frac{1}{k} \sum_{j=1}^{k}\Pr[|Z_{\ell k+j}| \ge \nu \ |\ \mH_{\ell k}] \ge 1 - \epsilon$, i.e.,
\begin{align}\label{eq:sb-exp}
  \frac{1}{k}   \Ex\left[\sum_{j=1}^{k}\cha\left\{|Z_{\ell k+j}|\ge \nu  \right\}\  \bigg|\ \mH_{\ell k}\right] \ge 1 - \epsilon 
\end{align}
We will need the following simple proposition.
\begin{proposition}
$v$
Let $Z$ be a random variable supported on $[0,1]$, such that  $\Ex[Z] \ge 1 - \epsilon$, for some $\epsilon \in (0,1)$. Then, we have $\Pr[Z\ge 1 - \sqrt{\epsilon}]\ge 1 - \sqrt{\epsilon}$. 
\end{proposition}
\begin{proof}
Suppose it is not true, then
\begin{align*}
    \Ex[Z] = \int_{0}^{1} \Pr[Z \ge x] \,dx  =\int_{0}^{1-\sqrt{\epsilon}} \Pr[Z \ge x] \,dx + \int_{1-\sqrt{\epsilon}}^{1} \Pr[Z \ge x] \,dx &< 1-\sqrt{\epsilon} + \sqrt{\epsilon} \cdot (1-\sqrt{\varepsilon}) \\
    & = 1- \epsilon,
\end{align*}
contradiction.
\end{proof}
Combining (\ref{eq:sb-exp}) with the proposition above,
\begin{align}\label{myrto-ex}
    \Pr[\mE_\ell\ |\ \mF_{\ell k}] = \Pr\left[\frac{1}{k} \sum\limits_{j=1}^{k}\mathds{1}\left\{|Z_{\ell k+j}| \ge \nu \right\} \ge  1 - \sqrt{\epsilon}\ \Bigg|\ \H_{\ell k}\right] \ge 1 - \sqrt{\epsilon}.
\end{align}
Now, let  $\Delta_\ell = \mathds{1}(\mE_\ell) - \Pr[\mE_\ell\ |\ \mR_\ell],$ where $\mR_\ell \coloneqq \mH_{\ell k}$, so that $\Ex[\Del_\ell\ |\ \mR_\ell]$ = 0, and $\Delta_\ell$ is $\mH_\ell$-measurable. Since $-1 \le \Delta_\ell \le 1$, from Azuma-Hoeffding inequality,
\begin{align*}
    &\Pr\left[ \sum\limits_{\ell =0}^{\tau -1}\Del_\ell\le -t\right]\le \exp\left(- \frac{t^{2}}{2\tau}\right)
\end{align*}
Setting $t=\tau(1-\sqrt{\epsilon}/2)$, and combining with (\ref{myrto-ex}) completes the proof.
\end{proof}

\subsection{Proof of Claim \ref{cl:cov:new}}\label{appdx:prf:cov}
We will prove the following claim, which combined with Claim \ref{prob-general} (Appendix \ref{appdx:prf:prob-lb}) gives Claim \ref{cl:cov:new}.
\begin{claim}
If $\mN(\mu,I;S)\ge \zeta$, then $ \text{Cov}_{z\sim\mN(\mu,I,S)}[z,z] \succcurlyeq \frac{\zeta^3}{C}I$, where $C>0$ is an absolute constant.
\end{claim}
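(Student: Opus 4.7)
The plan is to reduce the matrix inequality to a one-dimensional anti-concentration bound, which is then established by a standard slab/Chebyshev argument combined with the fact that a one-dimensional Gaussian density is bounded by $1/\sqrt{2\pi}$. By definition of the PSD order, it suffices to show that for every unit vector $v \in \mS^{d-1}$,
\begin{align*}
    \sigma_v^2 \;\defeq\; \mathrm{Var}_{z \sim \mN(\mu, I, S)}\bigl[\langle v, z\rangle\bigr] \;\ge\; \zeta^3/C,
\end{align*}
for an absolute constant $C>0$. Fix such a $v$ and write $m_v \defeq \Ex_{z \sim \mN(\mu, I, S)}[\langle v, z\rangle]$.

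The first step is the probabilistic (Chebyshev) half. Applied to $\langle v, z\rangle$ under the truncated law, Chebyshev gives
\begin{align*}
    \Pr_{z \sim \mN(\mu, I, S)}\bigl[|\langle v, z\rangle - m_v| \le 2\sigma_v\bigr] \;\ge\; 3/4.
\end{align*}
Let $T_v \defeq \{x \in \R^d : |\langle v, x\rangle - m_v| \le 2\sigma_v\}$. Translating the conditional probability above into a statement about the underlying Gaussian measure via the definition of conditioning,
\begin{align*}
    \mN(\mu, I)(S \cap T_v) \;=\; \mN(\mu, I; S)\cdot \Pr_{z \sim \mN(\mu, I, S)}[z \in T_v] \;\ge\; \tfrac{3\zeta}{4}.
\end{align*}

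The second step is the geometric (volume) half. The marginal of $\mN(\mu, I)$ along $v$ is the one-dimensional Gaussian $\mN(v^\top \mu, 1)$, whose density is bounded by $1/\sqrt{2\pi}$. Therefore, integrating over the slab,
\begin{align*}
    \mN(\mu, I)(S \cap T_v) \;\le\; \mN(\mu, I)(T_v) \;\le\; \int_{m_v - 2\sigma_v}^{m_v + 2\sigma_v}\tfrac{1}{\sqrt{2\pi}}\,dt \;=\; \tfrac{4\sigma_v}{\sqrt{2\pi}}.
\end{align*}
Combining the two displays yields $\sigma_v \ge \tfrac{3\sqrt{2\pi}}{16}\,\zeta$. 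Since this holds for every $v \in \mS^{d-1}$ and $\zeta \in (0,1]$ implies $\zeta^2 \ge \zeta^3$, we conclude $\mathrm{Cov}_{z \sim \mN(\mu, I, S)}[z,z] \succcurlyeq (\zeta^3/C)\,I$ for a suitable absolute constant $C$.

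There is no real obstacle here: the only subtlety is being careful when passing between the truncated law and the underlying Gaussian in the first step, and recognizing that the slab is thin in only one direction so that the density bound applied to the $v$-marginal is the correct tool. The argument actually gives the stronger bound $\sigma_v^2 \ge \Omega(\zeta^2)$; the cubic form $\zeta^3/C$ is all that is needed to chain with Claim~\ref{prob-general}, so no further tightening is required.
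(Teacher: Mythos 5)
Your proof is correct and takes a genuinely different route from the paper's. You reduce to a fixed direction $v$, apply Chebyshev to $\langle v, z\rangle$ under the truncated law to get a short slab $T_v$ of $S$-conditional mass $\ge 3/4$, convert this to $\mN(\mu,I)(S\cap T_v)\ge \tfrac{3\zeta}{4}$ via the definition of conditioning, and then cap $\mN(\mu,I)(T_v)$ by $4\sigma_v/\sqrt{2\pi}$ using that the $v$-marginal of $\mN(\mu,I)$ is a standard 1D Gaussian. This is elementary and self-contained, and actually yields the sharper bound $\sigma_v^2 \ge \Omega(\zeta^2)$, which trivially implies $\Omega(\zeta^3)$ since $\zeta\le 1$. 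The paper instead routes through machinery from \cite{daskalakis2018efficient}: it invokes their Claim~3 to get $R' \succcurlyeq I$, writes $v^\top R v = \Ex_{z\sim\mN(\mu,I,S)}[p_v(z)]$ for the nonnegative degree-2 polynomial $p_v(z) = (v^\top z - m_v)^2$, and applies their Theorem~5 (a Carbery--Wright-type polynomial anti-concentration bound) to show the truncated mass where $p_v \le \theta$ is at most $\zeta/2$, leaving $\ge \zeta/2$ mass with $p_v > \theta \approx \zeta^2$, giving $\Omega(\zeta^3)$. The paper's approach is more general (Carbery--Wright would handle higher-degree polynomials or non-Gaussian settings) but is heavier and slightly lossier in the exponent; your slab-plus-Chebyshev argument is cleaner for the specific quadratic case at hand and avoids importing any nontrivial external lemma.
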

\begin{proof}
The proof is implicitly in \cite{daskalakis2018efficient}. 
Let $R=\text{Cov}_{z\sim\mN(\mu,I,S)}[z,z]$ and
\begin{align*}
  R'=\Ex_{z\sim\mN(0,I)}\left[\left(z-\Ex_{z\sim\mN(\mu,I,S)}[z]\right)\left(z-\Ex_{z\sim\mN(\mu,I,S)}[z]\right)^T\right]  
\end{align*}
First, from Claim 3 in \cite{daskalakis2018efficient}, $R'\succcurlyeq I$. Fix some $v\in S^{d-1}$. We want to relate $v^{T}R v$ with $v^{T}R' v$. Now, $v^{T}R v= \Ex_{z\sim\mN(\mu,I,S)}[p_v(z)]$, and $v^{T}R' v=\Ex_{z\sim\mN(\mu,I)}[p_v(z)]$, 
where $p_v(z)$ is a polynomial of degree at most two, whose coefficients depend on $v$. Also, $p_v(z)\ge 0$, and $R'\succcurlyeq I$ implies
\begin{align*}
\Ex_{z\sim\mN(\mu,I)}[p_v(z)] \ge 1
\end{align*}
 Let $\theta\coloneqq\left(\frac{\zeta}{4C_*}\right)^2$, where $C_*$ is the constant from Theorem 5 in \cite{daskalakis2018efficient}. Let $\overline{S}=\{z\in \R^d:\ p_v(z)\le \theta\}$. Applying the aforementioned theorem, 
\begin{align*}
  \mN(\mu,I,\overline{S})\le \frac{C_*\cdot2\cdot\frac{\zeta}{4C_*}}{\left(\Ex_{z\sim\mN(\mu,I)}[p_v(z)]\right)^{1/2}}   \le \zeta/2.
\end{align*}
Thus, $\Ex_{z\sim \mN(0,I,S)}[p_v(z)]\ge\zeta/2\cdot \theta \ge \Omega(\zeta^3)$.
\end{proof}

\subsection{Proof of Claim \ref{cl:prob-lb:new}}\label{appdx:prf:prob-lb}
We prove a more general claim:
\begin{claim}\label{prob-general}
If $ \mN(\mu^*,I;S)\ge \alpha$ and $\| \mu - \mu^*\| \le r$, then $\mN(\mu,I;S)\ge \frac{\alpha}{2}\cdot\exp\left(-\frac{r^2}{2}-r\cdot\sqrt{2\log\frac{1}{\alpha}}\right)$.
\end{claim}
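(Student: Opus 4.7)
} The plan is a standard change-of-measure / tilting argument between the two Gaussians $\mN(\mu^*,I)$ and $\mN(\mu,I)$, together with a Gaussian tail bound to handle the ``bad'' region where the likelihood ratio is tiny. Concretely, the Radon-Nikodym derivative satisfies
\begin{align*}
\frac{d\mN(\mu,I)}{d\mN(\mu^*,I)}(x) \eq \exp\!\left( \langle x-\mu^*,\mu-\mu^*\rangle - \tfrac{1}{2}\|\mu-\mu^*\|^2\right),
\end{align*}
so that $\mN(\mu,I;S)$ equals $\Ex_{x\sim \mN(\mu^*,I)}\big[\cha\{x\in S\}\cdot \exp(\langle x-\mu^*,\mu-\mu^*\rangle-\tfrac{1}{2}\|\mu-\mu^*\|^2)\big]$.

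The obstacle is that on $S$ the inner product $Z\coloneqq \langle x-\mu^*,\mu-\mu^*\rangle$ can be arbitrarily negative, which would make the ratio exponentially small and yield a useless bound. The idea is to intersect $S$ with a truncation set $T_\tau\coloneqq\{x:\ Z\ge -\tau\}$ for an appropriately chosen threshold $\tau>0$, and to argue that (i) most of the mass of $S$ under $\mN(\mu^*,I)$ still lies in $T_\tau$, and (ii) on $S\cap T_\tau$ the density ratio is bounded below by $\exp(-\tau-\tfrac{1}{2}r^2)$.

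For (i), observe that under $\mN(\mu^*,I)$ the variable $Z$ is centered Gaussian with variance $\|\mu-\mu^*\|^2\le r^2$, so the one-sided Gaussian tail bound gives $\Pr[Z<-\tau]\le \exp(-\tau^2/(2r^2))$. Choosing $\tau$ so that this tail is at most $\alpha/2$ (i.e.\ $\tau$ on the order of $r\sqrt{2\log(1/\alpha)}$) yields $\mN(\mu^*,I;S\cap T_\tau)\ge \alpha-\alpha/2=\alpha/2$. For (ii), the lower bound on the density ratio on $T_\tau$ is immediate from its definition. Combining,
\begin{align*}
\mN(\mu,I;S)\ \ge\ \mN(\mu,I;S\cap T_\tau)\ \ge\ \exp\!\left(-\tau-\tfrac{1}{2}r^2\right)\cdot \mN(\mu^*,I;S\cap T_\tau)\ \ge\ \frac{\alpha}{2}\exp\!\left(-\tfrac{r^2}{2}-\tau\right),
\end{align*}
which is the claimed inequality (after plugging in $\tau=r\sqrt{2\log(1/\alpha)}$, up to absorbed lower-order constants). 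The main ``hard part'' is really just the choice of $\tau$: picking it too large wastes the exponent, picking it too small fails to leave $\alpha/2$ mass on $S\cap T_\tau$. No further technical ingredients are needed beyond Cauchy--Schwarz to bound $Z$'s variance by $r^2$ and the Gaussian Chernoff bound.
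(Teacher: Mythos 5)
Your proof is correct and is essentially the same argument as the paper's: both perform a change of measure from $\mN(\mu,I)$ to $\mN(\mu^*,I)$, note that the log-likelihood ratio is an affine function of the one-dimensional Gaussian $\langle x-\mu^*,\mu-\mu^*\rangle$, and use a Gaussian tail bound to discard the low-probability region where the ratio is too small, keeping at least $\alpha/2$ of the mass on $S$. (The paper sets $\mu^*=0$ WLOG and phrases the restriction as conditioning on the event $\{x^\top\mu > -\|\mu\| t\}$ rather than intersecting with a set $T_\tau$, but these are identical; and both you and the paper quietly conflate $\sqrt{2\log(2/\alpha)}$ with $\sqrt{2\log(1/\alpha)}$, an immaterial constant.)
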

Observe that if $r=c\cdot s(\alpha)$, we get Claim \ref{cl:prob-lb:new}.
\begin{proof}
Without loss of generality, suppose $\mu^*=0$ (the general case follows from a simple affine transformation). 
\begin{align}\label{eqa}
     \mN(\mu,I;S)=\Ex_{x\sim\mN(0,I)}\left[\mathds{1}(x\in S)\cdot\frac{\mN(\mu,I;x)}{\mN(0,I;x)}\right]
\end{align}
We will show that if $x\sim \mN(0,I)$ then with probability at least $1-\frac{\alpha}{2}$ the ratio $\frac{\mN(\mu,I;x)}{\mN(0,I;x)}$ is larger than some bound $\kappa$. Since $\mN(0,I;S)\ge \alpha$, this implies from (\ref{eqa}) that $\mN(\mu,I;S)\ge \frac{\alpha}{2}\cdot\kappa$.
\begin{align}\label{eqb}
\frac{\mN(\mu,I;x)}{\mN(0,I;x)}=\exp\left(-\frac{\|\mu\|^2}{2}+x^\top\mu\right)
\ge\exp\left(-r^2/2\right)\cdot\exp\left(x^\top \mu\right)
\end{align}
Observe now that since $x\sim\mN(0,I)$, we have $x^\top\mu\sim\mN(0,\|\mu\|^2),$ so for all $ t\ge 0$,
\begin{align*}
    \Pr\left[x^\top \mu\le-\|\mu\|t\right]\le \exp\left(-t^2/2\right)
\end{align*}
Letting $t=\sqrt{2}\sqrt{\log\frac{2}{2\alpha}}$, we get $\Pr\left[x^\top\mu\le-\|\mu\|t\right]\le\frac{\alpha}{2}$, and we are done.
\end{proof}
\subsection{Bound on $R_x$}\label{cl:Rx}
\begin{claim}
For all $t$, $\Ex[\|x_t\|^2]\le O\left(\frac{d}{1-\rho(\sA)}\cdot \text{cond}(U)^{2}\right)$
\end{claim}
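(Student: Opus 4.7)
The plan is to reduce the bound to a spectral estimate on powers of $A_*$, using the diagonalization guaranteed by Assumption~\ref{ass:stability}. Since $x_0=0$ and $x_{t+1}=A_*x_t+w_t$ with $w_t\simiid \mN(0,I)$, we can unroll the recursion and write $x_t=\sum_{s=0}^{t-1}A_*^s w_{t-1-s}$. By independence of the $w_j$'s,
\begin{align*}
    \Ex[\|x_t\|^2]\eq \sum_{s=0}^{t-1}\Ex\bigl[\|A_*^s w_{t-1-s}\|^2\bigr]\eq \sum_{s=0}^{t-1}\trace\bigl(A_*^s(A_*^s)^\top\bigr)\eq \trace(\Gamma_t).
\end{align*}

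Next I would bound each $\trace\bigl(A_*^s(A_*^s)^\top\bigr)=\|A_*^s\|_F^2$ using the diagonalization $A_*=UDU^{-1}$. From $A_*^s=UD^sU^{-1}$ and sub-multiplicativity of the spectral norm,
\begin{align*}
    \|A_*^s\|_2 \leq \|U\|_2\cdot\|D^s\|_2\cdot\|U^{-1}\|_2 \eq \text{cond}(U)\cdot\rho(A_*)^s,
\end{align*}
where I used that $D$ is diagonal so $\|D^s\|_2=\rho(A_*)^s$. Since the Frobenius norm of a $d\times d$ matrix is at most $\sqrt{d}$ times its spectral norm,
\begin{align*}
    \|A_*^s\|_F^2 \leq d\cdot\|A_*^s\|_2^2 \leq d\cdot\text{cond}(U)^2\cdot\rho(A_*)^{2s}.
\end{align*}

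Finally I would sum over $s=0,\ldots,t-1$ (extending to $s=\infty$ for a clean closed form):
\begin{align*}
    \Ex[\|x_t\|^2]\leq d\cdot\text{cond}(U)^2\cdot\sum_{s=0}^{\infty}\rho(A_*)^{2s}\eq \frac{d\cdot\text{cond}(U)^2}{1-\rho(A_*)^2}\leq \frac{d\cdot\text{cond}(U)^2}{1-\rho(A_*)},
\end{align*}
where the last inequality uses $1-\rho(A_*)^2\ge 1-\rho(A_*)$ since $\rho(A_*)\in[0,1)$. This yields the claimed $O\!\bigl(\tfrac{d}{1-\rho(A_*)}\cdot\text{cond}(U)^2\bigr)$ bound, uniformly in $t$. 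There is no real obstacle; the only subtlety is handling the complex entries of $U$ and $D$, but the argument goes through unchanged by interpreting $\|\cdot\|_F$ and $\|\cdot\|_2$ over $\mathbb{C}$ and noting that $A_*^s$ is real so taking the real part does not affect the norm bounds.
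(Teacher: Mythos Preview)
Your proof is correct and follows essentially the same approach as the paper: unroll $x_t$, compute $\Ex[\|x_t\|^2]=\sum_{s}\trace\bigl(A_*^s(A_*^s)^\top\bigr)$, bound each term by $d\cdot\text{cond}(U)^2\cdot\rho(A_*)^{2s}$, and sum the geometric series. The only cosmetic difference is that the paper bounds $\trace\bigl(A_*^s(A_*^s)^\top\bigr)$ via trace cyclicity and the inequality $\trace(AB)\le\|A\|_2\trace(B)$ for PSD $B$, whereas you go through $\|A_*^s\|_F^2\le d\|A_*^s\|_2^2$ and sub-multiplicativity of the spectral norm; both routes are equivalent and equally elementary.
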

\begin{proof}
By unrolling the system, $x_t=\sum_{k=0}^{t-1}\sA^kw_{t-k-1}$, and so $x_t\sim \mN\left(0,\sum_{k=0}^{t-1}\sA^k{\left(\sA^k\right)}^\top \right)$, so $\Ex[\|x_t\|^2]\le \trace(\Gamma_T)$.
 Since $\sA=UDU^{-1}$, we have $\sA^k{\left(\sA^k\right)}^\top=UD^kU^{-1}U^{-*}D^kU^*$, and so
\begin{align*}
    \trace{\left(\sA^k{\left(\sA^k\right)}^\top\right)}&=\trace{\left(UD^kU^{-1}U^{-*}D^{k*}U^*\right)}=\trace{\left(U^*UD^kU^{-1}U^{-*}D^{k*}\right)} \\
    &\le \|U^*U\|_2\cdot \trace{\left(D^kU^{-1}U^{-*}D^{k*}\right)}=\|U^*U\|_2\cdot\trace{\left(D^{k*}D^kU^{-1}U^{-*}\right)} \\
    &\le \|U^*U\|_2 \cdot \|U^{-*}U^{-1}\|_2\cdot \trace{\left(D^{k*}D^k\right)}= \text{cond}(U)^2\cdot 
    \trace{\left(D^{k*}D^k\right)}
    \\
    &\le \text{cond}(U)^2\cdot d\cdot \rho(\sA)^{2k}.
\end{align*}
Summing over $t$ completes the proof.
\end{proof}
\subsection{Bound on $\left\|\Gamma_T^{-1/2}x_{t}\right\|$.}\label{cl:normalized}
\begin{claim}
For all $t$, $\left\|\Gamma_T^{-1/2}x_{t}\right\|$ is $O(d)$-subgaussian.
\end{claim}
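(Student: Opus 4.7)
The plan is to reduce the claim to a standard concentration bound for the norm of a Gaussian vector with bounded covariance. First, I would unroll the dynamics to write $x_t = \sum_{k=0}^{t-1} A_*^{k} w_{t-k-1}$, so that, because $x_0 = 0$ and the $w_s$ are i.i.d.\ standard Gaussians, we have $x_t \sim \mathcal N(0,\Gamma_t)$ with $\Gamma_t = \sum_{k=0}^{t-1} A_*^k (A_*^k)^{\top}$. Hence the vector $Y := \Gamma_T^{-1/2} x_t$ is a centered Gaussian in $\mathbb R^d$ with covariance $\Sigma' = \Gamma_T^{-1/2}\,\Gamma_t\,\Gamma_T^{-1/2}$.

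The next step is to compare $\Gamma_t$ with $\Gamma_T$. Since $t\le T$, we can write $\Gamma_T = \Gamma_t + \sum_{k=t}^{T-1} A_*^k (A_*^k)^\top$, and every term in the second sum is PSD, so $\Gamma_T \succeq \Gamma_t$. Multiplying on both sides by $\Gamma_T^{-1/2}$ (which is well defined since $\Gamma_T \succeq I$ by construction), this gives $\Sigma' = \Gamma_T^{-1/2}\Gamma_t\Gamma_T^{-1/2} \preceq I$, so in particular $\|\Sigma'\|_2 \le 1$ and $\mathrm{tr}(\Sigma') \le d$.

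The last step is standard Gaussian-norm concentration: writing $Y = (\Sigma')^{1/2} g$ with $g\sim \mathcal N(0,I_d)$, the map $g\mapsto \|(\Sigma')^{1/2} g\|$ is $\|(\Sigma')^{1/2}\|_2$-Lipschitz, i.e.\ $1$-Lipschitz, so by Gaussian Lipschitz concentration,
\begin{equation*}
 \Pr\bigl[\,\|Y\| \ge \mathbb E\|Y\| + u\,\bigr] \le \exp\!\lr{-u^2/2}.
\end{equation*}
Combined with $\mathbb E\|Y\| \le \sqrt{\mathbb E\|Y\|^2} = \sqrt{\mathrm{tr}(\Sigma')} \le \sqrt d$, taking $u = \delta\sqrt d$ for $\delta \ge 2$ gives $\Pr[\,\|Y\| \ge 2\delta\sqrt d\,] \le \exp(-\delta^2 d/2) \le \exp(-\delta^2/2)$, and absorbing the small-$\delta$ regime into the constant yields the $O(d)$-subgaussian bound in the sense defined in the paper. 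This argument is essentially routine once the covariance comparison $\Sigma'\preceq I$ is in place; there is no real obstacle, the only thing to be mildly careful about is the definition of "$\sigma^2$-subgaussian" used here (tail in terms of $\delta \sigma$ with $\sigma^2 = O(d)$), which is exactly what the Lipschitz concentration delivers.
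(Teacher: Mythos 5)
Your proposal is correct and takes essentially the same route as the paper: unroll the recursion to get $x_t \sim \mathcal N(0,\Gamma_t)$, observe $\Gamma_t \preceq \Gamma_T$ so that the covariance of $\Gamma_T^{-1/2}x_t$ is dominated by $I$, and then invoke concentration of Gaussian norm. The paper phrases the last step as stochastic domination of $\|\Gamma_T^{-1/2}x_t\|$ by $\|z\|$ with $z\sim\mathcal N(0,I)$ and then cites standard Gaussian-norm concentration, whereas you unpack this via Lipschitz concentration plus the trace bound; the difference is cosmetic.
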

\begin{proof}
By unrolling the system, $x_t=\sum_{k=0}^{t-1}\sA^kw_{t-k-1}$, and so $x_t\sim \mN\left(0,\sum_{k=0}^{t-1}\sA^k{\left(\sA^k\right)}^\top\right)$. Since $\ga_T\succcurlyeq \sum_{k=0}^{t-1}\sA^k{\left(\sA^k\right)}^T$, $\|\ga_t^{-1/2}x_t\|$ is stochastically dominated by a $\|z\|$, where $z\sim \mN(0,I)$. So, from concentration of Gaussian norm (\cite{vershynin2018high}), $\left\|\Gamma_T^{-1/2}x_{t}\right\|$ is $O(d)$-subgaussian.
\end{proof}

\end{document}